\newcommand{\Eqref}[1]{Equation~\eqref{#1}}
\pgfplotsset{compat=1.11}
\newlength\Origarrayrulewidth
\newenvironment{appendixenv}{%
\clearpage

\begin{appendices}

\listofappendices

\newpage

\numberwithin{theorem}{section}
\numberwithin{lemma}{section}
\numberwithin{proposition}{section}
\numberwithin{corollary}{section}
\numberwithin{definition}{section}
\numberwithin{remark}{section}
\numberwithin{example}{section}
\counterwithin{figure}{section}
\counterwithin{table}{section}
\counterwithin{algorithm}{section}}{\end{appendices}}
\renewenvironment{proof}[1][\proofname]{\par
  \pushQED{\qed}%
  \normalfont\noindent{\bf #1\ }
}{%
  \popQED
}
\providecommand{\proofname}{Proof}
\newcommand{\ifnotinthesis}[1]{#1}
\newcommand{\bbC}{\mathbb{C}}
\newcommand{\bbE}{\mathbb{E}}
\newcommand{\bbN}{\mathbb{N}}
\newcommand{\bbR}{\mathbb{R}}
\newcommand{\bbone}{\mathds{1}}
\newcommand{\calA}{\mathcal{A}}
\newcommand{\calC}{\mathcal{C}}
\newcommand{\calF}{\mathcal{F}}
\newcommand{\calG}{\mathcal{G}}
\newcommand{\calK}{\mathcal{K}}
\newcommand{\calM}{\mathcal{M}}
\newcommand{\calP}{\mathcal{P}}
\newcommand{\calQ}{\mathcal{Q}}
\newcommand{\calU}{\mathcal{U}}
\newcommand{\calV}{\mathcal{V}}
\newcommand{\calX}{\mathcal{X}}
\newcommand{\calZ}{\mathcal{Z}}
\providecommand{\eps}{\varepsilon}
\newcommand{\diff}{\,\mathrm{d}}
\newcommand{\quot}[1]{\enquote{#1}}
\newcommand{\equalDef}{\coloneqq}
\newcommand{\defEqual}{\eqqcolon}
\DeclareMathOperator{\tr}{tr}
\DeclareMathOperator{\Span}{Span}
\DeclareMathOperator{\Var}{Var}
\DeclareMathOperator{\diam}{diam}
\DeclareMathOperator{\esssup}{ess\,sup}
\DeclareMathOperator{\conv}{conv}
\DeclareMathOperator*{\argmax}{argmax}
\DeclarePairedDelimiterX{\kldivx}[2]{(}{)}{%
  #1\;\delimsize\|\;#2%
}
\newcommand{\DKL}{D_{\mathrm{KL}}\kldivx}
\newcommand{\Dsuplog}{D_{\operatorname{sup-log}}}
\newcommand{\DTV}{D_{\mathrm{TV}}}
\newcommand{\DW}{W_1}
\newcommand{\Denergy}{D_{\mathrm{energy}}}
\newcommand{\Lopt}{L^{\mathrm{OPT}}}
\newcommand{\Dopt}{D_{\mathrm{KL}}^{\mathrm{OPT}}\kldivx}
\newcommand{\Doptsingle}{D_{\mathrm{KL}}^{\mathrm{OPT}}}
\newcommand{\sC}{{}^*C}
\newcommand{\ssad}{{}^*\sigma^{\mathrm{ad}}}
\newcommand{\Aad}{\calA^{\mathrm{ad}}}
\newcommand{\Aadstoch}{\calA^{\operatorname{ad-stoch}}}
\newcommand{\ead}{e^{\mathrm{ad}}}
\newcommand{\eadstoch}{e^{\operatorname{ad-stoch}}}
\newcommand{\Ssamp}{S_{\mathrm{samp}}}
\newcommand{\Sapp}{S_{\mathrm{app}}}
\newcommand{\Sopts}{S_{\mathrm{opt}^*}}
\newcommand{\Sint}{S_{\mathrm{int}}}
\newcommand{\Dinfty}{D_\infty}
\newcommand{\Dabs}{D_{\mathrm{abs}}}
\newcommand{\Vlin}{\calV_{\mathrm{lin}}}
\newcommand{\ovl}{\overline}
\newcolumntype{C}[1]{>{\centering\arraybackslash}p{#1}}
\newcommand{\assign}{\leftarrow}
\setlist[enumerate]{nosep}
\setlist[itemize]{nosep}
\renewcommand{\appendixtocname}{Appendix Contents.}
\let\oldappendix\appendices
\renewcommand{\appendices}{%
  \clearpage
  \renewcommand{\thesection}{\Roman{section}}
  \let\tf@toc\tf@app
  \addtocontents{app}{\protect\setcounter{tocdepth}{2}}
  \immediate\write\@auxout{%
    \string\let\string\tf@toc\string\tf@app^^J
  }
  \oldappendix
}%
\newcommand{\listofappendices}{%
  \begingroup
  \renewcommand{\contentsname}{\appendixtocname}
  \let\@oldstarttoc\@starttoc
  \def\@starttoc##1{\@oldstarttoc{app}}
  \tableofcontents
  \endgroup
}
\newcommand\ackname{Acknowledgements}
   \newenvironment{acknowledgements}{%
       \titlepage
       \null\vfil
       \@beginparpenalty\@lowpenalty
       \begin{center}%
         \bfseries \ackname
         \@endparpenalty\@M
       \end{center}}%
      {\par\vfil\null\endtitlepage}
\begin{document}

\title{Convergence Rates for Non-Log-Concave Sampling and Log-Partition Estimation}

\author{\name David Holzmüller\thanks{Work done partially while at the University of Stuttgart.} \email david dot holzmuller at inria.fr \\ 
       \name Francis Bach \email francis.bach@inria.fr \\
       \addr INRIA \\
  Ecole Normale Supérieure \\
  PSL Research University}

\editor{Jianfeng Lu}

\maketitle

\begin{abstract}%
Sampling from Gibbs distributions and computing their log-partition function are fundamental tasks in statistics, machine learning, and statistical physics. While efficient algorithms are known for log-concave densities, the worst-case non-log-concave setting necessarily suffers from the curse of dimensionality. 
For many numerical problems, the curse of dimensionality can be alleviated when the target function is smooth, allowing the exponent in the rate to improve linearly with the number of available derivatives.
Recently, it has been shown that similarly fast convergence rates can be achieved by efficient optimization algorithms. Since optimization can be seen as the low-temperature limit of sampling from Gibbs distributions, we pose the question of whether similarly fast convergence rates can be achieved for non-log-concave sampling.
We first study the information-based complexity of the sampling and log-partition estimation problems and show that the optimal rates for sampling and log-partition computation are sometimes equal and sometimes faster than for optimization.
We then analyze various polynomial-time sampling algorithms, including an extension of a recent promising optimization approach, and find that they sometimes exhibit interesting behavior but no near-optimal rates. Our results also give further insights into the relation between sampling, log-partition, and optimization problems.
\end{abstract}

\begin{keywords}
  sampling, log-partition function, algorithms, information-based complexity, Gibbs distribution
\end{keywords}

\section{Introduction} \label{sec:sampling:introduction}

The tasks of sampling from a Gibbs distribution with density $p(x) \propto \exp(-V(x)/\eps)$ and computing the corresponding normalization constant are important problems in many computational fields such as machine learning, (Bayesian) statistics, and statistical physics. 
Specifically, we are interested in the following setting:

\begin{definition}[Sampling and log-partition problems] \label{def:problem}
In this paper, we consider distributions on the unit cube $\calX \equalDef [0, 1]^d$, $d \geq 1$. While many of our results could be generalized to other domains, the unit cube has convenient properties that allow us to prove all results on the same domain: It is compact, has unit volume, does not have too sharp corners, there are well-studied approximation results, and it allows investigating algorithms for periodic functions.

Let $f: \calX \to \bbR$ be bounded and measurable.
The \emph{sampling problem} is to draw samples from the distribution $P_f$ on $\calX$ with density
\begin{IEEEeqnarray*}{+rCl+x*}
p_f(x) \equalDef \frac{\exp(f(x))}{Z_f}~,
\end{IEEEeqnarray*}
where $Z_f \equalDef \int_{\calX} \exp(f(x)) \diff x$ is the normalization constant or \emph{partition function}. The \emph{log-partition problem} is to compute the \emph{log-partition function}
\begin{IEEEeqnarray*}{+rCl+x*}
L_f & \equalDef & \log Z_f = \log\left(\int_{\calX} \exp(f(x)) \diff x\right)~. %
\end{IEEEeqnarray*}
\end{definition}

Distributions of the form $P_f$ are known as \emph{Gibbs distributions}, \emph{Gibbs measures}, or \emph{Boltzmann distributions}. They arise, for example, in statistical physics with $f(x) = -V(x)/\eps$, where $V(x)$ denotes the (potential) energy of state $x$, and $\eps$ is (proportional to) the temperature of the system. (Technically, $\eps = k_B T$, where $k_B$ is Boltzmann's constant and $T$ is the temperature.) Instead of the temperature, sometimes the inverse temperature (or coldness / thermodynamic beta) $\beta = 1/\eps$ is used. The log-partition problem is also related to the computation of the free energy $-\eps L_{-V/\eps}$. In energy-based ML models, $f$ could be learned. In a Bayesian statistical or ML model with parameters $\theta$ and data $D$, we could set $f(\theta) \equalDef \log p(D|\theta) + \log p(\theta)$ to sample from the posterior distribution $p_f(\theta) = p(\theta | D)$ or compute the log-evidence $L_f = \log p(D) = \log \left(\int_{\calX} p(D|\theta) p(\theta) \diff \theta\right)$. In some contexts, $L_f$ is also called the log-marginal likelihood, which is useful for model selection \citep{robert_bayesian_2007}.

While exact sampling and log-partition computation are possible for some simple functions $f$ like linear functions, we can usually only expect algorithms to obtain approximations within a limited runtime. Therefore, we are interested in how fast this approximation converges to the true $P_f$ or $L_f$ in terms of the number $n$ of times that the algorithm is allowed to evaluate $f$. To study this, we need to make some assumptions on $f$. While efficient sampling algorithms for suitable classes of concave $f$ are known, at least with access to gradients of $f$ \citep{dwivedi_log-concave_2018, mangoubi_dimensionally_2018, chewi_optimal_2021, altschuler_resolving_2023}, we are interested in larger classes of non-concave functions, which are defined in the following:

\begin{definition}[Function spaces and other notations] \label{def:notation}
For measurable functions $f: \calX \to \bbR$, we use the notation $\|f\|_\infty \equalDef \esssup_{x \in \calX} |f(x)|$, where the essential supremum is w.r.t.\ the Lebesgue measure on $\calX$. We define function spaces of $m$-times continuously differentiable functions\footnote{It would also be possible to replace $C^m(\calX)$ with the slightly larger Sobolev space $W^{m, \infty}(\calX)$.} whose derivatives are bounded by some constant $B \geq 0$:
\begin{IEEEeqnarray*}{+rCl+x*}
\calF_{d, m, B} & \equalDef & \left\{f \in C^m(\calX), \|f\|_{C^m} \leq B\right\}, \qquad \|f\|_{C^m} \equalDef \sup_{\alpha \in \bbN_0^d: |\alpha|_1 \leq m} \|\partial_\alpha f\|_\infty~.
\end{IEEEeqnarray*}
Here, we use the notation $|\alpha|_1 \equalDef \alpha_1 + \cdots + \alpha_d$ and $\partial_\alpha f = \frac{\partial^{|\alpha|_1} f}{\partial x_1^{\alpha_1} \cdots \partial x_d^{\alpha_d}}$. Moreover, if $f$ is Lipschitz, we denote its minimal Lipschitz constant by $|f|_1$. If $f$ is bounded, we denote its maximum by $M_f$. We define $\bar f \equalDef f - L_f$, such that $L_{\bar f} = 0$ and $P_{\bar f} = P_f$. Finally, we denote the uniform distribution on $\calX$ by $\calU(\calX)$.
\end{definition}

We study the worst-case error of algorithms over the function class $\calF_{d, m, B}$, which is formally defined in \Cref{sec:ibc}. This error depends on the variables $(B, n, d, m)$. We study the asymptotic behavior in terms of $n$ and $B$ while ignoring, for simplicity, constants that depend only on $m$ and $d$. Depending on the definition of the norm, such constants are often necessarily exponential in $d$ and represent the part of the curse of dimensionality that cannot be overcome in this setting \citep{novak_approximation_2009}.
For example, typical convergence rates for function approximation are of the form $O_{m, d}(Bn^{-m/d})$, which we sometimes also write as $O_{m, d}(\|f\|_{C^m} n^{-m/d})$ \citep{novak_deterministic_1988, wendland_scattered_2004}. As we will see later, the dependence on $B$ is not always linear, and tracking it is important since the function $f$ appears inside an exponential. \emph{When using asymptotic notation like $O_{m, d}$, we mean that the corresponding inequality should hold for \emph{all} values of $n \in \bbN_{\geq 1}$ and $B > 0$, not only large enough values.}\footnote{
Specifically, we use the notation $g(B, n, d, m) \leq O_{m, d}(h(B, n, d, m))$ to mean
\begin{IEEEeqnarray*}{+rCl+x*}
\forall d, m \in \bbN_{\geq 1} \exists C_{m, d} > 0 \forall B > 0, n \in \bbN_{\geq 1}: g(B, n, d, m) \leq C_{m, d}h(B, n, d, m)~,
\end{IEEEeqnarray*}
and we similarly write $g \geq \Omega_{m, d}(h)$ for $h \leq O_{m, d}(g)$, as well as $g = \Theta_{m, d}(h)$ for $g \leq O_{m, d}(h)$ and $h \leq O_{m, d}(g)$.}

We express bounds on the error $E$ achieved for $n$ function evaluations, such as $E = O_{m, d}(Bn^{-m/d})$. Some authors prefer to express rates in terms of the number of function evaluations needed to reach an error $E$ or lower, which would then, for example, be $n = O_{m, d}((B/E)^{d/m})$.

Sometimes, we explicitly include a temperature $\varepsilon > 0$ and formulate our theorems in terms of $f/\eps$ instead of $f$. It is well-known that in the limit of low temperatures ($\varepsilon \searrow 0$), sampling becomes essentially equivalent to optimization. Here, we give a quantitative version of this statement:

\begin{restatable}[Optimization limit]{lemma}{LemLipMaxBound} \label{lemma:lipschitz_maximization_bound}
Let $f: \calX \to \bbR$ be Lipschitz-continuous with Lipschitz constant $|f|_1 < \infty$. 
Then, for any temperature $\varepsilon > 0$, the maximum $M_f = \max_{x \in \calX} f(x)$ satisfies
\begin{IEEEeqnarray*}{+rCl+x*}
|M_f - \eps L_{f/\eps}| \leq \varepsilon d \log(1+3d^{-1/2}|f|_1/\varepsilon) \longrightarrow 0\quad \text{for }\varepsilon \searrow 0~. \IEEEyesnumber \label{eq:opt_error}
\end{IEEEeqnarray*}
Moreover, for any bounded and measurable $f: \calX \to \bbR$ and any $\delta \in (0, 1]$, we have
\begin{IEEEeqnarray*}{+rCl+x*}
P_{f/\eps}(\{x \in \calX \mid f(x) < \eps L_{f/\eps} - \eps \log(1/\delta)\}) \leq \delta~.
\end{IEEEeqnarray*}
\end{restatable}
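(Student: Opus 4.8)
The two displays are essentially independent: the second is a one-line Chernoff-type estimate, and the first reduces to a geometric lower bound on a Laplace-type integral over the cube, for which the easy direction is immediate from $\vol(\calX)=1$.

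For the second display, set $A \equalDef \{x \in \calX \mid f(x) < \eps L_{f/\eps} - \eps\log(1/\delta)\}$, which is measurable since $f$ is. On $A$ one has $\exp(f(x)/\eps) < \exp(L_{f/\eps})\,\delta = Z_{f/\eps}\,\delta$, so
\[
P_{f/\eps}(A) = \frac{1}{Z_{f/\eps}}\int_A \exp(f(x)/\eps)\diff x \;\le\; \frac{\vol(A)\,Z_{f/\eps}\,\delta}{Z_{f/\eps}} \;=\; \vol(A)\,\delta \;\le\; \delta,
\]
using $\vol(A) \le \vol(\calX) = 1$; only measurability and boundedness of $f$ enter here. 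For the first display, since $\vol(\calX)=1$ we get $Z_{f/\eps}\le\exp(M_f/\eps)$, i.e. $\eps L_{f/\eps}\le M_f$, so the left-hand side of \eqref{eq:opt_error} is nonnegative and only needs an upper bound. For that, pick a maximizer $x^*\in\calX$ of $f$ (it exists because $f$, being Lipschitz, is continuous on the compact set $\calX$) and use $f(x)\ge M_f - |f|_1\|x-x^*\|_2$. Writing $c\equalDef|f|_1/\eps$, this yields $Z_{f/\eps}\ge\exp(M_f/\eps)\int_\calX\exp(-c\|x-x^*\|_2)\diff x$, so it remains to show $\int_\calX\exp(-c\|x-x^*\|_2)\diff x\ge(1+3d^{-1/2}c)^{-d}$, which gives $\eps L_{f/\eps}\ge M_f - \eps d\log(1+3d^{-1/2}|f|_1/\eps)$, as desired.

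The geometric step: restrict the integral to the axis-aligned box $Q\equalDef\prod_{i=1}^d\bigl([x_i^*-a,x_i^*+a]\cap[0,1]\bigr)\subseteq\calX$ with half-width $a\equalDef\min(1,\sqrt d/c)\in(0,1]$. Each of the $d$ intervals has length at least $a$ (clipping at a face, or even a corner, of the cube removes length only down to $a$), so $\vol(Q)\ge a^d$, and $\|x-x^*\|_2\le a\sqrt d$ for every $x\in Q$; hence $\int_\calX\exp(-c\|x-x^*\|_2)\diff x\ge a^d\exp(-ca\sqrt d)$. Now split into two cases: if $c\le\sqrt d$ then $a=1$ and the claim becomes $\log(1+3v)\ge v$ for $v\equalDef c/\sqrt d\in(0,1]$ (an elementary one-variable inequality); if $c>\sqrt d$ then $a=\sqrt d/c$ and, after taking $d$-th roots, the claim becomes $\sqrt d\ge(e-3)c$, which holds because $e<3$. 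Finally $\eps d\log(1+3d^{-1/2}|f|_1/\eps)\to0$ as $\eps\searrow0$, since $\eps\log(1+C/\eps)\to0$.

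The one place that needs care is this geometric step. The maximizer $x^*$ may lie on the boundary of $\calX$, even at a corner, so a full Euclidean ball of useful radius around $x^*$ is unavailable, and naively passing to $\|\cdot\|_\infty$ or $\|\cdot\|_1$ loses a factor $\sqrt d$ in the exponent of the temperature term; using the clipped coordinate box $Q$ (volume still $\ge a^d$, and distance $\le a\sqrt d$ from $x^*$ to any of its points) is exactly what preserves the clean $m,d$-free constant $3$. Everything else — the two elementary inequalities above and the $\eps\searrow0$ limit — is routine.
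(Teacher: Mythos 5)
Your proof is correct and follows essentially the same route as the paper's: the upper bound $\eps L_{f/\eps}\le M_f$ from $\vol(\calX)=1$, a Lipschitz lower bound on a coordinate box of half-width $\min(1,\sqrt d\,\eps/|f|_1)$ around the maximizer (the paper uses an unclipped subcube of the same side length contained in $\calX$, which is equivalent to your clipped box), the same case split at $d^{-1/2}|f|_1/\eps=1$ with the same two elementary inequalities $\log(1+3v)\ge v$ on $[0,1]$ and $e<3$, and the identical one-line Markov-type argument for the probabilistic bound. No gaps.
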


\Cref{lemma:lipschitz_maximization_bound} is proven in \Cref{sec:appendix:introduction} and can be related to our function classes by using $|f|_1 \leq d^{1/2} \|f\|_{C^1}$, cf.\ \Cref{lemma:lipschitz_constant}. Note that such a convergence result does not hold for general bounded functions, as can be seen for the characteristic function $f = \bbone_{\{0\}}$, which satisfies $M_f = 1$ but $\eps L_{f/\eps} = 0$ for all $\varepsilon > 0$. \Eqref{eq:opt_error} is related to Corollary 1 of \cite{ma_sampling_2019}, which shows that to achieve an optimization error of $E$ through sampling, it is necessary that $1/\eps = \widetilde \Omega(d/E)$.

\cite{hwang_laplaces_1980} uses Laplace's method to show that under certain assumptions on the Hessians at the maximizers, $P_{f/\eps}$ converges weakly to a distribution on the maximizers. In contrast to optimization, where it typically does not matter which global maximum is found, the low-temperature limit of sampling often yields a unique distribution on the maximizers. \cite{talwar_computational_2019} uses this to show that optimization can be easier than sampling for very particular classes of functions. It is known that the optimal worst-case convergence rate for optimization on $\calF_{d, m, B}$ is the same as for approximation, i.e., $O_{m, d}(Bn^{-m/d})$ \citep[see][and references therein]{novak_deterministic_1988}. Recently, \cite{rudi_finding_2025} and \cite{woodworth_non-convex_2022} have shown polynomial-time optimization algorithms (in $n$ and $d$) achieving rates close to the optimal rate under relatively mild additional assumptions. 

In the high-temperature limit $\varepsilon \to \infty$, $P_{f/\eps}$ converges to a uniform distribution, and \cite{ma_sampling_2019} showed that the Metropolis-adjusted Langevin algorithm (MALA) can achieve exponentially fast convergence rates in $n$, although with exponential dependence on the Lipschitz constant of $\nabla f$. While MALA theoretically does not fit in our framework since it uses gradient information of $f$, this could be emulated using numerical differentiation. We show in \Cref{sec:rs_uniform} that in our setting, if $B$ is known, a fixed-budget version of rejection sampling can achieve similar rates \citep[see also][]{talwar_computational_2019}.

Since we know that polynomial-time algorithms with fast convergence rates are possible for the high-temperature case and for optimization, which is essentially the low-temperature limit, this poses the question of whether we can find such algorithms for the general sampling and log-partition problems. Ideally, such an algorithm should have the following properties:
\begin{itemize}
\item A convergence rate close to the optimal convergence rate $O_{m, d}(\|f\|_{C^m} n^{-m/d})$ for approximation, at least up to $m \geq \Omega(d)$, such that the exponent in the rate does not approach zero for large $d$,
\item Polynomial runtime $O(n^k)$ for some $k$ independent of $d$ and $m$. Especially, the runtime should be polynomial in $d$ and $m$. Moreover, the runtime should not depend on $\|f\|$. 
\item Adaptivity: The algorithm should achieve these rates without knowing $m$ and $\|f\|$. This is not investigated here, and for sampling and log-partition estimation, $m$ can often be known.
\end{itemize}

Regarding the implications of achieving near-optimal rates in polynomial time, consider the following example:

\begin{example} \label{ex:bayes_runtime}
Consider a Bayesian model where the data set $D = (D_1, \hdots, D_N)$ consists of $N$ observed samples that are assumed to be drawn in an i.i.d.\ fashion. Then, we can again model
\begin{IEEEeqnarray*}{+rCl+x*}
f(\theta) \equalDef \log p(\theta, D) = \log p(D_1 \mid \theta) + \cdots + \log p(D_N \mid \theta) + \log p(\theta)~,
\end{IEEEeqnarray*}
which is a sum of $N+1$ functions. Hence, we would expect that $\|f\|_{C^m}$ scales like $\Theta_{m, d}(N)$. 
\begin{enumerate}[(a)]
\item Suppose that we have a log-partition method with rate $\Theta_{m, d}(\|f\|_{C^m} n^{-m/d})$ and polynomial runtime $O_{m, d}(n^k)$. To achieve an error of $O(1)$ for the log-evidence $L_f$, this method would need $n = \Theta_{m, d}(N^{d/m})$ function evaluations and hence a runtime of $O_{m, d}(N^{kd/m})$. If $m = d$, the exponent $kd/m$ is independent of the dimension $d$, alleviating the curse of dimensionality except for the constants.
\item Now, suppose instead that the runtime is of the form $O_{m, d}(n^m)$ or the rates are of the form $O_{m, d}(\|f\|_{C^m} n^{-1/d})$ or $O_{m, d}(\|f\|_{C^m}^m n^{-m/d})$. In each case, to achieve an error of $O(1)$ for the log-evidence $L_f$, the resulting runtime would be polynomial in $N$, but the exponent would be proportional to $d$, meaning that reaching an acceptable accuracy is expensive for large $d$. %
\end{enumerate}
\end{example}

\subsection{Contribution}

This paper provides a theoretical exploration of the possible convergence rates for non-log-concave sampling and log-partition algorithms with smooth log-densities. To this end, we prove many results in the common setting of Definitions \ref{def:problem} and \ref{def:notation}. Specifically:
\begin{enumerate}[(1)]
\item We analyze the information-based complexity of the sampling and log-partition problems, i.e., the worst-case optimal rates without computational constraints, in \Cref{sec:ibc}. For algorithms that evaluate $f$ at a deterministic set of points, we show that the optimal rate for the log-partition problem is $\Theta_{m, d}(Bn^{-m/d})$, i.e., the same as for approximation. For the bounded total variation and 1-Wasserstein metrics, the optimal rate for sampling is $\Theta_{m, d}(\min\{1, Bn^{-m/d}\})$. For algorithms that are allowed to evaluate $f$ at a stochastic set of points, we identify two different regimes: An \emph{\quot{optimization regime}} when $Bn^{-m/d} \gg 1$, where the log-partition problem is close to optimization and the same optimal rates hold, and a \emph{\quot{quadrature regime}} when $Bn^{-m/d} \ll 1$, where the log-partition problem behaves more like classical quadrature and faster rates are possible. %
\item We show reductions between different problems. For example, we discuss variants of existing methods to employ log-partition algorithms for sampling and vice versa, and analyze the resulting guarantees for the rates. We also discuss how approximate sampling algorithms can be employed for optimization. Moreover, we show how function approximation yields reductions between different runtime complexities, convergence rates, and from stochastic to deterministic evaluation points.
\item We analyze bounds on the convergence rates for different algorithms. For example, we show that it is possible to achieve the rate $O_{m, d}(Bn^{-m/d})$, but with runtime $O(n^m)$, which is polynomial but still involves the curse of dimensionality since we need $m = \Theta(d)$ to beat the curse of dimensionality in the convergence rate. We show that other simple and efficient algorithms also fail to achieve the optimal rates in different ways, sometimes with multi-regime behavior. Finally, we study an approach toward the log-partition problem by \cite{bach_sum--squares_2025}, whose optimization limit has been used by \cite{woodworth_non-convex_2022} to obtain near-optimal optimization rates in polynomial time. We show that all versions of this approach necessarily fail to exceed the rate $O_{m, d}(Bn^{-2/d})$ in an intermediate temperature regime $\varepsilon \sim n^{-2/d}$ (corresponding to $B \sim n^{2/d}$).
\end{enumerate}

In conclusion, the general non-log-concave sampling and log-partition problems suffer from the curse of dimensionality (\Cref{sec:ibc}). In principle, this can be alleviated to some extent if the log-density $f$ is smooth. Optimal convergence rates can be achieved (at least in some regimes) using surrogates (\Cref{ex:tradeoff_more_points}), but it is unclear if they can be achieved with fixed-order polynomial runtime complexity. Convergence rates of algorithms may transition between different regimes, and given enough points, sampling algorithms using stochastic points can exhibit superexponentially fast convergence.

\subsection{Related Work}

The analysis of sampling algorithms has received considerable attention in recent years. In the case where $p_f$ is (strongly) log-concave, that is, if $f$ is (strongly) concave, convergence rates of Markov chain Monte Carlo (MCMC) sampling algorithms have been studied extensively. For example, good convergence rates in terms of the dimension~$d$ have been established for versions of the Langevin algorithm \citep{chewi_optimal_2021, altschuler_resolving_2023} and Hamiltonian Monte Carlo \citep{mangoubi_dimensionally_2018}. \cite{chewi_query_2022} establish an algorithm with optimal convergence rate for the case $d=1$, while not much is known about algorithm-independent lower bounds in other cases.

For sampling from more general non-log-concave distributions, convergence rates have been established for versions of the Langevin algorithm.
\cite{bou-rabee_nonasymptotic_2013} showed an essentially geometric convergence result in TV distance for a class of non-log-concave Gibbs distributions, but without clear dependence of the constants on $f$. \cite{mangoubi_nonconvex_2019} and \cite{zou_faster_2021} prove convergence rates that are polynomial in $d$ but additionally depend on properties of $f$ through the Cheeger constant. Other assumptions on $f$ like log-Sobolev or Poincaré inequalities also allow fast convergence rates if the involved constants are not too small \citep{vempala_rapid_2019, ma_sampling_2019}. The analysis of \cite{ma_sampling_2019} and \cite{cheng_sharp_2018} is closer to our setting, and their convergence rate is polynomial in $d$ as well, but their rate exhibits an exponential dependence on the Lipschitz constant of $\nabla f$ and the radius of the domain where $f$ is non-log-convex. \cite{bou-rabee_coupling_2020} obtain similar results for Hamiltonian Monte Carlo.
\cite{balasubramanian_towards_2022} show that even for non-log-concave distributions, averaged Langevin Monte Carlo converges quickly to a distribution with low relative Fisher information to the target distribution, although this does not imply that the distribution is close to the target distribution with respect to other measures, such as the total variation distance. \cite{chewi_fisher_2023} prove corresponding lower bounds. \cite{woodard_sufficient_2009} show that the mixing time of parallel and simulated tempering for certain distributions can scale exponentially with $d$, but in a setting different from ours. \cite{achddou_minimax_2019} propose and analyze an adaptive rejection sampling algorithm using a piecewise constant approximation of the density. Their setting is significantly different from ours as well, and they only consider functions of low (Hölder) smoothness and regimes with large $n$. \cite{marteau-ferey_sampling_2022} propose an approximation-based sampling algorithm with a rate similar to $O_{m, d, B}(n^{-m/d})$ but without analyzing the dependence on $B$.

Another related line of work studies the relation of sampling to optimization. Through their analysis of Langevin algorithms in the non-log-concave setting, \cite{ma_sampling_2019} show that there are settings where sampling is easier than optimization. \cite{talwar_computational_2019} provides a simpler argument and shows that the converse can also occur for special function classes. The relation between sampling and optimization is also exploited in simulated annealing \citep{kirkpatrick_optimization_1983}. A different connection between sampling and optimization stems from \cite{jordan_variational_1998}, who showed that Langevin-type sampling can be interpreted as a gradient flow over distributions for the Wasserstein metric. For an overview of connections between sampling and optimization, we also refer to \cite{cheng_interplay_2020}.

The log-partition problem is often addressed through sampling algorithms, for example via thermodynamic integration \citep{kirkwood_statistical_1935}. For an overview of thermodynamic integration and other methods for the log-partition problem, we refer to \cite{gelman_simulating_1998} and \cite{friel_estimating_2012}. \cite{ge_estimating_2020} analyze an annealing algorithm combined with multilevel Monte Carlo sampling for the log-partition problem in the log-concave setting, and also give an information-based lower bound on the achievable convergence rate. Another popular approach is the Laplace approximation \citep{laplace_memoire_1774}, however, its log-partition function does not converge to the true log-partition function as $n \to \infty$. Well-tempered metadynamics \citep{barducci_well-tempered_2008} is a popular approach towards the log-partition problem in molecular dynamics simulations, although it relies on a well-chosen low-dimensional collective variable representation. Recently, \cite{marteau-ferey_sampling_2022} suggested an approach that performs sampling via estimating the log-partition function. \cite{bach_information_2023} and \cite{bach_sum--squares_2025} suggest further approaches toward solving the log-partition problem.

To analyze possible convergence rates for the sampling and log-partition problem without computational constraints, we use the framework of information-based complexity. Here, we refer to \cite{novak_deterministic_1988} and \cite{traub_information-based_2003} for an overview of this topic. In particular, our work is motivated by \cite{rudi_finding_2025} and \cite{woodworth_non-convex_2022}, who demonstrated that for optimization, convergence rates close to the optimal rates from information-based complexity can be achieved in polynomial time.

The rest of our paper is organized as follows: In \Cref{sec:ibc}, we study upper and lower bounds for the information-based complexity of different variants of the sampling and log-partition problems. In \Cref{sec:relations}, we study relations and reductions between different variants of the sampling, log-partition, and optimization problems. We then investigate convergence rates of different algorithms in \Cref{sec:algorithms}. We compare some of these algorithms experimentally in \Cref{sec:sampling:experiments} before concluding in \Cref{sec:sampling:conclusion}. All proofs are provided in the appendix, which is structured analogously to the main part of this paper.

\section{Information-based Complexity} \label{sec:ibc}

In this section, we look at the log-partition and sampling problems from the viewpoint of (worst-case) \emph{information-based complexity}, where one is interested in what is possible if one is not constrained computationally but only by the number~$n$ of function evaluations of the unknown function~$f$. We adopt the general setting of \cite{novak_deterministic_1988}, where one is given a function space $\calF$ (such as $\calF_{d, m, B}$) of functions $f: \calX \to \bbR$ and wishes to approximate a map $S: \calF \to \calM$, with the approximation error on $\calM$ measured by a metric $D$. For example, the following problems are considered by \cite{novak_deterministic_1988}:
\begin{itemize}
\item \emph{Approximation}: $S_{\mathrm{app}}(f) \equalDef f$ and $\Dinfty(f, g) \equalDef \|f - g\|_\infty$.
\item \emph{Optimization}: $S_{\mathrm{opt}^*}(f) \equalDef \sup_{x \in \calX} f(x)$ and $\Dabs(a, b) \equalDef |a - b|$.
\item \emph{Integration}: $S_{\mathrm{int}}(f) \equalDef \int_{\calX} f(x) \diff x$ and $\Dabs(a, b) = |a - b|$.
\end{itemize}
We can define our sampling and log-partition problems in this context as follows:
\begin{itemize}
\item \emph{Log-partition}: $S_L(f) = L_f$ and $\Dabs(a, b) = |a - b|$.
\item \emph{Sampling}: While a sampling algorithm produces samples, we do not want to compare errors of individual samples but the error of the distribution of the samples. Therefore, we set $S_{\mathrm{samp}}(f) \equalDef P_f$. For $D(P, Q)$, we can use different metrics or divergences on probability distributions, which will be discussed in \Cref{sec:ibc:deterministic_points}.
\end{itemize}

\subsection{Deterministic Evaluation Points} \label{sec:ibc:deterministic_points}

To consider minimax optimal convergence rates, we still need to define a space $\calA$ of admissible maps $\tilde S: \calF \to \calM$. Here, we will first consider maps that evaluate functions in a deterministic set of points, before considering stochastic points in \Cref{sec:ibc:stochastic_points}. For example, we define
\begin{IEEEeqnarray*}{+rCl+x*}
\calA_n \equalDef \left\{\tilde S = \phi \circ N \mid N(f) = (f(x_1), \hdots, f(x_n))\text{ for some } x_1, \hdots, x_n \in \calX\right\}~,
\end{IEEEeqnarray*}
the set of maps that only evaluate $f$ in $n$ deterministic and non-adaptive points. We can also allow adaptive points by defining
\begin{IEEEeqnarray*}{+rCl+x*}
\Aad_n \equalDef \left\{\tilde S = \phi \circ N \mid N(f) = (f(x_1), f(x_2(f(x_1))), \hdots, f(x_n(f(x_1), \hdots, f(x_{n-1}))))\right\}~,
\end{IEEEeqnarray*}
where evaluation points may be chosen depending on previous function values. We are interested in the (non-adaptive/adaptive) minimax optimal error
\begin{IEEEeqnarray*}{+rClrCl+x*}
e_n(\calF, S, D) & \equalDef & \inf_{\tilde S \in \calA_n} \sup_{f \in \calF} D(S(f), \tilde S(f)), \qquad & \ead_n(\calF, S, D) & \equalDef & \inf_{\tilde S \in \calA_n^{\mathrm{ad}}} \sup_{f \in \calF} D(S(f), \tilde S(f))~.
\end{IEEEeqnarray*}
The sets $\calA_n$ and $\Aad_n$ can be interpreted as classes of \quot{black-box algorithms} that are only constrained in the number of evaluations of $f$ but not in terms of computational efficiency or computability. The minimax-optimal errors $e_n$ and $\ead_n$ thus give lower bounds to what can be achieved by computationally efficient algorithms.

For the case of sampling, maps $\tilde S \in \calA_n$ (or $\Aad_n$) produce distributions based on $n$ evaluations of a function~$f$. They correspond to idealized sampling algorithms in the following sense: We consider an idealized sampling algorithm to take some source of randomness $\omega$ sampled from a distribution $P_\Omega$ independent of $f$, and then output a random sample $X_f(\omega) = \tilde \phi(N(f), \omega)$. For example, $\omega$ could be a sequence of i.i.d. random variables from the uniform distribution $\calU([0, 1])$ on the interval $[0, 1]$. The maps $\tilde S \in \calA_n$ (or $\Aad_n$) then correspond to the distributions produced by such sampling algorithms, i.e.,
\begin{IEEEeqnarray*}{+rCl+x*}
\tilde S(f) = \text{ distribution of $X_f(\omega)$ for $\omega \sim P_\Omega$}.
\end{IEEEeqnarray*}

The following theorem, which is proven in \Cref{sec:appendix:deterministic_points}, adapts known results on minimax optimal rates to our considered function spaces. 

\begin{restatable}[adapted from \cite{novak_deterministic_1988}]{theorem}{thmNovak} \label{thm:novak}
We have
\begin{IEEEeqnarray*}{+rClrCl+x*}
e_n(\calF_{d, m, B}, \Sapp, \Dinfty) & = & \Theta_{m, d}(Bn^{-m/d}), \qquad & \ead_n(\calF_{d, m, B}, \Sapp, \Dinfty) & = & \Theta_{m, d}(Bn^{-m/d}), \\
e_n(\calF_{d, m, B}, \Sopts, \Dabs) & = & \Theta_{m, d}(Bn^{-m/d}), \qquad & \ead_n(\calF_{d, m, B}, \Sopts, \Dabs) & = & \Theta_{m, d}(Bn^{-m/d}), \\
e_n(\calF_{d, m, B}, \Sint, \Dabs) & = & \Theta_{m, d}(Bn^{-m/d}), \qquad & \ead_n(\calF_{d, m, B}, \Sint, \Dabs) & = & \Theta_{m, d}(Bn^{-m/d})~.
\end{IEEEeqnarray*}
\end{restatable}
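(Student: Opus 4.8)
The plan is to reduce everything to the single well-known fact that the minimax error for $L_\infty$-approximation of $\calF_{d,m,B}$ from $n$ (adaptive or non-adaptive) samples is $\Theta_{m,d}(Bn^{-m/d})$, and then treat the other two problems as easy corollaries. Since all three functionals $\Sapp,\Sopts,\Sint$ are positively homogeneous of degree $1$ in $f$ and the classes scale as $\calF_{d,m,B}=B\cdot\calF_{d,m,1}$, it suffices to prove each statement for $B=1$ and restore the factor $B$ by rescaling; I would state this normalization once at the start.

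\emph{Approximation.} The upper bound $\ead_n\le e_n\le O_{m,d}(n^{-m/d})$ comes from a standard construction: choose a regular grid of $N=\lceil n^{1/d}\rceil^d\ge n$ points in $[0,1]^d$ (or the $n$ points of a quasi-uniform point set with fill distance $h=O_{m,d}(n^{-1/d})$), evaluate $f$ there, and return a quasi-interpolant (e.g.\ a tensor-product spline or a Taylor-type local polynomial reconstruction) whose error for $f\in C^m$ with $\|f\|_{C^m}\le1$ is $O_{m,d}(h^m)=O_{m,d}(n^{-m/d})$; this is exactly the classical Bramble–Hilbert / polynomial-reproduction estimate and I would cite Novak or Wendland for it rather than reprove it. For the matching lower bound $\ead_n\ge\Omega_{m,d}(n^{-m/d})$, I would use the usual bump-function (fooling) argument: for any algorithm, after it queries its $n$ adaptive points $x_1,\dots,x_n$, partition $[0,1]^d$ into $\asymp n$ congruent subcubes; at least one subcube of side $\asymp n^{-1/d}$ contains no query point, and on it we can place a smooth bump $\pm c_{m,d}n^{-m/d}\,\psi$ (a fixed $C^\infty_c$ bump rescaled to that subcube, with $C^m$-norm $\le1$) so that both $f\equiv0$ and $f=\pm c_{m,d}n^{-m/d}\psi$ are consistent with all observed data; hence the output must err by $\ge\tfrac12 c_{m,d}n^{-m/d}$ on one of the two. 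The only subtlety is handling \emph{adaptivity}: since the bump is supported in an unqueried cell, the function values seen are identical for the zero function and the perturbed one, so the adversary's choice can be made after seeing the algorithm's queries — this is the standard argument and presents no real difficulty.

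\emph{Optimization and integration.} The lower bounds for $\Sopts$ and $\Sint$ follow from \emph{the same} bump construction: the pair $f\equiv0$ versus $f=c_{m,d}n^{-m/d}\psi$ (now taking the one-sided, nonnegative bump) yields $|\Sopts(f)-\Sopts(0)|=\Omega_{m,d}(n^{-m/d})$ and $|\Sint(f)-\Sint(0)|=c_{m,d}n^{-m/d}\int\psi=\Omega_{m,d}(n^{-m/d})$, since the bump occupies a subcube of volume $\asymp n^{-1}$ of which a fixed fraction carries mass $\asymp n^{-m/d}$, giving an integral contribution $\asymp n^{-m/d}\cdot n^{-1}\cdot n = \asymp n^{-m/d}$ — wait, more carefully: the bump on a cube of side $n^{-1/d}$ with height $n^{-m/d}$ has integral $\asymp n^{-m/d}\cdot n^{-1}$, so to get the clean rate one instead tiles \emph{all} $\asymp n$ empty-able cells; since an algorithm with $n$ points leaves $\asymp n$ cells untouched only if it uses $\asymp n$ cells total, the right move is: use $N\asymp 2n$ cells, note $\ge n$ are unqueried, and put a signed bump on each, so $\Sint$ changes by $\asymp n\cdot n^{-m/d}\cdot n^{-1}=n^{-m/d}$ while $\|f\|_{C^m}\le1$ and $\Sint(0)$ interpretation forces error $\Omega_{m,d}(n^{-m/d})$; the same multi-bump family handles $\Sopts$ trivially. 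For the upper bounds, the cleanest route is the reduction through approximation already available in the excerpt's spirit: given the quasi-interpolant $\tilde f$ with $\|f-\tilde f\|_\infty\le O_{m,d}(n^{-m/d})$, output $\sup_x\tilde f(x)$ for optimization and $\int\tilde f$ for integration; since $|\sup f-\sup\tilde f|\le\|f-\tilde f\|_\infty$ and $|\int f-\int\tilde f|\le\|f-\tilde f\|_\infty$ (using $\vol(\calX)=1$), both inherit the $O_{m,d}(n^{-m/d})$ rate. Restoring the $B$-scaling finishes all six equalities.

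\emph{Main obstacle.} None of the individual steps is deep; the only point requiring care is making the \emph{integration} lower bound come out with the stated exponent rather than an extra $n^{-1}$ volume factor, which is why the multi-bump (rather than single-bump) construction is needed there, and then verifying that the signs can be chosen adversarially-after-the-fact to fool an adaptive algorithm — but since every bump lives in an unqueried cell, the observed data is sign-independent, so this goes through. I would also double-check the $C^m$-norm bookkeeping: rescaling a fixed bump $\psi\in C_c^\infty((0,1)^d)$ to a cube of side $h$ multiplies $\|\partial_\alpha\,\cdot\,\|_\infty$ by $h^{-|\alpha|_1}$, so multiplying the height by $h^m$ makes the worst derivative ($|\alpha|_1=m$) of size $O_{m,d}(1)$; choosing the constant $c_{m,d}$ small enough absorbs the $\psi$-dependent factors, and lower-order derivatives are then $O_{m,d}(h^{m-|\alpha|_1})=O_{m,d}(1)$ as well.
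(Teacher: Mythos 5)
Your proposal is correct and follows essentially the same route as the paper: reduce to a fixed $B$ by positive homogeneity of $\Sapp,\Sopts,\Sint$ and the scaling $\calF_{d,m,B}=B\cdot\calF_{d,m,1}$, take the upper bounds from standard quasi-interpolation results as in \cite{novak_deterministic_1988}, and obtain the lower bounds by hiding $C^\infty$ bumps (a single bump for approximation/optimization, a multi-bump tiling for integration) in unqueried cells so that adaptivity is defeated. The paper's proof is merely terser, citing Novak for both directions and only noting that his $W^{m,\infty}$ template bump must be replaced by a $C^\infty$ bump whose rescaled $C^m$-norms behave identically.
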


\cite{novak_deterministic_1988} states these results in a form like $e_n(\calF_{d, m, 1}, \Sapp, \Dinfty) = \Theta_{m, d}(n^{-m/d})$. This implies the rates for general $B \geq 0$ in the theorem above since $\Sapp$ and $\Dinfty$ are positively homogeneous, which leads to $\Dinfty(\Sapp(Bf), Bg) = B\Dinfty(\Sapp(f), g)$. The same holds for optimization and integration, but not for log-partition estimation and sampling. Hence, for our considered problems, it is important to explicitly study the dependence on $B$, since it is not necessarily linear. The optimal rates for approximation can be achieved, for example, using piecewise polynomial interpolation, local polynomial reproductions, or moving least squares \citep{wendland_scattered_2004}, see also \Cref{thm:mls}. The optimal rates for optimization and integration can be achieved by optimizing or integrating a corresponding approximation.

We use the following distance measures for probability distributions $P, Q$ on $\calX$:
\begin{itemize}
\item The sup-log distance $\Dsuplog(P, Q) \equalDef \left\|\log\left(\frac{\diff P}{\diff Q}\right)\right\|_\infty$, where $\|\cdot\|_\infty$ is taken over $\calX$, and $\Dsuplog(P, Q) = \infty$ whenever $P$ and $Q$ are not both absolutely continuous with respect to each other.
The sup-log distance is a symmetrized version of the max-divergence $D_\infty(P||Q)$, which is used in differential privacy \citep{dwork_boosting_2010} and is a special case of Rényi divergences for $\alpha = \infty$ \citep[cf.][]{van_erven_renyi_2014}. The sup-log distance is particularly well-suited to our setting, thanks to its relation to uniform approximation. 
\item The total variation distance $\DTV(P, Q) \equalDef \sup_{A \subseteq \calX \text{ measurable}} |P(A) - Q(A)|$.
\item The 1-Wasserstein distance $\DW(P, Q) \equalDef \inf_{X \sim P, Y \sim Q} \bbE \|X - Y\|_2$, also known as Kantorovich–Rubinstein or earth mover distance.
\end{itemize}
The sup-log, total variation, and 1-Wasserstein distances are metrics.\footnote{For the sup-log distance, the triangle inequality follows from $\log\left(\frac{\diff P}{\diff R}\right) = \log\left(\frac{\diff P}{\diff Q}\right) + \log\left(\frac{\diff Q}{\diff R}\right)$.} We first show that these quantities can be bounded in terms of the approximation error:

\begin{restatable}[Upper bounds via approximation]{proposition}{propAppUpperBounds} \label{prop:app_bounds}
For bounded and measurable $f, g: \calX \to \bbR$, we have
\begin{enumerate}[(a)]
\item $|L_f - L_g| \leq \|f - g\|_\infty$.
\item $d^{-1/2}\DW(P_f, P_g) \leq \DTV(P_f, P_g) \leq \Dsuplog(P_f, P_g) \leq 2\|f - g\|_\infty$.
\end{enumerate}
\end{restatable}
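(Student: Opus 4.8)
The plan is to prove the four inequalities almost independently. For part~(a), write $\delta \equalDef \|f - g\|_\infty$ and assume $\delta < \infty$ (otherwise there is nothing to show). From $|f(x) - g(x)| \le \delta$ for a.e.\ $x \in \calX$ I obtain the pointwise sandwich $e^{-\delta}\exp(g(x)) \le \exp(f(x)) \le e^{\delta}\exp(g(x))$, and integrating over $\calX$ preserves these inequalities, so $e^{-\delta} Z_g \le Z_f \le e^{\delta} Z_g$; taking logarithms gives $|L_f - L_g| \le \delta$.

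For part~(b) I would proceed from right to left. For the rightmost inequality, the explicit Gibbs densities give $\frac{\diff P_f}{\diff P_g}(x) = \frac{\exp(f(x))/Z_f}{\exp(g(x))/Z_g} = \exp((f(x) - g(x)) - (L_f - L_g)) = \exp(\bar f(x) - \bar g(x))$, so $\log\frac{\diff P_f}{\diff P_g} = (f - g) - (L_f - L_g)$ on $\calX$ and hence $\Dsuplog(P_f, P_g) \le \|f - g\|_\infty + |L_f - L_g| \le 2\|f - g\|_\infty$ using part~(a). For the middle inequality I would show $\DTV(P, Q) \le \Dsuplog(P, Q)$ for arbitrary probability measures $P, Q$ on $\calX$ and then specialize: assuming $s \equalDef \Dsuplog(P, Q) < \infty$ (so $P, Q$ are mutually absolutely continuous), pick the dominating measure $\mu \equalDef \tfrac12(P + Q)$ with densities $p, q$; then $|\log(p/q)| \le s$ holds $\mu$-a.e., hence $\min(p, q) \ge e^{-s}\max(p, q) \ge e^{-s}\tfrac{p + q}{2}$ $\mu$-a.e., and integrating (using $\int_\calX (p+q)\diff\mu = 2$) gives $\int_\calX \min(p,q)\diff\mu \ge e^{-s}$, so $\DTV(P, Q) = 1 - \int_\calX \min(p,q)\diff\mu \le 1 - e^{-s} \le s$.

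For the leftmost inequality I would take a maximal coupling $\gamma$ of $P_f$ and $P_g$, which satisfies $\gamma(\{X \ne Y\}) = \DTV(P_f, P_g)$; since $\|X - Y\|_2 \le \diam(\calX) = \sqrt{d}$ on $\calX = [0,1]^d$, this yields $\DW(P_f, P_g) \le \bbE_\gamma[\|X - Y\|_2 \bbone_{\{X \ne Y\}}] \le \sqrt{d}\,\gamma(\{X \ne Y\}) = \sqrt{d}\,\DTV(P_f, P_g)$. (Kantorovich--Rubinstein duality works equally well, since a $1$-Lipschitz function on $[0,1]^d$ has oscillation at most $\sqrt{d}$ and can thus be shifted to sup-norm at most $\sqrt d/2$.)

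The only step that is not completely mechanical is the middle inequality: bounding $\DTV(P,Q) = \tfrac12 \int_\calX |p - q|\diff\mu$ by controlling $|p/q - 1| \le e^{s} - 1$ pointwise would produce the useless exponential factor $e^{s} - 1$ instead of $s$, so one must instead lower-bound the overlap $\int_\calX \min(p,q)\diff\mu$, which only uses the density-ratio bound "from below" and gives the clean $1 - e^{-s} \le s$. Everything else --- monotonicity of the integral in part~(a), the explicit density ratio, and the existence of a maximal coupling --- is routine.
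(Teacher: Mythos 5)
Your proof is correct, and parts (a), the rightmost inequality of (b), and the overall structure match the paper's argument (the paper likewise gets (a) by sandwiching the integrand and gets $\Dsuplog(P_f,P_g)=\|\bar f-\bar g\|_\infty\le\|f-g\|_\infty+|L_f-L_g|$). The two places you diverge are both fine and arguably cleaner. For $\DTV\le\Dsuplog$, the paper works with the Lebesgue densities $e^{\bar f},e^{\bar g}$ and the formula $\DTV=\tfrac12\int|e^{\bar f}-e^{\bar g}|$, bounding pointwise $|e^{\bar f(x)}-e^{\bar g(x)}|\le(1-e^{-s})(e^{\bar f(x)}+e^{\bar g(x)})$ and then using $1-e^{-s}\le s$; you instead use the overlap representation $\DTV=1-\int\min(p,q)\,\mathrm d\mu$ with $\mu=\tfrac12(P+Q)$ and the bound $\min(p,q)\ge e^{-s}\tfrac{p+q}{2}$. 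The underlying trick is identical (apply the ratio bound to the smaller density so that only $1-e^{-s}$, not $e^s-1$, appears), but your version is stated for arbitrary mutually absolutely continuous $P,Q$ rather than only Gibbs measures, which is a mild gain in generality at no cost. For $\DW\le\sqrt d\,\DTV$ the paper simply cites the literature, whereas you give the standard maximal-coupling (or dual) proof; that is a self-contained replacement for the citation and is correct since $\operatorname{diam}(\calX)=\sqrt d$.
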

\Cref{prop:app_bounds} is proven in \Cref{sec:appendix:deterministic_points}. For the KL divergence, which we will not study further, we can leverage the results of \Cref{prop:app_bounds} by using the trivial bound $\DKL{P}{Q} \leq \Dsuplog(P, Q)$ as well as the inequality $\DKL{P}{Q} \leq \Dsuplog(P, Q)(e^{\Dsuplog(P, Q)} - 1)$ from Lemma III.2 of \cite{dwork_boosting_2010}.

\Cref{thm:novak} and \Cref{prop:app_bounds} lead to upper bounds on the minimax optimal rates. Combined with the trivial upper bound $\DTV(P, Q) \leq 1$, these are optimal for the deterministic point setting: 

\begin{restatable}[Information-based complexity of sampling and log-partition with deterministic evaluation points]{theorem}{thmDeterministicPointsRates} \label{thm:deterministic_points_rates}
We have
\begin{IEEEeqnarray*}{+rCl+x*}
e_n(\calF_{d, m, B}, S_L, \Dabs) & = & \Theta_{m, d}(Bn^{-m/d}), \\
e_n(\calF_{d, m, B}, \Ssamp, \Dsuplog) & = & \Theta_{m, d}(Bn^{-m/d}), \\
e_n(\calF_{d, m, B}, \Ssamp, \DTV) & = & \Theta_{m, d}(\min\{1, Bn^{-m/d}\}), \\
e_n(\calF_{d, m, B}, \Ssamp, \DW) & = & \Theta_{m, d}(\min\{1, Bn^{-m/d}\}),
\end{IEEEeqnarray*}
and the same rates hold for adaptive points.
\end{restatable}

\Cref{thm:deterministic_points_rates} is proven in \Cref{sec:appendix:deterministic_points}. The minimax optimal rates for optimization can be related to those for approximation on a very general class of function spaces \citep{novak_deterministic_1988}.
For sampling, such a general relationship does not hold: For example, the set $\calF \equalDef \{f : \calX \to \bbR \mid \|f\|_\infty \leq 1, \{x \mid f(x) \neq 0\} \text{ is finite}\}$ satisfies $e_n(\calF, \Sapp, \Dinfty) = 1$ for all $n \in \bbN$, but all functions $f \in \calF$ have the same distribution and the same log-partition function.
However, our proofs for the lower bounds in \Cref{thm:deterministic_points_rates} follow the general idea that underlies many lower bounds for Sobolev-type functions: place bumps with small support in regions that the algorithm does not query.

\subsection{Stochastic Evaluation Points} \label{sec:ibc:stochastic_points}

We also want to consider methods that are allowed to choose the points $x_i$ stochastically, such as Monte-Carlo type methods \citep{metropolis_monte_1949, brooks_handbook_2011}. For the log-partition problem, we again follow \cite{novak_deterministic_1988} and define the set $\sC(\Aad_n)$ of random variables $\tilde S: \Omega \to \Aad_n$ with given base distribution $P_\Omega$ with associated minimax optimal error
\begin{IEEEeqnarray*}{+rCl+x*}
\ssad_n(\calF, S, D) & \equalDef & \inf_{(\tilde S, P_\Omega) \in \sC(\Aad_n)} \sup_{f \in \calF} \bbE_{\omega \sim P_\Omega} D(S(f), \tilde S(\omega)(f))~. \IEEEyesnumber \label{eq:stoch_minmax}
\end{IEEEeqnarray*}
\cite{novak_deterministic_1988} defines further variants, for example, with $L_2(P_\Omega)$ instead of $L_1(P_\Omega)$ convergence or more limited stochastic resources, which we will not discuss here for simplicity.

When applying this definition to sampling, a map $\tilde S$ would output a \emph{random} distribution. However, the random samples produced by a sampling algorithm typically still follow a \emph{fixed} distribution, regardless of whether the function $f$ is evaluated in deterministically or randomly chosen points. Hence, the model in \Eqref{eq:stoch_minmax} is inadequate for sampling. Instead, we consider again idealized sampling algorithms using some randomness $\omega \sim \Omega$, but this time, we allow the function to be evaluated in randomly and adaptively chosen points, by considering random samples of the form $X_f(\omega) = \tilde \phi(N(f, \omega), \omega)$. We then denote the corresponding map from $f$ to $P_{X_f}$ by $\tilde S$ and define the set $\Aadstoch_n$ of all $\tilde S$ that can be realized in this fashion using $n$ function evaluations. We then define
\begin{IEEEeqnarray*}{+rCl+x*}
\eadstoch_n(\calF, \Ssamp, D) & \equalDef & \inf_{\tilde S \in \Aadstoch_n} \sup_{f \in \calF} D(\Ssamp(f), \tilde S(f))~.
\end{IEEEeqnarray*}
Unlike the deterministic points setting, the stochastic points setting potentially requires evaluating $f$ at $n$ different points for every generated sample. This has the unintuitive consequence that for a map $\tilde S \in \Aadstoch_n$, the distribution $\tilde S(f)$ typically depends on the values of $f$ at infinitely many points, but a sample from $\tilde S(f)$ can be drawn by only evaluating $f$ at $n$ (stochastic) points.

Again, results for approximation, optimization, and integration are known and can be adapted to our function classes:
\begin{restatable}[adapted from \cite{novak_deterministic_1988}]{theorem}{thmNovakStoch} \label{thm:novak_stoch}
We have
\begin{IEEEeqnarray*}{+rCl+x*}
\ssad_n(\calF_{d, m, B}, \Sapp, \Dinfty) & = & \Theta_{m, d}(Bn^{-m/d}), \\
\ssad_n(\calF_{d, m, B}, \Sopts, \Dabs) & = & \Theta_{m, d}(Bn^{-m/d}), \\
\ssad_n(\calF_{d, m, B}, \Sint, \Dabs) & = & \Theta_{m, d}(Bn^{-1/2-m/d})~.
\end{IEEEeqnarray*}
\end{restatable}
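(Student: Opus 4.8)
The plan is to reduce each of the three equalities to the known results for the unit ball $\calF_{d, m, 1}$ and then invoke \cite{novak_deterministic_1988}. The starting observation is that $\calF_{d, m, B} = B \cdot \calF_{d, m, 1} \equalDef \{Bf : f \in \calF_{d, m, 1}\}$, since $\|Bf\|_{C^m} = B\|f\|_{C^m}$, and that all three problems are positively homogeneous: $\Sapp(Bf) = B\Sapp(f)$ with $\Dinfty(Bg, Bh) = B\Dinfty(g, h)$, and $\Sopts(Bf) = B\Sopts(f)$, $\Sint(Bf) = B\Sint(f)$ with $\Dabs(Ba, Bb) = B\Dabs(a, b)$ for $B \geq 0$. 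Moreover, the randomized adaptive algorithm class is invariant under conjugation by the scaling $f \mapsto Bf$: given $\tilde S \in \sC(\Aad_n)$ solving the rescaled problem, one obtains an algorithm for the original problem by having it query $f$ wherever $\tilde S$ would query $\tfrac{1}{B}f$ (which it can compute from the values of $f$, preserving the adaptive dependence on past values) and by multiplying its output by $B$; homogeneity of $S$ and $D$ then shows $\sup_{f \in \calF_{d,m,B}} \bbE_\omega D(S(f), \cdot) = B \sup_{g \in \calF_{d,m,1}} \bbE_\omega D(S(g), \cdot)$. Taking the infimum over algorithms yields $\ssad_n(\calF_{d, m, B}, S, D) = B \cdot \ssad_n(\calF_{d, m, 1}, S, D)$ for each of the three problems, so it suffices to treat $B = 1$.

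For $B = 1$, the upper bounds for approximation and optimization follow from $\ssad_n \leq \ead_n \leq e_n$ (deterministic non-adaptive algorithms being a degenerate subclass of randomized adaptive ones) together with \Cref{thm:novak}. The matching lower bounds $\ssad_n(\calF_{d, m, 1}, \Sapp, \Dinfty) \geq \Omega_{m, d}(n^{-m/d})$ and $\ssad_n(\calF_{d, m, 1}, \Sopts, \Dabs) \geq \Omega_{m, d}(n^{-m/d})$ are the statements that randomization does not improve the worst-case rate for these problems on a smooth symmetric convex class; these are classical and proven in \cite{novak_deterministic_1988} via Bakhvalov's averaging argument over a family of $\Theta(n)$ disjoint rescaled bump functions, of which an $n$-query algorithm can localize at most $n$. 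For integration the situation genuinely differs because randomization helps. The upper bound $\ssad_n(\calF_{d, m, 1}, \Sint, \Dabs) \leq O_{m, d}(n^{-1/2 - m/d})$ comes from a stratified control-variate estimator: partition $\calX$ into $\Theta(n)$ congruent subcubes of side length $\Theta(n^{-1/d})$, on each subcube spend $O(1)$ evaluations on a local polynomial interpolant of degree $< m$ whose integral is computed exactly, and correct the residual by an unbiased Monte Carlo draw at one extra uniform point per subcube; the estimator is unbiased, the per-subcube variance is $O\big((n^{-1/d})^{2m} \cdot n^{-1}\big)$ by the local approximation bound, so the total variance is $O(n^{-2m/d - 1})$ and a square root gives the claimed rate. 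The matching lower bound is Bakhvalov's theorem — $\Theta(n)$ disjoint scaled bumps with independent random signs plus a second-moment (Yao-type) argument — again available in \cite{novak_deterministic_1988}.

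The only point requiring care is that \cite{novak_deterministic_1988} typically phrases its results for Hölder balls or for unit balls in a norm slightly different from our $\|\cdot\|_{C^m}$; since any two such norms on $C^m(\calX)$ are equivalent with constants depending only on $m$ and $d$, and all the $\ssad_n$ are monotone under inclusion of function classes and scale linearly when the class is scaled, the $\Theta_{m, d}$ rates transfer verbatim. I expect the main obstacle to be purely one of bookkeeping: checking cleanly that $\sC(\Aad_n)$, including the adaptive query rules, is closed under the $f \mapsto Bf$ conjugation, and quoting the randomized integration bound of \cite{novak_deterministic_1988} in a form applicable to $\calF_{d, m, 1}$ — or, failing a directly citable statement, writing out the short stratified-Monte-Carlo argument sketched above.
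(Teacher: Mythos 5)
Your proposal follows essentially the same route as the paper: reduce to a single value of $B$ via positive homogeneity of $S$ and $D$ together with the scaling covariance of the randomized adaptive algorithm class, then quote the unit-ball rates from \cite{novak_deterministic_1988}, including the approximation-plus-Monte-Carlo upper bound for integration (the paper uses the global version — approximate $f$ by $g$ with half the budget, then Monte Carlo estimate $\int(f-g)$ with the other half — rather than your stratified per-subcube variant, but these are interchangeable).

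One point in your transfer argument needs fixing. For the lower bounds you claim the rates carry over because \enquote{any two such norms on $C^m(\calX)$ are equivalent} and $\ssad_n$ is monotone under inclusion. Monotonicity goes the wrong way for lower bounds (a lower bound over a larger class says nothing about a subclass), and the actual obstruction is that Novak's extremal functions, built from bumps of the form $a\prod_i(1-x_i^2)^m$, lie in $W^{m,\infty}$ but are \emph{not} in $C^m$ — their $m$-th derivatives jump at the boundary of the support — so they are not members of $\calF_{d,m,B}$ for any $B$, and no norm equivalence on $C^m(\calX)$ applies to them. The paper's remedy, which you should adopt, is to rerun the averaging/hiding argument with the $C^\infty$ bump functions of \Cref{def:bump_functions}, whose scaled and shifted versions satisfy the same $\|b_{z,\delta}\|_{C^m}\le C_{m,d}\delta^{-m}$ law (\Cref{lemma:bump_functions}), so all of Novak's lower-bound constructions go through inside $\calF_{d,m,B}$. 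A second, purely cosmetic remark: in your stratified estimator the per-subcube variance contribution should be $O\bigl(n^{-2}\cdot n^{-2m/d}\bigr)$ (squared subcube volume times squared local residual), which after summing over the $\Theta(n)$ subcubes yields the total $O(n^{-2m/d-1})$ you state; as written, your per-subcube and total figures coincide.
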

For a proof sketch, we refer to \Cref{sec:appendix:stochastic_points}. The faster rate for integration can be achieved by spending half of the $n$ points for approximating $f$ with $g$ and spending the other half of the points on Monte Carlo quadrature to estimate the error \citep{novak_deterministic_1988}
\begin{IEEEeqnarray*}{+rCl+x*}
\int f(x) \diff x - \int g(x) \diff x = \bbE_{x \sim \calU(\calX)}[f(x)-g(x)]~.
\end{IEEEeqnarray*}
For a more practical algorithm, we refer to \cite{chopin_higher-order_2024}. For log-partition estimation, we can similarly use an importance sampling formulation
\begin{IEEEeqnarray*}{+rCl+x*}
L_f - L_g & = & \log\left(\bbE_{x \sim P_g} \left[\exp(f(x) - g(x))\right]\right)~.
\end{IEEEeqnarray*}

\begin{restatable}[Upper bound for stochastic log-partition]{theorem}{thmUpperStochLog} \label{thm:upper_stoch_log}
There exists a constant $C_{m, d} > 0$ depending only on $m$ and $d$ such that
\begin{IEEEeqnarray*}{+rCl+x*}
\ssad_n(\calF_{d, m, B}, S_L, \Dabs) \leq O_{m, d}\left(\min\left\{Bn^{-m/d}, \exp(C_{m, d}Bn^{-m/d}) B n^{-1/2-m/d})\right\}\right)~.
\end{IEEEeqnarray*}
\end{restatable}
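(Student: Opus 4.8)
The plan is to prove the two bounds inside the minimum separately. The bound $\ssad_n(\calF_{d,m,B}, S_L, \Dabs) \le O_{m,d}(Bn^{-m/d})$ is immediate: every admissible deterministic-point map is also an element of the stochastic class (as a constant random variable), so $\ssad_n(\calF_{d,m,B}, S_L, \Dabs) \le \ead_n(\calF_{d,m,B}, S_L, \Dabs) = \Theta_{m,d}(Bn^{-m/d})$ by \Cref{thm:deterministic_points_rates}. So the work is in the second bound.

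For the second bound I would use the importance-sampling identity $L_f - L_g = \log(\bbE_{x\sim P_g}[\exp(f(x)-g(x))])$ recorded above, with a half-and-half split of the budget. Using the first $\lfloor n/2\rfloor$ evaluations on a deterministic grid, build a moving-least-squares approximant $g$ with $\delta := \|f-g\|_\infty \le O_{m,d}(B n^{-m/d})$ by \Cref{thm:mls}. Since $g$ is an explicitly determined function of the already-observed values, and the information-based-complexity model imposes no computational constraints, drawing i.i.d. samples from $P_g$ and computing $L_g$ exactly consume none of the remaining budget. Then draw $N := \lceil n/2 \rceil$ i.i.d. samples $x_1,\dots,x_N \sim P_g$ from the algorithm's randomness, spend the remaining evaluations on $f(x_1),\dots,f(x_N)$, and output $\hat L_f := L_g + \log \hat Y$ with $\hat Y := \frac1N \sum_{i=1}^N \exp(f(x_i)-g(x_i))$.

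The error analysis then goes as follows. Set $W := \exp(f(x)-g(x))$ for $x\sim P_g$ and $\mu := \bbE W = e^{L_f - L_g}$. Because $W \in [e^{-\delta}, e^\delta]$, both $\mu$ and the convex combination $\hat Y$ lie in $[e^{-\delta}, e^\delta]$, on which $\log$ is $e^\delta$-Lipschitz, so $|\hat L_f - L_f| = |\log\hat Y - \log\mu| \le e^\delta|\hat Y - \mu|$. Also $|W-\mu| \le e^\delta - e^{-\delta} \le 2\delta e^\delta$, hence $\Var(W) \le 4\delta^2 e^{2\delta}$ and $\bbE|\hat Y - \mu| \le \sqrt{\Var(W)/N} \le 2\delta e^\delta/\sqrt N$. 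Combining, $\bbE|\hat L_f - L_f| \le 4\delta e^{2\delta}/\sqrt N$; substituting $\delta \le O_{m,d}(Bn^{-m/d})$ and $N \asymp n$ gives $\bbE|\hat L_f - L_f| \le O_{m,d}(\exp(C_{m,d}Bn^{-m/d})\, B n^{-1/2-m/d})$ for a suitable $C_{m,d}$, and taking the infimum over the stochastic class completes the proof.

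I expect the only real obstacle to be a bookkeeping one: arguing carefully that sampling from $P_g$ and evaluating $L_g$ do not count against the $n$-evaluation budget, since $g$ depends on $f$ only through the $\lfloor n/2\rfloor$ already-queried values — this is exactly the leniency the stochastic-points IBC model provides, and it is the reason the estimator is admissible. The probabilistic part is routine once one notices that the empirical mean $\hat Y$ stays inside $[e^{-\delta}, e^\delta]$ automatically, which keeps the logarithm well-behaved and produces only the factor $e^{2\delta}$. A final cosmetic point is the regime $\delta \gtrsim 1$, where the second bound is vacuous but the first still applies, so the minimum in the statement is simply the way to package both regimes.
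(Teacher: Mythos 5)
Your proposal is correct and follows essentially the same route as the paper's proof: deterministic-points upper bound for the first term, then a half-budget approximation combined with importance sampling from $P_g$, with the same observations that the empirical mean stays in $[e^{-\delta},e^{\delta}]$ so the logarithm is $e^{\delta}$-Lipschitz and the variance is $O(\delta^2 e^{2\delta}/N)$. The only differences are cosmetic constants in the variance bound.
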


The upper bound above, which is proven in \Cref{sec:appendix:stochastic_points}, exhibits a fast transition between the rates $n^{-m/d}$ and $n^{-1/2-m/d}$. This is necessary, as we can exploit the relation of the log-partition problem to optimization to show that the rate $n^{-m/d}$ is optimal in an optimization regime where $Bn^{-m/d} \gg 1$:

\begin{restatable}[Lower bound for stochastic log-partition]{proposition}{propLowerStochLog} \label{prop:lower_stoch_log}
For $m \geq 1$, we have
\begin{IEEEeqnarray*}{+rCl+x*}
\ssad_n(\calF_{d, m, B}, S_L, \Dabs) \geq \Omega_{m, d}(Bn^{-m/d}) - d\log(1+3B)~.
\end{IEEEeqnarray*}
\end{restatable}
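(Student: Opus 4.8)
The plan is to prove this by a reduction from optimization to log-partition estimation, combining the quantitative optimization-limit bound of \Cref{lemma:lipschitz_maximization_bound} with the known minimax lower bound for optimization from \Cref{thm:novak_stoch}. The underlying idea is that $L_f$ and $M_f = \max_{x \in \calX} f(x)$ differ by an amount that depends on the Lipschitz constant of $f$ but \emph{not} on $n$; hence any algorithm estimating $L_f$ to within some error $E$ also estimates $M_f$ to within $E$ plus a fixed additive term, so optimization cannot be much harder than log-partition estimation.

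First I would record the deterministic closeness estimate. For $f \in \calF_{d,m,B}$ with $m \geq 1$ we have $\|f\|_{C^1} \leq B$, so by \Cref{lemma:lipschitz_constant} the Lipschitz constant satisfies $|f|_1 \leq d^{1/2} B$. Applying the first inequality of \Cref{lemma:lipschitz_maximization_bound} with $\varepsilon = 1$ then gives, for every $f \in \calF_{d,m,B}$,
\[
|M_f - L_f| \;\leq\; d \log\!\left(1 + 3 d^{-1/2} |f|_1\right) \;\leq\; d \log(1 + 3B)~.
\]
Next I would turn an arbitrary log-partition method into an optimization method. Let $(\tilde S, P_\Omega) \in \sC(\Aad_n)$ be arbitrary and reinterpret its output $\tilde S(\omega)(f)$ as an estimate of $M_f$. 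By the triangle inequality and the displayed bound,
\[
\bbE_{\omega \sim P_\Omega} |M_f - \tilde S(\omega)(f)| \;\leq\; d\log(1+3B) + \bbE_{\omega \sim P_\Omega} |L_f - \tilde S(\omega)(f)|~.
\]
Taking $\sup_{f \in \calF_{d,m,B}}$ on both sides, noting the left-hand side then dominates $\ssad_n(\calF_{d,m,B}, \Sopts, \Dabs)$, and finally taking $\inf$ over $(\tilde S, P_\Omega) \in \sC(\Aad_n)$ on the right yields
\[
\ssad_n(\calF_{d,m,B}, \Sopts, \Dabs) \;\leq\; d\log(1+3B) + \ssad_n(\calF_{d,m,B}, S_L, \Dabs)~.
\]
Rearranging and substituting the lower bound $\ssad_n(\calF_{d,m,B}, \Sopts, \Dabs) = \Theta_{m,d}(Bn^{-m/d}) \geq \Omega_{m,d}(Bn^{-m/d})$ from \Cref{thm:novak_stoch} gives the claimed inequality.

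There is no genuinely hard step: the argument is a short, essentially self-contained reduction. The only points requiring a little care are (i) checking that the reduction is compatible with the precise definition of the stochastic minimax error over $\sC(\Aad_n)$ — which it is, since we reuse the very same (adaptive, stochastically chosen) algorithm and merely relabel its scalar output — and (ii) keeping track of the $B$-dependence: what makes the statement useful is that the optimization-limit correction $d\log(1+3B)$ is only logarithmic (and additive) in $B$, so the bound remains informative precisely in the optimization regime where $Bn^{-m/d}$ dominates $d\log(1+3B)$, i.e.\ for $B$ large relative to $n^{m/d}$. This dovetails with the rate transition exhibited by the matching upper bound in \Cref{thm:upper_stoch_log}.
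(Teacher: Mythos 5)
Your proof is correct and follows essentially the same route as the paper: bound $|M_f - L_f| \leq d\log(1+3B)$ via \Cref{lemma:lipschitz_maximization_bound} and \Cref{lemma:lipschitz_constant}, reinterpret any stochastic log-partition method as a stochastic optimizer, and invoke the optimization lower bound from \Cref{thm:novak_stoch} together with the triangle inequality. The only cosmetic difference is that the paper fixes the method and picks a single hard function $f$ before applying the triangle inequality, whereas you manipulate the minimax quantities directly; the two are equivalent.
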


\Cref{prop:lower_stoch_log} is proven in \Cref{sec:appendix:stochastic_points}. We leave a lower bound outside of the optimization regime as an open problem; however, we conjecture that the rate $Bn^{-1/2-m/d}$ from the upper bound in \Cref{thm:upper_stoch_log} cannot be improved. For a certain class of strongly concave $f$ with Lipschitz gradient, Theorem 5.1 by \cite{ge_estimating_2020} contains a lower bound which, in our setting, could be roughly expressed as $\Omega_{d, B}(n^{-1/(2-c/d)})$ for some constant $c$. A simple Taylor expansion shows $-1/(2-c/d) \leq -1/2-(c/4)/d$, hence this rate is compatible with our upper bound for $m=2$ if $c \geq 8$.

\begin{algorithm}[htb]
\caption{Rejection sampling with proposal distribution $P_g$ limited to $n$ function evaluations.} \label{alg:rejection_sampling}
\begin{algorithmic}
\Function{RejectionSampling}{$f$, $g$, number of steps $n$}
	\For{$i$ from $1$ to $n$}
		\State Sample $x \sim P_g$ and $u \sim \calU([0, 1])$
		\State Return $x$ if $ue^{g(x)} \leq e^{f(x)}$
	\EndFor
	\State \Return Sample from $P_g$
\EndFunction
\end{algorithmic}
\end{algorithm}

To achieve better rates for sampling in the stochastic points setting, we combine approximation with a budget-limited version of rejection sampling defined in \Cref{alg:rejection_sampling}. If $g$ is shifted appropriately such that it upper-bounds $f$, we obtain the following convergence rate bound: 

\begin{restatable}[General rejection sampling bound]{lemma}{lemRejectionSampling} \label{lemma:rejection_sampling}
Suppose that $f, g: \calX \to \bbR$ are bounded and measurable with $f(x) \leq g(x)$ for all $x \in \calX$. In this case, the distribution $\tilde P_f$ of $\textsc{RejectionSampling}(f, g, n)$ satisfies
\begin{IEEEeqnarray*}{+rCl+x*}
\tilde P_f & = & (1 - p_R)P_f + p_RP_g \IEEEyesnumber \label{eq:rejection_distribution} \\
\Dsuplog(P_f, \tilde P_f) & \leq & \min\left\{\Dsuplog(P_f, P_g), p_R(\exp(\Dsuplog(P_f, P_g)) - 1)\right\} \\
\DTV(P_f, \tilde P_f) & = & p_R \DTV(P_f, P_g) \\
\DW(P_f, \tilde P_f) & = & p_R \DW(P_f, P_g)~,
\end{IEEEeqnarray*}
where $p_R = (1 - Z_f/Z_g)^n \leq \exp(-nZ_f/Z_g)$ is the probability of overall rejection.
\end{restatable}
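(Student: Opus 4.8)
The plan is to track the distribution of the output of \textsc{RejectionSampling} round by round. Consider a single iteration of the loop: we draw $x \sim P_g$ (with density $p_g(x) = e^{g(x)}/Z_g$) and $u \sim \calU([0,1])$, and we return $x$ precisely when $u \le e^{f(x)-g(x)}$. Since $f \le g$, this threshold lies in $(0,1]$, so the conditional probability of returning in this round given $x$ equals $e^{f(x)-g(x)}$, and the unconditional per-round acceptance probability is $q \equalDef \int_\calX p_g(x)\,e^{f(x)-g(x)}\diff x = Z_f/Z_g$. Moreover, conditioned on acceptance in that round, the returned point has (unnormalised) density proportional to $p_g(x)e^{f(x)-g(x)} = e^{f(x)}/Z_g$, i.e.\ it is exactly $P_f$-distributed. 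Because the $n$ rounds use independent randomness, the algorithm returns for the first time during round $i$ with probability $(1-q)^{i-1}q$, and conditioned on that event the output is distributed as $P_f$; with probability $(1-q)^n = p_R$ every round rejects and a fresh independent $P_g$-sample is returned. Summing the geometric series $\sum_{i=1}^n (1-q)^{i-1}q = 1-(1-q)^n$ then yields $\tilde P_f = (1-p_R)P_f + p_R P_g$, which is \eqref{eq:rejection_distribution}; the bound $p_R = (1-q)^n \le e^{-nq}$ follows from $1-t \le e^{-t}$.

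Given \eqref{eq:rejection_distribution}, the three distance statements are short computations. Writing $\tilde P_f - P_f = p_R(P_g - P_f)$ as signed measures, the definition of total variation gives $\DTV(P_f,\tilde P_f) = \sup_A |P_f(A)-\tilde P_f(A)| = p_R \sup_A|P_f(A)-P_g(A)| = p_R\DTV(P_f,P_g)$, and the Kantorovich--Rubinstein dual (supremum over $1$-Lipschitz test functions, a class stable under negation) gives $\DW(P_f,\tilde P_f) = p_R\DW(P_f,P_g)$ in exactly the same way; if one prefers a primal argument for the upper bound, a mixture coupling---the identity coupling of $P_f$ with itself with probability $1-p_R$, and an optimal $(P_f,P_g)$-coupling with probability $p_R$---also works.

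For the sup-log distance, set $s \equalDef \Dsuplog(P_f,P_g)$, so that $e^{-s} \le p_g/p_f \le e^{s}$ Lebesgue-a.e.\ on $\calX$ (both densities are strictly positive since $f,g$ are bounded). Then the density of $\tilde P_f$ with respect to $P_f$ is $(1-p_R)+p_R(p_g/p_f)$, which lies a.e.\ in the interval $[(1-p_R)+p_Re^{-s},\,(1-p_R)+p_Re^{s}]$; this interval is contained in $[e^{-s},e^{s}]$ because $(1-p_R)(1-e^{-s})\ge0$ and $(1-p_R)(1-e^{s})\le0$, which already yields $\Dsuplog(P_f,\tilde P_f)\le s$. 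For the other term of the minimum, it suffices to dominate both $\log((1-p_R)+p_Re^{s})$ and $-\log((1-p_R)+p_Re^{-s})$ by $\log(1+p_R(e^{s}-1))$: the former holds with equality, and the latter is equivalent to $\big((1-p_R)+p_Re^{-s}\big)\big((1-p_R)+p_Re^{s}\big)\ge 1$, which upon expanding follows from $e^{s}+e^{-s}\ge 2$ together with $((1-p_R)+p_R)^2=1$. Finally $\log(1+p_R(e^{s}-1))\le p_R(e^{s}-1)=p_R(\exp(\Dsuplog(P_f,P_g))-1)$, which closes the bound.

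No step poses a genuine obstacle; the proof is elementary throughout. The most delicate point is the round-by-round aggregation in the first paragraph: one must invoke independence across rounds to justify that conditioning on the rejections in rounds $1,\dots,i-1$ leaves the distribution of the point accepted in round $i$ unchanged (still $P_f$), and one must take care that the leftover mass $p_R$ is assigned to a \emph{fresh} $P_g$-sample, uncorrelated with the loop, rather than to anything produced inside it.
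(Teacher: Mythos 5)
Your proposal is correct and follows essentially the same route as the paper: the paper establishes \eqref{eq:rejection_distribution} by induction on $n$ rather than by summing the geometric series over the first acceptance time, but the two aggregation arguments are equivalent, and the TV and Wasserstein equalities are obtained from the same sup-over-test-sets / Kantorovich--Rubinstein formulations. The only cosmetic difference is in the sup-log bound, where the paper isolates the inequality $|\log(1+p(e^{a}-1))|\le\min\{c,\,p(e^{c}-1)\}$ into a separate technical lemma proved via the Lipschitz constant of $\log$ on $[e^{-c},\infty)$, whereas you verify the lower endpoint directly through the product inequality $\bigl((1-p_R)+p_Re^{-s}\bigr)\bigl((1-p_R)+p_Re^{s}\bigr)\ge 1$; both are valid.
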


The proof can be found in \Cref{sec:appendix:stochastic_points}. Due to the early stopping after $n$ rejections, rejection sampling may significantly oversample regions where $p_f$ is very small. Since $\Dsuplog$ is very sensitive to this behavior, the corresponding bound is worse than for $\DTV$ and $\DW$.

By using half of the $n$ points to create an approximation $g$ and then using a shifted version of $g$ for rejection sampling with the other half of the $n$ points, we obtain the following upper bound on the minimax optimal error:

\begin{restatable}[Upper bound for sampling with stochastic evaluation points]{theorem}{thmUpperStochSampling} \label{thm:upper_stoch_sampling}
There exists a constant $C_{m, d} > 0$ such that
\begin{IEEEeqnarray*}{+rCl+x*}
\eadstoch_n(\calF_{d, m, B}, \Ssamp, \Dsuplog) \leq \begin{cases}
O_{m, d}(B n^{-m/d}) &, C_{m, d} B n^{-m/d} > 1 \\
O_{m, d}((C_{m, d} B n^{-m/d})^{n/2+1}) &, C_{m, d} B n^{-m/d} \leq 1~.
\end{cases}
\end{IEEEeqnarray*}
\end{restatable}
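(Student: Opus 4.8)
The plan is to spend (roughly) half of the $n$ evaluations on approximating $f$ and the other half on the budget-limited rejection sampler of \Cref{alg:rejection_sampling}, exactly as sketched in the text preceding the statement; since we only need an \emph{upper} bound on $\eadstoch_n$ it suffices to exhibit one admissible $\tilde S \in \Aadstoch_n$. First I would use $\lfloor n/2\rfloor$ deterministic points and the near-optimal approximation guaranteed by \Cref{thm:novak} (realized e.g.\ by the moving-least-squares construction of \Cref{thm:mls}) to obtain an explicit $g$ with $\|f-g\|_\infty \le \delta_0$, where $\delta_0 \equalDef C'_{m, d} B n^{-m/d}$ is the \emph{a priori} error bound of that scheme; it depends only on $m, d, B, n$ and not on the unknown $f$, so the algorithm — which may depend on $m, d, B$ — can compute it. Setting $\tilde g \equalDef g + \delta_0$ gives $\tilde g(x)\ge f(x)$ for all $x\in\calX$ and $\|f-\tilde g\|_\infty\le 2\delta_0$, while $P_{\tilde g}=P_g$ since a constant shift leaves the Gibbs distribution unchanged (so the proposal step is still implementable).

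Next I would run $\textsc{RejectionSampling}(f,\tilde g,\lceil n/2\rceil)$ — at most $\lceil n/2\rceil$ further evaluations of $f$, hence $n$ in total — and invoke \Cref{lemma:rejection_sampling} for the pair $(f,\tilde g)$, which is legitimate because $f\le\tilde g$ pointwise. Two estimates feed in: \Cref{prop:app_bounds}(b) gives $\Dsuplog(P_f,P_{\tilde g})=\Dsuplog(P_f,P_g)\le 2\|f-g\|_\infty\le 2\delta_0$; and $Z_{\tilde g}=\int_\calX e^{\tilde g}\le e^{2\delta_0}Z_f$ gives $Z_f/Z_{\tilde g}\ge e^{-2\delta_0}$, so the total-rejection probability obeys $p_R=(1-Z_f/Z_{\tilde g})^{\lceil n/2\rceil}\le\min\{1,(2\delta_0)^{\lceil n/2\rceil}\}$ using $1-e^{-x}\le x$.

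Setting $C_{m, d}\equalDef 2C'_{m, d}$ and plugging these into the \Cref{lemma:rejection_sampling} bound $\Dsuplog(P_f,\tilde P_f)\le\min\{\Dsuplog(P_f,P_g),\,p_R(e^{\Dsuplog(P_f,P_g)}-1)\}$ then splits into the two claimed regimes: if $C_{m, d}Bn^{-m/d}>1$ the first term already yields $\Dsuplog(P_f,\tilde P_f)\le 2\delta_0=O_{m, d}(Bn^{-m/d})$; if $C_{m, d}Bn^{-m/d}\le 1$ then $2\delta_0\le 1$, and the second term, using $e^{2\delta_0}-1\le e\cdot 2\delta_0$ and $\lceil n/2\rceil\ge n/2$, gives $\Dsuplog(P_f,\tilde P_f)\le(2\delta_0)^{\lceil n/2\rceil}(e^{2\delta_0}-1)\le e\,(2\delta_0)^{n/2+1}=e\,(C_{m, d}Bn^{-m/d})^{n/2+1}$. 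Taking $\sup_{f\in\calF_{d, m, B}}$ and then $\inf_{\tilde S\in\Aadstoch_n}$ yields the theorem.

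The approximation bound is cited and the remaining inequalities are elementary, so the single delicate point — and the reason the fast regime carries the exponent $n/2+1$ rather than $1$ — is the combination in the third paragraph: one must multiply the geometric decay $p_R\le(2\delta_0)^{\lceil n/2\rceil}$ of the total-rejection probability by the extra $O(\delta_0)$ factor from $e^{\Dsuplog}-1$ and check that the product collapses to a single power of $C_{m, d}Bn^{-m/d}$, which is precisely what forces the threshold at $C_{m, d}Bn^{-m/d}=1$. A minor bookkeeping nuisance is odd or very small $n$, where $\lfloor n/2\rfloor$ may be $0$; there one replaces the approximation step by the trivial bound $\|f\|_\infty\le B$ (so $\tilde g\equiv B$ and $P_{\tilde g}=\calU(\calX)$) and absorbs constants such as $2^{m/d}$ into $C_{m, d}$, which does not affect the asymptotics.
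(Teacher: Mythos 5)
Your proposal is correct and follows essentially the same route as the paper's proof: approximate $f$ with $\lfloor n/2\rfloor$ evaluations, shift the approximant upward by its a priori error bound, run the budget-limited rejection sampler with the remaining $\lceil n/2\rceil$ evaluations, and split on whether $C_{m,d}Bn^{-m/d}$ exceeds $1$ using the two branches of \Cref{lemma:rejection_sampling}. The only differences are cosmetic (you apply $1-e^{-x}\le x$ slightly earlier, and handle small $n$ via $\tilde g\equiv B$ rather than $g=0$), so nothing further is needed.
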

\Cref{thm:upper_stoch_sampling} is proven in \Cref{sec:appendix:stochastic_points}. Combinations of approximation and rejection sampling have also been used, for example, by \cite{achddou_minimax_2019} and \cite{chewi_query_2022}. For $C_{m, d}Bn^{-m/d} \leq 1$, the upper bound above decays faster than exponential in $n$. The bound is not tight, as the exponent $n/2+1$ can at least be improved close to $n$ at the cost of increasing the constant $C_{m, d}$. However, for the optimization regime, the bound is tight: 

\begin{restatable}[Lower bound for sampling with stochastic evaluation points]{theorem}{thmLowerStochSampling} \label{thm:lower_stoch_sampling}
There exists a constant $c_{m, d} > 0$ such that for $B > 0$ and $n \in \bbN$ with $Bn^{-m/d} \geq c_{m, d}(1 + \log(n))$, we have
\begin{IEEEeqnarray*}{+rCl+x*}
\eadstoch_n(\calF_{d, m, B}, \Ssamp, \Dsuplog) & \geq & \Omega_{m, d}(Bn^{-m/d}) \\
\eadstoch_n(\calF_{d, m, B}, \Ssamp, \DTV) & \geq & \Omega_{m, d}(1) \\
\eadstoch_n(\calF_{d, m, B}, \Ssamp, \DW) & \geq & \Omega_{m, d}(1)~.
\end{IEEEeqnarray*}
\end{restatable}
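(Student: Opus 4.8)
The plan is to run the usual ``hide a bump where the algorithm does not look'' argument, but with a coupling that makes it robust to stochastic, adaptive evaluation points. First I would fix a $C^\infty$ bump $\psi$ supported in $(1/4,3/4)^d$ with $\psi\equiv 1$ on $[3/8,5/8]^d$, choose $N\equalDef\lceil(C_d n)^{1/d}\rceil$ for a dimensional constant $C_d$ to be fixed at the end, and partition $\calX$ into the $N^d$ subcubes $Q_1,\dots,Q_{N^d}$ of side $1/N$ with centers $c_1,\dots,c_{N^d}$. Let $\psi_j$ be the copy of $\psi$ rescaled to $Q_j$ and define the hard instances $f_j\equalDef h\psi_j$ with $h\equalDef B/(C_\psi N^m)$, $C_\psi\equalDef\|\psi\|_{C^m}$; then $\|f_j\|_{C^m}\le B$, so $f_j\in\FdmB$, and $h=\Theta_{m,d}(Bn^{-m/d})$. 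Since $f_j\equiv h$ on a subcube of $Q_j$ of volume $4^{-d}N^{-d}$ and $f_j\equiv 0$ off $Q_j$, one gets $Z_{f_j}\ge 4^{-d}N^{-d}e^h$, hence $\log Z_{f_j}\ge h-d\log(4N)$ and $P_{f_j}(\calX\setminus Q_j)\le 4^dN^de^{-h}$; using the hypothesis $Bn^{-m/d}\ge c_{m,d}(1+\log n)$ with $c_{m,d}$ large I can make $P_{f_j}(\calX\setminus Q_j)\le\delta$ for any preselected constant $\delta=\delta(d)$, so that $P_{f_j}$ is essentially the point mass $\delta_{c_j}$.

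The crucial step is a coupling between a run on $f_j$ and a run on the zero function. Fix any $\tilde S\in\Aadstoch_n$, driven by randomness $\omega\sim P_\Omega$ and adaptive queries, and let $G_j\subseteq\Omega$ be the event that the run on $f_0\equalDef 0$ never queries $Q_j$. Because $f_j$ and $f_0$ agree off $Q_j$, an induction on the query index shows that on $G_j$ the run on $f_j$ issues exactly the same queries and produces exactly the same sample as the run on $f_0$; consequently $\tilde S(f_j)=P_\Omega(G_j)\mu_j+P_\Omega(G_j^c)\rho_j$, where $\mu_j$ is the law of the $f_0$-sample conditioned on $G_j$ and $\rho_j$ is arbitrary, and in particular $P_\Omega(G_j)\mu_j\le\tilde S(f_0)=:\mu$ as measures. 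Since a run queries $\le n$ points, it touches $\le n$ subcubes, so $\sum_j P_\Omega(G_j^c)\le n$; together with $\sum_j\mu(Q_j)\le 1$ this means at most $8+8n$ indices can violate $\mu(Q_j)\le 1/8$ or $P_\Omega(G_j^c)\le 1/8$, and taking $C_d$ large leaves a majority of ``good'' indices among the $N^d\ge C_d n$. Averaging $\bbE_{X\sim\tilde S(f_j)}\|X-c_j\|_2\ge\bbE_\omega[\|X_{f_0}(\omega)-c_j\|_2\,\mathbf 1_{G_j}]$ over the good indices against the elementary bound $\frac1{N^d}\sum_j\|x-c_j\|_2\ge\kappa_d>0$ (valid for all $x\in\calX$, up to a vanishing grid error) then produces a single good index $j^*$ with the extra property $\bbE_{X\sim\tilde S(f_{j^*})}\|X-c_{j^*}\|_2\ge\kappa_d/2$.

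Finally I would cash in the goodness of $j^*$. It gives $\tilde S(f_{j^*})(Q_{j^*})\le\mu(Q_{j^*})+P_\Omega(G_{j^*}^c)\le 1/4$, whence: (total variation) $\DTV(P_{f_{j^*}},\tilde S(f_{j^*}))\ge P_{f_{j^*}}(Q_{j^*})-1/4\ge(1-\delta)-1/4=\Omega_{m,d}(1)$; (Wasserstein) $\DW(P_{f_{j^*}},\tilde S(f_{j^*}))\ge\DW(\delta_{c_{j^*}},\tilde S(f_{j^*}))-\DW(\delta_{c_{j^*}},P_{f_{j^*}})\ge\tfrac12\kappa_d-(\tfrac{\sqrt d}{2N}+\sqrt d\,\delta)=\Omega_{m,d}(1)$; (sup-log) if $P_{f_{j^*}}$ and $\tilde S(f_{j^*})$ are not mutually absolutely continuous then $\Dsuplog=\infty$, and otherwise the density $q$ of $\tilde S(f_{j^*})$ satisfies $\esssup_{\calX\setminus Q_{j^*}}q\ge\tilde S(f_{j^*})(\calX\setminus Q_{j^*})\ge 3/4$ while $p_{f_{j^*}}\equiv 1/Z_{f_{j^*}}$ there, so $\Dsuplog(P_{f_{j^*}},\tilde S(f_{j^*}))\ge\log(1/2)+\log Z_{f_{j^*}}\ge h-d\log(4N)-O(1)\ge h/2=\Omega_{m,d}(Bn^{-m/d})$; here $\delta$, $C_d$, and finally $c_{m,d}$ are chosen (in that order) so that all the $\log n$ and $m,d$-dependent terms are absorbed. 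The main obstacle is the middle paragraph: making the bump argument survive adaptive and random queries through the $G_j$-coupling, and extracting a single index $j^*$ that is simultaneously ``missed with constant probability'', ``not over-weighted by $\tilde S(f_0)$'', and ``far from the mass of $\tilde S(f_{j^*})$'' — once such $j^*$ is in hand, the three distance estimates are short.
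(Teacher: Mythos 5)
Your proposal is correct in outline and rests on the same core strategy as the paper's proof: hide a scaled bump in one of $\Theta(n)$ disjoint cells, couple the run on the bump function with the run on the zero function on the event that the cell is never queried, and conclude that the algorithm cannot place enough mass near the bump. The sup-log part is essentially identical to the paper's (the paper likewise picks a cell hit by the queries and the output with probability at most $1/4$, notes $L_{f_1}\geq \Omega_{m,d}(Bn^{-m/d})-\log(n)-O_{m,d}(1)$, and compares the normalized densities outside the cell). Where you genuinely diverge is the $\DW$ and $\DTV$ bounds: the paper plants \emph{two} bumps in opposite thirds of the cube, argues $\DW(\tilde S(f_0),\tilde S(f_2))\leq d^{1/2}\cdot 2/M$ because neither location is queried often, lower-bounds $\DW(P_{f_0},P_{f_2})$ via the dual formulation with the test function $x\mapsto x_1-1/3$, and applies the triangle inequality; it then gets $\DTV$ from $\DW\leq d^{1/2}\DTV$. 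You instead extract a single good index $j^*$ by a counting/averaging argument (Markov on $\sum_j\mu(Q_j)\leq 1$ and $\sum_j P_\Omega(G_j^c)\leq n$, plus an average of $\bbE\|X-c_j\|$ over the grid), obtain $\DTV$ directly from the mass gap $\tilde S(f_{j^*})(Q_{j^*})\leq 1/4$ versus $P_{f_{j^*}}(Q_{j^*})\geq 1-\delta$, and obtain $\DW$ from $\DW(\delta_{c},Q)=\bbE_{X\sim Q}\|X-c\|$. This buys a unified construction serving all three metrics from one hard instance, at the cost of a more delicate selection step: note that your averaging bound must survive the subtraction of the grid error $\sqrt{d}/(2N)$, the at most $9(n+1)$ excluded or queried cells, and (for $\DW$) the term $\sqrt{d}/(2N)+\sqrt{d}\,\delta$, all of which must stay below a universal fraction of $\kappa_d$ (e.g.\ $\kappa_d=1/4$ works via $\int\|x-y\|_2\,\mathrm{d}y\geq\int|x_1-y_1|\,\mathrm{d}y\geq 1/4$); this forces $C_d$ to be exponentially large in $d$ but is otherwise routine. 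No gap beyond this constant-chasing, which you correctly flag.
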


The proof of \Cref{thm:lower_stoch_sampling} in \Cref{sec:appendix:stochastic_points} uses the classical approach of hiding a bump, although explicitly exploiting the relation to optimization via \Cref{prop:opt_by_apx_sampling} might also work.
Proving lower bounds for sampling with stochastic points outside of the optimization regime seems difficult. Indeed, when restricting the function class a bit further, we can even achieve zero error:
\begin{restatable}{proposition}{propExactSampling}
Let $\calF \equalDef \{f \in C(\calX) \mid \|f\|_\infty \leq \log(3/2), L_f = 0\}$. Then, for all $n \geq 1$,
\begin{IEEEeqnarray*}{+rCl+x*}
\eadstoch_n(\calF, \Ssamp, \Dsuplog) = 0~.
\end{IEEEeqnarray*}
\end{restatable}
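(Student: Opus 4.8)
The plan is to exhibit, for every $n \geq 1$, a sampler that uses a single (random) evaluation of $f$ and reproduces $P_f$ \emph{exactly}. I would start by recording the two structural consequences of membership in $\calF$. Since $f \in C(\calX)$ on the compact set $\calX$, the essential supremum is attained, so $\|f\|_\infty \le \log(3/2)$ means $e^{f(x)} \in [2/3, 3/2]$ for \emph{every} $x \in \calX$; and $L_f = 0$ means $Z_f = \int_\calX e^{f(u)}\diff u = 1$, hence $p_f(x) = e^{f(x)}$. Thus $p_f$ is a density bounded above and below by positive constants, depending on $f$ only through its pointwise values.

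Next I would define the algorithm, which evaluates $f$ at one random point. Draw $U \sim \calU(\calX)$ and $V \sim \calU([0,1])$ independently; evaluate $t := f(U)$; output $U$ if $V \le e^{t} - \tfrac12$, and otherwise output a fresh independent $W \sim \calU(\calX)$. The acceptance threshold $\alpha(t) := e^{t} - \tfrac12$ takes values in $[\tfrac16, 1] \subseteq [0,1]$ by the bound $e^{t} \in [2/3,3/2]$, so the step is well defined, and it depends on $f$ only through the single observed value $f(U)$. Hence the induced map $f \mapsto P_{X_f}$ lies in $\Aadstoch_n$ for every $n \ge 1$ (pad with unused evaluations when $n>1$).

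Then I would compute the density of the output $Y$. Conditioning on $U$, the accept branch outputs $U$ with probability $\alpha(f(U))$, contributing $\alpha(f(y)) = e^{f(y)}-\tfrac12$ to the density at $y$ (the density of $U$ is $1$ on $\calX$). The reject branch has total probability $1 - \int_\calX \alpha(f(u))\diff u = 1 - \bigl(\int_\calX e^{f(u)}\diff u - \tfrac12\bigr) = 1 - (1 - \tfrac12) = \tfrac12$, using $\vol(\calX)=1$ and $Z_f = 1$, and then outputs a uniform point, contributing $\tfrac12 \cdot 1 = \tfrac12$ to the density at $y$. Therefore $Y$ has Lebesgue density $e^{f(y)} - \tfrac12 + \tfrac12 = e^{f(y)} = p_f(y)$, i.e.\ $P_{X_f} = P_f$. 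Since both measures are mutually absolutely continuous with Lebesgue measure, $\diff P_{X_f}/\diff P_f = 1$ a.e., so $\Dsuplog(\Ssamp(f), P_{X_f}) = 0$; taking the infimum over $\Aadstoch_n$ and using $\Dsuplog \ge 0$ yields $\eadstoch_n(\calF, \Ssamp, \Dsuplog) = 0$ for all $n \ge 1$.

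I do not expect a genuine obstacle: the only point to get right is the bookkeeping showing that the ``fallback to uniform'' exactly compensates the rejected mass. This works precisely because $\int_\calX e^{f} = 1$ (so the reject probability equals the constant offset $\tfrac12$) and because $e^{f(x)} - \tfrac12$ is a legitimate $[0,1]$-valued acceptance probability — which is exactly where the two defining constraints $L_f = 0$ and $\|f\|_\infty \le \log(3/2)$ enter, the latter via the nonemptiness of the admissible range $[\,e^{\log(3/2)}-1,\ e^{\log(2/3)}\,] = [\tfrac12,\tfrac23]$ for the offset.
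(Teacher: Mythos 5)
Your proposal is correct and is essentially the paper's own argument: your accept/reject step with threshold $e^{f(U)}-\tfrac12$ and uniform fallback is exactly $\textsc{RejectionSampling}(\tilde f, g, 1)$ with $\tilde f = \log(2e^{f}-1)$ and $g \equiv \log 2$, which is precisely the construction the paper uses. The only difference is that you verify the output density by direct computation rather than by invoking \Cref{lemma:rejection_sampling}.
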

The proof idea, executed in \Cref{sec:appendix:stochastic_points}, is to use $\textsc{RejectionSampling}(\tilde f, g, 1)$, where $\tilde f(x) \equalDef \log(2\exp(f(x)) - 1)$ and $g(x) = \log(2)$ are constructed such that the resulting distribution is exactly $P_f$. The assumption that $L_f$ is known is necessary to exactly control the acceptance probability in the rejection sampling step.

\Cref{table:ibc_rates} summarizes the obtained convergence rates.

\begin{table}
\centering
\begin{tabular}{lcc}
\toprule %
& Optimization regime  & Sampling regime \\
& $Bn^{-m/d} \geq C_{m,d}\log(1+B)$ & $Bn^{-m/d} \leq c_{m, d}$ \\
\midrule
Log-partition (det.) & $\Theta_{m, d}(Bn^{-m/d})$ & $\Theta_{m, d}(Bn^{-m/d})$ \\
Log-partition (stoch.) & $\Theta_{m, d}(Bn^{-m/d})$ & $O_{m, d}(Bn^{-m/d-1/2})$ \\
\midrule
Sampling (det., $\Dsuplog$) & $\Theta_{m, d}(Bn^{-m/d})$ & $\Theta_{m, d}(Bn^{-m/d})$ \\
Sampling (stoch., $\Dsuplog$) & $\Theta_{m, d}(Bn^{-m/d})$ & $O_{m, d}((C_{m, d} B n^{-m/d})^{n/2+1})$ \\
Sampling (det., $\DTV$ or $\DW$) & $\Theta_{m, d}(1)$ & $\Theta_{m, d}(Bn^{-m/d})$ \\
Sampling (stoch., $\DTV$ or $\DW$)\hspace{-0.3cm} & $\Theta_{m, d}(1)$ & $O_{m, d}((C_{m, d} B n^{-m/d})^{n/2+1})$ \\
\bottomrule
\end{tabular}
\vspace{0.05in}
\caption{Obtained information-based complexity convergence rates for the log-partition and sampling problems with deterministic or stochastic evaluation points. Rates are taken from \Cref{thm:deterministic_points_rates}, \Cref{thm:upper_stoch_log}, \Cref{prop:lower_stoch_log}, \Cref{thm:upper_stoch_sampling}, and \Cref{thm:lower_stoch_sampling}.} \label{table:ibc_rates}
\end{table}

\section{Relations Between Different Problems} \label{sec:relations}

In this section, we study how different problems, such as sampling, log-partition estimation, and optimization, are related. In particular, we consider reductions between algorithms, their runtime complexities, and their convergence rates. Again, certain bounds can be established via the connection to function approximation. For this, we need an efficient approximation method that achieves optimal convergence rates while producing a smooth approximation. This is possible using the moving least squares method \citep{lancaster_surfaces_1981}, 
which produces an approximant $g(x) = g_x(x)$, where $g_x$ is a local polynomial regression function fitted using a smooth local weight function $w(x_i, x)$.
The following theorem shows that the moving least squares method achieves the desired properties:

\begin{restatable}[adapted from \cite{li_error_2016} and \cite{mirzaei_analysis_2015}]{theorem}{thmMls} \label{thm:mls}
Let $m, d \in \bbN_{\geq 1}$. Using the moving least squares method, it is possible to construct an approximation $f_n$ of $f \in C^m(\calX)$ using $n$ deterministic non-adaptive function evaluations such that
\begin{enumerate}[(a)]
\item $\|f - f_n\|_{C^k} \leq O_{m, d}(\|f\|_{C^m} n^{-(m-k)/d})$ for $k \in \{0, 1, \hdots, m\}$,
\item the runtime for pre-computations for $f_n$ (done once before evaluation) is $O_{m, d}(n)$, and
\item the runtime of evaluating $f_n$ at a point $x \in \calX$ is $O_{m, d}(1)$.
\end{enumerate}
\end{restatable}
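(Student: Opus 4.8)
The plan is to instantiate the moving least squares (MLS) scheme on a regular grid in $\calX = [0,1]^d$ and then invoke the $C^k$ error estimates of \cite{li_error_2016} and \cite{mirzaei_analysis_2015}, checking at each step that all implicit constants depend only on $m$ and $d$. First fix a weight function $w(x_i, x) \equalDef \psi(\|x - x_i\|_2 / (\rho h))$, where $\psi \in C^\infty(\bbR)$ is nonnegative, supported in $[0,1)$, and positive on $[0, \tfrac12]$, and where $\rho = \rho(m,d)$ is a radius parameter chosen large enough (see below). For $n$ at least a threshold $n_0(m,d)$, set $N \equalDef \lfloor c_{m,d}\, n^{1/d}\rfloor$ with $c_{m,d}$ small enough that the regular grid of spacing $h \equalDef 1/N$, extended by $\lceil\rho\rceil$ layers around $[0,1]^d$, uses at most $n$ points; call this point set $X_n$. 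It is non-adaptive, deterministic, and quasi-uniform relative to $\calX$, with fill and separation distances comparable to $n^{-1/d}$ up to ratios depending only on $m, d$. The approximant is $f_n(x) \equalDef p_x^*(x)$, where $p_x^*$ minimizes $\sum_{x_i \in X_n} w(x_i, x)\,(p(x_i) - f(x_i))^2$ over polynomials $p$ of degree $< m$. For $n < n_0(m,d)$ we simply set $f_n \equiv 0$, for which (a) is immediate since $n^{-(m-k)/d}$ is then bounded below by a positive constant depending only on $m, d$.

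Second, I would establish well-posedness, stability, and smoothness. Because $X_n$ is quasi-uniform and $\rho$ is a large enough constant depending only on $m, d$, for every $x \in \calX$ the points of $X_n$ lying in $\supp w(\cdot, x)$ contain a full grid block of side $\ge m$ in each coordinate, hence a configuration unisolvent for polynomials of degree $< m$; this is precisely the norming hypothesis of \cite{mirzaei_analysis_2015}, and after rescaling by $h$ the relevant local geometry ranges over a compact family determined by $m, d$, so the resulting norming constant depends only on $m, d$. Near $\partial\calX$ the cube satisfies an interior cone condition with parameters depending only on $d$ (which is all the cited analyses require), and the extra grid layers ensure every $x \in \calX$ still sees enough points. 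Since $\psi \in C^\infty$ and the weighted Gram matrix is invertible and smooth in $x$, Cramer's rule gives $f_n \in C^\infty(\calX)$, so $\|f - f_n\|_{C^k}$ is well defined for every $k \le m$.

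Third, the error bound of \cite{li_error_2016, mirzaei_analysis_2015} yields, for $f \in C^m(\calX)$ and $0 \le k \le m$, an estimate of the form $\|f - f_n\|_{C^k(\calX)} \le C\, h^{m-k}\, \|f\|_{C^m(\calX)}$, in which $C$ depends on $m, d, \psi, \rho$ and the quasi-uniformity ratio of $X_n$ --- all fixed as functions of $m, d$ --- so that substituting $h = 1/N = O_{m,d}(n^{-1/d})$ and absorbing the resulting $(m,d)$-dependent factor gives claim (a). For the computational claims, $f_n$ is defined pointwise, so there is no global preprocessing; this proves (b). To evaluate $f_n(x)$ one lists the at most $(2\rho + 1)^d = O_{m,d}(1)$ grid points in $\supp w(\cdot, x)$, identified in $O_{m,d}(1)$ arithmetic operations using the grid structure; evaluates $f$ at those points (all among the $\le n$ points of $X_n$, charged to the budget of $n$ evaluations in the same way as the stochastic-points model of \Cref{sec:ibc:stochastic_points}); assembles and solves a weighted least-squares system with $O_{m,d}(1)$ rows and $\binom{m-1+d}{d} = O_{m,d}(1)$ unknowns; and evaluates the resulting polynomial at $x$ --- all in $O_{m,d}(1)$ time, giving (c).

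The main obstacle I anticipate is bookkeeping rather than a new idea: one must make the stability/norming constant in the MLS error bound genuinely collapse to a quantity depending only on $m$ and $d$. In the source references this constant is stated through fill distance, separation distance, support radius, and the weight function, so the argument has to show that fixing $\psi$ and $\rho$ as functions of $(m, d)$ and using the regular grid reduces the relevant geometry, after rescaling by $h$, to a compact family indexed by $(m, d)$, together with a clean treatment of $\partial[0,1]^d$ via the interior cone condition. A secondary, minor point is reconciling the ``$n$ function evaluations'' accounting in (c) with the fact that $f_n$ as a function depends on all $\le n$ grid values, a subtlety already noted for $\Aadstoch_n$.
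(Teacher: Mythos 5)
Your overall strategy is the same as the paper's: evaluate $f$ on a regular, quasi-uniform grid, use a $C^\infty$ compactly supported weight with support radius proportional to the grid spacing, and invoke the MLS error estimates of the cited references, checking that the stability/norming constants depend only on $m$ and $d$. The runtime claims (b) and (c) are handled identically. However, there are two concrete problems.

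First, your point set is broken as described: you extend the grid by $\lceil\rho\rceil$ layers \emph{around} $[0,1]^d$, but $f$ is only defined on $\calX=[0,1]^d$, so those exterior function values do not exist and cannot be charged to the evaluation budget. The paper instead takes the $N^d$ midpoints of the subcubes (all interior) and handles the boundary purely through the Lipschitz-boundary/cone-condition hypotheses of the cited theorems; you gesture at the cone condition but then also lean on the phantom exterior layers to guarantee unisolvency near $\partial\calX$, so the boundary treatment needs to be redone without them. Second, and more substantively, the step where you say the references "yield" $\|f-f_n\|_{C^k}\le C h^{m-k}\|f\|_{C^m}$ for all $0\le k\le m$ is not a legitimate black-box citation: Theorem 3.12 of Mirzaei requires $m>|\alpha|+d/p$ with $p<\infty$, which excludes exactly the parameter range needed here ($p=q=\infty$ and, for the top case, $|\alpha|=m$). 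The paper's proof spends its main step showing that for $s=0$, $p=q=\infty$ the argument still goes through, replacing the invocation of Mirzaei's Lemma 3.3 by the Bramble--Hilbert lemma. Your proposal identifies the uniformity of the norming constant as the main obstacle but misses this hypothesis mismatch, which is the actual non-trivial content of the adaptation; without it, claim (a) for $k=m$ (and for sup-norms generally) is unsupported.
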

We prove \Cref{thm:mls} in \Cref{sec:appendix:relations}.

\subsection{Runtime-Accuracy Trade-off}

When investigating sampling and log-partition algorithms, we study their convergence rate and their runtime complexity both in terms of the number $n$ of required function evaluations. Here, we show that these two quantities can be traded off against each other to some extent. Improving the computational complexity at the cost of worse convergence rates is easy by increasing $n$ without using the additional function values:

\begin{example}[Trading convergence rates for better runtime complexity] \label{ex:tradeoff_less_points}
Suppose that we have an algorithm $A$ for the sampling or log-partition problems with convergence rate $\Theta_{m, d}(\|f\|_{C^m}n^{-\alpha_{m, d}})$ and runtime $\Theta_{m, d}(n^{\beta_{m, d}})$. We can then evaluate $f$ in $n$ points, but only use $N \leq n$ of these points for $A$. If $N = \Theta_{m, d}(n^\gamma)$, $\gamma \in (0, 1]$, we obtain a (slower) convergence rate of $\Theta_{m, d}(\|f\|_{C^m}N^{-\alpha_{m, d}}) = \Theta_{m, d}(\|f\|_{C^m}n^{-\gamma \alpha_{m, d}})$ and a (faster) runtime of $\Theta_{m, d}(n + N^{\beta_{m, d}}) = \Theta_{m, d}(n^{\max\{1, \gamma \beta_{m, d}\}})$. A similar construction could be used to move constants $C_{m, d} \geq 1$ or potential factors $\|f\|_{C^m}^k \geq 1$ from the runtime to the convergence rate. 
\end{example}

Of course, the construction in \Cref{ex:tradeoff_less_points} does not improve the runtime needed to reach a desired error level, but it shows that some combinations of runtime complexity and convergence rates are not better than others. To trade runtime complexity for better convergence rates, an analogous construction is not possible, since it would need to use $N > n$ function evaluations, which would contradict the definition of $n$. However, we can instead use $N$ evaluations of an approximant created using $n$ function evaluations:

\begin{example}[Trading runtime complexity for better convergence rates] \label{ex:tradeoff_more_points}
Suppose again that we have an algorithm $A$ for the sampling or log-partition problems with convergence rate $\Theta_{m, d}(\|f\|_{C^m}n^{-\alpha_{m, d}})$ and runtime $\Theta_{m, d}(n^{\beta_{m, d}})$. We consider an algorithm resulting from the following construction:
\begin{enumerate}[(1)]
\item Use an approximation algorithm as in \Cref{thm:mls} to create an approximation $f_n$ of $f$ using $n$ (deterministic) function evaluations.
\item Run algorithm $A$ on $N = \Theta_{m, d}(n^\gamma)$ function evaluations of $f_n$, $\gamma \in (0, \infty)$. 
\end{enumerate}
By \Cref{thm:mls}, we have $\|f_n - f\|_\infty \leq O_{m, d}(\|f\|_{C^m}n^{-m/d})$, and by \Cref{prop:app_bounds}, this rate also applies to the considered distances of $L_{f_n}$ to $L_f$ or $P_{f_n}$ to $P_f$. By the triangle inequality, the resulting algorithm has a convergence rate of
\begin{IEEEeqnarray*}{+rCl+x*}
O_{m, d}(\|f\|_{C^m} n^{-m/d} + \|f_n\|_{C^m} N^{-\alpha_{m, d}}) = O_{m, d}(\|f\|_{C^m} n^{-\min\{m/d, \gamma \alpha_{m, d}\}})~,
\end{IEEEeqnarray*}
where we used $\|f_n\|_{C^m} \leq O_{m, d}(\|f\|_{C^m})$ due to \Cref{thm:mls} (a) with $k=m$. The runtime complexity of this algorithm is
\begin{IEEEeqnarray*}{+rCl+x*}
&& O_{m, d}(n^{\gamma \beta_{m, d}})~. %
\end{IEEEeqnarray*}
\end{example}

While the construction in \Cref{ex:tradeoff_more_points} also does not improve the runtime complexity needed to reach a desired error level, it can still be useful if evaluations of the approximant (or surrogate model) $f_n$ are much cheaper than evaluations of $f$. This principle is used, for example, in computational chemistry, where expensive direct simulations $f$ are approximated with machine-learned interatomic potentials $f_n$ \citep{deringer_machine_2019}.

\subsection{Relation between Stochastic and Deterministic Evaluation Points}

When the construction in \Cref{ex:tradeoff_more_points} is applied to a sampling algorithm with stochastic evaluation points, it yields a sampling algorithm with deterministic evaluation points. This can be advantageous since the latter only needs $n$ function evaluations to draw an arbitrary number of samples, while the former may require $n$ new function evaluations for every drawn sample. On the other hand, this construction limits the convergence rate of the sampling algorithm to $\Omega_{m, d}(Bn^{-m/d})$, a rate which can be improved by sampling algorithms with stochastic evaluation points outside of the optimization regime (cf.\ \Cref{thm:upper_stoch_sampling}). 

Applying the construction in \Cref{ex:tradeoff_more_points} to a log-partition algorithm with stochastic evaluation points yields a stochastic log-partition algorithm with deterministic evaluation points. We did not consider such algorithms separately in \Cref{sec:ibc:stochastic_points}. However, such an algorithm is never better than its median or expected output, which is a deterministic log-partition method with deterministic evaluation points. Hence, it follows from \Cref{thm:deterministic_points_rates} that the convergence rate of the construction in \Cref{ex:tradeoff_more_points} is limited to $\Omega_{m, d}(Bn^{-m/d})$, and this rate can be improved by log-partition algorithms with stochastic evaluation points outside of the optimization regime (cf.\ \Cref{thm:upper_stoch_log}).

\subsection{Relation Between Sampling and Log-partition Estimation} \label{sec:relations:sampling_log}

A natural question is whether efficient sampling algorithms can be used to obtain efficient log-partition estimators and vice versa. We study both of these directions in the following. In fact, sampling algorithms are frequently employed for log-partition estimation in computational statistical physics and other fields \citep{frenkel_understanding_2001, friel_estimating_2012}. One method to achieve this is thermodynamic integration \citep{kirkwood_statistical_1935}, of which we present a particularly simple version here. By integrating the derivative of $L(\beta) \equalDef L_{\beta f}$, it is possible to derive the following formula \citep{gelman_simulating_1998, friel_estimating_2012}:
\begin{IEEEeqnarray*}{+rCl+x*}
L_f & = & \int_0^1 \bbE_{x \sim P_{\beta f}}[f(x)] \diff \beta = \bbE_{\beta \sim \calU([0, 1])} \bbE_{x \sim P_{\beta f}} f(x)~.
\end{IEEEeqnarray*}
Thermodynamic integration can be used more generally to estimate a difference $L_f - L_g$ by integrating along a path between $f$ and $g$. In practice, the inner expectation is typically evaluated by Monte Carlo methods using sampling algorithms to sample from $P_{\beta f}$, while the outer integral is typically approximated with a suitable (deterministic) quadrature rule. For convenience of analysis, we will consider the case where both expectations are approximated using Monte Carlo quadrature: 

\begin{restatable}[Convergence of thermodynamic integration]{theorem}{thmThermodynamicIntegration} \label{thm:thermodynamic_integration}
Given $N \in \bbN_{\geq 1}$ and a sampling algorithm producing samples from approximate distributions $\tilde P_{\beta f}$, consider the following algorithm:
\begin{itemize}
\item Sample $\beta_1, \hdots, \beta_N \sim \calU([0, 1])$ independently.
\item Draw $X_i \sim \tilde P_{\beta_i f}$ independently.
\item Output $\tilde L_f \equalDef \frac{1}{N} \sum_{i=1}^N f(X_i)$.
\end{itemize}
Then, for $\delta > 0$, we have
\begin{IEEEeqnarray*}{+rCl+x*}
|L_f - \tilde L_f| \leq |L_f - \bbE \tilde L_f| + 2\|f\|_\infty \sqrt{\frac{\log(2/\delta)}{2N}}
\end{IEEEeqnarray*}
with probability $\geq 1-\delta$, where
\begin{IEEEeqnarray*}{+rCl+x*}
|L_f - \bbE \tilde L_f| & \leq & 2 \|f\|_\infty \sup_{\beta \in [0, 1]} \DTV(P_{\beta f}, \tilde P_{\beta f}), \\
|L_f - \bbE \tilde L_f| & \leq & |f|_1 \sup_{\beta \in [0, 1]} \DW(P_{\beta f}, \tilde P_{\beta f}).
\end{IEEEeqnarray*}
\end{restatable}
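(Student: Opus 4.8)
The plan is to split the total error $|\tilde L_f - L_f|$ into a bias term $|L_f - \bbE\tilde L_f|$ and a concentration term $|\tilde L_f - \bbE\tilde L_f|$, then bound each separately. For the concentration term, first I would observe that $\tilde L_f = \frac1N\sum_{i=1}^N f(X_i)$ is an average of $N$ independent random variables (the pairs $(\beta_i, X_i)$ are i.i.d., and $X_i$ depends only on $\beta_i$ and the algorithm's internal randomness), each taking values in $[-\|f\|_\infty, \|f\|_\infty]$. Hoeffding's inequality then gives $|\tilde L_f - \bbE\tilde L_f| \leq 2\|f\|_\infty\sqrt{\log(2/\delta)/(2N)}$ with probability $\geq 1-\delta$, which is exactly the second term in the claimed bound.

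For the bias term, the key identity is the thermodynamic integration formula $L_f = \int_0^1 \bbE_{x\sim P_{\beta f}}[f(x)]\diff\beta$ stated just before the theorem, which can be rewritten as $L_f = \bbE_{\beta\sim\calU([0,1])}\bbE_{x\sim P_{\beta f}}[f(x)]$. On the other hand, $\bbE\tilde L_f = \bbE_{\beta\sim\calU([0,1])}\bbE_{x\sim\tilde P_{\beta f}}[f(x)]$, since each $X_i\sim\tilde P_{\beta_i f}$ with $\beta_i$ uniform. Subtracting and using the triangle inequality (moving the expectation over $\beta$ outside), we get
\begin{IEEEeqnarray*}{+rCl+x*}
|L_f - \bbE\tilde L_f| \leq \sup_{\beta\in[0,1]} \left|\bbE_{x\sim P_{\beta f}}[f(x)] - \bbE_{x\sim\tilde P_{\beta f}}[f(x)]\right|~.
\end{IEEEeqnarray*}
It remains to bound the integral-difference $|\int f\diff P_{\beta f} - \int f\diff\tilde P_{\beta f}|$ in terms of $\DTV$ and $\DW$. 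For the TV bound, I would use the standard estimate $|\int h\diff P - \int h\diff Q| \leq 2\|h\|_\infty\DTV(P,Q)$ (which follows from writing the difference as $\int h\,\mathrm{d}(P-Q)$ and using the Hahn–Jordan decomposition together with $\|h\|_\infty$), applied with $h = f$; note $\|f\|_\infty$ does not change when $f$ is rescaled inside the distribution, so the same $\|f\|_\infty$ works uniformly in $\beta$. For the Wasserstein bound, I would use Kantorovich–Rubinstein duality: $|\int f\diff P - \int f\diff Q| \leq |f|_1 \DW(P,Q)$, since $f$ is Lipschitz with constant $|f|_1$ and $\DW$ is the $1$-Wasserstein distance. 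Taking the supremum over $\beta\in[0,1]$ gives the two claimed bias bounds.

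The only mild subtlety — and the one place to be careful rather than the real obstacle — is verifying that $\bbE\tilde L_f$ genuinely equals $\bbE_\beta\bbE_{x\sim\tilde P_{\beta f}}[f(x)]$, i.e., correctly using the tower property over the randomness in $\beta_i$ and in the sampler, and confirming that the thermodynamic integration identity itself holds (this is the differentiation-under-the-integral computation $\frac{d}{d\beta}L_{\beta f} = \bbE_{x\sim P_{\beta f}}[f(x)]$, valid here since $f$ is bounded so the partition function is smooth in $\beta$). I do not expect any genuine obstacle: each step is a direct application of a classical inequality (Hoeffding, TV–integral bound, Kantorovich–Rubinstein), and the result follows by assembling them with the triangle inequality. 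The one thing worth stating explicitly is that the boundedness of $f$ is what makes both the Hoeffding step and the smoothness-in-$\beta$ of $L_{\beta f}$ legitimate.
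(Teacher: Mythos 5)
Your proposal is correct and follows essentially the same route as the paper's proof: the same bias/concentration decomposition, Hoeffding's inequality for the concentration term, and the thermodynamic integration identity combined with the TV test-function bound and Kantorovich–Rubinstein duality for the bias term. No gaps.
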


\Cref{thm:thermodynamic_integration} is proven in \Cref{sec:appendix:relations:sampling_logpartition}. In the upper bounds above, we obtain additional factors $\|f\|_\infty$ or $|f|_1$, which deteriorate the convergence rate. While it appears that these factors are in general necessary for the TV and 1-Wasserstein distances, we explain in \Cref{rem:ti_suplog} that better bounds in terms of $\Dsuplog$ seem plausible but appear to be more difficult to prove. When considering the runtime complexity and convergence rate of the construction in \Cref{thm:thermodynamic_integration}, it is important to set them in relation to the total number $n$ of function evaluations used. For example, if sampling from $P_{\beta_i f}$ uses $\tilde n$ function evaluations, then in general $n = (\tilde n + 1)N$. If the employed sampling algorithm is non-adaptive with deterministic evaluation points, we only need $n = \tilde n + N$ function evaluations. Still, due to the Monte Carlo nature of thermodynamic integration, the convergence rate is at least limited to $\Omega_{m, d, f}(n^{-1/2})$, which is not optimal as we showed in \Cref{thm:upper_stoch_log}. Of course, thermodynamic integration can be performed on top of an approximation of $f$ instead, similar to \Cref{ex:tradeoff_more_points}.

\begin{algorithm}[htb]
\caption{Bisection sampling algorithm using a log-partition algorithm $\tilde L$.} \label{alg:bisection_sampling}
\begin{algorithmic}
\Function{BisectionSampling}{Function $f: \calX \to \bbR^d$, Log-partition algorithm $\tilde L$, Number $M \in \bbN_0$ of bisection steps per dimension}
	\State For a hyperrectangle $\calZ = \bigtimes_{i=1}^d [z_i, z_i + h_i]$, define $f_{\calZ}: \calX \to \bbR$ by $f_{\calZ}(x) \equalDef f(z_1 + h_1 x_1, \hdots, z_d + h_d x_d)$
	\State $\calZ \assign \calX$
	\For{$i$ from $1$ to $M$}
		\For{$j$ from $1$ to $d$}
			\State Split $\calZ$ along dimension $j$ into two equal-sized hyperrectangles $\calZ_1$ and $\calZ_2$
			\State Compute $p_1 \equalDef \sigma(\tilde L_{f_{\calZ_1}} - \tilde L_{f_{\calZ_2}})$, where $\sigma(u) = (1 + \exp(-u))^{-1}$ is the sigmoid function
			\State Sample $k=1$ with probability $p_1$ and $k=2$ otherwise
			\State $\calZ \assign \calZ_k$
		\EndFor
	\EndFor
	\State $\xi \assign$ sample from the uniform distribution $\calU(\calZ)$
	\State \Return $\xi$
\EndFunction
\end{algorithmic}
\end{algorithm}

Now, we ask the converse question: Can an efficient log-partition algorithm be used for efficient sampling? To achieve such a reduction, we note that we can apply a log-partition algorithm not only to the target function $f$ but also, for example, to multiple shifted and rescaled versions of $f$, which amounts to computing the log-partition function on subsets of the cube $\calX$. This is exploited in \Cref{alg:bisection_sampling}, which we refer to as bisection sampling. Bisection sampling has been studied, for example, by \cite{marteau-ferey_sampling_2022}. We give an upper bound on its error in the sup-log distance:

\begin{restatable}[Convergence of bisection sampling]{theorem}{thmSampByLog} \label{thm:sampling_by_logpartition}
Let $m \geq 1, B \geq 0$ and $M \in \bbN_0$. Let $f \in \calF_{d, m, B}$ and let $\tilde L$ be a log-partition estimator with worst-case error $E \geq 0$ on $\calF_{d, m, B}$. Let $f \in C^m(\calX)$ and let $\tilde P_f$ be the distribution of samples produced by $\textsc{BisectionSampling}(f, \tilde L, M)$ in \Cref{alg:bisection_sampling}. Then,
\begin{IEEEeqnarray*}{+rCl+x*}
\Dsuplog(P_f, \tilde P_f) & \leq & 2MdE + 2^{-M} d \|f\|_{C^1}~.
\end{IEEEeqnarray*}
\end{restatable}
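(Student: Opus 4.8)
The distribution $\tilde P_f$ is supported on $\calX$ with a piecewise-constant density: after the $M$ rounds of $d$ bisections each, the domain has been cut into the $2^{Md}$ dyadic sub-cubes of side $2^{-M}$, the algorithm terminates in a random such cube $\calZ^*$, and then outputs $\calU(\calZ^*)$. Hence for $x$ in the interior of the cube $C$ containing it, the density of $\tilde P_f$ is $\tilde p_f(x) = 2^{Md}\,\Pr[\calZ^* = C]$, while $p_f(x) = \exp(f(x) - L_f) > 0$. Since every descent probability $\sigma(\cdot)$ lies in $(0,1)$, both densities are positive a.e., so $P_f$ and $\tilde P_f$ are mutually absolutely continuous and $\Dsuplog(\tilde P_f, P_f) = \esssup_{x}\bigl|\log(\tilde p_f(x)/p_f(x))\bigr|$. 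The plan is to insert an intermediate ``ideal'' run in which $\tilde L$ is replaced by the exact $L$, bound the two resulting contributions $\bigl|\log(2^{Md}P_f(C)/p_f(x))\bigr|$ and $\bigl|\log(\Pr[\calZ^*=C]/P_f(C))\bigr|$ separately, and add them by the triangle inequality for $\Dsuplog$ (which holds by the footnote in the main text).

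\textbf{The ideal run gives exact cell sampling.} When a cube $\calZ$ is split into equal halves $\calZ_1,\calZ_2$, the affine change of variables carrying $[0,1]^d$ onto $\calZ$ gives $Z_{f_\calZ} = \vol(\calZ)^{-1}\int_\calZ \exp(f)$. Using $\sigma(a-b) = e^a/(e^a+e^b)$ together with $\vol(\calZ_1)=\vol(\calZ_2)$, the ideal descent probability is $\sigma(L_{f_{\calZ_1}} - L_{f_{\calZ_2}}) = \int_{\calZ_1}\exp(f)\,/\int_{\calZ}\exp(f) = P_f(\calZ_1\mid\calZ)$. Telescoping this identity along the $Md$ splits leading to a cube $C$ shows the ideal algorithm reaches $C$ with probability exactly $P_f(C)$, so its output density at $x\in C$ is $2^{Md}P_f(C) = \vol(C)^{-1}\int_C \exp(\bar f)$, i.e.\ the cell average of $p_f$. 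Consequently
\[
\Bigl|\log\frac{2^{Md}P_f(C)}{p_f(x)}\Bigr| = \Bigl|\log\Bigl(\tfrac{1}{\vol(C)}\int_C \exp(f(y)-f(x))\diff y\Bigr)\Bigr| \le |f|_1\,\diam(C) = |f|_1\,2^{-M}\sqrt d \le 2^{-M} d\,\|f\|_{C^1},
\]
where we used $\diam(C)=2^{-M}\sqrt d$ and $|f|_1 \le d^{1/2}\|f\|_{C^1}$ (\Cref{lemma:lipschitz_constant}).

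\textbf{The log-partition error accumulates linearly.} Each sub-cube function $f_\calZ$ still lies in $\calF_{d,m,B}$, since rescaling coordinate $i$ by the side length $h_i\le 1$ multiplies $\partial_\alpha f$ by $\prod_i h_i^{\alpha_i}\in[0,1]$ and hence does not increase the $C^m$-norm; therefore $|\tilde L_{f_\calZ} - L_{f_\calZ}|\le E$ for every cube encountered, so each split's logit is perturbed by at most $2E$. Because $u\mapsto\log\sigma(u)$ and $u\mapsto\log\sigma(-u)$ are both $1$-Lipschitz, the log-probability of each of the $Md$ individual descent steps changes by at most $2E$. Writing $\Pr[\calZ^*=C]=\prod_{t=1}^{Md}\tilde q_t$ and $P_f(C)=\prod_{t=1}^{Md} q_t$ for the actual and ideal step probabilities along the path to $C$ (the latter telescoping as above), we get $\bigl|\log(\Pr[\calZ^*=C]/P_f(C))\bigr| \le \sum_t |\log\tilde q_t - \log q_t| \le 2MdE$. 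Adding the two bounds and taking the supremum over $x$ (ignoring the measure-zero cube boundaries) yields $\Dsuplog(\tilde P_f, P_f)\le 2MdE + 2^{-M} d\|f\|_{C^1}$.

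\textbf{Main obstacle.} The only genuinely substantive step is recognizing the identity $\sigma(L_{f_{\calZ_1}}-L_{f_{\calZ_2}}) = P_f(\calZ_1\mid\calZ)$, which turns the exact version of \textsc{BisectionSampling} into a perfect sampler of the cell and reduces the whole analysis to (i) a discretization error governed by the cell diameter and (ii) an estimation error that propagates through $Md$ $1$-Lipschitz maps; once this is seen, the remainder is routine bookkeeping, the two points requiring care being that the rescaled sub-functions remain in $\calF_{d,m,B}$ and that the two error sources genuinely just add.
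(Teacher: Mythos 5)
Your proposal is correct and follows essentially the same route as the paper's proof: the same decomposition of $\log(\tilde p_f(x)/p_f(x))$ via the cell-averaged density $2^{Md}P_f(C)$, the same oscillation bound $2^{-M}d\|f\|_{C^1}$ on the final cube, and the same per-split argument that $\sigma(L_{f_{\calZ_1}}-L_{f_{\calZ_2}})=P_f(\calZ_1\mid\calZ)$ combined with the $1$-Lipschitzness of $\log\sigma$ and the observation that the rescaled sub-functions stay in $\calF_{d,m,B}$, accumulating $2E$ per step over $Md$ steps. The only cosmetic difference is that you telescope the product of step probabilities directly where the paper runs an induction on the depth $k$.
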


Of course, \Cref{thm:sampling_by_logpartition}, which is proven in \Cref{sec:appendix:relations:sampling_logpartition}, also implies bounds on the TV and 1-Wasserstein distances using \Cref{prop:app_bounds}.
The first term in the upper bound grows with $M$, which stems from the possibility of making an error of order $2E$ per loop iteration. However, when the resulting error decays quickly enough in the loop, it is possible to make the first term independent of $M$. For example, this could arise because the log-partition algorithm achieves smaller errors for smoother functions. It is also possible if we consider the 1-Wasserstein distance, which provides better error bounds on smaller hyperrectangles. 

To analyze the resulting convergence rates, suppose that the log-partition algorithm $\tilde L$ uses $N$ evaluation points. Ignoring rounding issues, we can set $M = \log_2(N^{m/d})$ and obtain the rate
\begin{IEEEeqnarray*}{+rCl+x*}
\Dsuplog(P_f, \tilde P_f) & \leq & O_{m, d}(E \log(N) + \|f\|_{C^1} N^{-m/d})~,
\end{IEEEeqnarray*}
where \textsc{BisectionSampling} uses up to $n \equalDef 2MdN = O_{m, d}(N\log(N))$ function evaluations. Hence, we typically only lose polylogarithmic terms in the convergence rate, unlike for thermodynamic integration. Even for log-partition algorithms with deterministic evaluation points, the resulting sampling algorithm uses stochastic evaluation points. Again, bisection sampling can be performed on top of an approximation of $f$ instead, similar to \Cref{ex:tradeoff_more_points}.

\subsection{Relation to Optimization} \label{sec:relations:optimization}

Due to the relationship between sampling and optimization, a natural question is in which sense approximate sampling algorithms can perform approximate optimization. Actually, we can consider two kinds of optimization problems, similar to \cite{novak_deterministic_1988}:
\begin{enumerate}[(OPT), wide=0pt]
\item[(OPT)] The problem of outputting $x \in \calX$ such that $|M_f - f(x)|$ is small can be seen as the low-temperature limit of the sampling problem.
\item[(OPT${}^*$)] The problem of outputting an estimate $\tilde M_f$ such that $|M_f - \tilde M_f|$ is small can be seen as the low-temperature limit of the log-partition problem.
\end{enumerate}
As special cases of the reductions between sampling and log-partition estimation in \Cref{sec:relations:sampling_log}, we can obtain reductions between (OPT) and (OPT${}^*$): For (OPT${}^*$), we can simply evaluate $f$ at the estimate $x$ obtained from (OPT), which can be seen as a simple special case of thermodynamic integration. On the other hand, for (OPT), we can recursively use (OPT${}^*$) to see whether the optimum is contained in a subdomain of $\calX$, which corresponds to the low-temperature limit of bisection sampling.

To obtain a bound for approximate (OPT${}^*$) via approximate log-partition estimation $\tilde L$, we note that \Cref{lemma:lipschitz_maximization_bound} directly yields
\begin{IEEEeqnarray*}{+rCl+x*}
|M_f - \eps \tilde L_{f/\eps}| & \leq & |M_f - \eps L_{f/\eps}| + \eps |L_{f/\eps} - \tilde L_{f/\eps}| \\
& \leq & \eps d\log(1+3d^{-1/2}\eps |f|_1) + \eps |L_{f/\eps} - \tilde L_{f/\eps}| \IEEEyesnumber \label{eq:opt_by_apx_logp}
\end{IEEEeqnarray*}
for temperatures $\eps > 0$.

When performing approximate (OPT) via sampling from an approximate distribution $Q = \tilde P_f$, the result depends on the employed distance metric. Since the result of sampling is stochastic, we will upper-bound probabilities of the form $Q(\{x \in \calX \mid f(x) \leq \alpha\})$ of obtaining a function value $f(x) \leq \alpha$ when drawing $x$ from $Q$.

\begin{restatable}[Optimization by approximate sampling]{proposition}{propOptByApxSampling} \label{prop:opt_by_apx_sampling}
Let $Q$ be a probability distribution on $\calX$. Then, for any $\delta \in (0, 1]$ and $\eps > 0$, %
\begin{enumerate}[(a)]
\item $Q(\{x \in \calX \mid f(x) \leq \eps L_{f/\eps} - \eps \log(1/\delta) - \eps \Dsuplog(P_{f/\eps}, Q)\}) \leq \delta$,
\item $Q(\{x \in \calX \mid f(x) \leq \eps L_{f/\eps} - \eps \log(1/\delta)\}) \leq \delta + \DTV(P_{f/\eps}, Q)$,
\item $Q(\{x \in \calX \mid f(x) < \eps L_{f/\eps} - \eps \log(2/\delta) - 2\delta^{-1} |f|_1 \DW(P_{f/\eps}, Q)\}) \leq \delta$.
\end{enumerate}
\end{restatable}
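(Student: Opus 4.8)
The plan is to reduce all three bounds to a single one-sided tail estimate for the Gibbs distribution $P_{f/\eps}$ and then transport that estimate to $Q$ through the relevant distance. The estimate I would establish first is
\[
P_{f/\eps}(\{x \in \calX \mid f(x) \le \gamma\}) \le \exp(\gamma/\eps - L_{f/\eps}) \qquad \text{for every } \gamma \in \bbR .
\]
This is immediate: on $\{f \le \gamma\}$ we have $\exp(f(x)/\eps) \le \exp(\gamma/\eps)$, so using $\vol(\calX)=1$ one gets $P_{f/\eps}(\{f \le \gamma\}) = Z_{f/\eps}^{-1}\int_{\{f \le \gamma\}}\exp(f(x)/\eps)\diff x \le Z_{f/\eps}^{-1}\exp(\gamma/\eps) = \exp(\gamma/\eps - L_{f/\eps})$; this is the non-strict counterpart of the second statement of \Cref{lemma:lipschitz_maximization_bound}.

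For part (a), if $D \equalDef \Dsuplog(P_{f/\eps}, Q) = \infty$ the stated threshold is $-\infty$ and the set is empty, so I may assume $D < \infty$; then $P_{f/\eps}$ and $Q$ are mutually absolutely continuous with $\diff Q/\diff P_{f/\eps} \le e^{D}$ almost everywhere, hence $Q(A) \le e^{D}P_{f/\eps}(A)$ for every measurable $A \subseteq \calX$. Taking $A = \{f \le \beta\}$ with $\beta \equalDef \eps L_{f/\eps} - \eps\log(1/\delta) - \eps D$ together with the tail estimate gives $Q(\{f \le \beta\}) \le e^{D}\exp(-\log(1/\delta) - D) = \delta$. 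For part (b), the definition of total variation gives $Q(A) \le P_{f/\eps}(A) + \DTV(P_{f/\eps}, Q)$ for every measurable $A$; applying this with $A = \{f \le \eps L_{f/\eps} - \eps\log(1/\delta)\}$, whose $P_{f/\eps}$-measure is at most $\delta$ by the tail estimate, yields the claim.

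For part (c) I may assume $f$ is Lipschitz (otherwise $|f|_1 = \infty$, the threshold is $-\infty$, and the claim is vacuous). Writing $W \equalDef \DW(P_{f/\eps}, Q)$, I fix $\eta > 0$ and choose a coupling $(X, Y)$ with $X \sim P_{f/\eps}$, $Y \sim Q$ and $\bbE\|X - Y\|_2 \le W + \eta$. With $t \equalDef 2(W+\eta)/\delta$, Markov's inequality gives $\Pr(\|X - Y\|_2 > t) \le \delta/2$, and on the complementary event Lipschitzness gives $f(X) \le f(Y) + |f|_1 t$, so $\{f(Y) < \beta_\eta\} \cap \{\|X - Y\|_2 \le t\} \subseteq \{f(X) < \beta_\eta + |f|_1 t\}$, where $\beta_\eta \equalDef \eps L_{f/\eps} - \eps\log(2/\delta) - 2\delta^{-1}|f|_1(W+\eta)$. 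Since $\beta_\eta + |f|_1 t = \eps L_{f/\eps} - \eps\log(2/\delta)$, the tail estimate bounds the last probability by $\delta/2$; adding the two parts gives $Q(\{f < \beta_\eta\}) = \Pr(f(Y) < \beta_\eta) \le \delta$. Letting $\eta \searrow 0$, the sets $\{f < \beta_\eta\}$ increase to $\{f < \beta\}$ with $\beta \equalDef \eps L_{f/\eps} - \eps\log(2/\delta) - 2\delta^{-1}|f|_1 W$, and continuity of $Q$ from below finishes the proof.

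The only genuinely substantive step is the coupling-plus-Markov argument in (c): one has to split the failure budget as $\delta/2 + \delta/2$ and choose the truncation level $t$ so that the Lipschitz shift $|f|_1 t$ exactly compensates the $2\delta^{-1}|f|_1 W$ term in the stated threshold while the Markov tail still costs only $\delta/2$. Parts (a) and (b) are routine once the Gibbs tail estimate is in place, and the degenerate cases ($\Dsuplog = \infty$, $f$ non-Lipschitz, $W = 0$, the last handled automatically by keeping $\eta > 0$) are all absorbed by the reductions above.
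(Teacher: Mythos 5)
Your proof is correct and follows essentially the same route as the paper's: a one-sided Gibbs tail bound (the second part of \Cref{lemma:lipschitz_maximization_bound}), transported to $Q$ via the Radon--Nikodym bound $Q(A)\le e^{D}P_{f/\eps}(A)$ for (a), the defining property of $\DTV$ for (b), and a near-optimal coupling plus Markov's inequality with a $\delta/2+\delta/2$ split for (c). The only cosmetic difference is in (a), where the paper writes $Q=P_g$ and argues via implications between sublevel sets of $f$ and $g$ rather than using the density ratio directly; both arguments are equivalent.
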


\Cref{prop:opt_by_apx_sampling} is proven in \Cref{sec:appendix:relations:optimization}. If $Q = P_{g/\eps}$ for some bounded $g: \calX \to \bbR$, the bound in $(a)$ recovers the known optimization bound $f(\argmax g) \geq M_f - 2\|f - g\|_\infty$ in the limit $\varepsilon \searrow 0$ using \Cref{lemma:lipschitz_maximization_bound} and \Cref{prop:app_bounds}. We can deduce from (a) that a sampling algorithm achieving the optimal rate $O_{m, d}(\|f\|_{C^m} n^{-m/d})$ in terms of $\Dsuplog$ can be used (with sufficiently small $\eps$) to achieve the optimal rate for (OPT) as well. On the other hand, the bounds (b) and (c) are much weaker.
For example, (a) still gives a good bound for $\Dsuplog(P_{f/\eps}, Q) = 1/2$, but (b) only gives a low-probability bound for $\DTV(P_{f/\eps}, Q) = 1/2$ and (c) is trivial for $\DW(P_{f/\eps}, Q) = 1/2$. Note that an argument analogous to (b) has been used in Corollary 1 by \cite{ma_sampling_2019} to analyze the convergence of Langevin algorithms for approximate optimization.

The reductions presented in this section are summarized in \Cref{table:reductions}.

\begin{table}
\centering
\begin{tabular}{ccccc}
\toprule
Method & Old problem & New problem & New conv.\ rate & New runtime \\
\midrule
Dummy eval.\ & Any & same & $O(Bn^{-\gamma \alpha}), \gamma \leq 1$ & $O(n^{\gamma \beta})$ \\
Interpolant & Any & same & $O(Bn^{-\min\{m/d,\gamma \alpha\}})$ & $O(n^{\gamma \beta})$ \\ 
Therm.\ integr. & $P_f$ & $L_f$ & $O(Bn^{-\alpha/(2\alpha+1)})$ & $O(n^{\frac{2\alpha + \beta}{2\alpha + 1}})$ \\
Bisec.\ sampl. & $L_f$ & $P_f$ & $\tilde O(Bn^{-\alpha})$ & $\tilde O(n^\beta)$ \\
Direct & $L_f$ & $\max f$ & $O(Bn^{-\alpha})$ & $O(n^\beta)$ \\
Direct & $P_f$ & $\argmax f$ & $O(Bn^{-\alpha})$ & $O(n^\beta)$ \\
\bottomrule
\end{tabular}
\vspace{0.05in}
\caption{Reductions using a given algorithm (\quot{old}) with convergence rate $O(Bn^{-\alpha})$ and runtime $O(n^\beta)$ to obtain a solution to a different (\quot{new}) problem. For thermodynamic integration, we use $N = \Theta(n^{2\alpha/(2\alpha+1)})$ samples to get the best possible bound from \Cref{thm:thermodynamic_integration}. We use $\tilde O$ notation for bisection sampling to ignore terms of the form $\log(n)^\gamma$. All sampling bounds hold for $\Dsuplog$ but sometimes also for weaker metrics. The bounds below are from \Cref{ex:tradeoff_less_points}, \Cref{ex:tradeoff_more_points}, \Cref{thm:thermodynamic_integration}, \Cref{thm:sampling_by_logpartition}, \Eqref{eq:opt_by_apx_logp}, and \Cref{prop:opt_by_apx_sampling}.} \label{table:reductions}
\end{table}

\section{Algorithms} \label{sec:algorithms}

In this section, we investigate the convergence rates of different algorithmic approaches toward the sampling and log-partition problems. 

\subsection{Approximation-based Algorithms}

First, we study approximation-based algorithms. In \Cref{sec:ibc:deterministic_points}, we have seen that, in principle, approximation-based methods can achieve the optimal rates for the sampling and log-partition problems with deterministic points. However, for most approximations $g$, it is unclear how to sample from $P_g$ or compute $L_g$. In the following, we will consider a few cases where this is possible:

\subsubsection{Piecewise Constant Approximation}  \label{sec:pc_approx}

A very simple approximation method is piecewise constant approximation. Here, we study the convenient setting where $n = N^d$ for some $N \in \bbN$:
\begin{itemize}
\item Divide $\calX$ into $N^d$ equally-sized cubes $\calX_1, \hdots, \calX_n$ by dividing $[0, 1]$ into $N$ intervals.
\item Output the function $g_{f, n}$ that is piecewise constant on each cube and interpolates $f$ at the center $x^{(i)}$ of the cube~$\calX_i$. Boundary points can be assigned to an arbitrary adjacent cube.
\end{itemize}
Given a piecewise constant function $g_{f, n}$, we can easily compute $L_{g_{f, n}} = \log\left(\frac{1}{n} \sum_{i=1}^n e^{f(x^{(i)})}\right)$ in time $O_{m, d}(n)$. 
After an $O_{m, d}(n)$ preprocessing step \citep[see][]{vose_linear_1991}, it is even possible to sample from $g_{f, n}$ in time $O_{m, d}(1)$ in suitable computation models: First, sample a subcube $\calX_i$ with probability $p_i = e^{f(x^{(i)}) - L_{g_{f, n}}}$ using the method of \cite{vose_linear_1991}; then, draw a uniform random sample from $\calX_i$.
However, the convergence rate of the piecewise constant approximation is bad, as we prove in \Cref{sec:appendix:pc_approx}:

\begin{restatable}[Convergence rate of piecewise constant approximation]{theorem}{thmPCApprox} \label{thm:pc_approx}
Let $m \geq 1$ and $n = N^d$ as above. If $g_{f, n}$ is a piecewise constant interpolant as above, we have
\begin{IEEEeqnarray*}{+rCl+x*}
\sup_{f \in \calF_{d, m, B}} |L_f - L_{g_{f, n}}| & = & \begin{cases}
\Theta_{m, d}(Bn^{-1/d}) &, \text{ if $m=1$ or $Bn^{-1/d} > 1$} \\
\Theta_{m, d}(\max\{B, B^2\} n^{-2/d}) &, \text{ otherwise.}
\end{cases}  \\
\sup_{f \in \calF_{d, m, B}} \Dsuplog(P_f, P_{g_{f, n}}) & = & \Theta_{m, d}(Bn^{-1/d})~.
\end{IEEEeqnarray*}
\end{restatable}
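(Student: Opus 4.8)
The plan is to reduce everything to local (per-cube) estimates. Write $n = N^d$ and let $\calX_1, \dots, \calX_n$ be the cubes of side $1/N = n^{-1/d}$ with centers $x^{(1)}, \dots, x^{(n)}$. Set $w_i \equalDef \tfrac1n e^{f(x^{(i)})}$ and $r_i \equalDef \bbE_{x \sim \calU(\calX_i)}[e^{f(x) - f(x^{(i)})}]$, so that $Z_f = \sum_i w_i r_i$ and $Z_{g_n} = \sum_i w_i$. Then $Z_f / Z_{g_n}$ is a convex combination of the $r_i$, so $|L_f - L_{g_n}| \le \max_i |\log r_i|$, while Jensen's inequality gives $L_f - L_{g_n} \ge (\sum_i w_i \log r_i)/(\sum_i w_i)$. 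For the sup-log distance, the density ratio yields $\Dsuplog(P_f, P_{g_n}) = \sup_{x} |(f - g_n)(x) - (L_f - L_{g_n})| \ge \tfrac12 \operatorname{osc}(f - g_n)$. All of the claimed $O_{m,d}(B n^{-1/d})$ upper bounds then follow from \Cref{prop:app_bounds} together with the Lipschitz estimate $\|f - g_n\|_\infty \le |f|_1 \cdot \tfrac{\sqrt d}{2N} \le \tfrac{d B}{2} n^{-1/d}$, using $|f|_1 \le \sqrt d \|f\|_{C^1}$ (\Cref{lemma:lipschitz_constant}).

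The only delicate upper bound is the $O_{m,d}(\max\{B, B^2\} n^{-2/d})$ bound for $m \ge 2$ and $B n^{-1/d} \le 1$ (in which case $\max\{B,B^2\}n^{-2/d} \le 1$). Here I would Taylor-expand $f$ about each cube center, $f(x) - f(x^{(i)}) = \langle \nabla f(x^{(i)}), u \rangle + R_i(u)$ with $u = x - x^{(i)}$ and $|R_i(u)| \le \tfrac12 \|\nabla^2 f\|_\infty \|u\|_2^2 = O_d(B N^{-2})$. Substituting into $r_i$: since $|R_i| \le 1$ in this regime, $e^{R_i} = 1 + O_d(B N^{-2})$, and the linear part factorizes over coordinates as $\prod_{j=1}^d \tfrac{\sinh(v_j / (2N))}{v_j / (2N)}$ with $v = \nabla f(x^{(i)})$, $|v_j| \le B$; using $\tfrac{\sinh t}{t} - 1 = \tfrac{t^2}{6} + O(t^4)$ and $|v_j/(2N)| \le \tfrac12$, this product equals $1 + O_d(B^2 N^{-2})$. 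Hence $|r_i - 1| \le O_d(\max\{B, B^2\} n^{-2/d})$, so $|\log r_i| \le O_d(\max\{B, B^2\} n^{-2/d})$ (for $N$ large enough depending on $d$, the remaining finitely many small-$N$ cases being absorbed into the $(m,d)$-constant); taking the maximum over $i$ completes the bound.

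For the lower bounds it suffices to exhibit one $f$ per regime. For $\Dsuplog$, the choice $f(x) = B x_1 \in \FdmB$ makes $f - g_n$ a sawtooth of oscillation $B/N$, so $\Dsuplog(P_f, P_{g_n}) \ge \tfrac{B}{2N}$. For $|L_f - L_{g_n}|$ when $B n^{-1/d} > 1$ (any $m$), the linear $f(x) = \tfrac{B}{d}(x_1 + \dots + x_d) \in \FdmB$ produces the same $r_i = (\sinh(z)/z)^d$ in every cube with $z = \tfrac{B}{2dN} > \tfrac1{2d}$; since $z \mapsto z^{-1} \log(\sinh(z)/z)$ is continuous, positive, and tends to $1$, it is bounded below on $[\tfrac1{2d}, \infty)$ by some $a_d > 0$, giving $L_f - L_{g_n} = d \log(\sinh(z)/z) \ge a_d d z = \Omega_d(B n^{-1/d})$. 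For $m = 1$ and $B n^{-1/d} \le 1$, take $f(x) = \tfrac{B}{2\pi N}(1 + \cos(2\pi N x_1)) \in \calF_{d,1,B}$, which vanishes at every cube center (so $g_n \equiv 0$ and $L_{g_n} = 0$) and satisfies $L_f \ge \tfrac{B}{\pi N} \int_0^1 \cos^2(\pi N t)\diff t = \tfrac{B}{2\pi N}$ by Jensen. Finally, for $m \ge 2$ and $B n^{-1/d} \le 1$, take $f(x) = \tfrac B2 x_1^2 \in \FdmB$, for which $f(x) - f(x^{(i)}) = B c_{k} u_1 + \tfrac B2 u_1^2$ on $\calX_i$ (with $c_k$ the first coordinate of $x^{(i)}$ and $u_1 = x_1 - c_k$); the bound $\log \bbE[e^Y] \ge \bbE[Y] + c_0 \Var(Y)$, valid for $|Y| \le 1$, gives $\log r_i \ge \tfrac{B}{24 N^2} + \Omega(\tfrac{B^2 c_k^2}{N^2})$, and since $w_i \propto e^{B c_k^2 / 2}$ is increasing in $c_k^2$, Chebyshev's sum inequality yields $\tfrac{\sum_i w_i \log r_i}{\sum_i w_i} \ge \tfrac{B}{24 N^2} + \Omega(\tfrac{B^2}{N^2}) \cdot \tfrac1N \sum_{k=1}^N c_k^2 \ge \Omega_d(\max\{B, B^2\} n^{-2/d})$, using that $\tfrac1N\sum_{k=1}^N c_k^2 \ge \tfrac14$.

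The main obstacle is the $m \ge 2$ regime: one has to show that piecewise-constant interpolation at cube midpoints inherits the second-order accuracy of the midpoint quadrature rule even after exponentiation and normalization, so that the first-order error cancels and only an $O(\max\{B,B^2\} n^{-2/d})$ term remains — uniformly in $B$, with no spurious $e^{O(B)}$ factor. Pinning down the exact $B$-dependence (i.e.\ $\max\{B,B^2\}$ rather than $B$ or $B^2$ alone) forces one to keep the Hessian remainder and the exponential tilt $\bbE[e^{\langle \nabla f(x^{(i)}), u\rangle}]$ separate and to track both; on the lower-bound side the same split reappears, and one additionally needs the weighting argument to certify that the cubes with large $\log r_i$ carry a constant fraction of the total mass $\sum_i w_i$.
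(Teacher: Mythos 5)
Your proof is correct, and while the overall skeleton (per-cube reduction, the Lipschitz bound via \Cref{prop:app_bounds}, linear test functions and the $\sinh(t)/t$ computation) matches the paper's, three steps are executed genuinely differently. First, for the $O_{m,d}(\max\{B,B^2\}n^{-2/d})$ upper bound the paper introduces an intermediate piecewise-\emph{linear} approximant $h_n$ and splits $|L_f-L_{g_n}|\le|L_f-L_{h_n}|+|L_{h_n}-L_{g_n}|$, whereas you Taylor-expand directly inside $r_i$ and absorb the Hessian remainder multiplicatively via $e^{-\|R_i\|_\infty}\le e^{R_i}\le e^{\|R_i\|_\infty}$; the two are equivalent in content, yours being slightly more compact. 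Second, for the $m=1$, $Bn^{-1/d}\le 1$ log-partition lower bound the paper simply invokes the minimax lower bound of \Cref{thm:deterministic_points_rates}, while you give an explicit rescaled-cosine witness vanishing at all cube centers — more self-contained, at the cost of a short extra computation. Third, and most substantively, for $m\ge 2$ and $Bn^{-1/d}\le 1$ the paper uses \emph{two} test functions, a linear one yielding $\Omega(B^2n^{-2/d})$ and a quadratic one yielding $\Omega(Bn^{-2/d})$, each giving a bound that is uniform over subcubes so that the one-sided version of \Cref{lemma:subcube_bound} applies directly; you extract both terms from the single function $\tfrac{B}{2}x_1^2$ via the weighted Jensen bound $L_f-L_{g_n}\ge\sum_i\lambda_i\log r_i$ plus Chebyshev's sum inequality to control the non-uniform variance contribution. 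Your route is slicker (one witness instead of two) but requires the extra bookkeeping of showing the mass-weighted average of $c_k^2$ stays bounded below; note that the covariance $\Cov(Bc_ku_1,\tfrac{B}{2}u_1^2)$ vanishes by symmetry of $u_1$, so your claimed $\Var(Y)\ge B^2c_k^2/(12N^2)$ in fact holds exactly for every cube, which closes the only point where the argument could have been delicate.
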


The rates of piecewise constant approximation are thus optimal for $m=1$, but not for $m>1$. For $m>1$, using a combination with higher-order function approximation as in \Cref{ex:tradeoff_more_points}, it is possible to achieve the rate $O_{m, d}(Bn^{-m/d})$ with runtime $O_{m, d}(n^m)$. 
The result above also shows that the piecewise constant log-partition method can achieve the faster rate $O(Bn^{-2/d})$ of midpoint quadrature only outside of the optimization regime. We leave it as an open problem whether such faster rates are also achieved for sampling with $\DTV$ or $\DW$. 
\cite{achddou_minimax_2019} analyze a combination of piecewise constant approximation with rejection sampling, but in a setting incomparable to ours. They also note that piecewise constant approximation achieves optimal rates for Hölder classes of functions. 

Beyond piecewise constant approximations, piecewise linear approximations also allow for efficient sampling and log-partition estimation, and they should allow achieving convergence rates of $O_{m, d}(Bn^{-2/d})$. We leave a precise analysis of this approach as an open problem.

\subsubsection{Density-based Approximation} 

Another option to obtain tractable sampling and log-partition algorithms is to directly approximate the unnormalized density $p(x) = e^{f(x)}$. Since probability distributions are normalized, approximating $\lambda p$ with $\lambda q$ yields the same sampling and log-partition errors as approximating $p$ with $q$, but the approximation error $\|\lambda p - \lambda q\|_\infty$ depends on $\lambda > 0$. To obtain a scale-invariant bound for the sampling and log-partition errors, we need to divide the approximation bound by a normalization constant:

\begin{restatable}[Density approximation bounds]{proposition}{PropDensityApxBound} \label{prop:density_apx_bound}
Let $p, q: \bbR \to [0, \infty)$ be bounded and measurable such that $I_p, I_q > 0$, where $I_p \equalDef \int_{\calX} p(x) \diff x$. Define probability distributions $P, Q$ with densities $p/I_p$ and $q/I_q$, respectively. Then,
\begin{IEEEeqnarray*}{+rCl+x*}
|\log I_p - \log I_q| & \leq & \log \left(\frac{1}{1 - \|p - q\|_\infty /I_p}\right) \quad \text{if $\|p - q\|_\infty < I_p$}, \\
\DTV(P, Q) & \leq & \frac{\|p - q\|_\infty}{\max\{I_p, I_q\}} \leq \frac{\|p - q\|_\infty}{I_p}~.
\end{IEEEeqnarray*}
\end{restatable}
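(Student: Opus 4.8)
The plan is to prove the two bounds separately, both by reducing to elementary estimates about how ratios of integrals depend on a uniform perturbation of the integrand. Throughout, write $\delta \equalDef \|p - q\|_\infty$, so that pointwise $p(x) - \delta \leq q(x) \leq p(x) + \delta$, and hence, integrating over $\calX$ (which has unit volume), $I_p - \delta \leq I_q \leq I_p + \delta$. This immediately gives the log-partition bound: if $\delta < I_p$, then
\begin{IEEEeqnarray*}{+rCl+x*}
\log I_q - \log I_p \;\leq\; \log\left(1 + \delta/I_p\right) \;\leq\; \log\left(\frac{1}{1 - \delta/I_p}\right),
\end{IEEEeqnarray*}
using $1 + t \leq 1/(1-t)$ for $t \in [0,1)$, while in the other direction $\log I_p - \log I_q \leq \log\bigl(I_p/(I_p - \delta)\bigr) = \log\bigl(1/(1 - \delta/I_p)\bigr)$ directly. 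Taking the larger of the two gives $|\log I_p - \log I_q| \leq \log\bigl(1/(1 - \delta/I_p)\bigr)$, which is exactly the claimed inequality.

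For the total variation bound, I would use the identity $\DTV(P, Q) = \frac12 \int_{\calX} \bigl| p(x)/I_p - q(x)/I_q \bigr| \diff x$ and estimate the integrand by adding and subtracting $q(x)/I_p$:
\begin{IEEEeqnarray*}{+rCl+x*}
\left| \frac{p(x)}{I_p} - \frac{q(x)}{I_q} \right|
\;\leq\; \frac{|p(x) - q(x)|}{I_p} + q(x)\left| \frac{1}{I_p} - \frac{1}{I_q} \right|
\;=\; \frac{|p(x) - q(x)|}{I_p} + q(x)\,\frac{|I_q - I_p|}{I_p I_q}.
\end{IEEEeqnarray*}
Integrating over $\calX$: the first term contributes at most $\delta / I_p$, and the second term contributes $|I_q - I_p|/I_p \leq \delta/I_p$ since $\int q = I_q$ cancels one factor. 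Hence $2\DTV(P,Q) \leq 2\delta/I_p$, i.e.\ $\DTV(P,Q) \leq \delta/I_p$. By symmetry of $\DTV$ and of the roles of $p, q$ (the argument above works verbatim with $I_q$ in the denominator throughout), we also get $\DTV(P,Q) \leq \delta/I_q$, so in fact $\DTV(P,Q) \leq \delta/\max\{I_p, I_q\}$, which beats the claimed constant $3$.

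The only subtlety — and the reason the paper states the weaker constant $3$ rather than $1$ — is that the clean symmetric argument above tacitly uses $I_q > 0$, which is assumed, but more importantly one must be slightly careful when $\delta \geq I_p$ (or $\delta \geq I_q$), where the first inequality is vacuous but the claimed TV bound still asserts something (it becomes $\geq 1 \geq \DTV$, so it holds trivially). So the main "obstacle" is merely bookkeeping: handle the degenerate regime $\delta \geq \max\{I_p, I_q\}$ by the trivial bound $\DTV \leq 1$, and in the complementary regime run the add-subtract estimate above. If one prefers to avoid the identity $\DTV = \frac12\int|\cdot|$ and instead work directly from $\DTV(P,Q) = \sup_A |P(A) - Q(A)|$, the same add-subtract decomposition on $\int_A$ gives $|P(A) - Q(A)| \leq \delta/I_p + \delta/I_p$, which is where a factor like $2$ or $3$ naturally appears; this is presumably the route the authors take, and it is entirely routine. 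I would write it up via the $\frac12\int|\cdot|$ identity to get the sharp constant, or via the supremum definition if matching the paper's constant $3$ verbatim is preferred.
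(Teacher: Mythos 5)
Your proof is correct. The log-partition bound is handled exactly as in the paper (integrate the pointwise bounds $p \pm \|p-q\|_\infty$ and use $1+t \le 1/(1-t)$), so there is nothing to compare there. For the total variation bound, however, you take a genuinely different and in fact cleaner route. The paper first reduces by scaling to $I_q \le I_p = 1$, splits into the cases $\|p-q\|_\infty \ge 1/2$ (where $\DTV \le 1$ suffices) and $\|p-q\|_\infty \le 1/2$, and in the second case compares $q$ with $\tilde q = q/I_q$ via the triangle inequality, invoking an auxiliary lemma of the form $|1 - \tfrac{1}{1+x}| \le 4|x|$ for $|x|\le 1/2$; this yields the constant $5/2 \le 3$. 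Your add-and-subtract of $q(x)/I_p$ inside the $L^1$ identity $\DTV(P,Q) = \tfrac12\int_{\calX}|p/I_p - q/I_q|$ avoids the normalization, the case split, and the auxiliary lemma entirely, because $\int_{\calX} q(x)\,\tfrac{|I_q-I_p|}{I_pI_q}\diff x = \tfrac{|I_q-I_p|}{I_p} \le \tfrac{\|p-q\|_\infty}{I_p}$ holds unconditionally (using only $I_p, I_q > 0$ and $\vol(\calX)=1$), giving $\DTV(P,Q) \le \|p-q\|_\infty/I_p$ with constant $1$; by symmetry you get $\|p-q\|_\infty/\max\{I_p,I_q\}$, which strictly improves the stated bound. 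The "bookkeeping" you worry about in your last paragraph is actually unnecessary for your argument — the degenerate regime $\|p-q\|_\infty \ge I_p$ needs no separate treatment, since your estimate never divides by a quantity that could vanish; that case split is an artifact of the paper's approach, not of the statement. Your guess that the authors work from the supremum definition is wrong (they also use the $\tfrac12\int|\cdot|$ identity), but that is immaterial.
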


We prove \Cref{prop:density_apx_bound} in \Cref{sec:appendix:density_based}. While $\|p\|_\infty = \|e^f\|_\infty$ can be exponential in $\|f\|_\infty$, \Cref{prop:density_apx_bound} demonstrates that we need to incorporate the normalization constant to obtain reasonable estimates for sampling and log-partition computation. After incorporating the normalization constant, we arrive at terms of the form $\|p\|/I_p = \|e^f\|/Z_f = \|e^f/Z_f\| = \|p_f\|$. Hence, the norm of the (normalized) density plays an important role for convergence rates of density-based approximation approaches. As it turns out, $\|p_f\|_{C^m}$ does not scale exponentially in $\|f\|_{C^m}$ for $m \geq 1$, but still badly:\footnote{The assumption $m \geq 1$ is necessary: Define $f_{a,b}(x) = ae^{-bx}$. Then $\|f_{a, b}\|_{C^0} = a$ but $\lim_{b \to \infty} \|p_{f_{a, b}}\|_{C^0} = \exp(a)$.}

\begin{restatable}[Density norm]{theorem}{thmDensityNorm} \label{thm:density_norm}
For $m \geq 1$, we have
\begin{IEEEeqnarray*}{+rCl+x*}
\sup_{f \in \calF_{d, m, B}} \|p_f\|_{C^m} = \Theta_{m, d}\left(\max\{1, B\}^{m+d}\right)
\end{IEEEeqnarray*}
and this asymptotic rate is attained by $f_{d, m, B}(x) = Bd^{-1}(x_1 + \cdots + x_d)$.
\end{restatable}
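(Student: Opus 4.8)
The plan is to prove matching upper and lower bounds, the upper bound being the more delicate direction.

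\emph{Upper bound.} For a multi-index $\alpha$ with $|\alpha|_1 \le m$, write $\partial_\alpha p_f = \partial_\alpha(e^f)/Z_f$. By the multivariate Faà di Bruno formula (using $\exp^{(k)} = \exp$ for all $k$), $\partial_\alpha(e^f) = e^f Q_\alpha$, where $Q_\alpha$ is a sum of $O_m(1)$ products, each product having at most $|\alpha|_1 \le m$ factors of the form $\partial_\beta f$ with $1 \le |\beta|_1 \le |\alpha|_1$; since $\|\partial_\beta f\|_\infty \le \|f\|_{C^m} \le B$, this gives $\|Q_\alpha\|_\infty \le O_m(\max\{1,B\}^m)$. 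The key point is that only $e^{\max_\calX f}$, not $e^{\|f\|_\infty}$, enters: $\|\partial_\alpha(e^f)\|_\infty \le e^{\max_\calX f}\,\|Q_\alpha\|_\infty$. It remains to bound $e^{\max_\calX f}/Z_f$. Let $x^\ast$ maximize $f$ over $\calX$. Since $\|f\|_{C^m} \le B$ makes $f$ Lipschitz with constant $B$ with respect to $\|\cdot\|_1$, and since for every $r \in (0,1]$ one can fit an axis-aligned cube $C_r \subseteq \calX$ of side length $r$ containing $x^\ast$ (translating it inward if $x^\ast$ is near $\partial\calX$), we get $f \ge f(x^\ast) - Bdr$ on $C_r$ and hence $Z_f \ge \int_{C_r} e^f \ge e^{f(x^\ast)} e^{-Bdr} r^d$. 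Taking $r = \min\{1, (Bd)^{-1}\}$ makes $Bdr \le 1$, so $e^{\max_\calX f}/Z_f \le e\,\max\{1, Bd\}^d \le O_{m,d}(\max\{1,B\}^d)$. Multiplying the two bounds yields $\|p_f\|_{C^m} \le O_{m,d}(\max\{1,B\}^{m+d})$.

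\emph{Lower bound.} Put $g \equalDef f_{d,m,B}$; it lies in $\calF_{d,m,B}$ since $\|g\|_\infty = B$, $\|\partial_i g\|_\infty = B/d \le B$, and all higher derivatives vanish. Because $e^{g(x)} = \prod_{i=1}^d e^{B x_i / d}$, the density $p_g$ is a product of one-dimensional densities $q(t) \propto e^{(B/d) t}$ on $[0,1]$, and $q^{(k)} = (B/d)^k q$ gives $\partial_\alpha p_g = (B/d)^{|\alpha|_1} p_g$. As $p_g$ is increasing in each coordinate, $\|\partial_\alpha p_g\|_\infty = (B/d)^{|\alpha|_1} p_g(\mathbf 1)$, so $\|p_g\|_{C^m} = \max\{1, B/d\}^m\, p_g(\mathbf 1)$. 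A direct computation gives $p_g(\mathbf 1) = \left(\frac{B/d}{1 - e^{-B/d}}\right)^{d}$, and from $1 - e^{-u} \le \min\{u, 1\}$ for $u \ge 0$ we get $\frac{u}{1 - e^{-u}} \ge \max\{1, u\}$, hence $p_g(\mathbf 1) \ge \max\{1, B/d\}^d$. Therefore $\|p_g\|_{C^m} \ge \max\{1, B/d\}^{m+d} \ge d^{-(m+d)}\max\{1,B\}^{m+d} = \Omega_{m,d}(\max\{1,B\}^{m+d})$, where the last inequality follows from $\max\{1,B/d\} \ge d^{-1}\max\{1,B\}$ (checked by distinguishing $B \ge d$ and $B < d$). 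This both establishes the lower bound and shows it is attained by $f_{d,m,B}$.

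\emph{Main obstacle.} The Faà di Bruno bound on the numerator is routine; the real content is the estimate $e^{\max_\calX f}/Z_f \le O_{m,d}(\max\{1,B\}^d)$, which is exactly where the additional exponent $d$ comes from. A naive bound such as $Z_f \ge e^{-\|f\|_\infty}$ would instead produce a spurious factor $e^{\Theta(B)}$, so it is essential to use that $f$ stays within $O(1)$ of its maximum on a cube of side $\sim 1/(Bd)$ around the maximizer, a cube of volume $\sim (Bd)^{-d}$. Getting this geometric argument right, in particular handling maximizers on the boundary of $\calX$ by translating the cube, is the step requiring the most care. In the lower bound, the only delicate point is propagating the $\max\{1,\cdot\}$ truncation correctly through the regimes determined by comparing $B$ with $1$ and with $d$.
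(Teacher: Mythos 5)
Your proof is correct and follows essentially the same route as the paper: factor $\partial_\alpha(e^f)$ as $e^f$ times a polynomial in the derivatives of $f$ of size $O_{m,d}(\max\{1,B\}^m)$ (the paper does this by induction with a product rule rather than Faà di Bruno, but the content is identical), control $e^{\max_\calX f}/Z_f \le O_{m,d}(\max\{1,B\}^d)$ via a cube of side $\sim 1/(Bd)$ around the maximizer (the paper's Lemma~\ref{lemma:lipschitz_maximization_bound}), and use the same linear witness $f_{d,m,B}$ for the lower bound. No gaps.
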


\Cref{thm:density_norm} is proven in \Cref{sec:appendix:density_based}. Suppose that $f \in C^m(\calX)$ and we can approximate $p(x) = e^{f(x)}$ with a non-negative function $q$ with rate $O_{m, d}(\|p\|_{C^m} n^{-m/d})$, which is worst-case optimal if we only know $\|p\|_{C^m}$ and forget that $p = e^f$ with small $\|f\|_{C^m}$.
By combining \Cref{prop:density_apx_bound} and \Cref{thm:density_norm}, the distribution $Q$ associated with the unnormalized density $q$ then satisfies
\begin{IEEEeqnarray*}{+rCl+x*}
\DTV(P_f, Q) \leq O_{m, d}(\max\{1, \|f\|_{C^m}\}^{m+d} n^{-m/d})~. \IEEEyesnumber \label{eq:density_apx_rate}
\end{IEEEeqnarray*}
Although this rate is optimal in terms of $n$ for deterministic evaluation points, it is bad in terms of $\|f\|_{C^m}$, cf.\ also \Cref{ex:bayes_runtime}.

\cite{marteau-ferey_sampling_2022} propose a sampling algorithm based on approximating the density with a (non-negative) sum-of-squares model. Specifically, for a Gibbs distribution, they suggest approximating $\sqrt{p}$ with $q$ and then using $q^2$ as an unnormalized density. They achieve a rate of $O_{m, d, f}(n^{-m/d})$ in polynomial time without explicitly stating the dependence on $\|f\|$, but we conjecture that the dependence on $\|f\|$ is similar to \Eqref{eq:density_apx_rate}.

\subsection{Simple Stochastic Algorithms}

We now analyze the convergence rates for some simple stochastic algorithms.

\subsubsection{Rejection Sampling With Uniform Proposal Distribution} \label{sec:rs_uniform}

A simple stochastic algorithm is rejection sampling with a uniform proposal distribution. The following proposition shows that this can achieve better rates in terms of the TV distance than the density-based approximation rates in \Eqref{eq:density_apx_rate} if the maximum $M_f$ of $f$ is known:

\begin{restatable}[Convergence of rejection sampling]{proposition}{propRejectionSampling} \label{prop:rejection_sampling}
Let $m \geq 1$ and let $f \in C^1(\calX)$. Then, the distribution $\tilde P_f$ produced by \textsc{RejectionSampling}($f$, $M_f$, $n$) (see \Cref{alg:rejection_sampling}) satisfies
\begin{IEEEeqnarray*}{+rCl+x*}
\Dsuplog(P_f, \tilde P_f) & \leq & \min\left\{2\|f\|_\infty, \exp\left(2\|f\|_\infty - n/\|p_f\|_\infty\right)\right\} \\
\DTV(P_f, \tilde P_f) & \leq & \min\{1, 2\|f\|_\infty\} \exp(-n/\|p_f\|_\infty)  \\
& \leq & O_{m, d}(\min\{1, \|f\|_\infty\} \max\{1, \|f\|_{C^1}\}^m n^{-m/d})~.
\end{IEEEeqnarray*}
\end{restatable}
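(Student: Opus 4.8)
The plan is to apply the general rejection sampling bound, \Cref{lemma:rejection_sampling}, with the constant proposal function $g \equiv M_f$, so that $P_g = \calU(\calX)$ is the uniform distribution and \textsc{RejectionSampling}($f$, $M_f$, $n$) is ordinary budget-limited rejection sampling with uniform proposal. The hypotheses of \Cref{lemma:rejection_sampling} hold: $f \in C^1(\calX)$ is continuous on the compact cube, so $M_f = \max_{x\in\calX} f(x)$ is attained and $f(x) \le M_f = g(x)$ for all $x$. The one computation needed is $Z_g = \int_\calX e^{M_f}\diff x = e^{M_f}$ (since $\vol(\calX)=1$), whence $Z_f/Z_g = e^{-M_f}Z_f = 1/\|p_f\|_\infty$, because $\|p_f\|_\infty = e^{M_f - L_f} = e^{M_f}/Z_f$ (the maximum of the continuous $f$ equals its essential supremum). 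Thus \Cref{lemma:rejection_sampling} yields the overall rejection probability $p_R = (1 - 1/\|p_f\|_\infty)^n \le \exp(-n/\|p_f\|_\infty)$, together with $\Dsuplog(P_f,\tilde P_f) \le \min\{\Dsuplog(P_f,P_g),\,p_R(e^{\Dsuplog(P_f,P_g)}-1)\}$ and $\DTV(P_f,\tilde P_f) = p_R\,\DTV(P_f,P_g)$.

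Next, since $P_g = P_{M_f} = \calU(\calX)$ coincides with the Gibbs distribution $P_0$ of the zero function (adding a constant does not change a Gibbs distribution), \Cref{prop:app_bounds}(b) applied to $f$ and the zero function gives $\Dsuplog(P_f,P_g) = \Dsuplog(P_f,P_0) \le 2\|f\|_\infty$; together with the trivial bound $\DTV\le 1$ this also gives $\DTV(P_f,P_g) \le \min\{1,2\|f\|_\infty\}$. Substituting these into the two estimates above, and using $e^{\Dsuplog(P_f,P_g)}-1 \le e^{2\|f\|_\infty}$ and $p_R \le e^{-n/\|p_f\|_\infty}$, gives $\Dsuplog(P_f,\tilde P_f) \le \min\{2\|f\|_\infty,\,\exp(2\|f\|_\infty - n/\|p_f\|_\infty)\}$ and $\DTV(P_f,\tilde P_f) \le \min\{1,2\|f\|_\infty\}\exp(-n/\|p_f\|_\infty)$. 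These are the first displayed bound and the first half of the second; so far this is a purely mechanical substitution into \Cref{lemma:rejection_sampling}.

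It remains to derive the $O_{m,d}$-form of the TV bound. Using $\min\{1,2\|f\|_\infty\}\le 2\min\{1,\|f\|_\infty\}$, the factor $\min\{1,\|f\|_\infty\}$ cancels from both sides and the claim reduces to $\exp(-n/\|p_f\|_\infty)\le O_{m,d}(\max\{1,\|f\|_{C^1}\}^m n^{-m/d})$. This needs a polynomial-in-$\|f\|_{C^1}$ bound on $\|p_f\|_\infty$: since $f$ is Lipschitz with $|f|_1 \le d^{1/2}\|f\|_{C^1}$ (\Cref{lemma:lipschitz_constant}), one has $f(x) \ge M_f - |f|_1\|x-x^*\|_2$ at a maximizer $x^*$, and integrating $e^f$ over a sub-cube of $\calX$ of side length $1/2$ having $x^*$ as a corner (which exists for every $x^*\in[0,1]^d$), using $\|\cdot\|_2\le\|\cdot\|_1$ to factorize, gives $Z_f \ge e^{M_f}\bigl((1-e^{-|f|_1/2})/|f|_1\bigr)^d$; hence $\|p_f\|_\infty = e^{M_f}/Z_f \le \bigl(|f|_1/(1-e^{-|f|_1/2})\bigr)^d \le (3\max\{1,|f|_1\})^d \le O_{m,d}(\max\{1,\|f\|_{C^1}\}^d)$, using $1-e^{-u/2}\ge u/(2+u)$ (alternatively, the $m=1$ instance of \Cref{thm:density_norm} with $B=\|f\|_{C^1}$ gives $\|p_f\|_\infty\le\|p_f\|_{C^1}\le O_{m,d}(\max\{1,\|f\|_{C^1}\}^{d+1})$, which also suffices). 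Writing $K := O_{m,d}(\max\{1,\|f\|_{C^1}\}^d)\ge\|p_f\|_\infty$: if $\max\{1,\|f\|_{C^1}\}^d\ge n$ the right-hand side is $\ge 1$ and the bound is trivial; otherwise substitute $n=Kt$ to reduce it to $t^{m/d}e^{-t}\le O_{m,d}(1)$, which holds since $\sup_{s>0}s^{m/d}e^{-s} = (m/(de))^{m/d}<\infty$. The one real obstacle is this last step: without the geometric lower bound on $Z_f$, $\|p_f\|_\infty$ is only controlled by $e^{2\|f\|_\infty}$, which would give a rate exponentially bad in $\|f\|$; the Lipschitz-bump-at-the-maximizer argument is what keeps $\|p_f\|_\infty$ polynomial, after which an exponentially small $\exp(-n/\|p_f\|_\infty)$ beats any fixed negative power of $n$.
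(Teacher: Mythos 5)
Your proof is correct and follows essentially the same route as the paper: apply \Cref{lemma:rejection_sampling} with the constant proposal $g \equiv M_f$, note $Z_f/Z_g = 1/\|p_f\|_\infty$, bound $\Dsuplog(P_f,P_g)\le 2\|f\|_\infty$, and finish with $e^{-x}\le O_{m,d}(x^{-m/d})$ combined with a polynomial bound $\|p_f\|_\infty \le O_{m,d}(\max\{1,\|f\|_{C^1}\}^{d})$, which the paper obtains from \Cref{lemma:lipschitz_maximization_bound} via a Lipschitz-bump-at-the-maximizer argument equivalent to the one you rederive. The only quibble is your parenthetical alternative via $\|p_f\|_{C^1}\le O_{m,d}(\max\{1,\|f\|_{C^1}\}^{d+1})$, which would give the exponent $m(d+1)/d$ rather than $m$ and hence does not quite suffice for the stated bound --- but your main argument does.
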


A proof can be found in \Cref{sec:appendix:rs}. Lower bounds for the convergence of rejection sampling could be obtained using \Cref{lemma:rejection_sampling}, but the resulting formula would not be easy to interpret. In any case, an argument similar to the one in \Cref{sec:mc_sampling} and \Cref{sec:appendix:mc_sampling} can be made to show that \textsc{RejectionSampling}($f$, $M_f$, $n$) cannot achieve the rate $O_{m, d}(Bn^{-m/d})$.
We leave it as an open question whether similar rates to \Cref{prop:rejection_sampling} can be achieved when $M_f$ is approximately known or when a guess for $M_f$ is used that slowly increases with $n$. Note that \cite{talwar_computational_2019} studies a similar setting where rejection sampling is not stopped after $n$ rejections.

\subsubsection{Monte Carlo Log-partition} \label{sec:mc_logpartition}

Since the log-partition problem involves an integral, it is natural to approximate the integral by Monte Carlo (MC) quadrature. The following theorem gives an upper bound on the convergence rate:

\begin{restatable}[Upper bounds for MC log-partition]{theorem}{thmMCLogPartition} \label{thm:mc_log_partition}
Let $f: \calX \to \bbR$ be Lipschitz, let $X_1, \hdots, X_n \sim \calU(\calX)$ be independent and let
\begin{IEEEeqnarray*}{+rCl+x*}
\tilde L_n \equalDef \log S_n, \qquad S_n \equalDef \frac{1}{n} \sum_{i=1}^n \exp(f(X_i)).
\end{IEEEeqnarray*}
Then, for any $\delta \in (0, 1]$, the following convergence rates hold:
\begin{enumerate}[(a)]
\item \textbf{Optimization regime:} If $n \leq 4\log(2/\delta)(1+3d^{-1/2}|f|_1)^d$, we have
\begin{IEEEeqnarray*}{+rCl+x*}
|\tilde L_n - L_f| \leq d^{1/2} (\log(1/\delta))^{1/d} |f|_1 n^{-1/d} + \log(4\log(2/\delta)) + d\log(1+3d^{-1/2}|f|_1)
\end{IEEEeqnarray*} 
with probability $\geq 1 - \delta$.
\item \textbf{Quadrature regime:} If $n \geq 4\log(2/\delta)(1+3d^{-1/2}|f|_1)^d$, we have
\begin{IEEEeqnarray*}{+rCl+x*}
|\tilde L_n - L_f| \leq 4 \log(2/\delta)^{1/2} (1+3d^{-1/2}|f|_1)^{d/2} n^{-1/2}
\end{IEEEeqnarray*}
with probability $\geq 1 - \delta$.
\end{enumerate}
\end{restatable}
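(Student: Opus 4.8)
The plan is to prove the two regimes by two genuinely different mechanisms, preceded by a common reduction. Since replacing $f$ by $f+c$ shifts both $\tilde L_n$ and $L_f$ by $c$ and changes neither $|f|_1$ nor the hypotheses, I may assume $M_f \equalDef \max_{x\in\calX} f(x) = 0$; pick $x^\star \in \argmax_{\calX} f$, which exists because $f$ is continuous on the compact set $\calX$. Then $Y_i \equalDef e^{f(X_i)} \in (0,1]$, which yields three deterministic facts: $S_n \in (0,1]$ (so $\tilde L_n \le 0$); $\tilde L_n = \log S_n \ge \log\!\big(\tfrac1n \max_i Y_i\big) = -\log n + \max_i f(X_i)$; and $Z_f = \bbE S_n = \int_{\calX} e^f \in (0,1]$ (so $L_f \le 0$, hence $\tilde L_n - L_f \ge \tilde L_n$). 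Writing $R \equalDef (1+3d^{-1/2}|f|_1)^d$, \Cref{lemma:lipschitz_maximization_bound} with $\varepsilon = 1$ gives $-L_f = M_f - L_f \le \log R$, i.e.\ $R\,Z_f \ge 1$; this is the one place the Lipschitz bound is used, and it is what keeps the final constants independent of $\|f\|_\infty$.

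\emph{Optimization regime.} Here I would use a nearest-neighbour/hitting estimate. Comparing the Euclidean ball $B_r(x^\star)$ with an axis-aligned cube of side $2r/\sqrt d$ centred at $x^\star$ shows $\vol(B_r(x^\star)\cap\calX) \ge (r/\sqrt d)^d$ for $0 < r \le \sqrt d$ (and $=1$ for $r \ge \sqrt d$). Hence all $n$ points miss $B_r(x^\star)$ with probability at most $(1-(r/\sqrt d)^d)^n \le \exp(-n(r/\sqrt d)^d)$, this being $0$ once $r \ge \sqrt d$. Choosing $r \equalDef \sqrt d\,(\log(1/\delta)/n)^{1/d}$ makes this probability at most $\delta$, so with probability $\ge 1-\delta$ some $X_i$ lies within distance $r$ of $x^\star$, whence by Lipschitzness $f(X_i) \ge f(x^\star) - |f|_1 r = -|f|_1 r$ and therefore $\tilde L_n \ge -\log n - |f|_1 r$. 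Combining this with $\tilde L_n - L_f \le -L_f \le \log R$ and $\tilde L_n - L_f \ge \tilde L_n$, using the regime hypothesis $n \le 4\log(2/\delta)R$ to write $-\log n \ge -\log(4\log(2/\delta)) - \log R$, and substituting $|f|_1 r = d^{1/2}(\log(1/\delta))^{1/d}|f|_1 n^{-1/d}$, gives exactly the claimed bound. (The edge case $r > \sqrt d$, i.e.\ $\log(1/\delta) > n$, is handled by noting $B_r(x^\star) \supseteq \calX$, so the hitting event is certain, and $\sqrt d\,|f|_1 \le |f|_1 r$ still yields the stated estimate.)

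\emph{Quadrature regime.} Here I would use concentration of the empirical mean $S_n$. Since $Y_i \in (0,1]$, $\Var(Y_1) \le \bbE[Y_1^2] \le \bbE Y_1 = Z_f$, and the multiplicative Chernoff bounds for sums of $[0,1]$-valued variables give, for $s \in [0,1]$, $\bbP(S_n \le (1-s)Z_f) \le \exp(-s^2 nZ_f/2)$ and $\bbP(S_n \ge (1+s)Z_f) \le \exp(-s^2 nZ_f/3)$. Set $\tau \equalDef 4\sqrt{\log(2/\delta)\,R/n}$; the regime hypothesis $n \ge 4\log(2/\delta)R$ gives $\tau \le 2$. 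For the upper tail I would take $s \equalDef \tau/2 \le 1$, so $s^2 nZ_f = 4\log(2/\delta)\,R\,Z_f \ge 4\log(2/\delta)$ and thus $\bbP(S_n \ge (1+s)Z_f) \le \exp(-\tfrac43\log(2/\delta)) \le \delta/2$; on the complement $\tilde L_n - L_f = \log(S_n/Z_f) \le \log(1+\tau/2) \le \tau/2$. For the lower tail I would take $s' \equalDef 1-e^{-\tau} \in (0,1)$; the elementary inequality $1-e^{-\tau} \ge \tau/3$ on $[0,2]$ gives $s'^2 nZ_f \ge \tfrac{\tau^2}{9} nZ_f = \tfrac{16}{9}\log(2/\delta)\,R\,Z_f \ge \tfrac{16}{9}\log(2/\delta)$, hence $\bbP(S_n \le e^{-\tau}Z_f) = \bbP(S_n \le (1-s')Z_f) \le \exp(-\tfrac89\log(2/\delta)) \le \delta/2$; on the complement $\tilde L_n - L_f \ge -\tau$. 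A union bound then gives $|\tilde L_n - L_f| \le \tau = 4\log(2/\delta)^{1/2}(1+3d^{-1/2}|f|_1)^{d/2}n^{-1/2}$ with probability $\ge 1-\delta$.

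The calculations are mostly routine; the two points that need care are: (i) recognizing that two distinct mechanisms govern the problem — a hitting estimate when $n$ is small relative to the "effective volume" $R = (1+3d^{-1/2}|f|_1)^d$, and genuine concentration of $S_n$ when $n$ is large — and that the crossover $n \sim \log(2/\delta)R$ is precisely the one dictated by \Cref{lemma:lipschitz_maximization_bound}; and (ii) bookkeeping the constants, i.e.\ choosing the deviation parameters $s,s'$ so the tail probabilities fall below $\delta/2$ while the induced logarithmic errors stay within the claimed bound, together with the elementary inequalities ($\log(1+t)\le t$, $1-e^{-t}\ge t/3$ on $[0,2]$, $\vol(B_r(x^\star)\cap\calX)\ge(r/\sqrt d)^d$) that drive these estimates. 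I expect (i) to be the conceptual crux and (ii) to be the most tedious part, but neither is deep once the decomposition above is in place.
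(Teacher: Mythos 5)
Your optimization-regime argument is essentially the paper's: the paper also lower-bounds the empirical maximum by hiding a cube of side $(\log(1/\delta)/n)^{1/d}$ around the maximizer and uses $\tilde L_n\ge M_n-\log n$ together with \Cref{lemma:lipschitz_maximization_bound}; your Euclidean-ball phrasing is a cosmetic variant. In the quadrature regime you diverge from the paper, which applies Bernstein's inequality to $\xi_i=e^{f(X_i)}-e^{L_f}$ with variance bound $\Var\xi_i\le e^{L_f}$ and then uses that $\log$ is $2$-Lipschitz on $[1/2,3/2]$; your multiplicative Chernoff bounds with asymmetric deviations $(1+\tau/2)Z_f$ and $e^{-\tau}Z_f$ are a legitimate alternative that converts to the log scale more directly.

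However, the lower-tail step as written does not close. You claim $\exp\bigl(-\tfrac{8}{9}\log(2/\delta)\bigr)\le\delta/2$, but $\exp\bigl(-\tfrac{8}{9}\log(2/\delta)\bigr)=(\delta/2)^{8/9}$, and since $\delta/2<1$ and $8/9<1$ this is \emph{strictly larger} than $\delta/2$; e.g.\ for $\delta=10^{-3}$ one gets $(\delta/2)^{8/9}\approx 1.16\times10^{-3}>\delta$, so even the sum of your two tail bounds exceeds $\delta$ and the union bound fails. The culprit is the loss in the elementary inequality $1-e^{-\tau}\ge\tau/3$: it only delivers $s'^2 nZ_f\ge\tfrac{16}{9}\log(2/\delta)$, whereas the Chernoff exponent $s'^2nZ_f/2$ needs to reach $\log(2/\delta)$. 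The fix is to sharpen the inequality to $1-e^{-\tau}\ge\tau/(2\sqrt2)$ on $[0,2]$ (which holds: the difference vanishes at $0$, has positive derivative there, and is still positive at $\tau=2$ since $1-e^{-2}\approx0.865>2/(2\sqrt2)\approx0.707$); then $s'^2nZ_f\ge\tfrac{\tau^2}{8}nZ_f=2\log(2/\delta)RZ_f\ge2\log(2/\delta)$ and the lower tail is indeed at most $\delta/2$. With that one-line repair, and granting the standard extension of multiplicative Chernoff from Bernoulli to $[0,1]$-valued variables (via $\bbE e^{\lambda Y}\le1+\mu(e^\lambda-1)$), your proof is correct.
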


We prove \Cref{thm:mc_log_partition} in \Cref{sec:appendix:mc_logpartition}. Roughly speaking, the rates in the theorem above behave like $|f|_1 n^{-1/d}$ until an error of $O(1)$ is reached, and then they change to $|f|_1^{d/2} n^{-1/2}$. Intuitively, the log-partition $L_f$ is quite close to the maximum $M_f$. %
Initially, the behavior of MC log-partition is characterized quite well by the error $|M_f - \max_{i \leq n} f(X_i)|$ of MC optimization. Once points close to the maximum are reached, MC log-partition behaves more like MC quadrature, because the average in $S_n$ is not dominated by a single point. 
\Cref{fig:mc_log_partition} and our experiments later in \Cref{fig:logpartition} show that this reflects the qualitative behavior of the error on linear $f$ in practice. For more general $C^2$ functions and large $B$, we expect the convergence rates to depend on the behavior around the optimum: Similar rates should be observed when the maximum is attained in a corner of $[0, 1]^d$, whereas faster convergence rates for the optimization regime should be possible when the maximum is attained in the interior of $[0, 1]^d$ due to the local quadratic behavior around the maximum.

\begin{figure}[tb]
\centering
\includegraphics[scale=1.0]{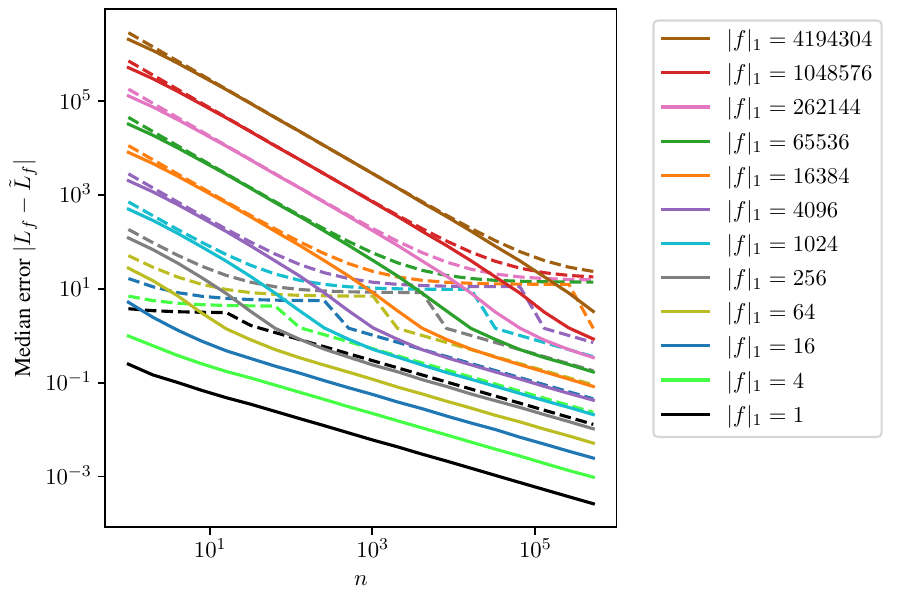}
\caption{Median error of MC log-partition for the function $f: [0, 1] \to \bbR, x \mapsto \beta x$ (with $d=1$), for varying numbers of points $n$ and values of $\beta = |f|_1 > 0$. Medians were computed out of $10001$ repetitions. The dashed lines show the corresponding upper bounds from \Cref{thm:mc_log_partition} for the median ($\delta = 1/2$).} \label{fig:mc_log_partition}
\end{figure}

\subsubsection{Monte Carlo Sampling} \label{sec:mc_sampling}

We can also consider a sampling version of the Monte Carlo log-partition method considered in \Cref{sec:mc_logpartition}. The following theorem shows that it cannot achieve good rates in the optimization regime either:

\begin{restatable}[Lower bound for MC sampling]{theorem}{thmMCSampling} \label{thm:mc_sampling}
Let $f: \calX \to \bbR$ be bounded and measurable. Let $X_1, \hdots, X_n \sim \calU(\calX)$ and let the random index $I \in \{1, \hdots, n\}$ be distributed as
\begin{IEEEeqnarray*}{+rCl+x*}
P(I=i) = \frac{\exp(f(X_i))}{\sum_{j=1}^n \exp(f(X_j))}.
\end{IEEEeqnarray*}
Consider the distribution $\tilde P_f$ of the random sample $X_I$. Then, for all $B > 0$ and $n \geq 1$ with $Bn^{-1/d} \geq 4d\log(4d)$,
\begin{IEEEeqnarray*}{+rCl+x*}
\sup_{f \in \calF_{d, m, B}} \DTV(P_f, \tilde P_f) & \geq & \frac{1}{2}~.
\end{IEEEeqnarray*}
\end{restatable}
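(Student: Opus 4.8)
The plan is to exhibit, for each $n$ and each sufficiently large $B$, a hard function $f \in \calF_{d, m, B}$ whose associated distribution $P_f$ concentrates almost all its mass on a tiny subset $U \subseteq \calX$ that the Monte Carlo method will, with high probability, fail to hit with any of the $n$ uniform samples $X_1, \ldots, X_n$. If none of the $X_i$ lands in $U$, then the returned sample $X_I$ lies in $\calX \setminus U$ as well (the index $I$ only reweights among the points that were actually drawn), so $\tilde P_f(U)$ is small while $P_f(U)$ is close to $1$, forcing $\DTV(P_f, \tilde P_f) \geq P_f(U) - \tilde P_f(U) \geq 1/2$.

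Concretely, I would take $f$ to be (a smooth version of) a steep bump: pick a small cube $U$ of side length $\ell$ centered somewhere in $\calX$, and let $f$ be roughly $B$ on a slightly smaller cube inside $U$, drop down to $0$ outside $U$, with the transition happening over a layer of width comparable to $\ell$ so that $\|f\|_{C^m} \leq B$ forces $\ell^{-m} \lesssim 1$, i.e.\ $\ell \asymp_{m,d} B^{-1/m}$ — wait, more carefully: to have $f$ range over an interval of length $\asymp B$ while staying in $\calF_{d,m,B}$, the bump must have width $\ell \asymp_{m,d} 1$ if we only used smoothness, which is too big. The right construction, mirroring the lower-bound bumps used elsewhere in the paper, is to rescale: use a fixed smooth bump profile $\psi$ supported on the unit cube with $\|\psi\|_{C^m} \leq 1$ and set $f(x) = B \psi((x - x_0)/\ell)$ on a cube of side $\ell$; then $\|f\|_{C^m} \leq B \ell^{-m} \cdot \max_k \|\psi\|_{C^k} \leq B$ provided $\ell \geq 1$, which again fails. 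So instead one keeps the \emph{height} at $\Theta_{m,d}(1) \cdot B \ell^m$... The cleanest route, and the one I would actually pursue, is: since $Z_f \geq \vol(\text{region where } f \approx \text{peak})$ relative to the rest, a bump of side $\ell$ and height $h$ contributes mass $\asymp \ell^d e^h$ against background mass $\asymp 1$, so $P_f(U) \geq 1 - C\ell^{-d}e^{-h}$; meanwhile hitting $U$ with $n$ uniform samples has probability $\leq n \ell^d$. Choosing $\ell^d \asymp 1/(4n)$ makes the miss probability $\geq 3/4$, and then $h \asymp \log(1/\ell^d) \asymp \frac1d\log(4n) + \ldots$ must satisfy $h \lesssim \|f\|_{C^m}$-budget; with the scaling $\|f\|_{C^m} \asymp_{m,d} h \cdot \ell^{-m} \asymp h (4n)^{m/d}$ we need $B \gtrsim_{m,d} (1 + \tfrac1d\log n)\, n^{m/d}$. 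I would set $\ell^d = 1/(4n)$, and the stated hypothesis $Bn^{-1/d} \geq 4d\log(4d)$ in the theorem (which governs the case $m=1$, where $n^{m/d} = n^{1/d}$) is exactly what makes the height budget sufficient; for general $m$ one replaces it by $Bn^{-m/d} \gtrsim_{m,d} 1 + \log n$, matching \Cref{thm:lower_stoch_sampling}.

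The key steps, in order: (1) fix the smooth bump profile $\psi \in C^\infty$ supported in $[0,1]^d$ with a flat plateau, and define the hard instance $f$ as the rescaled bump of side $\ell$ with $\ell^d = \tfrac{1}{4n}$ and peak height $h = \log(1/(2\ell^d)) = \log(2n)$, placed so that $U = \supp f \subseteq \calX$; verify $f \in \calF_{d,m,B}$ using the smoothness hypothesis on $B$. (2) Lower-bound $P_f(U)$: split $Z_f = \int_U e^f + \int_{\calX\setminus U} e^f$, note $\int_{\calX \setminus U} e^f \leq 1$ and $\int_U e^f \geq \ell^d e^{h'}$ for the plateau height $h'$, giving $P_f(U) \geq 1 - \tfrac{1}{1 + \ell^d e^{h'}} \geq 7/8$ for the chosen parameters. (3) Upper-bound the hitting probability: $\Pr[\exists i : X_i \in U] \leq n \vol(U) = n\ell^d = 1/4$. (4) On the complementary event (probability $\geq 3/4$), $X_I \in \calX \setminus U$ deterministically, so $\tilde P_f(U) \leq \Pr[\exists i: X_i \in U] \leq 1/4$; combined with (2), $\DTV(P_f,\tilde P_f) \geq P_f(U) - \tilde P_f(U) \geq 7/8 - 1/4 \geq 1/2$. (5) Take the supremum over $f \in \calF_{d,m,B}$, which only helps.

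The main obstacle is step (1): pinning down the constants in the rescaled-bump construction so that the height $h \asymp \log n$ genuinely fits inside the $C^m$-ball of radius $B$ under the hypothesis as stated — i.e.\ tracking how $\|B\psi(\cdot/\ell)\|_{C^m} = B\max_{k\leq m}\ell^{-k}\|\psi\|_{C^k}$ scales (since $\ell < 1$, the $k=m$ term $\ell^{-m}$ dominates, so the budget is $B \geq h \ell^{-m} / \|\psi\|_{C^m}^{-1}$), and reconciling the $m$-dependence with the clean hypothesis $Bn^{-1/d}\geq 4d\log(4d)$ that presumably targets the case most directly comparable to the midpoint-quadrature rate. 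Everything else is a routine union bound and a one-line $\DTV$ estimate; the construction of $\psi$ with a plateau and controlled $C^m$ norm is standard (mollified indicator) and can be cited or reused from the proof of \Cref{thm:lower_stoch_sampling}.
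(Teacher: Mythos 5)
Your overall skeleton is right and matches the paper's: exhibit a hard $f$ whose Gibbs measure concentrates mass $\geq 3/4$ on a set $U$ of small volume, use the union bound $\tilde P_f(U) \leq \sum_j P(X_j \in U) = n\vol(U)$ (equivalently, that the density of $X_I$ is bounded by $n$), and subtract. But your choice of hard instance is where the argument breaks, and you have in fact correctly diagnosed the problem without resolving it. A rescaled bump of support volume $V$ must have height $h \gtrsim \log(1/V)$ for $P_f$ to concentrate on its support, and its $C^m$ norm then scales like $h\,V^{-m/d}$; with $V \asymp 1/n$ this forces $B \gtrsim_{m,d} \log(n)\, n^{m/d}$. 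The theorem's hypothesis is $Bn^{-1/d} \geq 4d\log(4d)$ — a constant on the right, with exponent $1/d$ for \emph{every} $m$ — so it supplies neither the $\log n$ factor nor, for $m > 1$, the $n^{m/d}$ scaling. Your assertion that the stated hypothesis ``is exactly what makes the height budget sufficient'' is therefore false; your construction only proves a strictly weaker theorem with a $\log n$ (and, for $m>1$, an $n^{m/d}$) in the hypothesis.

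The paper avoids this entirely by taking the \emph{linear} function $f(x) = -\beta(x_1 + \cdots + x_d)$ with $\beta = B/d$. Its key advantage is that $\|f\|_{C^m} = B$ for all $m$ (all higher derivatives vanish), so there is no rescaling penalty and no logarithmic height cost: the density is a product of one-dimensional exponentials, a direct computation gives $P_f([0,\delta]^d) \geq 1 - d e^{-\beta\delta} \geq 3/4$ for $\delta = \log(4d)/\beta$, and this corner cube has volume $(d\log(4d)/B)^d \leq 4^{-d}/n$ under the stated hypothesis, so $\tilde P_f([0,\delta]^d) \leq n\delta^d \leq 4^{-d}$ and $\DTV \geq 3/4 - 4^{-d} \geq 1/2$. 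The moral is that the function need not be supported on the small set — it only needs its Gibbs measure to concentrate there — and an affine function achieves exponential concentration near its maximizer at linear cost in the $C^m$ norm, uniformly in $m$. Steps (3)--(5) of your proposal are fine and identical to the paper's; replace step (1)--(2) with the linear instance and the proof goes through.
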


The lower bounds in \Cref{thm:mc_sampling}, proven in \Cref{sec:appendix:mc_sampling}, show that $n \geq \Omega_{m, d}(B^d)$ points are required to achieve an error below $O(1)$, which is significantly worse than the around $\Theta_{m, d}(B^{d/m})$ points required by a method with the optimal rate for deterministic evaluation points. The proof only uses that the density $\tilde p_f$ is upper-bounded by $n$, and would apply analogously (using $n+1$ instead of $n$) to rejection sampling with uniform proposal distribution as considered in \Cref{sec:rs_uniform}.

\subsection{Markov Chain Monte Carlo Algorithms}

Markov Chain Monte Carlo (MCMC) methods are a very popular class of sampling algorithms. In particular, gradient-based MCMC algorithms such as versions of Langevin MCMC and Hamiltonian Monte Carlo \citep{duane_hybrid_1987} have been studied intensively in recent years. While most theoretical guarantees only consider the case of concave~$f$, there have been a few extensions where $f$ is allowed to be non-concave in a compact region of the domain. For example, \cite{ma_sampling_2019} study a certain class of functions whose gradient is $L$-Lipschitz and that are non-concave in a region with radius~$R$ but $\alpha$-strongly convex outside of it. For the Metropolis-adjusted Langevin algorithm (MALA) to reach a TV distance error $E > 0$, they obtain the mixing time bound
\begin{IEEEeqnarray*}{+rCl+x*}
n \leq O\left( \frac{e^{40LR^2}}{\alpha} (L/\alpha)^{3/2} d^{1/2} (d\ln(L/\alpha) + \ln(1/E))^{3/2} \right)~. \IEEEyesnumber \label{eq:mala_upper_bound}
\end{IEEEeqnarray*}
We used $n$ here for the mixing time since it corresponds to the number of gradient evaluations, which are potentially more informative than the function evaluations normally allowed in our setting but can be approximated using $d+1$ function evaluations. The dependence of the upper bound in \Eqref{eq:mala_upper_bound} on $L$, which is related to $\|f\|_{C^2}$ in our setting, is exponential. We are not aware of a lower bound, but conjecture that a tight lower bound will also have an exponential dependence on $\|f\|$ in some fashion. This indicates that Langevin MCMC could perform worse than rejection sampling in our setting.

Beyond Langevin MCMC, there are many other popular MCMC methods, such as variants of Hamiltonian Monte Carlo, parallel tempering (or replica exchange MCMC), and simulated tempering. Obtaining convergence rates for these methods on function classes like $\calF_{d, m, B}$ is an interesting problem, but left open in this paper. While \cite{woodard_sufficient_2009} prove torpid (slow) mixing for parallel and simulated tempering in some settings, they show an exponential dependency on $d$ for certain mixtures of Gaussians, which does not appear to imply suboptimal rates in our setting.

\subsection{Variational Formulation for Log-Partition Estimation} \label{sec:variational_formulation}

In the following, we will introduce the variational approach to the log-partition problem by \cite{bach_sum--squares_2025}. We will first start with the simpler optimization setting. Let $\calP(\calX)$ be the space of probability measures on $\calX$. We start with the formulation
\begin{IEEEeqnarray*}{+rCl+x*}
M_f = \max_{x \in \calX} f(x) = \sup_{P \in \calP(\calX)} \int f(x) \diff P(x)~, \IEEEyesnumber \label{eq:measure_opt}
\end{IEEEeqnarray*}
which converts a finite-dimensional non-concave maximization problem into an infinite-dimensional concave maximization problem.
To apply the approach by \cite{bach_sum--squares_2025}, we need to approximate the function $f$ by a model of the form
\begin{IEEEeqnarray*}{+rCl+x*}
g(x) = \varphi(x)^* H \varphi(x),
\end{IEEEeqnarray*}
where $H$ is a Hermitian matrix and $\varphi: \calX \to \bbC^N$ is a suitable feature map. For example, for $d=1$, if $f$ is periodic and we use Fourier features $\varphi(x) = (1, e^{ix}, \hdots, e^{(N-1)ix})^\top$, then $H$ can be determined by trigonometric interpolation, see also \cite{woodworth_non-convex_2022}.

For a probability distribution $P \in \calP(\calX)$, we define the moment matrix
\begin{IEEEeqnarray*}{+rCl+x*}
\Sigma_P \equalDef \int_{\calX} \varphi(x) \varphi(x)^* \diff P(x)~.
\end{IEEEeqnarray*}
Because of 
\begin{IEEEeqnarray*}{+rCl+x*}
\int_{\calX} g(x) \diff P(x) = \int_{\calX} \tr[\varphi(x)^* H \varphi(x)] \diff P(x) = \int_{\calX} \tr[H \varphi(x) \varphi(x)^*] \diff P(x) = \tr[H \Sigma_P]~,
\end{IEEEeqnarray*}
we then obtain 
\begin{IEEEeqnarray*}{+rCl+x*}
M_g = \sup_{P \in \calP(\calX)} \tr[H\Sigma_P] = \sup_{\Sigma \in \calK} \tr[H\Sigma]~, \label{eq:sigma_opt}
\end{IEEEeqnarray*}
where $\calK$ is the (convex) set of all possible values of $\Sigma_P$. This reduces the infinite-dimensional convex optimization problem in \Eqref{eq:measure_opt} to a finite-dimensional convex optimization problem, and at least for certain feature maps, the set $\calK$ has a sufficiently nice structure for optimization.

To extend this approach to the log-partition problem, \cite{bach_sum--squares_2025} uses the following variational formulation by \cite{donsker_asymptotic_1983} for general base distributions $Q$, where $\DKL{P}{Q} = \int \log\left(\frac{\diff P}{\diff Q}\right) \diff P$ is the KL divergence:
\begin{IEEEeqnarray*}{+rCl+x*}
L_f(Q) & \equalDef & \log \int_{\calX} e^{f(x)} \diff Q(x) = \sup_{P \in \calP(\calX)} \int_{\calX} f(x) \diff P(x) - \DKL{P}{Q}~.
\end{IEEEeqnarray*}
Again, after approximating $f$ by $g$, we can replace the integral by $\tr[H\Sigma_P]$. However, to obtain a finite-dimensional optimization problem, we also need to replace the KL divergence with something that only depends on $\Sigma_P$ instead of $P$. Since this is not possible exactly, \cite{bach_sum--squares_2025} proposes multiple lower bounds, of which the tightest one (and most difficult to compute) is
\begin{IEEEeqnarray*}{+rCl+x*}
\Dopt{\Sigma_P}{\Sigma_Q} & \equalDef & \inf_{\tilde P, \tilde Q \in \calP(\calX): \Sigma_P = \Sigma_{\tilde P}, \Sigma_Q = \Sigma_{\tilde Q}} \DKL{\tilde P}{\tilde Q}~.
\end{IEEEeqnarray*}
This yields the following upper bound on the log-partition function:
\begin{IEEEeqnarray*}{+rCl+x*}
\Lopt_g(Q) & \equalDef & \sup_{P \in \calP(\calX)} \int_{\calX} g(x) \diff P(x) - \Dopt{\Sigma_P}{\Sigma_Q} \\
& = & \sup_{P \in \calP(\calX)} \tr[H\Sigma_P] - \Dopt{\Sigma_P}{\Sigma_Q} = \sup_{\Sigma \in \calK} \tr[H\Sigma] - \Dopt{\Sigma}{\Sigma_Q} \\
& \geq & L_g(Q)~. \IEEEyesnumber \label{eq:opt_relaxation}
\end{IEEEeqnarray*}
Our investigation begins here: After inserting the definition of $\Doptsingle$, a simple calculation shows that due to the minus sign, the infimum over $\tilde P$ merges with the supremum over $P$, and the infimum over $\tilde Q$ turns into a supremum:

\begin{restatable}{lemma}{lemLOPT}  \label{lemma:lopt}
For a model of the form $g(x) = \varphi(x)^* H \varphi(x)$ as above, we have
\begin{IEEEeqnarray*}{+rCl+x*}
\Lopt_g(Q) = \sup_{\tilde Q \in \calP(\calX): \Sigma_{\tilde Q} = \Sigma_Q} L_g(\tilde Q)~.
\end{IEEEeqnarray*}
\end{restatable}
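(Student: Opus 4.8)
The plan is to unfold the definition of $\Doptsingle$ inside $\Lopt_g(Q)$ and observe that the resulting nested optimizations collapse. Recall from \eqref{eq:opt_relaxation} that
\[
\Lopt_g(Q) = \sup_{P \in \calP(\calX)} \Bigl(\tr[H\Sigma_P] - \Dopt{\Sigma_P}{\Sigma_Q}\Bigr), \qquad \Dopt{\Sigma_P}{\Sigma_Q} = \inf_{\substack{\tilde P:\ \Sigma_{\tilde P} = \Sigma_P\\ \tilde Q:\ \Sigma_{\tilde Q} = \Sigma_Q}} \DKL{\tilde P}{\tilde Q}~.
\]
Because of the minus sign, $-\Dopt{\Sigma_P}{\Sigma_Q}$ is a supremum over $(\tilde P,\tilde Q)$, and since $\tr[H\Sigma_P] = \int_{\calX} g \diff \tilde P$ for \emph{every} $\tilde P$ with $\Sigma_{\tilde P} = \Sigma_P$ (this is exactly the moment-matrix identity $\int g \diff P = \tr[H\Sigma_P]$ established above), the term $\tr[H\Sigma_P]$ can be absorbed into that inner supremum. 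This gives
\[
\Lopt_g(Q) = \sup_{P \in \calP(\calX)}\ \sup_{\substack{\tilde P:\ \Sigma_{\tilde P} = \Sigma_P\\ \tilde Q:\ \Sigma_{\tilde Q} = \Sigma_Q}} \Bigl( \int_{\calX} g \diff \tilde P - \DKL{\tilde P}{\tilde Q} \Bigr)~.
\]

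Next I would collapse the two outer suprema. The only constraint linking $P$ and $\tilde P$ is that they share a moment matrix, and this set always contains $P$ itself; as $P$ ranges over $\calP(\calX)$ and $\tilde P$ over all measures with $\Sigma_{\tilde P} = \Sigma_P$, the pair $(P,\tilde P)$ realizes every $\tilde P \in \calP(\calX)$ (take $P = \tilde P$). Hence $\sup_{P}\sup_{\tilde P:\ \Sigma_{\tilde P} = \Sigma_P} = \sup_{\tilde P \in \calP(\calX)}$, and after reordering the (commuting) suprema,
\[
\Lopt_g(Q) = \sup_{\tilde Q:\ \Sigma_{\tilde Q} = \Sigma_Q}\ \sup_{\tilde P \in \calP(\calX)} \Bigl( \int_{\calX} g \diff \tilde P - \DKL{\tilde P}{\tilde Q} \Bigr)~.
\]
For each fixed $\tilde Q$, the inner supremum is precisely the Donsker--Varadhan variational formula recalled before the lemma, so it equals $L_g(\tilde Q) = \log \int_{\calX} e^{g(x)} \diff \tilde Q(x)$. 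Substituting gives the claim.

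The argument is essentially bookkeeping, so the points needing care are measure-theoretic rather than combinatorial: one should note that all feasible sets are nonempty (in particular $Q$ satisfies $\Sigma_Q = \Sigma_Q$, and $P = \tilde P$ is always admissible), that values $\DKL{\tilde P}{\tilde Q} = +\infty$ do no harm since they only push the corresponding term to $-\infty$ inside a supremum, and that $L_g(\tilde Q)$ is finite because $g$ is bounded on the compact cube $\calX$ (as $\varphi$ is bounded), so the Donsker--Varadhan identity applies with no integrability caveat. I do not expect a real obstacle here; the one thing to get right is the direction of each interchange, and the fact that the nested supremum over pairs $(P,\tilde P)$ with matched moment matrices genuinely reduces to a single unconstrained supremum over $\tilde P$.
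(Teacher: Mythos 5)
Your proposal is correct and follows essentially the same route as the paper's proof: unfold $\Doptsingle$, let the minus sign turn the inner infimum into a supremum, collapse the nested suprema over $P$ and $\tilde P$ with matched moment matrices into a single supremum over $\tilde P$, and identify the result via the Donsker--Varadhan formula as $L_g(\tilde Q)$. The measure-theoretic caveats you flag (nonempty feasible sets, $+\infty$ divergences, boundedness of $g$) are fine and do not change the argument.
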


This formulation allows us to show a lower bound on the achievable convergence rate. The basic idea is as follows: Since $Q$ is only known through finitely many moments $\Sigma_Q$, we can find a discrete distribution $\tilde Q$ with the same moments. 
We then choose $f$ such that it attains its maximum at one of the discrete points. We conclude that whenever $g$ is a sufficiently good approximation to $f$, the variational method produces an estimate that is close to the maximum of $f$, and therefore not very close to the true log-partition value $L_f$.

\begin{restatable}[Lower bound for OPT relaxation]{theorem}{thmLOPTLowerBound} \label{thm:LOPT_lower_bound}
Let $\varphi: \calX \to \bbC^N$ be continuous. Let 
\begin{IEEEeqnarray*}{+rCl+x*}
n \equalDef \dim_\bbC \Vlin, \qquad \Vlin \equalDef \Span_\bbC \left\{\varphi(x)\varphi(x)^* \mid x \in \calX\right\} \subseteq \bbC^{N \times N}~.
\end{IEEEeqnarray*}
In other words, $n$ is the number of effective degrees of freedom of the model $g(x) = \varphi(x)^* H \varphi(x)$, and hence corresponds to the maximum number of points where such a model can interpolate arbitrary function values. Then, there exists a point $z \in \calX$ depending only on $\varphi$, such that the periodic and analytic function %
\begin{IEEEeqnarray*}{+rCl+x*}
f: \calX \to \bbR, x \mapsto \sum_{i=1}^d \cos(2\pi(x_i - z_i))
\end{IEEEeqnarray*}
satisfies
\begin{IEEEeqnarray*}{+rCl+x*}
|\Lopt_{g}(\calU(\calX)) - L_{\beta f}(\calU(\calX))| \geq \log\left(\frac{\beta^{d/2}}{2n+1}\right) - \|g - \beta f\|_\infty \IEEEyesnumber \label{eq:LOPT_error_bound}
\end{IEEEeqnarray*}
for any model $g(x) = \varphi(x)^* H \varphi(x)$ and any $\beta > 0$.
\end{restatable}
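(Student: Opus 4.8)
The plan is to combine the reformulation of the relaxation in \Cref{lemma:lopt}, namely $\Lopt_g(\calU([0,1])) = \sup_{\tilde Q \in \calP(\calX):\, \Sigma_{\tilde Q} = \Sigma_{\calU([0,1])}} L_g(\tilde Q)$, with a moment-matching construction. The guiding idea is that $\tilde Q$ enters the constraint only through the finitely many moments recorded by $\Vlin$, so $\calU([0,1])$ can be replaced by a \emph{discrete} measure $\tilde Q_0$ with the same moments but placing a non-negligible atom at a single point $z$. Choosing $f$ to be maximal exactly at $z$ then forces $L_g(\tilde Q_0)$, and hence $\Lopt_g$, to overshoot the true log-partition function $L_{\beta f}$.

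First I would produce the measure $\tilde Q_0$. The constraint $\Sigma_{\tilde Q} = \Sigma_{\calU([0,1])}$ asks $\tilde Q$ to match the integrals of the entries of $x \mapsto \varphi(x)\varphi(x)^*$, which span the $n$-dimensional space $\Vlin$; together with $\tilde Q(\calX)=1$ this is a system of at most $2n+1$ linear constraints (the $2n$ real and imaginary parts plus normalization). By Tchakaloff's theorem applied to $\calU([0,1])$ and these $2n+1$ continuous functions, there is a probability measure $\tilde Q_0 = \sum_{j=1}^{K} w_j \delta_{x_j}$ with $K \le 2n+1$ atoms, $w_j > 0$, $x_j \in \calX$, and $\Sigma_{\tilde Q_0} = \Sigma_{\calU([0,1])}$. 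Let $z$ be an atom of maximal weight, so $\tilde Q_0(\{z\}) \ge 1/K \ge 1/(2n+1)$. Since this construction only uses $\varphi$ (the base measure being fixed), $z$ depends only on $\varphi$, and it determines the periodic analytic function $f(x) = \sum_{i=1}^d \cos(2\pi(x_i - z_i))$, whose maximum is $f(z) = d$.

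Plugging $\tilde Q_0$ into the variational formula and keeping only the term at $z$,
\begin{IEEEeqnarray*}{+rCl+x*}
\Lopt_g(\calU([0,1])) & \ge & L_g(\tilde Q_0) \;=\; \log\Bigl(\textstyle\sum_{j} w_j e^{g(x_j)}\Bigr) \;\ge\; g(z) - \log(2n+1) \;\ge\; \beta d - \log(2n+1) - \|g - \beta f\|_\infty~,
\end{IEEEeqnarray*}
using $g(z) \ge \beta f(z) - \|g - \beta f\|_\infty = \beta d - \|g - \beta f\|_\infty$. On the other hand, by the product structure of the cube and periodicity of $t \mapsto \cos(2\pi(t-z_i))$,
\begin{IEEEeqnarray*}{+rCl+x*}
L_{\beta f}(\calU([0,1])) & = & d \log\int_0^1 e^{\beta\cos(2\pi t)}\diff t \;=\; d\log I_0(\beta)~,
\end{IEEEeqnarray*}
where $I_0$ is the modified Bessel function of order $0$. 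An elementary estimate — using $\cos(2\pi t) \le 1 - 8t^2$ on $[0,\tfrac12]$ and a Gaussian tail bound — gives $I_0(\beta) \le e^{\beta}/\sqrt{\beta}$ for every $\beta > 0$, so $L_{\beta f}(\calU([0,1])) \le d\beta - \tfrac{d}{2}\log\beta$. Subtracting the two displays and using $|a| \ge a$ yields $|\Lopt_g(\calU([0,1])) - L_{\beta f}(\calU([0,1]))| \ge \tfrac{d}{2}\log\beta - \log(2n+1) - \|g-\beta f\|_\infty = \log(\beta^{d/2}/(2n+1)) - \|g - \beta f\|_\infty$, as claimed.

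The step I expect to be most delicate is the moment-matching one: verifying that the feasibility constraint is captured by finitely many ($\le 2n+1$) functionals, invoking Tchakaloff/Carathéodory so that the resulting discrete measure is a genuine probability measure in the feasible set (integrability of the moment functions, atoms in $\calX$), and checking that the heaviest-atom bound gives exactly the advertised $1/(2n+1)$. Everything afterward is short: the displayed chain is immediate and the Bessel estimate is a one-paragraph computation.
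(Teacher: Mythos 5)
Your proposal is correct and follows essentially the same route as the paper's proof: discretize the moment-matched measure into at most $2n+1$ atoms, pick the heaviest atom as $z$, lower-bound $\Lopt_g$ via \Cref{lemma:lopt}, and upper-bound $L_{\beta f}$ by the Gaussian-type estimate $\int_0^1 e^{\beta\cos(2\pi t)}\diff t \le \beta^{-1/2}e^{\beta}$. The only cosmetic difference is that you invoke Tchakaloff's theorem where the paper runs the underlying Carath\'eodory argument by hand (after verifying that $\conv(\Phi(\calX))$ is closed, which is the same compactness fact your Tchakaloff invocation relies on).
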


What are the implications of \Cref{thm:LOPT_lower_bound}, which is proven in \Cref{sec:appendix:variational}, on convergence rates? To answer this question, we need to consider the limit $n \to \infty$, which means that $N, \varphi, f, g$ in general depend on $n$, and we will denote them by $N_n, \varphi_n, f_n, g_n$, respectively. We also consider an inverse temperature $\beta_n \equalDef (e(2n+1))^{2/d}$. Since $f_n$ is analytic, an approximation method with optimal rate should achieve the rate $\|g_n - \beta_n f_n\|_\infty \leq O_{m, d}(\|\beta_n f_n\|_{C^m} n^{-m/d}) = O_{m, d}(n^{-(m-2)/d})$ for every $m \in \bbN$. Suppose that this is at least achieved for $m=3$, such that $\lim_{n \to \infty} \|g_n - \beta_n f_n\|_\infty = 0$. Then,
\begin{IEEEeqnarray*}{+rCl+x*}
|\Lopt_{g_n}(\calU([0, 1])) - L_{\beta_n f_n}(\calU([0, 1]))| & \geq & \log\left(\frac{\beta_n^{d/2}}{2n+1}\right) - \|g_n - \beta_n f_n\|_\infty \\
& \geq & 1 - O_{m, d}(n^{-(m-2)/d}) \\
& = & \Omega_{m, d}(\beta_n n^{-2/d}) \text{  for sufficiently large $n$.}
\end{IEEEeqnarray*}
In other words, the approximation error and the log-partition error of the OPT relaxation in \Eqref{eq:opt_relaxation} cannot both achieve a rate strictly better than $O_{m, d}(\|f\|n^{-2/d})$ even for infinitely smooth functions, no matter which (continuous) feature map $\varphi$ is chosen.

\subsection{Summary}

We want to emphasize a few takeaways from our analysis in this section:
\begin{itemize}
\item Direct approximation of $f$ can lead to tractable algorithms in settings that cannot exploit higher orders of smoothness. Applying these algorithms to evaluations of a smooth surrogate (\Cref{ex:tradeoff_more_points}) can lead to faster rates $O_{m, d}(Bn^{-m/d})$ but with slow runtime $O_{m, d}(n^m)$.
\item The approximation of $p_f$ instead leads to tractable algorithms for higher smoothness, which exhibit good rates in $n$ but bad rates in $\|f\|_{C^m}$.
\item Some methods exhibit multiple regimes, like MC integration and piecewise constant approximation for the log-partition problem.
\item Some methods can achieve exponential rates in $n$ but behave badly in $\|f\|_{C^m}$, such as rejection sampling or (upper bounds for) Langevin MCMC, although the situation for MCMC methods requires further study.
\item An attempt to adapt a promising nonconvex optimization method failed to achieve comparable rates because the studied approach is \quot{too close to optimization} in an intermediate regime $\|f\|_{C^m} = \Theta_{m, d}(n^{2/d})$.
\end{itemize}

\section{Experiments} \label{sec:sampling:experiments}

To further investigate the convergence behavior of some simple algorithms, we study them numerically on functions of the form $f: [0, 1]^3 \to \bbR, x \mapsto \beta(x_1 + x_2 + x_3)$. While these functions are simple (and concave), they pose a challenge to some general algorithms as they have a large range in relation to their Lipschitz constant. The dimension $d=3$ has been chosen for visualization purposes, to be able to distinguish the convergence rates $n^{-1/d}$ and $n^{-2/d}$ from the typical MC convergence rate of $n^{-1/2}$. Our plots can be reproduced using the code at
\begin{IEEEeqnarray*}{+rCl+x*}
\text{\url{github.com/dholzmueller/sampling_experiments}}
\end{IEEEeqnarray*}

\subsection{Log-partition Estimation}

For the log-partition problem, we consider the following algorithms:
\begin{itemize}
\item \textbf{PC}: Compute the log-partition function of a piecewise constant approximation as in \Cref{sec:pc_approx}.
\item \textbf{MC}: Monte carlo log-partition estimation as in \Cref{sec:mc_logpartition}.
\item \textbf{PC+MC}: We use importance sampling, specifically MC quadrature on top of a piecewise constant approximation as described in \Cref{sec:ibc:stochastic_points}: We use $n/2$ function evaluations to compute a piecewise constant approximation $g$ of $f$ and then use the other $n/2$ function evaluations for an MC approximation of the right-hand side in
\begin{IEEEeqnarray*}{+rCl+x*}
L_f & = & L_g + \log\left(\bbE_{x \sim P_g} \left[\exp(f(x) - g(x))\right]\right)~.
\end{IEEEeqnarray*}
\end{itemize}
All of the methods above can be implemented in linear time $O_{m, d}(n)$.

\begin{figure}[t]
\centering
\includegraphics[width=0.9\textwidth]{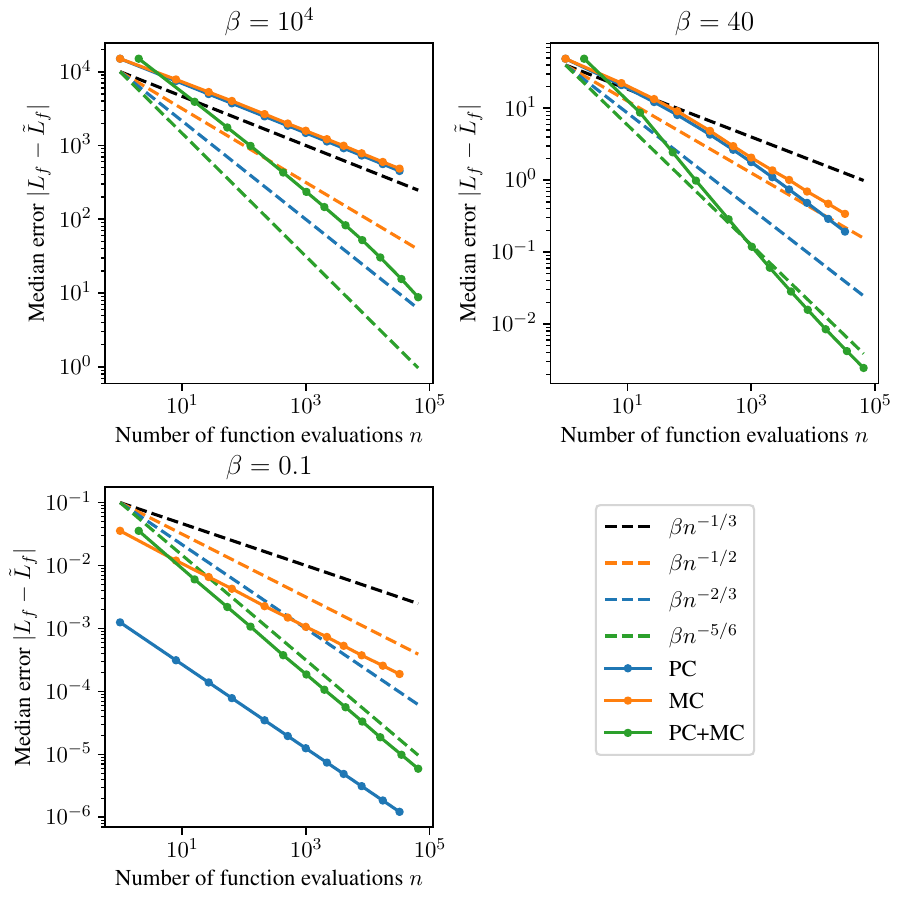}
\caption{Convergence of the (median) error $|L_f - \tilde L_f|$ for different values of $\beta \in \{0.1, 40, 10000\}$. For the stochastic methods MC and PC+MC, the median is taken over 10001 independent runs.} \label{fig:logpartition}
\end{figure}

\Cref{fig:logpartition} shows the convergence of these methods for $\beta \in \{0.1, 40, 10000\}$. For $\beta = 10000$, the methods are in an optimization regime, where PC and MC follow the rate $O(n^{-1/3})$ of the corresponding upper bounds in \Cref{thm:pc_approx} and \Cref{thm:mc_log_partition}. Meanwhile, PC+MC follows the rate $O(n^{-2/3})$. This can be understood intuitively by noting that due to the linear nature of $f$, the PC proposal distribution will mostly propose points close to the optimum, such that the MC component can get much closer to the optimum than with a uniform proposal distribution. 

For $\beta = 40$, we observe a transition between an optimization regime and a quadrature regime. In the quadrature regime, the convergence rate of MC is the classical MC quadrature rate $O(n^{-1/2})$, matching the upper bound in \Cref{thm:mc_log_partition}. Meanwhile, the convergence rate of PC transitions to $O(n^{-2/3})$, matching the worst-case bound in \Cref{thm:pc_approx}, whose proof uses a linear $f$ for the lower bound. The combination PC+MC approaches a convergence rate around $O(n^{-5/6})$. This can be understood as the MC rate $O(n^{-1/2})$ combined with the \emph{approximation} rate (not log-partition rate) of PC, which is $O(n^{-1/3})$. The rate $O(n^{-5/6})$ can be proven formally using arguments analogous to the proof of \Cref{thm:upper_stoch_log} in \Cref{sec:appendix:stochastic_points}.

For $\beta = 0.1$, we see the same quadrature regime rates as for $\beta = 30$, except that now the constant in the rate for PC is smaller than those of MC and PC+MC. This can be explained by an observation in the proof of \Cref{thm:pc_approx} in \Cref{sec:appendix:approx_algs}: Since PC performs midpoint quadrature, its error depends on the curvature of $\exp(f)$. Since $f$ is linear, the curvature of $\exp(f)$ is significantly smaller than the worst-case curvature when $\beta \ll 1$. On the other hand, the convergence rate of PC+MC depends on the \emph{approximation} rate of PC, which depends on the gradient and not the curvature.

\subsection{Sampling}

To study convergence rates for sampling, we need a way to estimate distances between probability distributions through samples. While this can be achieved for the Wasserstein distance, and more efficiently for the related Sinkhorn distances, an even more efficient and easy-to-compute measure is the energy distance \citep[see e.g.][]{szekely_energy_2013} given by
\begin{IEEEeqnarray*}{+rCl+x*}
\Denergy(P, Q)^2 = 2\bbE_{x \sim P, x' \sim Q} \|x - x'\|_2 - \bbE_{x \sim P, x' \sim P} \|x - x'\|_2 - \bbE_{x \sim Q, x' \sim Q} \|x - x'\|_2~.
\end{IEEEeqnarray*}
We estimate the energy distance $\Denergy(P_f, \tilde P_f)$ by sampling a finite number of samples $x_1, \hdots, x_N \sim P_f$ and $\tilde x_1, \hdots, \tilde x_N \sim \tilde P_f$ and then computing the energy distance $\Denergy(Q, \tilde Q)$ of the empirical distributions
\begin{IEEEeqnarray*}{+rCl+x*}
Q \equalDef \frac{1}{N} \sum_{i=1}^N \delta_{x_i}, \qquad \tilde Q \equalDef \frac{1}{N} \sum_{i=1}^N \delta_{\tilde x_i}~,
\end{IEEEeqnarray*}
where $\delta_x$ is the Dirac distribution at $x$. We compare the following sampling algorithms:
\begin{itemize}
\item \textbf{PC}: Sampling from a piecewise constant approximation as in \Cref{sec:pc_approx}.
\item \textbf{MC}: Monte carlo sampling as defined in \Cref{sec:mc_sampling}.
\item \textbf{RS}: We return $\textsc{RejectionSampling}(f, M_f, n)$ as defined in \Cref{alg:rejection_sampling} and investigated in \Cref{sec:rs_uniform}. Here, we know $M_f$ explicitly due to the simple nature of $f$.
\item \textbf{PC+MC}: Performing MC sampling on top of a piecewise constant proposal distribution: We compute a piecewise constant approximant $g$ of $f$ with $n/2$ points, then draw samples $X_1, \hdots, X_{n/2} \sim P_g$ and output $X_I$, where
\begin{IEEEeqnarray*}{+rCl+x*}
P(I = i) = \frac{\exp(f(X_i) - g(X_i))}{\sum_{j=1}^{n/2} \exp(f(X_j) - g(X_j))}~.
\end{IEEEeqnarray*}
\item \textbf{PC+RS}: We use $n/2$ points to compute a piecewise constant approximation $g$ of $f$ and then return $\textsc{RejectionSampling}(f, g + M_{f-g}, n/2)$ as defined in \Cref{alg:rejection_sampling}. Here, we know $M_{f-g}$ explicitly due to the simple nature of $f$.
\end{itemize}
All of the above methods can be implemented in linear time $O_{m, d}(n)$.

\begin{figure}[t]
\centering
\includegraphics[scale=1.0]{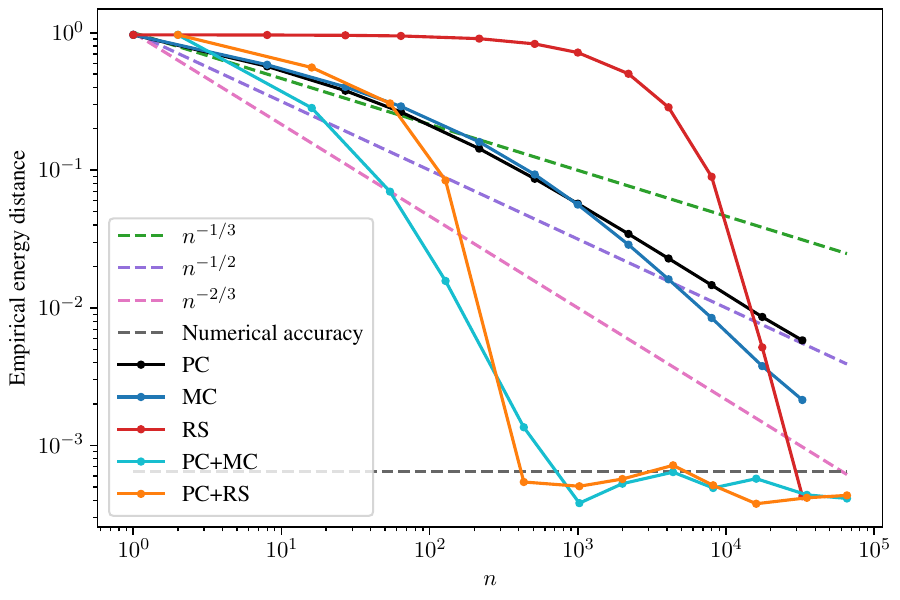}
\caption{Convergence of different sampling methods in terms of the empirical energy distance, computed using $N=10^6$ samples for each distribution, to the true distribution $P_f$ for $\beta = 15$. Here, $n$ denotes the number of function evaluations used for drawing a sample, where PC uses the same function evaluations for each sample, while MC and RS need new function evaluations for every drawn sample. The gray dashed line corresponds to the maximum empirical energy distance of two sets of $N=10^6$ samples, both drawn from $P_f$, where the maximum is taken over three random draws.} \label{fig:sampling}
\end{figure}

For the sampling algorithms in \Cref{fig:sampling}, the behavior in terms of convergence rates is less clear than for the log-partition algorithms. For PC and MC, we observe a transition between optimization and quadrature regimes with different rates. For PC, we would not expect such a transition from \Cref{thm:pc_approx}, but the analysis there is for $\Dsuplog$ and not for the energy distance. In \Cref{fig:sampling}, we also see that combining approximation-based and stochastic methods performs better than either of the two in isolation. While we do not analyze these combinations in our theory, many proof ideas should generalize to PC+MC and PC+RS. 
Our budget-limited variant of rejection sampling, RS, initially performs poorly in \Cref{fig:sampling} while reaching fast convergence for larger values of $n$, when the probability of overall rejection becomes small. This behavior matches the behavior of the bounds in \Cref{prop:rejection_sampling}.

Overall, our experiments show the promise of combining function approximation with other methods for the log-partition and sampling problems. However, they are only toy experiments and should not be seen as practical advice. A practical evaluation including MCMC methods is beyond the scope of this paper. Moreover, testing the variational approach of \cite{bach_sum--squares_2025} would require deriving a tractable version for a suitable non-periodic feature map. For experiments on the Boolean hypercube, we refer to \cite{beyler_variational_2025}.

\section{Conclusion} \label{sec:sampling:conclusion}

In this paper, we studied the convergence rates of sampling and log-partition estimation methods on classes of $m$-smooth functions on the $d$-dimensional unit cube $\calX = [0, 1]^d$. In \Cref{sec:ibc}, we showed that without computational constraints, the optimal achievable convergence rates are of the form $O_{m, d}(Bn^{-m/d})$ or even better depending on the setting. We then investigated several computational reductions between problems in \Cref{sec:relations}, showing that several problems are similarly hard. In \Cref{sec:algorithms}, we studied convergence rates of specific algorithms, which are far from being optimal unless one is willing to spend a computational effort on the order of $O(n^m)$, that is, exponential in the smoothness $m$ for which the optimal rate should be achieved. Our experimental study nonetheless confirms practical differences between the convergence rates of some of the investigated efficient algorithms, although it is limited to a toy problem and simple algorithms.

Our work poses the central question of whether near-optimal convergence rates for smooth functions can be achieved with runtimes that are of fixed polynomial order $O_{m, d}(n^k)$, i.e., where $k$ does not depend on $m$ or $d$. Moreover, for many sampling algorithms, it is unclear which convergence rates they can achieve in our setting. For example, variants of parallel tempering are often employed for non-log-concave problems, and diffusion models might prove to be relevant if the score function can be approximated efficiently \citep{chen_sampling_2023}. An analysis of (mixtures of) Laplace approximations might also be interesting in this context \citep{shun_laplace_1995, ruli_improved_2016, eschenhagen_mixtures_2021}. Beyond specific algorithms, proving lower bounds outside of the optimization regime is still an open question except for some special cases \citep{chewi_query_2022}, and other probability distance measures such as the KL divergence could be considered as well.

\acks{We thank Alessandro Rudi, Viktor Zaverkin, Hans Kersting, Ingo Steinwart, Davoud Mirzaei, Marc Lambert, and Eric Moulines for helpful discussions. 
Funded by Deutsche Forschungsgemeinschaft (DFG, German Research Foundation) under Germany's Excellence Strategy - EXC 2075 – 390740016. The authors thank the International Max Planck Research School for Intelligent Systems (IMPRS-IS) for supporting David Holzmüller. Francis Bach acknowledges support from the French government under the management of the Agence Nationale de la Recherche as part of the ``Investissements d'avenir'' program, reference ANR-19-P3IA0001 (PRAIRIE 3IA Institute). This work was also supported by the European Research Council (grant SEQUOIA 724063).}

\begin{appendixenv}

\section{Proofs for Introduction} \label{sec:appendix:introduction}

\LemLipMaxBound*

\begin{proof}
For the first part of the lemma, let $x^* \in \calX$ be a maximizer of $f$. Without loss of generality, assume that $f$ is shifted such that $f(x^*) = 0$. 

\textbf{Step 1: Upper bound.} We have
\begin{IEEEeqnarray*}{+rCl+x*}
L_f = \log \int_{\calX} e^{f(x)} \diff x \leq \log \int_{\calX} 1 \diff x = \log 1 = 0 = M_f~.
\end{IEEEeqnarray*}

\textbf{Step 2: Lower bound.} To show a lower bound on $L_f$, define the side length $R \equalDef (\max\{1, d^{-1/2} |f|_1\})^{-1}$. Since $R \leq 1$, $\calX$ contains an axis-aligned subcube $\tilde \calX$ of side length $R$ containing $x^*$. Each point $x \in \tilde\calX$ has distance at most $\sqrt{d}R$ from $x^*$, and hence by Lipschitzness, we have 
\begin{IEEEeqnarray*}{+rCl+x*}
f(x) \geq f(x^*) - |f|_1 \sqrt{d}R = -|f|_1\sqrt{d}R~.
\end{IEEEeqnarray*}

We consider two cases:
\begin{enumerate}[(a)]
\item \textbf{Case 1: $d^{-1/2}|f|_1 \leq 1$.} In this case, we have $R = 1$ and hence
\begin{IEEEeqnarray*}{+rCl+x*}
L_f & = & \log \int_{\calX} e^{f(x)} \diff x \geq \log \int_{\calX} e^{-|f|_1\sqrt{d}} \diff x = -|f|_1\sqrt{d} = -d(d^{-1/2}|f|_1)~.
\end{IEEEeqnarray*}
The function $h(x) \equalDef \log(1+3x)-x$ is concave and $h(0), h(1) \geq 0$, which shows $h(x) \geq 0$ for $x \in [0, 1]$. Hence,
\begin{IEEEeqnarray*}{+rCl+x*}
L_f \geq -d\log(1+3d^{-1/2}|f|_1)~.
\end{IEEEeqnarray*}
\item \textbf{Case 2: $d^{-1/2}|f|_1 > 1$.} In this case, we have $R = (d^{-1/2}|f|_1)^{-1}$ and hence $f(x) \geq -d$ for $x \in \tilde\calX$. This yields
\begin{IEEEeqnarray*}{+rCl+x*}
L_f & = & \log \int_{\calX} e^{f(x)} \diff x \geq \log \int_{\tilde \calX} e^{f(x)} \diff x \\
& \geq & \log \int_{\tilde \calX} e^{-d} \diff x = -d + d\log(R) = -d - d\log(d^{-1/2} |f|_1) \\
& = & -d\log(ed^{-1/2}|f|_1) \geq -d\log(1+3d^{-1/2}|f|_1)~.
\end{IEEEeqnarray*}
\end{enumerate}

\textbf{Step 3: Including the temperature.} By replacing $f$ with $f/\eps$, we obtain
\begin{IEEEeqnarray*}{+rCl+x*}
|M_f - \eps L_{f/\eps}| & = & |\eps M_{f/\eps} - \eps L_{f/\eps}| = \eps |M_{f/\eps} - L_{f/\eps}| \leq \eps d \log(1+3d^{-1/2}|f|_1/\eps)~.
\end{IEEEeqnarray*}

\textbf{Step 4: Probabilistic bound.} We have
\begin{IEEEeqnarray*}{+rCl+x*}
P_{f/\eps}(x: f(x) \leq \eps L_{f/\eps} - \eps \log(1/\delta)) & \leq & \int_{\{x \in \calX: f(x) \leq \eps L_{f/\eps} - \eps \log(1/\delta)\}} \exp(f(x)/\eps - L_{f/\eps}) \diff x \\
& \leq & \int_{\calX} \exp(-\log(1/\delta)) \diff x = \delta~. & \qedhere
\end{IEEEeqnarray*}
\end{proof}

The following lemma will be useful to deal with Lipschitz constants:

\begin{lemma} \label{lemma:lipschitz_constant}
Let $f \in C^m(\calX), m \geq 1$. Then, $|f|_1 \leq d^{1/2} \|f\|_{C^m}$.
\end{lemma}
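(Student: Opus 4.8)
The plan is to reduce the Lipschitz constant of $f$ to a uniform bound on the Euclidean norm of its gradient, which is immediate from the definition of $\|\cdot\|_{C^m}$ as soon as $m \geq 1$. First I would use that $\calX = [0,1]^d$ is convex: for any $x, y \in \calX$ the segment $t \mapsto (1-t)x + ty$ stays in $\calX$, and since $m \geq 1$ implies $f \in C^1(\calX)$, the fundamental theorem of calculus gives
\begin{IEEEeqnarray*}{+rCl+x*}
f(y) - f(x) & = & \int_0^1 \langle \nabla f((1-t)x + ty),\, y - x\rangle \diff t~.
\end{IEEEeqnarray*}
Applying the Cauchy--Schwarz inequality inside the integral then yields
\begin{IEEEeqnarray*}{+rCl+x*}
|f(y) - f(x)| & \leq & \Big(\sup_{z \in \calX} \|\nabla f(z)\|_2\Big)\, \|y - x\|_2~.
\end{IEEEeqnarray*}

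It remains to bound $\|\nabla f(z)\|_2$ by $d^{1/2}\|f\|_{C^m}$. For each $i \in \{1, \hdots, d\}$ the multi-index $\alpha = e_i$ satisfies $|\alpha|_1 = 1 \leq m$, so $|\partial_i f(z)| = |\partial_\alpha f(z)| \leq \|\partial_\alpha f\|_\infty \leq \|f\|_{C^m}$; hence
\begin{IEEEeqnarray*}{+rCl+x*}
\|\nabla f(z)\|_2 & = & \Big(\sum_{i=1}^d (\partial_i f(z))^2\Big)^{1/2} \ \leq\ \big(d\,\|f\|_{C^m}^2\big)^{1/2} \ =\ d^{1/2}\|f\|_{C^m}~.
\end{IEEEeqnarray*}
Combining the two displays shows that $d^{1/2}\|f\|_{C^m}$ is a Lipschitz constant for $f$, and since $|f|_1$ is by definition the minimal such constant, $|f|_1 \leq d^{1/2}\|f\|_{C^m}$ follows.

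I do not expect any real obstacle here; the only points that deserve a word of care are the convexity of $\calX$ (which makes the line-integral representation legitimate) and the observation that the hypothesis $m \geq 1$ is exactly what ensures the first-order partial derivatives are controlled by $\|f\|_{C^m}$. If one preferred to avoid the integral representation, the multivariate mean value inequality would do the job equally well, and the factor $d^{1/2}$ is seen to come solely from the norm comparison $\|v\|_2 \leq d^{1/2}\|v\|_\infty$ applied to $v = \nabla f(z)$.
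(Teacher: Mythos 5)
Your proof is correct and follows essentially the same route as the paper: bound $|f|_1$ by $\sup_{x}\|\nabla f(x)\|_2$ and then compare norms to get the factor $d^{1/2}$. The only difference is that the paper simply asserts $|f|_1 = \sup_{x}\|\nabla f(x)\|_2$, whereas you justify the needed inequality via the line-integral representation on the convex domain — a harmless elaboration of the same argument.
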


\begin{proof}
We have
\begin{IEEEeqnarray*}{+rCl+x*}
|f|_1 & = & \sup_{x \in \calX} \|\nabla f(x)\|_2 \leq \sup_{x \in \calX} d^{1/2} \|\nabla f(x)\|_\infty \leq d^{1/2} \|f\|_{C^1} \leq d^{1/2} \|f\|_{C^m}~. & \qedhere
\end{IEEEeqnarray*}
\end{proof}

\section{Proofs for Information-based Complexity} \label{sec:appendix:ibc}

Most of our lower bounds rely on the common strategy of hiding smooth functions with small support somewhere in the domain \citep[see e.g.][]{novak_deterministic_1988}. We will consider the following bump functions:

\begin{definition}[Bump functions] \label{def:bump_functions}
We define the template one-dimensional bump function
\begin{IEEEeqnarray*}{+rCl+x*}
\tilde b: \bbR \to \bbR, x \mapsto \begin{cases}
\exp(4 - (1 - x)^{-1} - (x + 1)^{-1}) &,\text{ if } x \in (-1, 1) \\
0 &,\text{ otherwise}
\end{cases}
\end{IEEEeqnarray*}
and, for given dimension $d$, the template multi-dimensional bump function
\begin{IEEEeqnarray*}{+rCl+x*}
b: \bbR^d \to \bbR, x \mapsto \tilde b(x_1) \cdots \tilde b(x_d)~.
\end{IEEEeqnarray*}
for $z \in \bbR^d$ and $\delta > 0$, the shifted and scaled bump functions
\begin{IEEEeqnarray*}{+rCl+x*}
b_{z, \delta}: \bbR^d \to \bbR, x \mapsto b\left(\frac{x-z}{\delta}\right)~.
\end{IEEEeqnarray*}
Moreover, we define the open cube
\begin{IEEEeqnarray*}{+rCl+x*}
B_\infty(x, \delta) & \equalDef & \{z \in \bbR^d \mid \|z - x\|_\infty < \delta\}~. & \qedhere
\end{IEEEeqnarray*}
\end{definition}

The following lemma illustrates some important properties of these bump functions:

\begin{lemma}[Bump functions] \label{lemma:bump_functions}
The bump functions $b_{z, \delta}$ from \Cref{def:bump_functions} satisfy
\begin{enumerate}[(a)]
\item $b_{z, \delta}$ is zero outside of $B_\infty(z, \delta)$.
\item $b_{z, \delta}$ is infinitely often continuously differentiable and all of its derivatives are bounded,
\item there exists a constant $C_{m, d} > 0$ independent of $z$ and $\delta$ such that for all $z \in \bbR^d$ and $\delta > 0$,
\begin{IEEEeqnarray*}{+rCl+x*}
\|b_{z, \delta}\|_{C^m(\bbR^d)} \leq C_{m, d} \max\{1, \delta^{-m}\}~.
\end{IEEEeqnarray*}
\item For $x \in B_\infty(z, \delta/2)$, we have $b_{z, \delta}(x) \geq 1$.
\end{enumerate}

\begin{proof}
\leavevmode
\begin{enumerate}[(a)]
\item This is easy to verify from the definition.
\item It is well-known, see e.g.\ Remark 3.4 (d) in Chapter V.3 of \cite{amann_analysis_2005}, that the function
\begin{IEEEeqnarray*}{+rCl+x*}
\hat b: \bbR \to \bbR, x \mapsto \begin{cases}
0 &, x \leq 0 \\
\exp(-1/x) &, x > 0
\end{cases}
\end{IEEEeqnarray*}
is $C^\infty$. Since $\tilde b(x) = e^4\hat b(1-x) \cdot \hat b(x+1)$, $\tilde b$ is also $C^\infty$, and so must be $b$ and $b_{z, \delta}$. Moreover, since $b_{z, \delta}$ has compact support, all the derivatives are bounded.
\item Let $C_{m, d} \equalDef \|b\|_{C^m(\bbR^d)}$. We have
\begin{IEEEeqnarray*}{+rCl+x*}
\sup_{x \in \bbR^d} \left|\frac{\partial^\alpha b_{z, \delta}(x)}{\partial x^\alpha}\right| = \delta^{-|\alpha|_1} \sup_{x \in \bbR^d} \left|\frac{\partial^\alpha b(x)}{\partial x^\alpha}\right|~.
\end{IEEEeqnarray*}
Therefore, by definition of the $C^m$-norm, we have
\begin{IEEEeqnarray*}{+rCl+x*}
\|b_{z, \delta}\|_{C^m(\bbR^d)} \leq C_{m, d} \max\{1, \delta^{-m}\}~.
\end{IEEEeqnarray*}
\item It is easy to verify that $\tilde b(x) \geq 1$ for $|x| \leq 1/2$. For $x \in B_\infty(z, \delta/2)$, we have
\begin{IEEEeqnarray*}{+rCl+x*}
\|(x-z)/\delta\|_\infty \leq 1/2~,
\end{IEEEeqnarray*}
hence
\begin{IEEEeqnarray*}{+rCl+x*}
b_{z, \delta}(x) & = & b\left(\frac{x-z}{\delta}\right) \geq 1 \cdots 1 = 1~. & \qedhere
\end{IEEEeqnarray*}
\end{enumerate}
\end{proof}
\end{lemma}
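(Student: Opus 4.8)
The plan is to reduce all four claims to elementary properties of the one-dimensional template $\tilde b$ together with the classical fact that $\hat b\colon t \mapsto e^{-1/t}\bbone_{\{t>0\}}$ is $C^\infty$. For part~(a), since $\tilde b$ vanishes outside $(-1,1)$ by definition, $b(y) = \tilde b(y_1)\cdots\tilde b(y_d)$ vanishes outside $(-1,1)^d = B_\infty(0,1)$; substituting $y = (x-z)/\delta$ then shows $b_{z,\delta}(x)=0$ unless $\|x-z\|_\infty < \delta$, i.e.\ unless $x \in B_\infty(z,\delta)$.

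For part~(b) I would first record the standard fact (e.g.\ Remark~3.4(d) in Chapter~V.3 of \cite{amann_analysis_2005}) that $\hat b$ is $C^\infty$: inductively $\hat b^{(k)}(t) = p_k(1/t)e^{-1/t}$ on $\{t>0\}$ for polynomials $p_k$, and since $p_k(1/t)e^{-1/t}\to 0$ as $t\searrow 0$, the extension by zero is $C^\infty$. Next, observe the identity $\tilde b(x) = e^4\,\hat b(1-x)\,\hat b(1+x)$, obtained by collecting $4 - (1-x)^{-1} - (x+1)^{-1}$ in the exponent; hence $\tilde b$ is a product of compositions of $\hat b$ with affine maps and is $C^\infty$, so $b$ and then $b_{z,\delta}$ are $C^\infty$ as well. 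Since $b_{z,\delta}$ is supported in the bounded set $B_\infty(z,\delta)$ by part~(a) and each partial derivative $\partial_\alpha b_{z,\delta}$ is continuous on $\bbR^d$ and vanishes off that set, every derivative is bounded.

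For part~(c) the chain rule gives $\partial_\alpha b_{z,\delta}(x) = \delta^{-|\alpha|_1}(\partial_\alpha b)((x-z)/\delta)$, so $\|\partial_\alpha b_{z,\delta}\|_\infty = \delta^{-|\alpha|_1}\|\partial_\alpha b\|_\infty$. For $0 \leq |\alpha|_1 \leq m$ one has $\delta^{-|\alpha|_1} \leq \max\{1,\delta^{-m}\}$ (treating $\delta \geq 1$ and $\delta < 1$ separately), so taking the supremum over $|\alpha|_1 \leq m$ yields $\|b_{z,\delta}\|_{C^m(\bbR^d)} \leq \max\{1,\delta^{-m}\}\,\|b\|_{C^m(\bbR^d)}$; one then sets $C_{m,d} \equalDef \|b\|_{C^m(\bbR^d)}$, which is finite by part~(b) and depends only on $m,d$. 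For part~(d), using $(1-x)^{-1} + (1+x)^{-1} = 2/(1-x^2)$, for $|x| \leq 1/2$ the exponent satisfies $4 - 2/(1-x^2) \geq 4 - 8/3 > 0$, so $\tilde b(x) \geq 1$; hence for $x \in B_\infty(z,\delta/2)$ every coordinate of $(x-z)/\delta$ has modulus at most $1/2$ and $b_{z,\delta}(x) = \prod_{i=1}^d \tilde b((x_i-z_i)/\delta) \geq 1$.

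I do not expect a genuine obstacle: the only step with real content is the $C^\infty$-smoothness of $\hat b$ used in part~(b), which is entirely classical, while (a), (c) and (d) are bookkeeping with the chain rule and elementary inequalities. The one point that needs care is in part~(c): the constant must be taken as the $C^m$-norm of the \emph{fixed} prototype $b$ (not of $b_{z,\delta}$), so that it is manifestly independent of $z$ and $\delta$; the $\max\{1,\delta^{-m}\}$ factor then captures exactly the loss incurred by rescaling.
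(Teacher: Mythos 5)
Your proof is correct and follows essentially the same route as the paper's: the same factorization $\tilde b(x) = e^4\,\hat b(1-x)\,\hat b(1+x)$ reducing smoothness to the classical $C^\infty$ function $\hat b$, the same chain-rule scaling identity with $C_{m,d} = \|b\|_{C^m(\bbR^d)}$ for part (c), and the same pointwise lower bound on $\tilde b$ for part (d). You merely fill in a few elementary computations (the exponent bound $4 - 2/(1-x^2) \geq 4/3$ and the case split for $\delta^{-|\alpha|_1} \leq \max\{1,\delta^{-m}\}$) that the paper leaves as "easy to verify."
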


The following lemma is useful to bound the number of bump functions that we can hide in a domain:

\begin{lemma} \label{lemma:subcubes}
For $k \in \bbN_{\geq 1}$, a third-slice $\tilde \calX \equalDef [0, 1/3] \times [0, 1]^{d-1}$ of the cube $\calX$ contains at least $k$ disjoint open cubes $B_\infty(z_1, r_k), \hdots, B_\infty(z_k, r_k)$ with radius
\begin{IEEEeqnarray*}{+rCl+x*}
r_k = \frac{k^{-1/d}}{12}~.
\end{IEEEeqnarray*}
\end{lemma}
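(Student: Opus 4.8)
The plan is a routine grid-packing argument. First I would record that each set $B_\infty(z, r_k)$ is an axis-aligned open cube of side length $s \equalDef 2r_k = k^{-1/d}/6$. I then tile the slab $\tilde\calX = [0, 1/3] \times [0, 1]^{d-1}$ by a regular grid of open cubes with spacing $s$: along the first coordinate, the interval $[0, 1/3]$ contains $n_1 \equalDef \lfloor (1/3)/s \rfloor = \lfloor 2k^{1/d} \rfloor$ disjoint translates of an open interval of length $s$, and along each of the remaining $d - 1$ coordinates, $[0, 1]$ contains $n_j \equalDef \lfloor 1/s \rfloor = \lfloor 6k^{1/d} \rfloor$ of them. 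Taking the Cartesian product produces $\prod_{j=1}^d n_j$ pairwise disjoint axis-aligned open cubes of side length $s$ inside $\tilde\calX$; each of them equals $B_\infty(z, r_k)$ for its center $z$, so these are the candidates for the cubes in the statement.

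Second, I would check that this count is at least $k$. Writing $t \equalDef k^{1/d} \geq 1$ (using $k \geq 1$), we have $\lfloor 2t \rfloor > 2t - 1 \geq t$ and likewise $\lfloor 6t \rfloor > 6t - 1 \geq t$, hence
\[
\prod_{j=1}^d n_j \;=\; \lfloor 2t \rfloor \, \lfloor 6t \rfloor^{d-1} \;\geq\; t^d \;=\; k~.
\]
Discarding any surplus cubes then yields the required $k$ disjoint open cubes $B_\infty(z_1, r_k), \dots, B_\infty(z_k, r_k) \subseteq \tilde\calX$ of radius $r_k = k^{-1/d}/12$.

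The only points needing (routine) care are disjointness, containment, and the choice of constant. Disjointness holds because two distinct grid cells differ in some coordinate, and there their defining open intervals are disjoint; containment holds because $n_1 s \leq 1/3$ and $n_j s \leq 1$ by definition of the floor, so the grid fits inside $\tilde\calX$ (and an open cube contained in the interior of the slab is contained in the closed slab). The constant $12$ in $r_k$ (equivalently $s = k^{-1/d}/6$) is precisely what makes the estimate $\lfloor 2k^{1/d} \rfloor \geq k^{1/d}$ valid for every $k \geq 1$, including the boundary case $k = 1$ (where $2k^{1/d} = 2$ and $\lfloor 2 \rfloor = 2 \geq 1$). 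I do not expect a genuine obstacle here: this lemma is purely a counting/volume bookkeeping step, used later to bound how many disjoint bump functions can be hidden in a slice of the cube.
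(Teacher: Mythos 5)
Your proof is correct and follows essentially the same grid-packing argument as the paper: the paper tiles the slab with a regular grid of $N\cdot(3N)^{d-1}\geq k$ cubes of side $1/(3N)$ for $N=\lceil k^{1/d}\rceil$ and then checks that their radius $1/(6N)$ is at least $r_k$, whereas you fix the side length at $2r_k$ from the start and count via floors; the two differ only in bookkeeping. Your verification of the count for the boundary case $t=k^{1/d}=1$ and of disjointness/containment is sound.
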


\begin{proof}
Choose $N \equalDef \lceil k^{1/d}\rceil$. We can divide $\tilde \calX$ into $N \cdot (3N)^{d-1} \geq N^d \geq k$ cubes of side length $(3N)^{-1}$ and radius
\begin{IEEEeqnarray*}{+rCl+x*}
r & = & \frac{1}{6N} \geq \frac{1}{6(k^{1/d}+1)} \geq \frac{1}{12k^{1/d}} = \frac{k^{-1/d}}{12} = r_k~. & \qedhere
\end{IEEEeqnarray*} 
\end{proof}

\subsection{Deterministic Evaluation Points} \label{sec:appendix:deterministic_points}

We first adapt some results from \cite{novak_deterministic_1988} to our setting.

\thmNovak*

\begin{proof}
\textbf{Step 1: Upper bounds.} For $S \in \{\Sapp, \Sopts, \Sint\}$, \cite{novak_deterministic_1988} states upper bounds of the form $O_{m, d}(n^{-m/d})$ for bounded classes of functions in the Sobolev space $W^{m, d}_\infty$, which contain $\calF_{d, m, B_{m, d}}$ for some $B_{m, d} > 0$ (see Section 1.3.11 and 1.3.12 in \cite{novak_deterministic_1988}). Hence, for the corresponding metric $D$, we have
\begin{IEEEeqnarray*}{+rCl+x*}
e_n(\calF_{d, m, B_{m, d}}, S, D) \leq O_{m, d}(n^{-m/d})~.
\end{IEEEeqnarray*}
For another value of $B$, we can then take a near-optimal $\tilde S \in \calA_n$ for $\calF_{d, m, B_{m, d}}$ and define
\begin{IEEEeqnarray*}{+rCl+x*}
\hat{\tilde S}(f) \equalDef \frac{B}{B_{m, d}} \tilde S\left(\frac{B_{m, d}}{B} f\right)~,
\end{IEEEeqnarray*}
and by positive homogeneity of $S$ and $D$, this then achieves the rate $O_{m, d}(Bn^{-m/d})$.

\textbf{Step 2: Lower bounds.} For lower bounds, it is again sufficient to consider $\calF_{d, m, B_{m, d}}$ for a single $B_{m, d} > 0$. \cite{novak_deterministic_1988} uses bump functions created by rescaling and shifting the template bump function
\begin{IEEEeqnarray*}{+rCl+x*}
\Phi(x) = \begin{cases}
a \prod_{i=1}^d (1 - x_i^2)^m &, x \in [-1, 1]^d \\
0 &, \text{ otherwise}
\end{cases} 
\end{IEEEeqnarray*}
for some appropriate constant $a>0$. This function is in $W^{m, d}_\infty$ but not all of its weak $m$-th derivatives are continuous. Hence, the constructed counterexamples do not directly apply to $\calF_{d, m, B_{m, d}}$. However, it is possible to replace $\Phi$ by the $C^\infty$ bump function $b$ from \Cref{def:bump_functions} since the norms of the derivatives behave in the same fashion for scaled and shifted versions of $b$, as shown in \Cref{lemma:bump_functions}. Hence, the same lower bounds still apply to $\calF_{d, m, B_{m, d}}$. 
\end{proof}

We can now turn to our upper bounds through approximation:

\propAppUpperBounds*

\begin{proof}
Since $L_f$ and $P_f$ are not influenced by changing $f$ on null sets, we will ignore exceptional null sets in the essential supremum in the definition of $\|\cdot\|_\infty$ in the following.
\begin{enumerate}[(a)]
\item We have
\begin{IEEEeqnarray*}{+rCl+x*}
L_g & = & \log \int_\calX e^{g(x)} \diff x  \\
& \leq & \log \int_\calX e^{f(x) + \|f - g\|_\infty} \diff x = \log \left(e^{\|f - g\|_\infty} \cdot \int_\calX e^{f(x)} \diff x\right) \\
& = & \left(\log \int_\calX e^{f(x)} \diff x\right) + \|f - g\|_\infty = L_f + \|f - g\|_\infty~,
\end{IEEEeqnarray*}
and the other inequality follows analogously.
\item We have $p_f(x) = \exp(f(x) - L_f)$ and $p_g(x) = \exp(g(x) - L_g)$, hence
\begin{IEEEeqnarray*}{+rCl+x*}
\Dsuplog(P_f, P_g) & = & \left\|\log\left(\frac{p_f}{p_g}\right)\right\|_{\infty} = \|(f - L_f) - (g - L_g)\|_\infty \\
& \leq & \|f - g\|_\infty + |L_f - L_g| \\
& \stackrel{\text{(a)}}{\leq} & 2\|f - g\|_\infty~.
\end{IEEEeqnarray*}
Let $\bar f \equalDef f - L_f$. By a well-known property of the TV distance \citep[see e.g.\ Lemma 2.1 in][]{tsybakov_introduction_2009},
\begin{IEEEeqnarray*}{+rCl+x*}
\DTV(P_f, P_g) & = & D_{\mathrm{TV}}(P_{\bar f}, P_{\bar g}) = \frac{1}{2} \int_{\calX} |e^{\bar f(x)} - e^{\bar g(x)}| \diff x~.
\end{IEEEeqnarray*}
Now, consider a fixed $x \in \calX$. Without loss of generality, assume $\bar f(x) \leq \bar g(x)$. Then, 
\begin{IEEEeqnarray*}{+rCl+x*}
e^{\bar g(x) - \|\bar f - \bar g\|_\infty} \leq e^{\bar f(x)} \leq e^{\bar g(x)}~,
\end{IEEEeqnarray*}
which yields
\begin{IEEEeqnarray*}{+rCl+x*}
|e^{\bar f(x)} - e^{\bar g(x)}| & \leq & (1 - e^{-\|\bar f - \bar g\|_\infty}) e^{\bar g(x)} \leq (1 - e^{-\|\bar f - \bar g\|_\infty}) (e^{\bar f(x)} + e^{\bar g(x)}) \\
& \leq & \|\bar f - \bar g\|_\infty (e^{\bar f(x)} + e^{\bar g(x)})~.
\end{IEEEeqnarray*}
Therefore,
\begin{IEEEeqnarray*}{+rCl+x*}
D_{\mathrm{TV}}(P_f, P_g) & \leq & \frac{1}{2} \int_{\calX} \|\bar f - \bar g\|_\infty (e^{\bar f(x)} + e^{\bar g(x)}) \diff \mu(x) = \|\bar f - \bar g\|_\infty = \Dsuplog(P_f, P_g)~.
\end{IEEEeqnarray*}
The bound $\DW(P_f, P_g) \leq \diam(\calX)\DTV(P_f, P_g) = d^{1/2}\DTV(P_f, P_g)$ for the 1-Wasser\-stein distance, where $\diam(\calX)$ is the diameter of $\calX$, is well-known \citep[see e.g.][]{gibbs_choosing_2002}. \qedhere
\end{enumerate}
\end{proof}

The following technical lemmas will be used for the lower bound afterward.

\begin{lemma} \label{lemma:helper_fraction}
Let $a, b > 0$. Then,
\begin{IEEEeqnarray*}{+rCl+x*}
\frac{a}{a+b} \geq \frac{1}{2} \min\left\{1, \frac{a}{b}\right\}~.
\end{IEEEeqnarray*}

\begin{proof}
If $a \leq b$, we have
\begin{IEEEeqnarray*}{+rCl+x*}
\frac{a}{a+b} \geq \frac{a}{2b} \geq \frac{1}{2} \min\left\{1, \frac{a}{b}\right\}~.
\end{IEEEeqnarray*}
Similarly, if $a \geq b$, we have
\begin{IEEEeqnarray*}{+rCl+x*}
\frac{a}{a+b} & \geq & \frac{a}{2a} = \frac{1}{2} \geq \frac{1}{2} \min\left\{1, \frac{a}{b}\right\}~. & \qedhere
\end{IEEEeqnarray*}
\end{proof}
\end{lemma}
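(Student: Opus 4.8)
The plan is to split into the two cases $a \le b$ and $a \ge b$, according to which of the two terms realizes the minimum on the right-hand side, and in each case to reduce the desired inequality to a triviality. Since $a, b > 0$, one of these cases always holds, so the case split is exhaustive.

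First I would treat the case $a \le b$. Here $\min\{1, a/b\} = a/b$, so the claim reads $\frac{a}{a+b} \ge \frac{a}{2b}$; cancelling the positive factor $a$ and clearing denominators, this is equivalent to $a + b \le 2b$, i.e.\ $a \le b$, which is exactly the case hypothesis.

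Next I would treat the case $a \ge b$. Now $\min\{1, a/b\} = 1$, so the claim reads $\frac{a}{a+b} \ge \frac{1}{2}$; multiplying through by the positive quantity $2(a+b)$ turns this into $2a \ge a + b$, i.e.\ $a \ge b$, again exactly the case hypothesis. Combining the two cases finishes the argument.

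The hard part will be essentially nonexistent: the only thing to get right is choosing the case split on which argument of $\min$ is active, and each case is then a single line of algebra. (Alternatively one can write $\frac{a}{a+b} = \frac{1}{1 + b/a}$ and use $1 + b/a \le \max\{2, 2b/a\}$ to get $\frac{a}{a+b} \ge \frac{1}{\max\{2, 2b/a\}} = \frac{1}{2}\min\{1, a/b\}$ in one shot, but the case analysis is the cleanest route and matches how the bound will be invoked later.)
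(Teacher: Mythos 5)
Your proof is correct and follows essentially the same route as the paper: the same case split on $a\le b$ versus $a\ge b$, with each case reducing to the bound $a/(a+b)\ge a/(2b)$ or $a/(a+b)\ge 1/2$ respectively. No issues.
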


\begin{lemma} \label{lemma:log_function}
Let $c \in (0, 1]$. Then, the function
\begin{IEEEeqnarray*}{+rCl+x*}
h: [0, \infty) \to \bbR, x \mapsto \log(1 + c(e^x - 1))
\end{IEEEeqnarray*}
satisfies $h(x) \geq cx$ for all $x \geq 0$.
\end{lemma}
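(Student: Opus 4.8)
The plan is to recognize the right-hand side $cx$ as the chord of a concave function in the variable $c$. Fix $x \ge 0$ and consider the map $\phi \colon [0,1] \to \bbR$, $\phi(c) \equalDef \log(1 + c(e^x - 1))$, so that $h(x) = \phi(c)$. First I would check the endpoints: $\phi(0) = \log 1 = 0$ and $\phi(1) = \log(e^x) = x$. Next I would verify concavity of $\phi$ by computing
\[
\phi''(c) = -\frac{(e^x-1)^2}{\bigl(1 + c(e^x-1)\bigr)^2} \le 0,
\]
where the denominator is positive because $1 + c(e^x-1) = (1-c) + ce^x > 0$ for $c \in [0,1]$. Writing $c = (1-c)\cdot 0 + c \cdot 1 \in [0,1]$, concavity then gives
\[
\phi(c) \ge (1-c)\,\phi(0) + c\,\phi(1) = cx,
\]
which is exactly the claim.

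Alternatively — essentially the same computation organized differently — I would set $g(x) \equalDef h(x) - cx$, note $g(0) = 0$, and compute
\[
g'(x) = \frac{ce^x}{1 + c(e^x-1)} - c = c(1-c)\,\frac{e^x - 1}{1 + c(e^x-1)} \ge 0 \qquad (x \ge 0),
\]
using $c \in (0,1]$ (so $1-c \ge 0$) and $e^x - 1 \ge 0$; hence $g$ is nondecreasing on $[0,\infty)$ and $g(x) \ge g(0) = 0$.

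There is no genuine obstacle here; the only points requiring mild care are that the argument of the logarithm stays positive (handled by $1 + c(e^x-1) = (1-c) + ce^x > 0$) and that the sign conditions $1 - c \ge 0$ and $e^x - 1 \ge 0$ are invoked in the right places. I would present the concavity argument, since it is the shortest and makes transparent why the inequality degenerates to equality at $c = 1$.
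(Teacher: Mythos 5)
Your proposal is correct. Your preferred argument---viewing $cx$ as the chord of the concave function $c \mapsto \log(1+c(e^x-1))$ between $c=0$ and $c=1$---is a genuinely different route from the paper's proof: the paper instead fixes $c$ and works in the variable $x$, showing $h'(x) = \frac{c}{c+(1-c)e^{-x}} \ge c$ and integrating from $h(0)=0$. Your concavity argument is arguably cleaner in that it requires no integration and makes the equality case $c=1$ transparent, at the cost of a second-derivative computation in an auxiliary variable; the paper's argument is a one-line lower bound on $h'$ followed by the fundamental theorem of calculus. Your ``alternative'' derivative computation $g'(x) = c(1-c)\frac{e^x-1}{1+c(e^x-1)} \ge 0$ is essentially the paper's proof in disguise (it is algebraically equivalent to the paper's bound $h'(x)\ge c$), so either presentation would serve. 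All the sign and positivity checks you flag are handled correctly.
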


\begin{proof}
For all $x \geq 0$, we have
\begin{IEEEeqnarray*}{+rCl+x*}
h'(x) & = & \frac{ce^x}{1 + c(e^x - 1)} = \frac{c}{c + (1-c)e^{-x}} \geq \frac{c}{c + (1-c)} = c~.
\end{IEEEeqnarray*}
Therefore,
\begin{IEEEeqnarray*}{+rCl+x*}
h(x) & = & h(0) + \int_0^x h(u) \diff u \geq \int_0^x c \diff u = cx~. & \qedhere
\end{IEEEeqnarray*}

\end{proof}

Now, we are ready to prove the exact minimax optimal rates. The main technical difficulty is that for the lower bound in the 1-Wasserstein distance, we need to hide many bumps that are far apart, and we need to bound the resulting Wasserstein distance.

\thmDeterministicPointsRates*

\begin{proof}
\textbf{Step 0: Upper bounds.} We know from \Cref{thm:novak} that the rate $O_{m, d}(Bn^{-m/d})$ can be achieved for approximation with non-adaptive deterministic evaluation points, and we know from \Cref{prop:app_bounds} that this rate can therefore also be achieved for the log-partition problem and the sampling problem with $\Dsuplog$, $\DTV$, and $\DW$. Moreover, since $\DTV(P, Q) \leq 1$ for all distributions $P, Q$, we obtain an upper bound of $O_{m, d}(\max\{1, Bn^{-m/d}\})$ for $\DTV$. Similarly, since $\calX$ has diameter $d^{1/2}$, $\DW$ is upper bounded by $d^{1/2} = O_{m, d}(1)$, and hence we also obtain an upper bound of $O_{m, d}(\max\{1, Bn^{-m/d}\})$ for $\DW$. The upper bounds also hold for the adaptive setting since it is more permissive.

In the following, we will derive matching asymptotic lower bounds for the adaptive setting, which then also hold for the non-adaptive setting. To this end, let $\tilde S \in \Aad_n$ for the log-partition or sampling problem on the function class $\calF_{d, m, B}$.

\textbf{Step 1: Defining grids in the cube.} We can cut the cube $\calX$ along one axis into three equally shaped slices $\calC_0, \calC_1, \calC_2$:
\begin{IEEEeqnarray*}{+rCl+x*}
\calC_k \equalDef [k/3, (k+1)/3] \times [0, 1]^{d-1}, \qquad k \in \{0, 1, 2\}~.
\end{IEEEeqnarray*}
Then, by \Cref{lemma:subcubes}, we can find a finite set of points $\calG_k \subseteq \calC_k$ with $|\calG_k| = 2n$ such that the open cubes $B_\infty(x, \delta_n)$ for $x \in \calG_k$ and radius
\begin{IEEEeqnarray*}{+rCl+x*}
\delta_n = \frac{(2n)^{-1/d}}{12} \geq \frac{n^{-1/d}}{24}
\end{IEEEeqnarray*}
are contained in $\calC_k$ and disjoint.

\textbf{Step 2: Removing points close to queried points.} Let $\calX_n$ denote the $\leq n$ points where $\tilde S$ queries the zero function $\tilde f(x) = 0$.
For fixed $k \in \{0, 2\}$, the $2n$ cubes $(B_\infty(x, \delta_n))_{x \in \calG_k}$ are disjoint. Hence, there must be a subset $\tilde \calG_k \subseteq \calG_k$ containing $n$ points whose corresponding cubes do not contain any point from $\calX_n$.

\textbf{Step 3: Two different functions.} Now, for $k \in \{0, 2\}$ and $C_{m, d}$ as in \Cref{lemma:bump_functions}, define the functions
\begin{IEEEeqnarray*}{+rCl+x*}
f_k(x) \equalDef BC_{m, d}^{-1} \delta_n^m \sum_{z \in \tilde\calG_k} b_{z, \delta_n}(x)~.
\end{IEEEeqnarray*}
We have $\delta_n \leq 1$ and hence $\|b_{z, \delta_n}\|_{C^m} \leq C_{m, d} \delta_n^{-m}$ by \Cref{lemma:bump_functions}. Because the support of the bump functions does not overlap, we have $\|f_k\|_{C^m} \leq B$ by \Cref{lemma:bump_functions} and hence $f_k \in \calF_{d, m, B}$. By the construction of $\tilde\calG_k$, $f_0$ and $f_2$ are zero on $\calX_n$. Hence, even an adaptive $\tilde S$ must also query $f_k$ at the points in $\calX_n$, and since both are equal at those points, we must have
\begin{IEEEeqnarray*}{+rCl+x*}
\tilde S(f_0) = \tilde S(f_2)~.
\end{IEEEeqnarray*}

\textbf{Step 4: Wasserstein distance of both functions.} Because $f_0$ and $f_2$ use the same number of equally wide bump functions whose support is fully contained in $\calX$, we have
\begin{IEEEeqnarray*}{+rCl+x*}
L_{f_0} = L_{f_2}~. \IEEEyesnumber \label{eq:equal_integral}
\end{IEEEeqnarray*} 
To lower-bound the $1$-Wasserstein distance, we use its dual formulation and choose the $1$-Lipschitz function $\varphi(x) \equalDef x_1 - 1/3$. This yields
\begin{IEEEeqnarray*}{+rCl+x*}
W_1(P_{f_0}, P_{f_2}) & \geq & \bbE_{x \sim P_{f_2}} \varphi(x) - \bbE_{x \sim P_{f_0}} \varphi(x) = \int_{\calX} \varphi(x) (e^{\ovl f_2(x)} - e^{\ovl f_0(x)}) \diff x \\
& \stackrel{\text{\Eqref{eq:equal_integral}}}{=} & e^{-L_{f_0}} \int_{\calX} \varphi(x) (e^{f_2(x)} - e^{f_0(x)}) \diff x~. \IEEEyesnumber \label{eq:wasserstein_dual_lower}
\end{IEEEeqnarray*}

\textbf{Step 5: Lower-bounding the normalization constant.} We first define the \quot{bump integral}
\begin{IEEEeqnarray*}{+rCl+x*}
I_n \equalDef \int_{B_\infty(z, \delta_n)} (e^{BC_{m, d}^{-1} \delta_n^m b_{z, \delta_n}(x)} - 1) \diff x~,
\end{IEEEeqnarray*}
which is independent of $z$. Then, we have
\begin{IEEEeqnarray*}{+rCl+x*}
I_n & \stackrel{\text{\Cref{lemma:bump_functions}}}{\geq} & \int_{B_\infty(z, \delta_n/2)} (e^{BC_{m, d}^{-1} \delta_n^m} - 1) \diff x \\
& = & \delta_n^d (e^{BC_{m, d}^{-1} \delta_n^m} - 1)~. \IEEEyesnumber \label{eq:In_lower}
\end{IEEEeqnarray*}
We then obtain
\begin{IEEEeqnarray*}{+rCl+x*}
e^{L_{f_0}} & = & \int_{\calX} e^0 \diff x + \sum_{z \in \tilde\calG_0} \int_{B_\infty(z, \delta_n)} (e^{BC_{m, d}^{-1} \delta_n^m b_{z, \delta_n}(x)} - e^0) \diff x \\
& = & 1 + n I_n~. \IEEEyesnumber\label{eq:normalization_bound}
\end{IEEEeqnarray*}

\textbf{Step 6: Lower-bounding the integral.} By construction of the functions $\varphi$, $f_0$, and $f_2$, we know that
\begin{IEEEeqnarray*}{+rCl+x*}
\int_{\calX} \varphi(x) (e^{f_2(x)} - e^{f_0(x)}) \diff x & \geq & \int_{\calC_2} \varphi(x) (e^{f_2(x)} - e^{f_0(x)}) \diff x~. \IEEEyesnumber \label{eq:integral_cube_thirds}
\end{IEEEeqnarray*}
Using $\varphi(x) \geq 1/3$ and $f_2(x) \geq f_0(x)$ for $x \in \calC_2$, we can lower-bound the latter integral as
\begin{IEEEeqnarray*}{+rCl+x*}
\int_{\calC_2} \varphi(x) (e^{f_2(x)} - e^{f_0(x)}) \diff x & \geq & \sum_{z \in \tilde\calG_2} \int_{B_\infty(z, \delta_n)} \frac{1}{3} (e^{BC_{m, d}^{-1} \delta_n^m b_{z, \delta_n}(x)} - 1) \diff x \\
& = & \frac{1}{3} nI_n~.
\end{IEEEeqnarray*}

\textbf{Step 7: Wasserstein distance lower bound.}
By combining the previous lower bounds with Equations \eqref{eq:integral_cube_thirds}, \eqref{eq:normalization_bound}, and \eqref{eq:wasserstein_dual_lower}, we arrive at
\begin{IEEEeqnarray*}{+rCl+x*}
W_1(P_{f_0}, P_{f_2}) \geq \frac{(1/3)nI_n}{1 + nI_n} = \frac{1}{3} \frac{I_n}{I_n + n^{-1}}~.
\end{IEEEeqnarray*}
We can then apply \Cref{lemma:helper_fraction} and \Eqref{eq:In_lower} to obtain, for a suitable constant $c_{m, d} > 0$,
\begin{IEEEeqnarray*}{+rCl+x*}
W_1(P_{f_0}, P_{f_2}) & \geq & \frac{1}{6} \min\left\{1, \frac{I_n}{n^{-1}}\right\} \\
& \geq & \frac{1}{6} \min\left\{1, \frac{\delta_n^d (e^{BC_{m, d}^{-1} \delta_n^m} - 1)}{n^{-1}}\right\} \\
& \geq & \frac{1}{6} \min\left\{1, n \delta_n^d BC_{m, d}^{-1} \delta_n^m\right\} \\
& \geq & \frac{1}{6} \min\left\{1, c_{m, d} B n^{-m/d}\right\}~.
\end{IEEEeqnarray*}

\textbf{Step 8: Wasserstein minimax rate lower bound.} Suppose that we are considering the sampling problem. As argued before, we have $\tilde S(f_0) = \tilde S(f_2)$. Hence, by an application of the triangle inequality, we must have $k \in \{0, 2\}$ such that
\begin{IEEEeqnarray*}{+rCl+x*}
W_1(P_{f_k}, \tilde S(f_k)) & \geq & \frac{1}{12} \min\left\{1, c_{m, d} B n^{-m/d}\right\}~.
\end{IEEEeqnarray*}
The Wasserstein minimax lower bound then follows by setting $f \equalDef f_k$.

\textbf{Step 9: TV distance minimax lower bound.} Since 
\begin{IEEEeqnarray*}{+rCl+x*}
\DTV(P_f, \tilde S(f)) \geq d^{-1/2} \DW(P_f, \tilde S(f))
\end{IEEEeqnarray*}
\citep[see e.g.][]{gibbs_choosing_2002}, we obtain the same asymptotic lower bound for the TV distance.

\textbf{Step 10: Sup-log minimax lower bound.} We have
\begin{IEEEeqnarray*}{+rCl+x*}
\Dsuplog(P_{f_0}, P_{f_2}) & = & \|(f_0 - L_{f_0}) - (f_2 - L_{f_2})\|_\infty = \|f_0 - f_2\|_\infty \\
& \geq & \Omega_{m, d}(BC_{m, d}^{-1} \delta_n^m) = \Omega_{m, d}(Bn^{-m/d})~.
\end{IEEEeqnarray*}
Since $\tilde S(f_0) = \tilde S(f_2)$, by the triangle inequality, there must hence exist $k \in \{0, 2\}$ such that
\begin{IEEEeqnarray*}{+rCl+x*}
\Dsuplog(P_{f_k}, \tilde S(f_k)) \geq \Omega_{m, d}(Bn^{-m/d})~.
\end{IEEEeqnarray*}

\textbf{Step 11: Log-partition minimax lower bound.} Suppose that we instead consider the log-partition problem. Setting $c_d = 24^{-d}$, we obtain
\begin{IEEEeqnarray*}{+rCl+x*}
L_{f_2} & = & \log(1 + nI_n) \geq \log(1 + n\delta_n^d (e^{BC_{m, d}^{-1}\delta_n^m} - 1)) \\
& \geq & \log\left(1 + c_d (e^{BC_{m, d}^{-1} \delta_n^m} - 1)\right) \\
& \stackrel{\text{\Cref{lemma:log_function}}}{\geq} & c_d BC_{m, d}^{-1} \delta_n^m \\
& \geq & \Omega_{m, d}(Bn^{-m/d})~.
\end{IEEEeqnarray*}
Since $\tilde S$ cannot distinguish the zero function $f \equiv 0$ and $f_2$, we must have
\begin{IEEEeqnarray*}{+rCl+x*}
\max \{|L_f - \tilde S(f)|, |L_{f_2} - \tilde S(f_2)|\} & \geq & \Omega_{m, d}(Bn^{-m/d})~. & \qedhere
\end{IEEEeqnarray*}
\end{proof}

\subsection{Stochastic Evaluation Points} \label{sec:appendix:stochastic_points}

Again, we first adapt some related results from \cite{novak_deterministic_1988} to our setting.

\thmNovakStoch*

\begin{proof}
Analogous to the proof of \Cref{thm:novak} in \Cref{sec:appendix:deterministic_points}, this can be shown using the positive homogeneity of $S \in \{\Sapp, \Sopts, \Sint\}$ and $D \in \{\Dinfty, \Dabs\}$, and by replacing the bump functions in the lower bound by the $C^\infty$ bump functions from \Cref{def:bump_functions}.
\end{proof}

We now prove our upper bound for log-partition estimation with stochastic evaluation points through approximation and importance sampling:

\thmUpperStochLog*

\begin{proof}
The bound $O_{m, d}(Bn^{-m/d})$ can be achieved even through methods with deterministic evaluation points, as proven in \Cref{thm:deterministic_points_rates}, hence we only need to show the other bound. Since the first bound is always better for $n=1$, we can, in the following, assume $n \geq 2$. 

Let $\tilde S_n \in \calA_n$ be a sequence of methods for which the worst-case errors
\begin{IEEEeqnarray*}{+rCl+x*}
e_n \equalDef \sup_{f \in \calF_{d, m, B}} \Dinfty(\Sapp(f), \tilde S_n(f))
\end{IEEEeqnarray*}
achieve the optimal rate $O_{m, d}(Bn^{-m/d})$ for the approximation problem on $\calF_{d, m, B}$.

Set $N \equalDef \lfloor n/2\rfloor$, such that $N = \Omega(n)$ (since we assumed $n \geq 2$) and $2N \leq n$. Set $g \equalDef \tilde S_N(f)$. For $N$ i.i.d.\ random variables $X_1, \hdots, X_N \sim P_g$, set
\begin{IEEEeqnarray*}{+rCl+x*}
\mu_N & \equalDef & \frac{1}{N} \sum_{i=1}^N \exp(f(X_i) - g(X_i)) \\
\tilde S_L(f) & \equalDef & L_g + \log \mu_N~.
\end{IEEEeqnarray*}
Then, $\tilde S_L$ only uses $2N \leq n$ function evalutaions of $f$, hence $\tilde S_L \in \sC(\Aad_n)$. 

Since $\|f - g\|_\infty \leq e_N$ and since $\exp$ is $\exp(e_N)$-Lipschitz on $(-\infty, e_N)$, we have
\begin{IEEEeqnarray*}{+rCl+x*}
1 - e_N \leq \exp(-e_N) \leq \exp(f(X_i) - g(X_i)) \leq \exp(e_N) \leq 1 + e_N\exp(e_N)~.
\end{IEEEeqnarray*}
Hence, $|\exp(f(X_i) - g(X_i)) - \bbE \mu_N| \leq (\exp(e_N)+1)e_N$, which implies
\begin{IEEEeqnarray*}{+rCl+x*}
\Var \mu_N \leq N^{-1} ((\exp(e_N)+1)e_N)^2~.
\end{IEEEeqnarray*}
Additionally, 
\begin{IEEEeqnarray*}{+rCl+x*}
\log \bbE \mu_N = \log \int_{\calX} e^{f(x) - g(x)} e^{g(x) - L_g} \diff x = L_f - L_g~.
\end{IEEEeqnarray*}
Moreover, we have $e^{f(X_i) - g(X_i)} \in [\exp(-e_N), \exp(e_N)]$ and hence $\mu_N \in [\exp(-e_N), \exp(e_N)]$.
Since $\log$ is $\exp(e_N)$-Lipschitz on $[\exp(-e_N), \exp(e_N)]$, we obtain
\begin{IEEEeqnarray*}{+rCl+x*}
\bbE |L_f - \tilde S_L(f)| & = & \bbE |\log (\bbE \mu_N) - \log(\mu_N)| \leq \bbE \exp(e_N)|\mu_N - \bbE \mu_N| \\
& \leq & \exp(e_N)\sqrt{\bbE[(\mu_N - \bbE \mu_N)^2]} = \exp(e_N)\sqrt{\Var \mu_N} \\
& \leq & N^{-1/2} e_N\exp(e_N)(\exp(e_N)+1) \leq 2\exp(2e_N) N^{-1/2} e_N \\
& \leq & O_{m, d}(\exp(C_{m, d}Bn^{-m/d}) Bn^{-1/2-m/d})
\end{IEEEeqnarray*}
for a suitable constant $C_{m, d} > 0$.
\end{proof}

In the optimization regime, we can directly exploit the relation to optimization to get a lower bound:

\propLowerStochLog*

\begin{proof}
Take any stochastic log-partition method $\tilde S \in \sC(\Aad_n)$. We can also interpret this as a stochastic optimization method. Hence, we know from an adaptation of the corresponding lower bound by \cite{novak_deterministic_1988} that there exists a constant $c_{m, d} > 0$ and a function $f \in \calF_{d, m, B}$ such that $\bbE \Dabs(\tilde S(f), \Sopts(f)) \geq c_{m, d} Bn^{-m/d}$. But then, using $|f|_1 \leq d^{1/2} \|f\|_{C^1} \leq d^{1/2} B$ from \Cref{lemma:lipschitz_constant}, we obtain
\begin{IEEEeqnarray*}{+rCl+x*}
\bbE |\tilde S(f) - L_f| & \geq & \bbE |\tilde S(f) - M_f| - |M_f - L_f| \\
& = & \bbE \Dabs(\tilde S(f), \Sopts(f)) - |M_f - L_f| \\
& \stackrel{\text{\Cref{lemma:lipschitz_maximization_bound}}}{\geq} & c_{m, d} B n^{-m/d} - d\log(1+3B)~. & \qedhere
\end{IEEEeqnarray*}
\end{proof}

The following lemma will be useful to obtain a bound for rejection sampling in the sup-log distance:

\begin{lemma} \label{lemma:rejection_sampling_function}
Let $p \in [0, 1]$ and $c \geq 0$. Then, for any $a \in [-c, c]$, we have
\begin{IEEEeqnarray*}{+rCl+x*}
|\log(1 + p(e^a - 1))| \leq \min\{c, p(e^c - 1)\}~.
\end{IEEEeqnarray*}
\end{lemma}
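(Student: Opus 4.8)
The plan is to write $q(a) := 1 + p(e^a - 1)$ and $\phi(a) := \log q(a)$, and to show separately that $|\phi(a)| \le c$ and $|\phi(a)| \le p(e^c - 1)$ for all $a \in [-c, c]$; taking the minimum then gives the claim. First I would record that $q(a) > 0$ so that $\phi$ is well-defined: for $a \ge 0$ one has $e^a - 1 \ge 0$, hence $q(a) \ge 1$, while for $a < 0$ one has $p(e^a - 1) \in (-p, 0] \subseteq (-1, 0]$, hence $q(a) > 0$.

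For the bound $|\phi(a)| \le c$, the key observation is that for $p \in [0,1]$ the quantity $p(e^a-1)$ always lies between $0$ and $e^a - 1$. Splitting on the sign of $a$ this yields the two-sided estimate $e^{-|a|} \le q(a) \le e^{|a|}$: concretely, if $a \ge 0$ then $1 \le q(a) \le 1 + (e^a - 1) = e^a$, and if $a \le 0$ then $e^a = 1 + (e^a - 1) \le q(a) \le 1$. Taking logarithms gives $|\phi(a)| \le |a| \le c$.

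For the bound $|\phi(a)| \le p(e^c - 1)$ I would distinguish the two cases $a \ge 0$ and $a < 0$. If $a \ge 0$, then $q(a) \ge 1$ so $\phi(a) \ge 0$, and using $\log(1+x) \le x$ together with monotonicity in $a$, $\phi(a) = \log\bigl(1 + p(e^a - 1)\bigr) \le p(e^a - 1) \le p(e^c - 1)$. If $a < 0$, then $q(a) \le 1$ so $\phi(a) \le 0$, and I would estimate the reciprocal: from the identity $1/q(a) = 1 + p(1 - e^a)/q(a)$, the inequality $q(a) \ge e^a$ established above, and $1 - e^a \ge 0$, one obtains $1/q(a) \le 1 + p(1 - e^a)/e^a = 1 + p(e^{-a} - 1)$; hence $-\phi(a) = \log\bigl(1/q(a)\bigr) \le \log\bigl(1 + p(e^{-a} - 1)\bigr) \le p(e^{-a} - 1) \le p(e^c - 1)$ since $0 \le -a \le c$. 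Combining the two displayed bounds finishes the proof. The only mildly delicate step — the main obstacle, such as it is — is the case $a < 0$ of the $p(e^c - 1)$ bound, where one must pass through $1/q(a)$ and invoke $q(a) \ge e^a$; every other estimate is an immediate consequence of $\log(1+x) \le x$ and $p \le 1$.
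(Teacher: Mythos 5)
Your proof is correct, and its overall structure — proving the two bounds $|\log(1+p(e^a-1))|\le c$ and $|\log(1+p(e^a-1))|\le p(e^c-1)$ separately, each by splitting on the sign, with $\log(1+x)\le x$ and monotonicity in $a$ doing most of the work — matches the paper's. The only genuine divergence is the lower bound $\log(1+p(e^a-1))\ge -p(e^c-1)$: the paper gets it by noting $1+p(e^a-1)\ge e^{-c}$ and using that $\log$ is $e^c$-Lipschitz on $[e^{-c},\infty)$, whereas you pass to the reciprocal via the identity $1/q(a)=1+p(1-e^a)/q(a)$, bound it using $q(a)\ge e^a$, and apply $\log(1+x)\le x$ a second time; both routes are equally elementary and valid.
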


\begin{proof}
For an upper bound, we use $\log(1+x) \leq x$ to obtain
\begin{IEEEeqnarray*}{+rCl+x*}
\log(1 + p(e^a - 1)) & \leq & \log(1 + p(e^c - 1)) \leq p(e^c - 1)~, \\
\log(1 + p(e^a - 1)) & \leq & \log(1 + (e^c - 1)) = c~.
\end{IEEEeqnarray*}
For lower bounds, we note that
\begin{IEEEeqnarray*}{+rCl+x*}
1 + p(e^a - 1) \geq 1 + p(e^{-c} - 1) \geq 1 + (e^{-c} - 1) = e^{-c}~.
\end{IEEEeqnarray*}
This immediately yields $\log(1 + p(e^a - 1)) \geq -c$. Moreover, because $\log$ is $e^c$-Lipschitz on $[e^{-c}, \infty)$, we have
\begin{IEEEeqnarray*}{+rCl+x*}
\log(1 + p(e^a - 1)) & \geq & \log(1 + p(e^{-c} - 1)) = \log(1 + p(e^{-c} - 1)) - \log(1) \\
& \geq & -e^c |p(e^{-c} - 1)| = -p(e^c - 1)~. & \qedhere
\end{IEEEeqnarray*}
\end{proof}

Now, we can prove upper bounds for rejection sampling:

\lemRejectionSampling*

\begin{proof}
\textbf{Step 1: Exact distribution.} We prove \Eqref{eq:rejection_distribution} via induction on $n$. For $n=0$, this is clear. Now, suppose the statement is true for $n \in \bbN_0$. Denote by $A$ the event that $\textsc{RejectionSampling}(f, g, n+1)$ accepts in the first iteration. Then, we have
\begin{IEEEeqnarray*}{+rCl+x*}
P(A) = \bbE_{x \sim P_g} \bbE_{u \sim \calU([0, 1])} \bbone[ue^{g(x)} \leq e^{f(x)}] \stackrel{f \leq g}{=} \bbE_{x \sim P_g} \frac{e^{f(x)}}{e^{g(x)}} = \int_{\calX} e^{f(x)-g(x)} \frac{e^{g(x)}}{Z_g} \diff x = \frac{Z_f}{Z_g}~.
\end{IEEEeqnarray*}
The density of $x \in \calX$ conditional on acceptance is
\begin{IEEEeqnarray*}{+rCl+x*}
p(x|A) \propto p(A|x)p(x) = e	^{f(x) - g(x)} \frac{e^{g(x)}}{Z_g} \propto p_f(x)~,
\end{IEEEeqnarray*}
hence $P(x|A) = P_f$. On the other hand, the distribution $P(x|A^c)$, i.e.\ the distribution of $x$ conditioned on non-acceptance is the distribution for $\textsc{RejectionSampling}(f, g, n)$, which we know from \Eqref{eq:rejection_distribution} by the induction hypothesis. Hence, the distribution $\tilde P_f$ for $\textsc{RejectionSampling}(f, g, n+1)$ is
\begin{IEEEeqnarray*}{+rCl+x*}
\tilde P_f & = & P(A)P(\cdot|A) + P(A^c)P(\cdot | A^c) = \frac{Z_f}{Z_g} P_f + \left(1 - \frac{Z_f}{Z_g}\right)\left(P_f + \left(1 - \frac{Z_f}{Z_g}\right)^n(P_g - P_f)\right) \\
& = & P_f + \left(1 - \frac{Z_f}{Z_g}\right)^{n+1}(P_g - P_f) = (1 - p_R)P_f + p_R P_g~.
\end{IEEEeqnarray*}
The argument above also shows that the overall rejection probability is $(1 - P(A)) (1 - Z_f/Z_g)^n = (1 - Z_f/Z_g)^{n+1}$. Moreover, the bound $(1 - Z_f/Z_g)^n \leq \exp(-nZ_f/Z_g)$ follows from $1-x \leq \exp(-x)$ for $x \geq 0$.

\textbf{Step 2: Sup-log distance.}
From step 1, we see that
\begin{IEEEeqnarray*}{+rCl+x*}
\Dsuplog(P_f, \tilde P_f) & = & \left\|\log\left((1 - p_R)e^{\bar f} + p_R e^{\bar g}\right) - \bar f\right\|_\infty \\
& = & \left\|\log\left((1 - p_R) + p_R e^{\bar g - \bar f}\right)\right\|_\infty \\
& = & \left\|\log\left(1 + p_R (e^{\bar g - \bar f} - 1)\right)\right\|_\infty \\
& \stackrel{\text{\Cref{lemma:rejection_sampling_function}}}{\leq} & \min\left\{\|\bar g - \bar f\|_\infty, p_R\left(e^{\|\bar g - \bar f\|_\infty} - 1\right)\right\} \\
& = & \min\left\{\Dsuplog(P_g, P_f), p_R(\exp(\Dsuplog(P_g, P_f)) - 1)\right\}~.
\end{IEEEeqnarray*}

\textbf{Step 3: TV distance.} Using \Eqref{eq:rejection_distribution}, we obtain for the TV distance:
\begin{IEEEeqnarray*}{+rCl+x*}
\DTV(P_f, \tilde P_f) & = & \sup_{A \subseteq \calX\text{ measurable}} |P_f(A) - \tilde P_f(A)| = \sup_{A \subseteq \calX\text{ measurable}} |p_R P_f(A) - p_R P_g(A)| \\
& = & p_R \DTV(P_f, P_g)~.
\end{IEEEeqnarray*}

\textbf{Step 4: 1-Wasserstein distance.} Using \Eqref{eq:rejection_distribution}, we obtain for the 1-Wasserstein distance:
\begin{IEEEeqnarray*}{+rCl+x*}
\DW(P_f, \tilde P_f) & = & \sup_{\varphi\text{ 1-Lipschitz}} \left(\int \varphi(x) \diff P_f(x) - \int \varphi(x) \diff \tilde P_f(x)\right) \\
& = & p_R \sup_{\varphi\text{ 1-Lipschitz}} \left(\int \varphi(x) \diff P_f(x) - \int \varphi(x) \diff P_g(x)\right)  \\
& = & p_R \DW(P_f, P_g)~. & \qedhere
\end{IEEEeqnarray*}
\end{proof}

With the upper bounds for rejection sampling proven above, we can analyze a combination of approximation and rejection sampling to prove the following upper bound:

\thmUpperStochSampling*

\begin{proof}
\textbf{Step 1: Sampling method definition.}
We consider the following sampling method:
\begin{enumerate}[(1)]
\item Use $\lfloor n/2 \rfloor$ function evaluations to create an approximation $g$ of $f$, using a near-optimal approximation method such that the worst-case sup-log error is $E_n \leq O_{m, d}(Bn^{-m/d})$.
\item Return a sample using $\textsc{RejectionSampling}(f, g+e_n, \lceil n/2\rceil)$.
\end{enumerate}
For step (1), we note that we have $\lfloor n/2 \rfloor \geq \Omega(n)$ except if $n=1$. However, in the case $n=1$, we can use the approximation $g=0$ with worst-case error $E_n = B \leq O_{m, d}(Bn^{-m/d})$. Thus, it is indeed possible to achieve the bound in step (1).

\textbf{Step 2: Upper bound.} Denote by $C_{m, d} > 0$ a constant such that $E_n \leq C_{m, d} B n^{-m/d}/2$. Moreover, denote by $\tilde P_f$ the distribution produced by the sampling method defined in step 1. By \Cref{lemma:rejection_sampling}, we have for $\tilde g \equalDef g + E_n$:
\begin{IEEEeqnarray*}{+rCl+x*}
&& \Dsuplog(P_f, \tilde P_f) \\
& \leq & \min\left\{\Dsuplog(P_f, P_g), \left(1 - \frac{Z_f}{Z_{\tilde g}}\right)^{\lceil n/2\rceil} (\exp(\Dsuplog(P_f, P_g)) - 1)\right\}~. \IEEEyesnumber \label{eq:apx_rs_suplog}
\end{IEEEeqnarray*}
The first bound $\Dsuplog(P_f, P_g)$ already yields the desired bound for $C_{m, d} B n^{-m/d} > 1$. Now, consider the case $C_{m, d} B n^{-m/d} \leq 1$. We have
\begin{IEEEeqnarray*}{+rCl+x*}
Z_f & = & \int_{\calX} e^{f(x)} \diff x \geq \int_{\calX} e^{\tilde g(x) - 2E_n} \diff x \geq \exp(-C_{m, d}Bn^{-m/d}) Z_{\tilde g}~.
\end{IEEEeqnarray*}
Now, the second bound in \Eqref{eq:apx_rs_suplog} yields
\begin{IEEEeqnarray*}{+rCl+x*}
\Dsuplog(P_f, \tilde P_f) & \leq & \left(1 - \exp(-C_{m, d}Bn^{-m/d})\right)^{n/2} (\exp(C_{m,d} B n^{-m/d}) - 1) \\
& \leq & \exp(C_{m,d} B n^{-m/d}) \left(1 - \exp(-C_{m, d}Bn^{-m/d})\right)^{n/2+1} \\
& \leq & e \cdot (C_{m, d} B n^{-m/d})^{n/2+1} \leq O_{m, d}((C_{m, d} B n^{-m/d})^{n/2+1})~. & \qedhere
\end{IEEEeqnarray*}
\end{proof}

Next, we prove corresponding lower bounds in the optimization regime, again using bump functions:

\thmLowerStochSampling*

\begin{proof}
We re-use some results from the proof of \Cref{thm:deterministic_points_rates} in \Cref{sec:appendix:deterministic_points}. We consider again the decomposition of the cube $\calX$ into three slices
\begin{IEEEeqnarray*}{+rCl+x*}
\calC_k \equalDef [k/3, (k+1)/3] \times [0, 1]^{d-1}, \qquad k \in \{0, 1, 2\}~.
\end{IEEEeqnarray*}
Consider a sampling algorithm $\tilde S \in \Aadstoch_n$ with stochastic evaluation points and consider a corresponding random sample $X_f = \phi(N(f, \omega), \omega)$ as defined in \Cref{sec:ibc:stochastic_points}.

\textbf{Step 1.1: Candidate functions for the sup-log distance.} By \Cref{lemma:subcubes}, $\calC_0$ contains $4(n+1)$ disjoint open balls $B_\infty(z_i, \delta_n)$ with radius
\begin{IEEEeqnarray*}{+rCl+x*}
\delta_n \equalDef r_{4(n+1)} = \frac{(4(n+1))^{-1/d}}{12} \geq \frac{n^{-1/d}}{96}~.
\end{IEEEeqnarray*}
Let $f_0 \equiv 0$ be the zero function. Consider the set $\calQ(\omega)$ containing the $n$ random points where $N(f_0, \omega)$ queries $f_0$ and the one random point $\phi(N(f_0, \omega), \omega)$ that the sampling method outputs. We can pick an $i \in \{1, \hdots, 4(n+1)\}$ such that the cube $B_\infty(z_i, \delta_n)$ contains a point from $\calQ(\omega)$ only with probability $\leq 1/4$. With $C_{m, d}$ as in \Cref{lemma:bump_functions}, we define
\begin{IEEEeqnarray*}{+rCl+x*}
f_1(x) \equalDef C_{m, d}^{-1} B \delta_n^{-1} b_{z_i, \delta_n}(x)~,
\end{IEEEeqnarray*}
which satisfies $f_1 \in \calF_{d, m, B}$. Using analogous arguments to the proof of \Cref{thm:deterministic_points_rates} in \Cref{sec:appendix:deterministic_points}, we obtain
\begin{IEEEeqnarray*}{+rCl+x*}
L_{f_1} & = & \log(1 + I_n) \geq \log(1 + \delta_n^d (e^{C_{m, d}^{-1} B \delta_n^m} - 1)) \geq C_{m, d}^{-1} B \delta_n^m + \log(\delta_n^d) \\
& \geq & \tilde c_{m, d} Bn^{-m/d} - \log(n) - d\log(96)
\end{IEEEeqnarray*}
for a suitable constant $\tilde c_{m, d} > 0$.

\textbf{Step 1.2: Bounding the distribution on $f_1$.} Now, the probability of the event $\phi(N(f_0, \omega), \omega) \in B_\infty(z_i, \delta_n)$ is at most $1/4$ by construction. Moreover, the probability of $N(f_0, \omega)$ querying $B_\infty(z_i, \delta_n)$ is also at most $1/4$, hence the probability of $N(f_1, \omega)$ querying $B_\infty(z_i, \delta_n)$ is also at most $1/4$. By the union bound, the probability that $\phi(N(f_1, \omega), \omega) \in B_\infty(z_i, \delta_n)$ is at most $1/2$. Now, to have $\Dsuplog(\tilde S(f_1), P_{f_1}) < \infty$, $\tilde S(f_1)$ must be of the form $P_g$ for some function $g: \calX \to \bbR$. Without loss of generality, we can assume $L_g = 0$. Then, since the set $\hat \calX \equalDef \calX \setminus B_\infty(z_i, \delta_n)$ satisfies $P_g(\hat \calX) \geq 1/2$, there exists $x \in \hat \calX$ with $p_g(x) \geq 1/2$, implying $g(x) \geq \log(1/2)$. But then,
\begin{IEEEeqnarray*}{+rCl+x*}
\Dsuplog(\tilde S(f_1), P_{f_1}) & \geq & |(g(x) - L_g) - (f_1(x) - L_{f_1})| = |g(x) + L_{f_1}| \\
& \geq & \tilde c_{m, d} Bn^{-m/d} - \log(n) - d\log(96) - \log(2) \\
& \geq & \tilde c_{m, d} Bn^{-m/d} - \log(n) - 6d~.
\end{IEEEeqnarray*}
Especially, for $Bn^{-m/d} \geq 12d \tilde c_{m, d}^{-1} (1 + \log(n))$, we have
\begin{IEEEeqnarray*}{+rCl+x*}
\Dsuplog(\tilde S(f_1), P_{f_1}) & \geq & \tilde c_{m, d} Bn^{-m/d} - \frac{1}{2} \tilde c_{m, d} Bn^{-m/d} = \Omega_{m, d}(Bn^{-m/d})~.
\end{IEEEeqnarray*}

\textbf{Step 2.1: Candidate functions for the Wasserstein distance.} By \Cref{lemma:subcubes}, for $M \in \bbN$ to be determined later, we can place $Mn$ subcubes each in $\calC_0$ and $\calC_2$ with radius
\begin{IEEEeqnarray*}{+rCl+x*}
\delta_n \equalDef r_{Mn} \equalDef \frac{(Mn)^{-1/d}}{12}~.
\end{IEEEeqnarray*}
By an analogous argument to Step 1.1, we can find subcubes $B_\infty(z_0, \delta_n)$ and $B_\infty(z_2, \delta_n)$ of $\calC_0$ and $\calC_2$ such that the probability of one of them being queried for $f_0$ is at most $2/M$. Following \Cref{lemma:bump_functions}, we construct the functions
\begin{IEEEeqnarray*}{+rCl+x*}
f_k(x) \equalDef C_{m, d}^{-1} B \delta_n^m b_{z_k, \delta_n}(x), \qquad k \in \{0, 2\}~,
\end{IEEEeqnarray*}
which are contained in $\calF_{d, m, B}$.

\textbf{Step 2.2: Bounding the Wasserstein distance.} We set $M \equalDef 20d$. Since the two subcubes are only queried with probability at most $2/M$, we know that
\begin{IEEEeqnarray*}{+rCl+x*}
\DW(\tilde S(f_0), \tilde S(f_2)) \leq d^{1/2} \DTV(\tilde S(f_0), \tilde S(f_2)) \leq d^{1/2} \frac{2}{M} \leq \frac{1}{10}~.
\end{IEEEeqnarray*}
With an argument analogous to the proof of \Cref{thm:deterministic_points_rates} in \Cref{sec:appendix:deterministic_points}, we obtain
\begin{IEEEeqnarray*}{+rCl+x*}
\DW(P_{f_0}, P_{f_2}) & \geq & \frac{1}{3} \frac{I_n}{1 + I_n} \stackrel{\text{\Cref{lemma:helper_fraction}}}{\geq} \frac{1}{6} \min\{1,I_n\}~, 
\end{IEEEeqnarray*}
and
\begin{IEEEeqnarray*}{+rCl+x*}
I_n & \geq & \delta_n^d (e^{BC_{m, d}^{-1}\delta_n^m} - 1) \geq \tilde c_{m, d} n^{-1} (e^{\tilde c_{m, d} B n^{-m/d}} - 1)
\end{IEEEeqnarray*}
for a suitable constant $\tilde c_{m, d} \in (0, 1)$. Now, suppose that
\begin{IEEEeqnarray*}{+rCl+x*}
Bn^{-m/d} & \geq & \tilde c_{m, d}^{-1} (1 + \log(\tilde c_{m, d}^{-1})) (1 + \log(n))~.
\end{IEEEeqnarray*}
We obtain 
\begin{IEEEeqnarray*}{+rCl+x*}
\tilde c_{m, d} Bn^{-m/d} & \geq & 1 + \log(\tilde c_{m, d}^{-1}) + \log(n) \geq \log(\tilde c_{m, d}^{-1} n + 1)
\end{IEEEeqnarray*}
and therefore
\begin{IEEEeqnarray*}{+rCl+x*}
\DW(P_{f_0}, P_{f_2}) & \geq & \frac{1}{6} \min\{1,I_n\} \geq \frac{1}{6} \min\{1, 1\} = \frac{1}{6}~.
\end{IEEEeqnarray*}
Since $\DW$ satisfies the triangle inequality, there must exist $k \in \{0, 2\}$ with
\begin{IEEEeqnarray*}{+rCl+x*}
\DW(\tilde S(f_k), P_{f_k}) \geq \frac{1}{2} \left(\frac{1}{6} - \frac{1}{10}\right) = \frac{1}{30} = \Omega_{m, d}(1)~.
\end{IEEEeqnarray*}

\textbf{Step 3: TV lower bound.} The corresponding lower bound for the TV distance follows from the inequality $\DW(P, Q) \leq d^{1/2}\DTV(P, Q)$. \qedhere
\end{proof}

Finally, we prove our auxiliary result on the complexity of sampling when the log-partition function is known:

\propExactSampling*

\begin{proof}
For $f \in \calF$, define $\tilde f(x) \equalDef \log(2\exp(f(x)) - 1)$ and $g(x) \equalDef \log(2)$. We have
\begin{IEEEeqnarray*}{+rCl+x*}
Z_{\tilde f} & = & \int_{\calX} (2\exp(f(x)) - 1) \diff x = 2Z_f - 1 = 1~, \\
Z_g & = & \int_{\calX} e^{\log(2)} \diff x = 2~.
\end{IEEEeqnarray*}
Let $\tilde P_f$ be the distribution of $\textsc{RejectionSampling}(\tilde f, g, 1)$, which only uses one evaluation of $\tilde f$ and therefore only one evaluation of $f$. By \Cref{lemma:rejection_sampling}, we have
\begin{IEEEeqnarray*}{+rCl+x*}
\tilde P_f & = & \frac{Z_{\tilde f}}{Z_g} P_{\tilde f} + \left(1 - \frac{Z_{\tilde f}}{Z_g}\right) P_g = \frac{1}{2} P_{\tilde f} + \frac{1}{2} P_g = P_f~. & \qedhere
\end{IEEEeqnarray*}
\end{proof}

\section{Proofs for Relations Between Different Problems} \label{sec:appendix:relations}

The proof of the following theorem adapts results from the literature, showing that they apply to our setting:

\thmMls*

\begin{proof}
\textbf{Step 1: The method.} The idea of the moving least squares method \citep{lancaster_surfaces_1981} is to obtain an approximation $g(x) = g_x(x)$ of $f(x)$ at an evaluation point $x$ by determining $g_x$ as the solution to a polynomial least-squares regression problem with data $(x_i, f(x_i))$, weighted with weights $\Phi(x, x_i)$ that (smoothly) vanish for large $\|x - x_i\|$. We will not state the exact method here but refer to the publications by \cite{li_error_2016} and \cite{mirzaei_analysis_2015}, whose analysis we are using here. While Theorem 4.1 of \cite{li_error_2016} essentially directly provides the result (a), it is unclear to us if the corresponding constants are independent of the evaluation points in the way that we need. Thus, in the following, we will try to verify the slightly stronger conditions of Theorem 3.12 of \cite{mirzaei_analysis_2015} and explain how it can be adapted to our setting with minor modifications.

\textbf{Step 2: Verifying the assumptions.} Now, we list the major assumptions of Theorem 3.12 of \cite{mirzaei_analysis_2015} and show that they are satisfied for a suitable choice of evaluation points and weighting function. The assumptions on smoothness are deferred until Step 3, where we will show how to adapt them to our setting. We define the number $N \equalDef \lfloor n^{1/d} \rfloor$ of grid points along each axis. By dividing each axis into $N$ equal intervals, we obtain a partition of $\calX$ into $N^d$ cubes. Let $X$ be the set of midpoints of these cubes. Hence, $|X| = N^d = \lfloor n^{1/d} \rfloor^d \geq (n^{1/d}/2)^d \geq \Omega_{m, d}(n)$. Here are the assumptions:

\begin{itemize}
\item The considered domain $\Omega$ is a bounded set with Lipschitz boundary. We want to consider $\Omega \equalDef \calX$, which is bounded and has a Lipschitz boundary. 
\item The maximum degree $m$ of the polynomial basis satisfies $m \geq 1$. While $m$ denotes the (known) smoothness of the target function $f$ in our context, we will assume that the maximum degree of the polynomial basis is also $m$. While a maximum degree of $m-1$ should be sufficient for our purposes (as it is in \cite{li_error_2016}), using a maximum degree of $m$ avoids notational confusion and simplifies the adaptation of the arguments of \cite{mirzaei_analysis_2015}.
\item The fill distance $h_{X, \Omega} = \sup_{x \in \Omega} \min_{x' \in X} \|x - x'\|_2$ satisfies $h_{X, \Omega} \leq \min\{h_0, 1\}$ for some given constant $h_0 > 0$. In our case, the fill distance is $h_{X, \Omega} = \sqrt{d}/(2N) = \Theta_{m, d}(n^{-1/d})$, which satisfies the assumption for large enough values of $n$. The errors for smaller $n$ do not affect the asymptotic rate.
\item The weight function is defined through a radial function $\phi: [0, \infty) \to \bbR$, which is supported in $[0, 1]$ and its even extension belongs to $C^m(\bbR)$. For this, we can just use the even and $C^\infty$-smooth bump function $b$ from \Cref{def:bump_functions} and set $\phi(x) \equalDef b(x)$.
\item The point set $X$ is quasi-uniform with constant independent of $f$ and $n$. This means that the separation distance
\begin{IEEEeqnarray*}{+rCl+x*}
q_{X, \Omega} \equalDef \frac{1}{2}\min_{x, x' \in X: x \neq x'} \|x - x'\|_2
\end{IEEEeqnarray*}
satisfies $q_{X, \Omega} \leq h_{X, \Omega} \leq c_{\mathrm{qu}} q_{X, \Omega}$ for a constant $c_{\mathrm{qu}}$ independent of $f$ and $n$. In our case, we have $q_{X, \Omega} = 1/(2N)$, and hence we can set $c_{\mathrm{qu}} \equalDef \sqrt{d}$.
\end{itemize}

\textbf{Step 3: Adapting the argument of \cite{mirzaei_analysis_2015}.} Let $f_n$ be the moving least squares approximation of $f$ with evaluation points $X$. By Corollary 4.5 in \cite{wendland_scattered_2004}, $f_n$ is in $C^m$ since the weight function is also in $C^m$. Hence, the norms $\|f - f_n\|_{C^m}$ and $\|f - f_n\|_{W^m_\infty}$ are equivalent, where $W_p^m(\Omega)$ is the Sobolev space of smoothness $m$ with the $p$-norm applied to the (weak) derivatives. Theorem 3.12 in \cite{mirzaei_analysis_2015} shows that
\begin{IEEEeqnarray*}{+rCl+x*}
\|f - f_n\|_{W_q^{|\alpha|}(\Omega)} \leq C h_{X, \Omega}^{m+s-|\alpha|-d\max\{0, 1/p-1/q\}} \|f\|_{W_p^{m+s}(\Omega)}
\end{IEEEeqnarray*}
for $p \in [1, \infty), q \in [1, \infty], s \in [0, 1)$ and a multi-index $\alpha$ satisfying $m > |\alpha| + d/p$. We would obtain (a) by setting $s = 0$, $p=q=\infty$, and $|\alpha| = k$. However, setting $p=\infty$ is not allowed by the assumptions of the theorem, and setting $|\alpha| = m$ for $m=k$ is also not allowed. Hence, we need to show that the theorem can be extended to $p=\infty$ and $|\alpha| = m$ in the special case $s=0$ and $q=\infty$. The assumption $p < \infty$ is used for the Sobolev extension operator, but it is noted in the proof that $p=\infty$ is allowed for $s=0$. The only other point where $p < \infty$ and $|\alpha| < m$ are required is in the invocation of Eq.~(3.4) in Lemma 3.3 of \cite{mirzaei_analysis_2015}. However, for the special case $s=0$, $p=q=\infty$ and $|\alpha| \leq m$, the statement of Lemma 3.3 also holds, as is shown by the Bramble-Hilbert lemma \citep[cf.\ Lemma (4.3.8) in][]{brenner_mathematical_2008}, which has also been employed by \citep{li_error_2016} for the same purpose.

\textbf{Step 4: Runtime bound.} For (b) and (c), we note that due to the local support of the weight function, evaluating the moving least squares approximation at a point $x \in \calX$ mainly requires the solution of a regression problem with $O_{m, d}(1)$ variables and evaluation points. This is shown, for example, above Lemma 3.6 in \cite{mirzaei_analysis_2015}. The only required pre-computation is allocating an array to store evaluated function values.
\end{proof}

\subsection{Proofs for Relation between Sampling and Log-Partition Estimation} \label{sec:appendix:relations:sampling_logpartition}

For analyzing thermodynamic integration, we are going to use Hoeffding's inequality in the form stated and proved in Theorem 6.10 in \cite{steinwart_support_2008}.

\begin{theorem}[Hoeffding's inequality] \label{thm:hoeffding}
Let $(\Omega, \calA, P)$ be a probability space, $a < b$ be two real numbers, $n \geq 1$ be an integer, and $\xi_1, \hdots, \xi_n: \Omega \to [a, b]$ be independent random variables. Then, for all $\tau > 0$, we have
\begin{IEEEeqnarray*}{+rCl+x*}
P\left(\frac{1}{n} \sum_{i=1}^n (\xi_i - \bbE_P \xi_i) \geq (b-a)\sqrt{\frac{\tau}{2n}} \right) \leq e^{-\tau}~.
\end{IEEEeqnarray*}
\end{theorem}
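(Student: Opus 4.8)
The plan is to prove this via the standard Chernoff-bounding argument combined with Hoeffding's lemma for bounded random variables. Write $S_n \equalDef \frac{1}{n}\sum_{i=1}^n (\xi_i - \bbE_P \xi_i)$ and fix $\varepsilon \equalDef (b-a)\sqrt{\tau/(2n)} > 0$, so that it suffices to show $P(S_n \geq \varepsilon) \leq e^{-2n\varepsilon^2/(b-a)^2}$ and then verify that the exponent equals $\tau$ for our choice of $\varepsilon$. First I would apply the exponential Markov inequality: for every $s > 0$,
\begin{IEEEeqnarray*}{+rCl+x*}
P(S_n \geq \varepsilon) & \leq & e^{-s\varepsilon}\, \bbE_P\!\left[\exp\!\left(\tfrac{s}{n}\textstyle\sum_{i=1}^n (\xi_i - \bbE_P \xi_i)\right)\right] = e^{-s\varepsilon} \prod_{i=1}^n \bbE_P\!\left[\exp\!\left(\tfrac{s}{n}(\xi_i - \bbE_P \xi_i)\right)\right]~,
\end{IEEEeqnarray*}
where the product decomposition uses the independence of the $\xi_i$.

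The key lemma to establish next is Hoeffding's lemma: if $X: \Omega \to [a,b]$ satisfies $\bbE_P X = 0$, then $\bbE_P[e^{tX}] \leq \exp(t^2(b-a)^2/8)$ for all $t \in \bbR$. I would prove this by writing $x = \lambda b + (1-\lambda) a$ with $\lambda = (x-a)/(b-a) \in [0,1]$ for $x \in [a,b]$, using convexity of $u \mapsto e^{tu}$ to bound $e^{tx} \leq \lambda e^{tb} + (1-\lambda) e^{ta}$, taking expectations (using $\bbE_P X = 0$), and then showing that the resulting function $\psi(t) \equalDef \log\!\left(\tfrac{b}{b-a}e^{ta} - \tfrac{a}{b-a}e^{tb}\right)$ satisfies $\psi(0) = \psi'(0) = 0$ and $\psi''(t) \leq (b-a)^2/4$ uniformly, so that a second-order Taylor expansion gives $\psi(t) \leq t^2(b-a)^2/8$. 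Applying this with $X = \xi_i - \bbE_P \xi_i$ and $t = s/n$ yields $\bbE_P[\exp(\tfrac{s}{n}(\xi_i - \bbE_P\xi_i))] \leq \exp(s^2(b-a)^2/(8n^2))$, hence
\begin{IEEEeqnarray*}{+rCl+x*}
P(S_n \geq \varepsilon) & \leq & \exp\!\left(-s\varepsilon + \tfrac{s^2(b-a)^2}{8n}\right)~.
\end{IEEEeqnarray*}

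Finally I would optimize over $s > 0$: the right-hand side is minimized at $s = 4n\varepsilon/(b-a)^2$, giving $P(S_n \geq \varepsilon) \leq \exp(-2n\varepsilon^2/(b-a)^2)$. Substituting $\varepsilon = (b-a)\sqrt{\tau/(2n)}$ makes the exponent $-2n \cdot (b-a)^2\tau/(2n) / (b-a)^2 = -\tau$, which is exactly the claimed bound. The only genuinely nontrivial step is Hoeffding's lemma, specifically the uniform bound $\psi''(t) \leq (b-a)^2/4$; this follows because $\psi''(t)$ has the form $p(1-p)(b-a)^2$ for a probability $p = p(t) \in [0,1]$, and $p(1-p) \leq 1/4$. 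Since the statement is the standard Hoeffding inequality, it would also be acceptable to simply cite Theorem 6.10 of \cite{steinwart_support_2008} as the paper does, but the argument above is the self-contained version.
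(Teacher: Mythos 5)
Your proof is correct: the Chernoff bound, the factorization via independence, Hoeffding's lemma with the $\psi''(t) = p(1-p)(b-a)^2 \leq (b-a)^2/4$ bound, the optimal choice $s = 4n\varepsilon/(b-a)^2$, and the final substitution $\varepsilon = (b-a)\sqrt{\tau/(2n)}$ all check out. The paper itself does not prove this statement at all; it simply imports it as Theorem 6.10 of \cite{steinwart_support_2008}, so your self-contained argument is the standard proof that the cited reference would supply, and either citing or writing it out as you did is acceptable.
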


\thmThermodynamicIntegration*

\begin{proof}
Obviously, we have
\begin{IEEEeqnarray*}{+rCl+x*}
|\tilde L_f - L_f| \leq |L_f - \bbE \tilde L_f| + |\tilde L_f - \bbE \tilde L_f|~.
\end{IEEEeqnarray*}

\textbf{Step 1: Bounding the second term.} For bounding the second term, we use Hoeffding's inequality (\Cref{thm:hoeffding}) with $\xi_i \equalDef f(X_i)$, $n = N$ and $\tau \equalDef \log(2/\delta)$. Ignoring null sets, we can choose $b = \|f\|_\infty$ and $a = -\|f\|_\infty$. We then obtain
\begin{IEEEeqnarray*}{+rCl+x*}
\tilde L_f - \bbE \tilde L_f \geq 2\|f\|_\infty\sqrt{\frac{\log(2/\delta)}{2N}}
\end{IEEEeqnarray*}
with probability $\leq \exp(-\log(2/\delta)) = \delta/2$. By applying the same argument to $\xi_i = -f(X_i)$ and applying the union bound, we obtain
\begin{IEEEeqnarray*}{+rCl+x*}
|\tilde L_f - \bbE \tilde L_f| \leq 2\|f\|_\infty\sqrt{\frac{\log(2/\delta)}{2N}}
\end{IEEEeqnarray*}
with probability $\geq 1 - \delta$.

\textbf{Step 2: Bounding the first term.} We use
\begin{IEEEeqnarray*}{+rCl+x*}
|L_f - \bbE \tilde L_f| & = & \left|\bbE_{\beta \sim \calU([0, 1])} \bbE_{x \sim P_{\beta f}} [f(x)] - \bbE_{\beta \sim \calU([0, 1])} \bbE_{x \sim \tilde P_{\beta f}} [f(x)]\right| \\
& \leq & \sup_{\beta \in \calU([0, 1])} \left|\bbE_{x \sim P_{\beta f}} [f(x)] - \bbE_{x \sim \tilde P_{\beta f}} [f(x)]\right|~. \IEEEyesnumber \label{eq:sup_beta}
\end{IEEEeqnarray*}
We can assume that $|f|_1 \neq 0$ since the bound is clear otherwise. Using the dual formulation of the 1-Wasserstein distance and that $f/|f|_1$ is $1$-Lipschitz, we directly obtain
\begin{IEEEeqnarray*}{+rCl+x*}
\left|\bbE_{x \sim P_{\beta f}} [f(x)] - \bbE_{x \sim \tilde P_{\beta f}} [f(x)]\right| & = & |f|_1 \left|\bbE_{x \sim P_{\beta f}} [f(x)/|f|_1] - \bbE_{x \sim \tilde P_{\beta f}} [f(x)/|f|_1]\right| \\ 
& \leq & |f|_1 \DW(\tilde P_{\beta f}, P_{\beta f})~.
\end{IEEEeqnarray*}
Similarly, the bound on the TV distance follows from an alternative formulation of the TV distance \citep[see e.g.][]{gibbs_choosing_2002} given by
\begin{IEEEeqnarray*}{+rCl+x*}
\DTV(P, Q) & = & \frac{1}{2} \sup_{g: \|g\|_\infty \leq 1} \left| \int g \diff P - \int g \diff Q \right|~. & \qedhere
\end{IEEEeqnarray*}
\end{proof}

\begin{remark} \label{rem:ti_suplog}
In \Cref{thm:thermodynamic_integration}, we can hope for a better bound in terms of the sup-log distance. For example, suppose that $\tilde P_{\beta f} = P_{\beta g}$, where $g$ is an approximation of $f$ that is independent of $\beta$. Since $g$ is only determined up to a constant shift, we can assume that $L_g = L_f$. Then,
\begin{IEEEeqnarray*}{+rCl+x*}
|L_f - \bbE \tilde L_f| & = & \left|\bbE_{\beta \sim \calU([0, 1])} \bbE_{x \sim P_{\beta f}} [f(x)] - \bbE_{\beta \sim \calU([0, 1])} \bbE_{x \sim P_{\beta g}} [f(x)]\right| \\
& \leq & \left|\bbE_{\beta \sim \calU([0, 1])} \bbE_{x \sim P_{\beta f}} [f(x)] - \bbE_{\beta \sim \calU([0, 1])} \bbE_{x \sim P_{\beta g}} [g(x)]\right| \\
&& ~+~ \left|\bbE_{\beta \sim \calU([0, 1])} \bbE_{x \sim P_{\beta g}} [g(x)] - \bbE_{\beta \sim \calU([0, 1])} \bbE_{x \sim P_{\beta g}} [f(x)]\right| \\
& \leq & |L_f - L_g| + \|g - f\|_\infty = 0 + \Dsuplog(P_f, P_g) = \Dsuplog(P_f, P_g)~.
\end{IEEEeqnarray*}
However, in the general case, the approach in \Eqref{eq:sup_beta} of taking the supremum over $\beta$ cannot yield such a good bound. This can be seen by considering indicator functions $f = a \bbone_A$ and $g = (a+\delta) \bbone_A$. Instead, it appears that it would be necessary to obtain bounds depending on $\beta$ and $f$ and show that their integral over $\beta \in [0, 1]$ is sufficiently small for all $f$.
\end{remark}

\thmSampByLog*

\begin{proof}
\textbf{Step 1: Log-density analysis.} We want to show that $\tilde P_f$ has a density $\tilde p_f$ and bound $\|\log \tilde p_f - \log p_f\|_\infty$. Partition $\calX$ into $2^{Md}$ cubes of side length $2^{-M}$. Since a density is only defined up to a null set, it suffices to consider an arbitrary $x$ in the interior of one of these cubes, which we fix in the following. We denote the corresponding cube by $\calZ^{(Md)}$. We can then find exactly one sequence $\calZ^{(0)} = \calX, \calZ^{(1)}, \hdots, \calZ^{(Md)}$ of hyperrectangles which could have been visited during the execution of \Cref{alg:bisection_sampling} to obtain $x$. Since \Cref{alg:bisection_sampling} samples uniformly from $\calZ^{(Md)}$ and the volume of $\calZ^{(Md)}$ is $2^{-Md}$, we have the density
\begin{IEEEeqnarray*}{+rCl+x*}
\tilde p_f(x) = 2^{Md} \tilde P_f(\calZ^{(Md)})~.
\end{IEEEeqnarray*}
On the other hand, a simple integration argument shows that the target density satisfies
\begin{IEEEeqnarray*}{+rCl+x*}
\inf_{x' \in \calZ^{(Md)}} p_f(x') & \leq & 2^{Md} P_f(\calZ^{(Md)}) \leq \sup_{x' \in \calZ^{(Md)}} p_f(x')~.
\end{IEEEeqnarray*}
This yields
\begin{IEEEeqnarray*}{+rCl+x*}
|\log \tilde p_f(x) - \log p_f(x)| & \leq & \left|\log(2^{Md} \tilde P_f(\calZ^{(Md)})) - \log(2^{Md} P_f(\calZ^{(Md)}))\right| \\
&& ~+~ \left|\sup_{x' \in \calZ^{(Md)}} \log p_f(x') - \inf_{x'' \in \calZ^{(Md)}} \log p_f(x'')\right|~.
\end{IEEEeqnarray*}

\textbf{Step 2: Bounding the second term.} Since $\calZ^{(Md)}$ is an axis-aligned cube with side length $2^{-M}$, we have for $x', x'' \in \calZ^{(Md)}$:
\begin{IEEEeqnarray*}{+rCl+x*}
|f(x') - f(x'')| & \leq & \sum_{i=1}^d \|\partial^i f\|_\infty 2^{-M} \leq 2^{-M} d \|f\|_{C^1}~.
\end{IEEEeqnarray*}

\textbf{Step 3: Bounding the first term.} We can simplify
\begin{IEEEeqnarray*}{+rCl+x*}
\left|\log(2^{Md} \tilde P_f(\calZ^{(Md)})) - \log(2^{Md} P_f(\calZ^{(Md)}))\right| = \left|\log(\tilde P_f(\calZ^{(Md)})) - \log(P_f(\calZ^{(Md)}))\right|~.
\end{IEEEeqnarray*}
We want to show by induction over $k \in \{0, \hdots, Md\}$ that
\begin{IEEEeqnarray*}{+rCl+x*}
\left|\log(\tilde P_f(\calZ^{(k)})) - \log(P_f(\calZ^{(k)}))\right| & \leq & 2kE~,
\end{IEEEeqnarray*}
which will then yield the desired error bound for $k=Md$. This is obviously true for $k=0$. Now, suppose that it is true for some $k \in \{0, \hdots, Md-1\}$. Consider a partition of $\calZ^{(k)}$ into two equal-sized sub-hyperrectangles $\calZ_1$ and $\calZ_2$ as in \Cref{alg:bisection_sampling} such that $\calZ^{(k+1)} = \calZ_i$ for some $i \in \{1, 2\}$. Then,
\begin{IEEEeqnarray*}{+rCl+x*}
\tilde P_f(\calZ^{(k+1)}) = \sigma(\tilde L_{f_{\calZ_i}} - \tilde L_{f_{\calZ_{3-i}}}) \tilde P_f(\calZ^{(k)})~,
\end{IEEEeqnarray*}
which also holds for $i=2$ since the sigmoid function $\sigma$ satisfies $\sigma(-u) = 1-\sigma(u)$ for all $u \in \bbR$. Moreover, we have
\begin{IEEEeqnarray*}{+rCl+x*}
P_f(\calZ^{(k+1)}) & = & \frac{P_f(\calZ_i)}{P_f(\calZ_i) + P_f(\calZ_{3-i})} P_f(\calZ^{(k)}) = \frac{e^{L_{f_{\calZ_i}}}}{e^{L_{f_{\calZ_i}}} + e^{L_{f_{\calZ_{3-i}}}}} P_f(\calZ^{(k)}) \\
& = & \sigma(L_{f_{\calZ_i}} - L_{f_{\calZ_{3-i}}}) P_f(\calZ^{(k)})~.
\end{IEEEeqnarray*}
By definition of the functions $f_{\calZ_{i'}}$, $i' \in \{1, 2\}$ in \Cref{alg:bisection_sampling}, since the side-lengths $h_j$ of $\calZ_{i'}$ satisfy $h_j \leq 1$, we have $\|f_{\calZ_{i'}}\|_{C^m} \leq \|f\|_{C^m} \leq B$, which means $f_{\calZ_{i'}} \in \calF_{d, m, B}$. Hence, by assumption, the log-partition error is
\begin{IEEEeqnarray*}{+rCl+x*}
|\tilde L_{f_{\calZ_{i'}}} - L_{f_{\calZ_{i'}}}| \leq \varepsilon~.
\end{IEEEeqnarray*}
Now, the log-sigmoid function $h(u) \equalDef \log \sigma(u)$ satisfies $h'(u) = \frac{\sigma(u)(1-\sigma(u))}{\sigma(u)} = 1-\sigma(u) \in (0, 1)$ and is therefore $1$-Lipschitz. Hence,
\begin{IEEEeqnarray*}{+rCl+x*}
\left|\log\left(\tilde P_f(\calZ^{(k+1)})\right) - \log\left(P_f(\calZ^{(k+1)})\right)\right| & \leq & \left|\log\left(\tilde P_f(\calZ^{(k)})\right) - \log\left(P_f(\calZ^{(k)})\right)\right| \\
&& ~+~\left|h(\tilde L_{f_{\calZ_i}} - \tilde L_{f_{\calZ_{3-i}}}) - h(L_{f_{\calZ_i}} - L_{f_{\calZ_{3-i}}})\right| \\
& \leq & 2kE + 2E = 2(k+1)E~,
\end{IEEEeqnarray*}
which completes the induction.
\end{proof}

\subsection{Proofs for Relation to Optimization} \label{sec:appendix:relations:optimization}

\propOptByApxSampling*

\begin{proof}
\leavevmode
\begin{enumerate}[(a)]
\item Suppose $\Dsuplog(P_f, Q) < \infty$. Then, $Q = P_g$ for some $g$. For almost every $x \in \calX$, we have the implications
\begin{IEEEeqnarray*}{+rCl+x*}
f(x) \leq L_f - \log(1/\delta) - \Dsuplog(P_f, Q) & \Leftrightarrow & \bar f(x) \leq L_{\bar f} - \log(1/\delta) - \|\bar f - \bar g\|_\infty \\
& \Rightarrow & \bar g(x) \leq L_{\bar f} - \log(1/\delta) \\
& \Leftrightarrow & \bar g(x) \leq L_{\bar g} - \log(1/\delta)~.
\end{IEEEeqnarray*}
Hence, 
\begin{IEEEeqnarray*}{+rCl+x*}
&& Q(\{x \in \calX \mid f(x) \leq L_f - \log(1/\delta) - \Dsuplog(P_f, Q)\}) \\
& \leq & Q(\{x \in \calX \mid \bar g(x) \leq L_{\bar g} - \log(1/\delta)\}) \\
& = & P_{\bar g}(\{x \in \calX \mid \bar g(x) \leq L_{\bar g} - \log(1/\delta)\}) \stackrel{\text{\Cref{lemma:lipschitz_maximization_bound}}}{\leq} \delta~.
\end{IEEEeqnarray*}
By using $f/\eps$ instead of $f$ and multiplying both sides of the inequality by $\eps$, we obtain
\begin{IEEEeqnarray*}{+rCl+x*}
Q(\{x \in \calX \mid f(x) \leq \eps L_{f/\eps} - \eps \log(1/\delta) - \eps \Dsuplog(P_{f/\eps}, Q)\}) \leq \delta~.
\end{IEEEeqnarray*}

\item The TV norm bound follows from \Cref{lemma:lipschitz_maximization_bound} because for the considered event $A$,
\begin{IEEEeqnarray*}{+rCl+x*}
Q(A) \leq P_{f/\eps}(A) + \sup_{A'} |Q(A) - P_{f/\eps}(A')| = P_{f/\eps}(A) + \DTV(P_{f/\eps}, Q)~.
\end{IEEEeqnarray*}

\item Let $\tilde \eps > 0$. By definition of the Wasserstein distance, there exist random variables $X \sim P_{f/\eps}$ and $Y \sim Q$ on a common probability space $(\Omega, \calF, P_\Omega)$ such that $\bbE \|X - Y\|_2 \leq \DW(P_{f/\eps}, Q) + \tilde \eps$. By the Markov inequality, we then have
\begin{IEEEeqnarray*}{+rCl+x*}
\|X-Y\|_2 \leq 2(\DW(P_{f/\eps}, Q) + \tilde \eps)/\delta
\end{IEEEeqnarray*}
with probability $\geq 1 - \delta/2$. Moreover, by \Cref{lemma:lipschitz_maximization_bound}, we have
\begin{IEEEeqnarray*}{+rCl+x*}
f(X) > \eps L_{f/\eps} - \eps \log(2/\delta)
\end{IEEEeqnarray*}
with probability $\geq 1 - \delta/2$. By the union bound, we hence have
\begin{IEEEeqnarray*}{+rCl+x*}
f(Y) > f(X) - |f|_1 \|X - Y\|_2 > \eps L_{f/\eps} - \eps \log(2/\delta) - 2\delta^{-1}|f|_1(\DW(P_{f/\eps}, Q) + \tilde \eps)
\end{IEEEeqnarray*}
with probability $\geq 1 - \delta$. Since $\tilde \eps > 0$ was arbitrary, the claim follows. \qedhere
\end{enumerate}
\end{proof}

\section{Proofs for Algorithms} \label{sec:appendix:algorithms}

\subsection{Proofs for Approximation-based Algorithms} \label{sec:appendix:approx_algs}

\subsubsection{Proofs for Piecewise Constant Approximation} \label{sec:appendix:pc_approx}

To study the error of piecewise constant approximation, we study the log-partition function of linear functions~$f$. A first step is achieved using the following lemma:

\begin{lemma} \label{lemma:linear_logpartition}
Let
\begin{IEEEeqnarray*}{+rCl+x*}
r: \bbR \to \bbR, t \mapsto \log\left(\int_{-1/2}^{1/2} \exp(tu) \diff u\right) = \begin{cases}
0 &, t = 0 \\
\log\left(t^{-1}(\exp(t/2) - \exp(-t/2))\right) &, t \neq 0~.
\end{cases}
\end{IEEEeqnarray*}
Then, $r$ is even and $(1/2)$-Lipschitz with $r(t) \geq 0$ for all $t$ and we have more generally for $a > 0$ and $t_1, \hdots, t_d \in \bbR$:
\begin{IEEEeqnarray*}{+rCl+x*}
\log\left(\int_{[-a/2, a/2]^d} \exp\left(\sum_{k=1}^d t_k u_k\right) \diff u\right) = d\log(a) + \sum_{k=1}^d r(at_k)~.
\end{IEEEeqnarray*}
\end{lemma}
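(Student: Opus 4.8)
The plan is to evaluate everything by elementary calculus and then read off the four claims. First I would compute the one-dimensional integral directly: for $t \neq 0$ we have $\int_{-1/2}^{1/2}\exp(tu)\diff u = (\exp(t/2)-\exp(-t/2))/t$ by the fundamental theorem of calculus, while for $t=0$ the integrand is identically $1$ and the integral is $1$; taking logarithms gives exactly the stated piecewise formula for $r$. Evenness is then immediate either from the explicit expression or, more conceptually, from the substitution $u\mapsto -u$, which maps $\int_{-1/2}^{1/2}\exp(-tu)\diff u$ to $\int_{-1/2}^{1/2}\exp(tu)\diff u$.

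For nonnegativity, the cleanest argument is Jensen's inequality applied to the convex function $\exp$ against the uniform probability measure on $[-1/2,1/2]$, whose mean is $0$: this gives $\int_{-1/2}^{1/2}\exp(tu)\diff u \geq \exp(0) = 1$, hence $r(t)\geq 0$. For the Lipschitz bound I would differentiate under the integral sign (justified since the integrand is smooth in $t$ and the domain of integration is compact), obtaining that $r'(t)$ equals the ratio of $\int_{-1/2}^{1/2} u\exp(tu)\diff u$ to $\int_{-1/2}^{1/2}\exp(tu)\diff u$. This ratio is the mean of a random variable taking values in $[-1/2,1/2]$ (the exponentially tilted uniform law), so $|r'(t)|\leq 1/2$ everywhere, which yields the $(1/2)$-Lipschitz claim.

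Finally, for the multidimensional identity I would use Fubini's theorem to factor $\int_{[-a/2,a/2]^d}\exp(\sum_{k=1}^{d} t_k u_k)\diff u = \prod_{k=1}^{d} \int_{-a/2}^{a/2}\exp(t_k u_k)\diff u_k$, then in each factor substitute $u_k = a v_k$ to get $\int_{-a/2}^{a/2}\exp(t_k u_k)\diff u_k = a\int_{-1/2}^{1/2}\exp(a t_k v_k)\diff v_k = a\exp(r(a t_k))$; multiplying these $d$ equalities and taking logarithms produces $d\log(a)+\sum_{k=1}^{d} r(a t_k)$. None of this is genuinely hard — the whole lemma is a warm-up computation — but if there is any subtlety worth being careful about, it is the differentiation under the integral sign (handled by smoothness on a compact set) and checking that the case distinction in the definition of $r$ is consistent, e.g.\ that $r$ is continuous at $0$, which follows since $(\exp(t/2)-\exp(-t/2))/t \to 1$ as $t\to 0$.
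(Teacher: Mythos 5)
Your proposal is correct and follows essentially the same route as the paper: evenness by the symmetry $u \mapsto -u$, nonnegativity from convexity of $\exp$ (you use Jensen, the paper uses the equivalent tangent-line bound $e^{tu} \geq 1 + tu$), and the product/substitution factorization for the multidimensional identity. The only cosmetic difference is the Lipschitz bound, where you differentiate under the integral sign and bound $r'(t)$ as the mean of a $[-1/2,1/2]$-valued tilted random variable, while the paper bounds the difference quotient $r(t+h)-r(t)$ directly via $e^{hu} \in [e^{-h/2}, e^{h/2}]$; both are valid.
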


\begin{proof}
It follows from a simple symmetry argument that $r$ is even. We have
\begin{IEEEeqnarray*}{+rCl+x*}
\int_{-1/2}^{1/2} \exp(tu) \diff u \geq \int_{-1/2}^{1/2} (1+tu) \diff u = 1~,
\end{IEEEeqnarray*}
which shows $r(t) \geq 0$. Moreover, $\exp(hu) \in [\exp(-h/2), \exp(h/2)]$ for $h > 0$ and $u \in [-1/2, 1/2]$. Using the mean value theorem of integration, we obtain
\begin{IEEEeqnarray*}{+rCl+x*}
r(t+h) - r(t) & = & \log\left(\int_{-1/2}^{1/2} \exp(tu) \exp(hu) \diff u\right) - \log\left(\int_{-1/2}^{1/2} \exp(tu) \diff u\right) \\
& \in & [-h/2, h/2]~,
\end{IEEEeqnarray*}
which shows that $r$ is $1/2$-Lipschitz.

For the more general integral, we use that the integrand is a product of one-dimensional functions to decompose
\begin{IEEEeqnarray*}{+rCl+x*}
\log\left(\int_{[-a/2, a/2]^d} \exp\left(\sum_{k=1}^d t_k u_k\right) \diff u\right) & = & \log\left(\prod_{k=1}^d \int_{-a/2}^{a/2} \exp\left(t_k u_k\right) \diff u_k\right) \\
& \stackrel{\text{Subst.\ $u_k = av_k$}} = & \log\left(\prod_{k=1}^d \int_{-1/2}^{1/2} \exp\left(t_k a v_k\right) a \diff v_k\right) \\
& = & d\log(a) + \sum_{k=1}^d r(at_k)~. & \qedhere
\end{IEEEeqnarray*}
\end{proof}

Another ingredient for the analysis of piecewise constant approximation is to analyze the global error through the errors on individual subcubes:

\begin{lemma} \label{lemma:subcube_bound}
Let $f, g: \calX \to \bbR$ be bounded and measurable. Let $\calX_i$ be a partition of $\calX$. Let $L_f(\calX_i) \equalDef \log \left(\int_{\calX_i} \exp(f(x)) \diff x\right)$. Then,
\begin{IEEEeqnarray*}{+rCl+x*}
\inf_i [L_f(\calX_i) - L_g(\calX_i)] \leq L_f - L_g \leq \sup_i [L_f(\calX_i) - L_g(\calX_i)]~.
\end{IEEEeqnarray*}
\end{lemma}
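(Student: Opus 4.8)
The plan is to reduce the statement to the elementary \emph{mediant inequality}: a ratio of sums of positive numbers lies between the smallest and the largest of the individual ratios.

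First I would observe that, since the $\calX_i$ form a partition of $\calX$, countable additivity of the integral gives $Z_f = \int_\calX e^{f} = \sum_i \int_{\calX_i} e^{f} = \sum_i e^{L_f(\calX_i)}$ and, likewise, $Z_g = \sum_i e^{L_g(\calX_i)}$. Both sums converge and are strictly positive because $f$ and $g$ are bounded. Discarding the (at most countably many) pieces of Lebesgue measure zero changes none of these integrals, so I may assume every $\calX_i$ has positive measure, whence $a_i \equalDef e^{L_f(\calX_i)} = \int_{\calX_i} e^f$ and $b_i \equalDef e^{L_g(\calX_i)} = \int_{\calX_i} e^g$ are finite and strictly positive; in fact boundedness of $f,g$ keeps each ratio $a_i/b_i$ inside $[e^{-\|f\|_\infty-\|g\|_\infty}, e^{\|f\|_\infty+\|g\|_\infty}]$, so all logarithms below are finite.

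Next, set $c_- \equalDef \inf_i (a_i/b_i)$ and $c_+ \equalDef \sup_i (a_i/b_i)$; by monotonicity of $\log$ these satisfy $\log c_- = \inf_i [L_f(\calX_i) - L_g(\calX_i)]$ and $\log c_+ = \sup_i [L_f(\calX_i) - L_g(\calX_i)]$. For every $i$ we have $c_- b_i \le a_i \le c_+ b_i$ directly from the definitions of $\inf$ and $\sup$; summing over $i$ (all terms nonnegative, so rearrangement is harmless) yields $c_- Z_g \le Z_f \le c_+ Z_g$, i.e.\ $c_- \le Z_f/Z_g \le c_+$. Applying $\log$ and using $\log(Z_f/Z_g) = L_f - L_g$ then gives exactly the claimed double inequality.

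I do not expect any serious obstacle here: the only points that warrant a word of care are the convergence and positivity of the series $\sum_i e^{L_f(\calX_i)}$ and $\sum_i e^{L_g(\calX_i)}$ and the harmless removal of null pieces of the partition so that the ratios $a_i/b_i$ are well defined. In the intended application the partition is finite and every piece has positive measure, so even these issues are vacuous and the proof is just the finite mediant inequality followed by taking logarithms.
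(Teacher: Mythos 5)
Your proof is correct and takes essentially the same route as the paper's: both arguments bound each piece via $\int_{\calX_i} e^f \le e^{s}\int_{\calX_i} e^g$ with $s = \sup_i[L_f(\calX_i)-L_g(\calX_i)]$ (and the analogous infimum bound), sum over the partition, and take logarithms — your ``mediant inequality'' phrasing is just the multiplicative form of this. The extra care you take about null pieces and convergence of the sums is fine but not needed in the paper's (finite-partition) application.
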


\begin{proof}
We prove the second inequality here; the first one follows analogously. Let $s \equalDef \sup_i [L_f(\calX_i) - L_g(\calX_i)]$. Then,
\begin{IEEEeqnarray*}{+rCl+x*}
L_f & = & \log\left(\sum_i \exp(L_f(\calX_i))\right) \leq \log\left(\sum_i \exp(s) \exp(L_g(\calX_i))\right) = L_g + s~. & \qedhere
\end{IEEEeqnarray*}
\end{proof}

We now prove convergence rates for piecewise constant approximation, using a combination of different approaches:

\thmPCApprox*

\begin{proof}
Recall from \Cref{sec:pc_approx} that $g_{f, n}$ is piecewise constant on the cubes $\calX_i$, interpolating $f$ in the cube centers $x^{(i)}$.

\textbf{Step 1: Lipschitz-type upper bounds.} Let $f \in \calF_{d, m, B}$. Since $f$ is $Bd^{1/2}$-Lipschitz by \Cref{lemma:lipschitz_constant}, it is easy to see that $\|f - g_{f, n}\|_\infty \leq O_{m, d}(B/N) = O_{m, d}(Bn^{-1/d})$. Then, it follows directly from \Cref{prop:app_bounds} that $|L_f - L_{g_{f, n}}| \leq \|f - g_{f, n}\|_\infty \leq O_{m, d}(Bn^{-1/d})$ and $\Dsuplog(P_f, P_{g_{f, n}}) \leq 2\|f - g_{f, n}\|_\infty \leq O_{m, d}(Bn^{-1/d})$.

\textbf{Step 2: Lipschitz-type lower bound for log-partition with $m=1$.} If $m=1$, it follows that
\begin{IEEEeqnarray*}{+rCl+x*}
\sup_{f \in \calF_{d, m, B}} |L_f - L_{g_{f, n}}| & \geq & e_n(\calF_{d, m, B}, S_L, \Dabs) \stackrel{\text{\Cref{thm:deterministic_points_rates}}}{\geq} \Omega_{m, d}(Bn^{-1/d})~.
\end{IEEEeqnarray*}

\textbf{Step 3: Lipschitz-type lower bound for sampling.} Take $f(x) = \beta (x_1 + \cdots + x_d)$, where $\beta = B/d$, such that $f \in \calF_{d, m, B}$. Pick the cube $\calX_1 = (0, 1/N)^d$. Then, we have
\begin{IEEEeqnarray*}{+rCl+x*}
\Dsuplog(P_f, P_{g_{f, n}}) = \|\bar f - \bar g_{f, n}\|_\infty \geq \frac{1}{2} \left(\sup_{x \in \calX_1} f(x) - \inf_{x \in \calX_1} f(x)\right) = \frac{1}{2} B/N = \frac{1}{2} Bn^{-1/d}~.
\end{IEEEeqnarray*}

\textbf{Step 4: Lower bound for log-partition with $m \geq 2$.} As in Step 3, take $f(x) = \beta (x_1 + \cdots + x_d)$, where $\beta = B/d$, such that $f \in \calF_{d, m, B}$. To prove a lower bound on $L_f - L_{g_{f, n}}$, we follow \Cref{lemma:subcube_bound} and lower-bound the errors $L_f(\calX_i) - L_{g_{f, n}}(\calX_i)$ on individual subcubes $\calX_i$. 

\textbf{Step 4.1: First lower bound.} Fix a subcube $\calX_i$ and set $a = 1/N$ and $t \equalDef \frac{\partial f}{\partial x_k}(x^{(i)}) = (\beta, \hdots, \beta)$. Denote the volume of $\calX_i$ by $V_n = 1/n = a^d$. Since $g_{f, n}$ is constant on $\calX_i$, we have
\begin{IEEEeqnarray*}{+rCl+x*}
&& L_f(\calX_i) - L_{g_{f, n}}(\calX_i) \\
& = & \log\left(\exp(f(x^{(i)})) \int_{\calX_i} \exp(\langle t, x - x^{(i)}\rangle) \diff x\right) - \log\left(V_n \exp(f(x^{(i)}))\right) \\
& \stackrel{\text{\Cref{lemma:linear_logpartition}}}{=} & d\log(a) + \left(\sum_{k=1}^d r(at_k)\right) - \log(V_n) = \sum_{k=1}^d r(at_k) = dr(\beta/N) \\
& \stackrel{\text{\Cref{lemma:pc_exp_lin_function}}}{\geq} & cd \min\{|\beta/N|, |\beta/N|^2\} \geq \Omega_{m, d}(\min\{B n^{-1/d}, B^2 n^{-2/d}\})~,
\end{IEEEeqnarray*}
This lower bound is independent of $i$, hence by \Cref{lemma:subcube_bound}, we obtain
\begin{IEEEeqnarray*}{+rCl+x*}
L_f - L_{g_{f, n}} \geq \Omega_{m, d}(\min\{B n^{-1/d}, B^2 n^{-2/d}\})~.
\end{IEEEeqnarray*}

\textbf{Step 4.2: Second lower bound.} The lower bound above implicitly uses the strong convexity of $\exp(f)$. However, all of the curvature of $\exp(f)$ in the bound above comes from $\exp$ and none from $f$. This is not sufficient in the case $B \ll 1$, where the quadratic dependency on $B$ in $B^2 n^{-2/d}$ is overly optimistic.
For the case $B < 1$, we put the curvature into $f$ by setting $f(x) \equalDef \beta \sum_{k=1}^d x_k^2$. We then have $\partial_k f(x) = 2\beta x_k$ and $\partial_k^2 f(x) = 2\beta$. Hence, we set $\beta \equalDef B/(2d)$ to ensure that $f \in \calF_{d, m, B}$. We now again consider a subcube $\calX_i$, for which we compute
\begin{IEEEeqnarray*}{+rCl+x*}
\int_{\calX_i} \exp(g_{f, n}(x)) \diff x & = & V_n \exp(f(x^{(i)}))~, \\
\int_{\calX_i} \exp(f(x)) \diff x & = & \exp(f(x^{(i)})) \int_{\calX_i} \exp(f(x) - f(x^{(i)})) \diff x \\
& \geq & \exp(f(x^{(i)})) \int_{\calX_i} \left(1 + (f(x) - f(x^{(i)}))\right) \diff x \\
& = & \exp(f(x^{(i)})) \left(V_n + \int_{\calX_i} \left(\sum_{k=1}^d \beta (x_k - x^{(i)}_k)^2\right) \diff x\right) \\
& = & \exp(f(x^{(i)})) \left(V_n + d a^{d-1} \beta \left[\frac{1}{3} u^3\right]_{-a/2}^{a/2}\right) \\
& = & \exp(f(x^{(i)})) V_n \left(1 + d \frac{2}{3 \cdot 2^3} \beta a^2\right)~.
\end{IEEEeqnarray*}
Hence, we have
\begin{IEEEeqnarray*}{+rCl+x*}
L_f(\calX_i) - L_g(\calX_i) & \geq & \log\left(1 + \frac{d}{12} \beta n^{-2/d}\right) = \log\left(1 + \frac{1}{24} Bn^{-2/d}\right) \geq \frac{1}{48} B n^{-2/d}~,
\end{IEEEeqnarray*}
where we used $\log'(x) = 1/x \geq 1/2$ for $x \in [1, 2]$ in the last step.

\textbf{Step 5: Better upper bound for log-partition for $m \geq 2$.} Let $m \geq 2$, $f \in \calF_{d, m, B}$ and $Bn^{-1/d} \leq 1$. We define the piece-wise first-order approximant $h_n$, where for $x$ in the interior of $\calX_i$, we set 
\begin{IEEEeqnarray*}{+rCl+x*}
h_n(x) = f(x^{(i)}) + \langle \nabla f(x^{(i)}), x - x^{(i)}\rangle~.
\end{IEEEeqnarray*}
Our goal is to use
\begin{IEEEeqnarray*}{+rCl+x*}
|L_f - L_{g_{f, n}}| \leq |L_f - L_{h_n}| + |L_{h_n} - L_g|~.
\end{IEEEeqnarray*}

\textbf{Step 5.1: Bounding the first term.}
To bound the first term, we will bound $\|f - h_n\|_\infty$. Let $\delta(t) = f(x^{(i)} + t(x - x^{(i)})) - h_n(x^{(i)} + t(x - x^{(i)}))$. Then, we use Taylor's theorem to bound
\begin{IEEEeqnarray*}{+rCl+x*}
f(x) - h_n(x) & = & \delta(1) = \delta(0) + 1 \cdot \delta'(0) + \frac{1^2}{2} \cdot \delta''(\xi)~,
\end{IEEEeqnarray*}
where $\xi \in (0, 1)$. Since $h_n$ is constructed such that $\delta(0) = \delta'(0) = 0$, we have
\begin{IEEEeqnarray*}{+rCl+x*}
|f(x) - h_n(x)| & = & \frac{1}{2} |\delta''(\xi)| = \frac{1}{2} (x - x^{(i)})^\top [\nabla^2 f(x^{(i)} + \xi (x - x^{(i)}))] (x - x^{(i)}) \\
& \leq & d^2 \|x - x^{(i)}\|_\infty^2 \|f\|_{C^2} \leq \frac{d^2 B}{(2N)^2} = O_{m, d}(BN^{-2}) = O_{m, d}(Bn^{-2/d})~.
\end{IEEEeqnarray*}
This shows $\|f - h_n\|_\infty \leq O_{m, d}(Bn^{-2/d})$ and therefore $|L_f - L_{h_n}| \leq O_{m, d}(Bn^{-2/d})$. 

\textbf{Step 5.2: Bounding the second term.} To bound $|L_{h_n} - L_{g_{f, n}}|$, we follow \Cref{lemma:subcube_bound} and bound the errors $|L_{h_n}(\calX_i) - L_{g_{f, n}}(\calX_i)|$ on individual subcubes $\calX_i$. Fix a subcube $\calX_i$ and set $a = 1/N$ and $t \equalDef \frac{\partial f}{\partial x_k}(x^{(i)})$. Denote the volume of $\calX_i$ by $V_n = 1/n = a^d$. Since $g_{f, n}$ is constant on $\calX_i$, we have
\begin{IEEEeqnarray*}{+rCl+x*}
&& |L_{h_n}(\calX_i) - L_{g_{f, n}}(\calX_i)| \\
& = & \left|\log\left(\exp(f(x^{(i)})) \int_{\calX_i} \exp(\langle t, x - x^{(i)}\rangle) \diff x\right) - \log\left(V_n \exp(f(x^{(i)}))\right)\right| \\
& \stackrel{\text{\Cref{lemma:linear_logpartition}}}{=} & \left|d\log(a) + \left(\sum_{k=1}^d r(at_k)\right) - \log(V_n)\right| = \left|\sum_{k=1}^d r(at_k)\right| \\
& \stackrel{\text{\Cref{lemma:pc_exp_lin_function}}}{\leq} & \sum_{k=1}^d C\min\{|at_k|, |at_k|^2\} \leq Cd\min\{B n^{-1/d}, B^2 n^{-2/d}\}~,
\end{IEEEeqnarray*}
where we used $|t_k| \leq \|f\|_{C^m}$ and $a = 1/N = n^{-1/d}$ in the last step. Using \Cref{lemma:subcube_bound}, we now obtain
\begin{IEEEeqnarray*}{+rCl+x*}
|L_{h_n} - L_{g_{f, n}}| & \leq & Cd\min\{B n^{-1/d}, B^2 n^{-2/d}\}~,
\end{IEEEeqnarray*}
which concludes the upper bound.
\end{proof}

The following two lemmas provide some additional bounds that have been used in the previous proof:

\begin{lemma} \label{lemma:exp_diff_quot}
Let
\begin{IEEEeqnarray*}{+rCl+x*}
H: \bbR \to \bbR, t & \mapsto & \sum_{k=0}^\infty \frac{t^k}{(k+1)!} = \begin{cases}
\frac{\exp(t) - 1}{t} &, t \neq 0 \\
1 &, t = 0~.
\end{cases} \\
h: \bbR \to \bbR, t & \mapsto & \sum_{k=0}^\infty \frac{t^k}{(k+2)!} = \begin{cases}
\frac{H(t)-1}{t} &, t \neq 0 \\
1/2 &, t = 0~.
\end{cases}
\end{IEEEeqnarray*}
Then, $H$ and $h$ are $C^\infty$. Moreover, 
\begin{itemize}
\item $H$ and $H'$ are increasing on $[0, \infty)$ with $H(0) = 1$, $H'(0) = 1/2$, and $H(t) > 0$ for all $t \in (0, \infty)$.
\item $h$ and $h'$ are increasing on $[0, \infty)$ with $h(0) = 1/2$, $h'(0) = 1/6$.
\end{itemize}
\end{lemma}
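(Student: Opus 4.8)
The plan is to treat both $H$ and $h$ as power series with nonnegative coefficients and everywhere-infinite radius of convergence, and to read off every claimed property from this structure.

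First I would note that the coefficients $1/(k+1)!$ and $1/(k+2)!$ are dominated by $1/k!$, so both series converge for every $t \in \bbR$ and define real-analytic (in particular $C^\infty$) functions on all of $\bbR$; term-by-term differentiation is legitimate because a power series may be differentiated term by term inside its radius of convergence, here infinite. The closed forms then follow by comparison with the exponential series: for $t \neq 0$ one has $\exp(t) - 1 = \sum_{k=1}^\infty t^k/k! = t \sum_{j=0}^\infty t^j/(j+1)! = t H(t)$, and $H(t) - 1 = \sum_{k=1}^\infty t^k/(k+1)! = t \sum_{j=0}^\infty t^j/(j+2)! = t h(t)$; at $t = 0$ the values $H(0) = 1$ and $h(0) = 1/2$ are just the constant terms, and continuity of the power series makes the two pieces of each definition agree.

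Next I would differentiate term by term to get $H'(t) = \sum_{j=0}^\infty (j+1) t^j/(j+2)!$ and $h'(t) = \sum_{j=0}^\infty (j+1) t^j/(j+3)!$, so that $H'(0) = 1/2$ and $h'(0) = 1/6$ (the $j = 0$ terms), and once more $H''(t) = \sum_{j=1}^\infty j(j+1) t^{j-1}/(j+2)!$ and $h''(t) = \sum_{j=1}^\infty j(j+1) t^{j-1}/(j+3)!$. Since every coefficient appearing in the series for $H, H', H'', h, h', h''$ is nonnegative, each of these functions is nonnegative on $[0, \infty)$; then $H', H'' \geq 0$ on $[0,\infty)$ gives that $H$ and $H'$ are increasing there, and the identical argument with $h', h''$ handles $h$. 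Finally $H(t) \geq 1 > 0$ for all $t \geq 0$, since the constant term alone contributes $1$ and the remaining terms are nonnegative, which in particular gives $H(t) > 0$ on $(0,\infty)$.

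I do not expect a genuine obstacle here: the statement is pure bookkeeping about a power series with nonnegative coefficients. The only point deserving a sentence of care is the justification of term-by-term differentiation, i.e. the standard theorem that a power series is infinitely differentiable within (here everywhere inside) its radius of convergence, together with the trivial observation that differentiation preserves nonnegativity of the coefficients.
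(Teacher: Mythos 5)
Your proposal is correct and follows essentially the same route as the paper's proof: exploit the everywhere-convergent power series with nonnegative coefficients, differentiate term by term, and read off the values at $0$, the monotonicity from $H', H'', h', h'' \geq 0$ on $[0,\infty)$, and the positivity of $H$. The only cosmetic difference is that the paper derives $H(t) > 0$ from the closed form $(\exp(t)-1)/t$ while you use the constant term of the series; both are immediate.
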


\begin{proof}
Using the series representation, it follows that $H$ and $h$ are $C^\infty$. We also directly obtain $H(0) = 1$ and $H'(0) = 1/2$ as well as $h(0) = 1/2$ and $h'(0) = 1/6$. Moreover, it follows that $H', H'' \geq 0$ on $[0, \infty]$, which implies that $H'$ and $H$ are increasing on $[0, \infty)$. The inequality $H(t) > 0$ follows from the non-series representation together with $H(0) = 1$. The results for $h$ can be derived analogously.
\end{proof}

\begin{lemma} \label{lemma:pc_exp_lin_function}
Consider the function $r$ from \Cref{lemma:linear_logpartition}. Then, there exist constants $c, C > 0$ such that
\begin{IEEEeqnarray*}{+rCl+x*}
c \min\{|t|, |t|^2\} \leq r(t) \leq C \min\{|t|, |t|^2\}
\end{IEEEeqnarray*}
for all $t \in \bbR$. 
\end{lemma}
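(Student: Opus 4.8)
The plan is to first reduce to the case $t \ge 0$, using that $r$ is even by \Cref{lemma:linear_logpartition} and that $\min\{|t|,t^2\}$ depends only on $|t|$. I would then rewrite, for $t>0$, $r(t) = \log\!\big(\tfrac{2}{t}\sinh\tfrac{t}{2}\big) = \log\!\big(1 + \sum_{k\ge 1}(t/2)^{2k}/(2k+1)!\big)$, which already makes the positivity $r(t)>0$ for $t>0$ transparent (with $r(0)=0$ and $r(t)\ge 0$ already recorded in \Cref{lemma:linear_logpartition}). From here I would split into the regimes $t\in[0,1]$, where $\min\{|t|,t^2\}=t^2$, and $t\ge 1$, where $\min\{|t|,t^2\}=t$.

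For the upper bound: on $[0,1]$ I would dominate the tail of the series by a geometric series (using $1/(2k+1)!\le 1/6$ for $k\ge 1$ and $(t/2)^{2k-2}\le (1/4)^{k-1}$) to obtain $\tfrac{2}{t}\sinh\tfrac{t}{2}\le 1+\tfrac{2}{9}(t/2)^2$, hence $r(t)\le \tfrac{t^2}{18}$ via $\log(1+x)\le x$; and for every $t\ge 0$ I would use $1-e^{-x}\le x$ to get $\tfrac{2}{t}\sinh\tfrac{t}{2}=e^{-s}\tfrac{e^{2s}-1}{2s}\le e^{s}$ with $s=t/2$, i.e. $r(t)\le t/2$. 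For the lower bound on $[0,1]$ I would keep only the first two terms, $\tfrac{2}{t}\sinh\tfrac{t}{2}\ge 1+(t/2)^2/6$, and combine it with $\log(1+x)\ge x/2$ for $x\in[0,1]$ (valid since $(t/2)^2/6\le 1/24$) to get $r(t)\ge\tfrac{t^2}{48}$.

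The main obstacle is the linear lower bound $r(t)\ge c\,t$ on $[1,\infty)$, where chasing an explicit constant is unpleasant. Instead I would use the identity $r(t)=\tfrac{t}{2}+\log(1-e^{-t})-\log t$ (from the same $e^{-s}(e^{2s}-1)/(2s)$ rewriting) to see that $r(t)/t\to 1/2$ as $t\to\infty$; since $t\mapsto r(t)/t$ is also continuous and strictly positive on $[1,\infty)$, a continuous positive function on $[1,\infty)$ with a positive limit at infinity is bounded below by a positive constant, so $c_2\equalDef\inf_{t\ge 1}r(t)/t>0$. Finally, setting $c\equalDef\min\{1/48,\,c_2\}$ and $C\equalDef 1/2$, the bounds above give $c\,t^2\le r(t)\le C\,t^2$ on $[0,1]$ and $c\,t\le r(t)\le C\,t$ on $[1,\infty)$, which together with the trivial case $t=0$ and evenness yields $c\min\{|t|,t^2\}\le r(t)\le C\min\{|t|,t^2\}$ for all $t\in\bbR$, as claimed.
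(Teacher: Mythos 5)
Your proof is correct, and it takes a genuinely different route from the paper's. The paper works entirely through the derivative: it computes $r'(t) = \tfrac12 - h(t)/H(t)$ with $H(t)=(e^t-1)/t$ and $h(t)=(H(t)-1)/t$ (\Cref{lemma:exp_diff_quot}), lower-bounds $r'$ by $\tilde r(t)=\tfrac12(1-1/H(t))$, shows $\tilde r(t)\ge \tilde c\min\{1,t\}$ from $\tilde r(0)=0$, $\tilde r'(0)=1/4$ and $\tilde r'>0$, and then integrates twice to get the lower bound; the upper bound comes from the $\tfrac12$-Lipschitz property of $r$ and a second-order Taylor expansion at $0$ with constant $\tfrac12\sup_{[0,1]}r''$. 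You instead work directly with the positive power series $\tfrac{2}{t}\sinh(t/2)=1+\sum_{k\ge1}(t/2)^{2k}/(2k+1)!$, which gives explicit two-sided quadratic bounds near $0$ (constants $1/48$ and $1/18$), the global bound $r(t)\le t/2$ from $1-e^{-x}\le x$, and the linear lower bound at infinity from the exact identity $r(t)=t/2+\log(1-e^{-t})-\log t$ together with continuity and positivity of $r(t)/t$ on $[1,\infty)$. Your version avoids the auxiliary derivative lemma entirely and makes the upper-bound constant fully explicit ($C=1/2$ versus the paper's $\sup r''$); both arguments still resort to a non-explicit compactness step for one constant (your $c_2=\inf_{t\ge1}r(t)/t$, the paper's $\tilde c=\inf_{[0,1]}\tilde r'$), so neither is strictly more quantitative than the other. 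All the individual estimates you use check out ($\log(1+x)\ge x/2$ on $[0,1]$, the geometric domination of the series tail, and the reduction of $\tfrac{2}{t}\sinh(t/2)\le e^{t/2}$ to $1-e^{-x}\le x$), so the argument is complete.
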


\begin{proof}
Since $r$ is an even function, it suffices to prove the inequalities for $t \geq 0$.

\textbf{Step 1: Simplifying the derivative.} First, we compute the derivative of $r$ for $t \neq 0$:
\begin{IEEEeqnarray*}{+rCl+x*}
r'(t) & = & \frac{t}{\exp(t/2) - \exp(-t/2)} \cdot \left(\frac{\exp(t/2) + \exp(-t/2)}{2t} - \frac{\exp(t/2) - \exp(-t/2)}{t^2}\right) \\
& = & \frac{1}{2} \frac{\exp(t) + 1}{\exp(t) - 1} - \frac{1}{t} = \frac{1}{2} + \frac{1}{\exp(t) - 1} - \frac{1}{t} \\
& = & \frac{1}{2} + \frac{1}{t}\left(\frac{1}{\left(\frac{\exp(t) - 1}{t}\right)} - 1\right) = \frac{1}{2} + \frac{1}{t}\left(\frac{1}{H(t)} - 1\right) = \frac{1}{2} + \frac{1}{t} \cdot \frac{1-H(t)}{H(t)} \\
& = & \frac{1}{2} - \frac{h(t)}{H(t)}~, \IEEEyesnumber \label{eq:r_derivative}
\end{IEEEeqnarray*}
where we used the functions $h$ and $H$ from \Cref{lemma:exp_diff_quot}. Since $h$ and $H$ are also continuous in $t = 0$, the equation
\begin{IEEEeqnarray*}{+rCl+x*}
r'(t) = \frac{1}{2} - \frac{h(t)}{H(t)}
\end{IEEEeqnarray*}
holds for all $t \in \bbR$.

\textbf{Step 2: Upper bound.} Since $r$ is $1/2$-Lipschitz, we obtain $r(t) = r(t) - r(0) \leq t/2$ for $t \geq 0$. For $t \in [0, 1]$, we can use $r'(0) = 0$ to obtain
\begin{IEEEeqnarray*}{+rCl+x*}
r(t) = r(0) + t r'(0) + \frac{t^2}{2} r''(\xi) \leq \frac{t^2}{2} \sup_{u \in [0, 1]} r''(u) \leq Ct^2
\end{IEEEeqnarray*} 
for some $\xi \in [0, t]$ and the constant $C = \frac{1}{2}\sup_{u \in [0, 1]} r''(u) > 0$. This shows $r(t) \leq O(\min\{t, t^2\})$.

\textbf{Step 3: Lower bound.} We can now use \Cref{lemma:exp_diff_quot} to further simplify for $t \geq 0$
\begin{IEEEeqnarray*}{+rCl+x*}
r'(t) & = & \frac{1}{2} - \frac{h(t)}{H(t)} \geq \frac{1}{2} - \frac{1/2}{H(t)} = \frac{1}{2} \left(1 - \frac{1}{H(t)}\right) \defEqual \tilde r(t)~.
\end{IEEEeqnarray*}
We find that
\begin{IEEEeqnarray*}{+rCl+x*}
\tilde r'(t) & = & \frac{H'(t)}{2H(t)^2}
\end{IEEEeqnarray*}
satisfies $\tilde r'(0) = 1/4$ and $\tilde r'(t) > 0$ for all $t \in [0, \infty)$. 

Set $\tilde c \equalDef \inf_{t \in [0, 1]} \tilde r'(t) > 0$. Since $H(0) = 1$, we have $\tilde r(0) = 0$. For $t \in [0, 1]$, this yields
\begin{IEEEeqnarray*}{+rCl+x*}
\tilde r(t) = r(0) + \int_0^t \tilde r(u) \diff u \geq \tilde ct~.
\end{IEEEeqnarray*}
For $t > 1$, since $\tilde r$ is increasing, we obtain $\tilde r(t) \geq \tilde c$. In total, this yields
\begin{IEEEeqnarray*}{+rCl+x*}
\tilde r(t) \geq \tilde c\min\{1, t\} \qquad \text{ for } t \in [0, \infty)~.
\end{IEEEeqnarray*}
Now, we obtain for $t \in [0, 1]$
\begin{IEEEeqnarray*}{+rCl+x*}
r(t) & = & r(0) + \int_0^t r'(u) \diff u \geq \int_0^t \tilde r(u) \diff u \geq \int_0^t \tilde c\min\{1, u\} \diff u = \frac{\tilde c}{2} t^2
\end{IEEEeqnarray*}
and for $t > 1$
\begin{IEEEeqnarray*}{+rCl+x*}
r(t) & = & r(1) + \int_1^t r'(u) \diff u \geq r(1) + \int_1^t \tilde r(u) \diff u \geq r(1) + \int_1^t \tilde c\min\{1, u\} \diff u \\
& \geq & \frac{\tilde c}{2} + \tilde c(t-1) \geq \frac{\tilde c}{2} t~. & \qedhere
\end{IEEEeqnarray*}
\end{proof}

\subsubsection{Proofs for Density-based Approximation} \label{sec:appendix:density_based}

In the following, we analyze how log-partition and sampling errors can be bounded in terms of the underlying unnormalized densities:

\PropDensityApxBound*

\begin{proof}
\textbf{Step 1: Log-partition function.} We have
\begin{IEEEeqnarray*}{+rCl+x*}
\log I_p - \log I_q & \leq & \log I_p - \log \left(\int_{\calX} (p(x) - \|p - q\|_\infty) \diff x\right) = \log \left(\frac{I_p}{I_p - \|p - q\|_\infty}\right) \\
& = & \log \left(\frac{1}{1 - \|p - q\|_\infty /I_p}\right) \\
\log I_q - \log I_p & \leq & \log \left(\int_{\calX} (p(x) + \|p - q\|_\infty) \diff x\right) - \log I_p = \log \left(\frac{I_p + \|p - q\|_\infty}{I_p}\right) \\
& = & \log (1 + \|p - q\|_\infty/I_p) \\
& \leq & \log \left(\frac{1}{1 - \|p - q\|_\infty /I_p}\right).
\end{IEEEeqnarray*}

\textbf{Step 2: Total variation distance.}
Without loss of generality, assume that $I_q \leq I_p = 1$, such that $\max\{I_p, I_q\} = 1$. Define the normalized density function $\tilde q \equalDef q/I_q$. Then,
\begin{IEEEeqnarray*}{+rCl+x*}
\DTV(P, Q) & = & \frac{1}{2}\int_{\calX} |p(x) - \tilde q(x)| \diff x \leq \frac{1}{2}\int_{\calX} (|p(x) - q(x)| + q(x)|(1 - 1/I_q)|) \diff x \\
& \leq & \frac{1}{2} \|p - q\|_\infty + \frac{1}{2} I_q |1 - 1 / I_q| \leq \|p - q\|_\infty~,
\end{IEEEeqnarray*}
where we have used $I_q |1 - 1/I_q| = |I_q - 1| = |I_q - I_p| \leq \|p - q\|_\infty$ in the last step.
\end{proof}

Next, we turn to bounding $\|p_f\|$ in terms of $\|f\|$. Our first result will provide a bound for the sup-norm:

\begin{lemma} \label{lemma:sup_density}
For $f \in C^1(\calX)$, we have
\begin{IEEEeqnarray*}{+rCl+x*}
\|p_f\|_\infty \leq O_{m, d}(\max\{1, \|f\|_{C^1}\}^d)~.
\end{IEEEeqnarray*}
\end{lemma}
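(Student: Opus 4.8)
The plan is to reduce everything to the optimization-limit bound in \Cref{lemma:lipschitz_maximization_bound}. The first observation is that, since $\exp$ is increasing and $\vol(\calX)=1$,
\begin{IEEEeqnarray*}{+rCl+x*}
\|p_f\|_\infty = \sup_{x \in \calX} \exp(f(x) - L_f) = \exp(M_f - L_f)~,
\end{IEEEeqnarray*}
and moreover $L_f = \log \int_\calX \exp(f(x)) \diff x \leq \log(\exp(M_f)) = M_f$, so that $M_f - L_f = |M_f - L_f| \geq 0$. Hence it suffices to upper bound $M_f - L_f$.

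Next I would apply the first part of \Cref{lemma:lipschitz_maximization_bound} with temperature $\eps = 1$, which gives $M_f - L_f \leq d\log(1 + 3d^{-1/2}|f|_1)$, and then use \Cref{lemma:lipschitz_constant} (which requires $f \in C^m$ with $m \geq 1$, in particular $f \in C^1$) to bound $|f|_1 \leq d^{1/2}\|f\|_{C^1}$, so that $3d^{-1/2}|f|_1 \leq 3\|f\|_{C^1}$. Combining,
\begin{IEEEeqnarray*}{+rCl+x*}
\|p_f\|_\infty = \exp(M_f - L_f) \leq (1 + 3d^{-1/2}|f|_1)^d \leq (1 + 3\|f\|_{C^1})^d \leq (4\max\{1, \|f\|_{C^1}\})^d = 4^d \max\{1, \|f\|_{C^1}\}^d~,
\end{IEEEeqnarray*}
which is the claimed bound $O_{m, d}(\max\{1, \|f\|_{C^1}\}^d)$, with the hidden constant $4^d$ depending only on $d$.

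There is essentially no obstacle here: the statement is a direct corollary of the optimization-limit lemma already proven in the appendix, and the only minor care needed is to note that $M_f - L_f$ is automatically nonnegative (so the absolute value in \Cref{lemma:lipschitz_maximization_bound} can be dropped) and that $\|p_f\|_\infty$ is controlled by the \emph{gap} $M_f - L_f$ rather than by $\|f\|_\infty$ itself — this is precisely why the bound is polynomial in $\|f\|_{C^1}$ rather than exponential in $\|f\|_\infty$. It is worth remarking in passing that this lemma is the $C^1$-level input to the sharper \Cref{thm:density_norm}, whose higher-norm bounds additionally require differentiating $p_f$ and controlling the resulting products of derivatives of $f$.
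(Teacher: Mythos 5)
Your proof is correct and follows essentially the same route as the paper: both express $\|p_f\|_\infty$ as $\exp(M_f - L_f)$, bound the gap via \Cref{lemma:lipschitz_maximization_bound} (at $\eps=1$), and convert $|f|_1$ to $\|f\|_{C^1}$ via \Cref{lemma:lipschitz_constant}. Your version merely spells out the final constant $4^d$ more explicitly.
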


\begin{proof}
By \Cref{lemma:lipschitz_constant}, we have $|f|_1 \leq d^{1/2} \|f\|_{C^1}$, and hence
\begin{IEEEeqnarray*}{+rCl+x*}
\|p_f\|_\infty & = & \frac{\exp(M_f)}{Z_f} = \exp(M_f - L_f) \\
& \stackrel{\text{\Cref{lemma:lipschitz_maximization_bound}}}{\leq} & \exp(d\log(1+3d^{-1/2}|f|_1)) = (1 + 3d^{-1/2}|f|_1)^d \\
& \leq & O_{m, d}(\max\{1, \|f\|_{C^1}\}^d)~. & \qedhere
\end{IEEEeqnarray*}
\end{proof}

The following lemma helps to bound the norms of products, which occur in the derivatives of $\exp(f)$:

\begin{lemma} \label{lemma:norm_function_product}
Let $f, g \in C^m(\calX)$ for $m \geq 0$. Then, $\|fg\|_{C^m} \leq 2^m \|f\|_{C^m} \|g\|_{C^m}$.

\begin{proof}
We use induction on $m$. This claim is obviously true for $m=0$. Now suppose it is true for $m$ and that $f, g \in C^{m+1}(\calX)$. Take any $\alpha \in \bbN_0^d$ with $|\alpha|_1 = m+1$. Then, we can write $\partial_\alpha = \partial_\beta \partial_j$ for some $j \in \{1, \hdots, d\}$ and $\beta \in \bbN_0^d$ with $|\beta| = m$. We then have
\begin{IEEEeqnarray*}{+rCl+x*}
\|\partial_\alpha (fg)\|_\infty & = & \|\partial_\beta \partial_j (fg)\|_\infty = \|\partial_\beta ((\partial_j f)g + f(\partial_j g))\|_\infty \\
& \leq & \|(\partial_j f)g + f(\partial_j g)\|_{C^m} \leq 2^m \left(\|\partial_j f\|_{C^m} \|g\|_{C^{m}} + \|f\|_{C^{m}} \|\partial_j g\|_{C^{m}}\right) \\
& \leq & 2^{m+1} \|f\|_{C^{m+1}} \|g\|_{C^{m+1}}~.
\end{IEEEeqnarray*}
Moreover, for any $\alpha \in \bbN_0^d$ with $|\alpha|_1 \leq m$, we have
\begin{IEEEeqnarray*}{+rCl+x*}
\|\partial_\alpha (fg)\|_\infty \leq \|fg\|_{C^m} \leq 2^m \|f\|_{C^m} \|g\|_{C^m} \leq 2^{m+1} \|f\|_{C^{m+1}} \|g\|_{C^{m+1}}~.
\end{IEEEeqnarray*}
This completes the proof of the induction step.
\end{proof}
\end{lemma}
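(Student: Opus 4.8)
The plan is to prove $\|fg\|_{C^m} \leq 2^m \|f\|_{C^m}\|g\|_{C^m}$ by induction on $m$. The base case $m = 0$ is immediate, since $\|fg\|_{C^0} = \|fg\|_\infty \leq \|f\|_\infty\|g\|_\infty = \|f\|_{C^0}\|g\|_{C^0}$ and $2^0 = 1$. For the induction step I would assume the inequality holds at level $m$ for all pairs of functions in $C^m(\calX)$, take $f, g \in C^{m+1}(\calX)$, and use that $\|fg\|_{C^{m+1}} = \sup_{|\alpha|_1 \leq m+1}\|\partial_\alpha(fg)\|_\infty$; the key move is to split this supremum into the multi-indices $\alpha$ with $|\alpha|_1 \leq m$ and those with $|\alpha|_1 = m+1$.

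For $|\alpha|_1 \leq m$, the term $\|\partial_\alpha(fg)\|_\infty$ is at most $\|fg\|_{C^m}$, hence at most $2^m\|f\|_{C^m}\|g\|_{C^m}$ by the inductive hypothesis, and finally at most $2^{m+1}\|f\|_{C^{m+1}}\|g\|_{C^{m+1}}$ using the trivial monotonicity $\|h\|_{C^m}\leq\|h\|_{C^{m+1}}$. For $|\alpha|_1 = m+1$, I would pick an index $j$ with $\alpha_j \geq 1$ and write $\partial_\alpha = \partial_\beta\partial_j$ with $|\beta|_1 = m$; applying the Leibniz rule $\partial_j(fg) = (\partial_j f)g + f(\partial_j g)$, bounding $\|\partial_\beta(\cdot)\|_\infty \leq \|\cdot\|_{C^m}$, and using subadditivity of $\|\cdot\|_{C^m}$ gives $\|\partial_\alpha(fg)\|_\infty \leq \|(\partial_j f)g\|_{C^m} + \|f(\partial_j g)\|_{C^m}$. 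Since $f, g \in C^{m+1}(\calX)$ forces $\partial_j f, \partial_j g \in C^m(\calX)$, the inductive hypothesis applies to the pairs $(\partial_j f, g)$ and $(f, \partial_j g)$, yielding $2^m\|\partial_j f\|_{C^m}\|g\|_{C^m} + 2^m\|f\|_{C^m}\|\partial_j g\|_{C^m}$; bounding $\|\partial_j f\|_{C^m} \leq \|f\|_{C^{m+1}}$, $\|g\|_{C^m}\leq\|g\|_{C^{m+1}}$ and symmetrically produces exactly $2^{m+1}\|f\|_{C^{m+1}}\|g\|_{C^{m+1}}$. Taking the supremum over all admissible $\alpha$ closes the induction.

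I do not expect a genuine obstacle here; the argument is essentially bookkeeping. The only points needing a little attention are recalling that dropping one order of differentiability lets $\partial_j f$ serve as a legitimate input to the inductive hypothesis, and noting that every $\|\cdot\|_{C^m}$ factor can be replaced by $\|\cdot\|_{C^{m+1}}$ without spoiling the constant $2^{m+1}$. A non-inductive alternative would invoke the multivariate Leibniz formula $\partial_\alpha(fg) = \sum_{\beta\leq\alpha}\binom{\alpha}{\beta}\partial_\beta f\,\partial_{\alpha-\beta}g$ together with $\sum_{\beta\leq\alpha}\binom{\alpha}{\beta} = 2^{|\alpha|_1}$, but the induction is cleaner and avoids tracking multinomial coefficients, so I would present that version.
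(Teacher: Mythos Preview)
Your proposal is correct and follows essentially the same argument as the paper: both proceed by induction on $m$, split the supremum over $\alpha$ according to whether $|\alpha|_1 \leq m$ or $|\alpha|_1 = m+1$, and in the latter case peel off one derivative via $\partial_\alpha = \partial_\beta\partial_j$, apply the Leibniz rule, and invoke the induction hypothesis on the pairs $(\partial_j f, g)$ and $(f, \partial_j g)$. The only difference is expository: you explicitly note the alternative via the multivariate Leibniz formula and the identity $\sum_{\beta\leq\alpha}\binom{\alpha}{\beta}=2^{|\alpha|_1}$, which the paper does not mention.
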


Now, we can indeed bound higher-order norms of $\exp(f)$:

\begin{lemma} \label{lemma:norm_exp_f}
Let $f \in C^m(\calX)$ for $m \geq 0$. Then, 
\begin{IEEEeqnarray*}{+rCl+x*}
\|e^f\|_{C^m} \leq 2^{m(m-1)/2} \max\{1, \|f\|_{C^m}\}^m \|e^f\|_\infty~.
\end{IEEEeqnarray*}

\begin{proof}
We prove this by induction on $m$. The claim is obviously true for $m=0$. Now, suppose the claim is true for some $m \geq 0$ and let $f \in C^{m+1}(\calX)$. Take any $\alpha \in \bbN_0^d$ with $|\alpha|_1 = m+1$. Then, we can write $\partial_\alpha = \partial_\beta \partial_j$ for some $j \in \{1, \hdots, d\}$ and $\beta \in \bbN_0^d$ with $|\beta| = m$. Thus,
\begin{IEEEeqnarray*}{+rCl+x*}
\|\partial_\alpha e^f\|_\infty & = & \|\partial_\beta \partial_j e^f\|_\infty = \|\partial_\beta (\partial_j f) e^f\|_\infty \\
& \leq & \|(\partial_j f) e^f\|_{C^m} \stackrel{\text{\Cref{lemma:norm_function_product}}}{\leq} 2^m \|\partial_j f\|_{C^m} \|e^f\|_{C^m} \\
& \leq & 2^m \|f\|_{C^{m+1}} 2^{m(m-1)/2} \max\{1, \|f\|_{C^m}\}^m \|e^f\|_\infty \\
& \leq & 2^{(m+1)m/2}\max\{1, \|f\|_{C^{m+1}}\}^{m+1} \|e^f\|_\infty~.
\end{IEEEeqnarray*}
Moreover, for any $\alpha \in \bbN_0^d$ with $|\alpha|_1 \leq m$, we have by the induction hypothesis
\begin{IEEEeqnarray*}{+rCl+x*}
\|\partial_\alpha e^f\|_\infty & \leq & \|e^f\|_{C^m} \leq 2^{m(m-1)/2} \max\{1, \|f\|_{C^m}\}^m \|e^f\|_\infty \\
& \leq & 2^{(m+1)m/2} \max\{1, \|f\|_{C^{m+1}}\}^{m+1} \|e^f\|_\infty~.
\end{IEEEeqnarray*}
This completes the proof of the induction step.
\end{proof}
\end{lemma}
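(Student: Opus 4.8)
The plan is to prove this by induction on $m$, using \Cref{lemma:norm_function_product} to control the $C^m$-norm of the product that appears when one differentiates $e^f$. For the base case $m=0$ both sides equal $\|e^f\|_\infty$, since the prefactor $2^{0}\max\{1,\|f\|_{C^0}\}^0$ is $1$, so there is nothing to check.

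For the inductive step, assume the bound holds for some $m\geq 0$ and let $f\in C^{m+1}(\calX)$. I need to bound $\|\partial_\alpha e^f\|_\infty$ for every multi-index $\alpha$ with $|\alpha|_1\leq m+1$. When $|\alpha|_1\leq m$, the inductive hypothesis gives directly $\|\partial_\alpha e^f\|_\infty\leq \|e^f\|_{C^m}\leq 2^{m(m-1)/2}\max\{1,\|f\|_{C^m}\}^m\|e^f\|_\infty$, which is dominated by the claimed bound for $m+1$ because $2^{m(m-1)/2}\leq 2^{m(m+1)/2}$ and $\max\{1,\|f\|_{C^m}\}^m\leq \max\{1,\|f\|_{C^{m+1}}\}^{m+1}$ (here the $\max\{1,\cdot\}$ is exactly what makes this monotonicity in $m$ valid even when $\|f\|_{C^m}$ is small). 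When $|\alpha|_1=m+1$, factor $\partial_\alpha=\partial_\beta\partial_j$ with $|\beta|_1=m$ and use $\partial_j e^f=(\partial_j f)\,e^f$, so $\partial_\alpha e^f=\partial_\beta\big((\partial_j f)\,e^f\big)$. Then $\|\partial_\alpha e^f\|_\infty\leq \|(\partial_j f)\,e^f\|_{C^m}\leq 2^m\|\partial_j f\|_{C^m}\,\|e^f\|_{C^m}$ by \Cref{lemma:norm_function_product}, and I bound $\|\partial_j f\|_{C^m}\leq\|f\|_{C^{m+1}}$ (every derivative of $\partial_j f$ of order $\leq m$ is a derivative of $f$ of order $\leq m+1$) and apply the inductive bound to $\|e^f\|_{C^m}$.

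The remaining work is elementary bookkeeping: combining the factors gives $2^m\cdot\|f\|_{C^{m+1}}\cdot 2^{m(m-1)/2}\max\{1,\|f\|_{C^m}\}^m\|e^f\|_\infty$, and using $m+m(m-1)/2=m(m+1)/2$, $\|f\|_{C^m}\leq\|f\|_{C^{m+1}}$, and $\|f\|_{C^{m+1}}\leq\max\{1,\|f\|_{C^{m+1}}\}$ yields exactly $2^{m(m+1)/2}\max\{1,\|f\|_{C^{m+1}}\}^{m+1}\|e^f\|_\infty$. Taking the supremum over all $\alpha$ with $|\alpha|_1\leq m+1$ closes the induction. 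There is no genuine obstacle here; the only points to watch are that the power of $2$ telescopes correctly (one extra factor $2^m$ per step, so the exponent after $m$ steps is $\sum_{k=0}^{m-1}k=m(m-1)/2$) and that the $\max\{1,\cdot\}$ in the polynomial factor is precisely what allows the $|\alpha|_1\leq m$ case and the norm-monotonicity steps to go through.
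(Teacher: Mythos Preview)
Your proof is correct and follows essentially the same approach as the paper: induction on $m$, splitting off one derivative via $\partial_j e^f=(\partial_j f)e^f$, applying \Cref{lemma:norm_function_product} to the product, and then invoking the inductive hypothesis together with $\|\partial_j f\|_{C^m}\leq\|f\|_{C^{m+1}}$. The bookkeeping with the power of $2$ and the role of $\max\{1,\cdot\}$ is exactly as in the paper.
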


It might be possible to improve the dependence on $m$ in the previous lemma; we ignored this since we do not study the dependence on $m$. Combining the previous lemmas, we arrive at a higher-order norm bound for the density:

\thmDensityNorm*

\begin{proof}
\textbf{Step 1: Upper bound.} For $f \in \calF_{d, m, B}$, we have
\begin{IEEEeqnarray*}{+rCl+x*}
\|p_f\|_{C^m} & = & \|e^f/Z_f\|_{C^m} = \frac{1}{Z_f} \|e^f\|_{C^m} \\
& \stackrel{\text{\Cref{lemma:norm_exp_f}}}{\leq} & 2^{m(m-1)/2} \max\{1, \|f\|_{C^m}\}^m \|e^f/Z_f\|_\infty \\
& \stackrel{\text{\Cref{lemma:sup_density}}}{\leq} & 2^{m(m-1)/2} \max\{1, \|f\|_{C^m}\}^m O_{d}(\max\{1, \|f\|_{C^1}\}^d) \\
& \leq & O_{m, d}(\max\{1, \|f\|_{C^m}\}^{d+m})~.
\end{IEEEeqnarray*}

\textbf{Step 2: Lower bound.} Consider $f: \calX \to \bbR, x \mapsto Bd^{-1}(x_1 + \cdots + x_d)$. A simple calculation yields $\|f\|_{C^m} = B$, hence $f \in \calF_{d, m, B}$. Moreover, we have $M_f = f(1, \hdots, 1) = B$. We can also calculate
\begin{IEEEeqnarray*}{+rCl+x*}
Z_f & = & \int_{\calX}\left(\prod_{i=1}^d \exp(Bd^{-1}x_i)\right) \diff x = \left(\int_0^1 \exp(Bd^{-1} x_1) \diff x_1\right)^d \\
& = & (B^{-1}d [\exp(Bd^{-1}) - 1])^d \leq (B^{-1} d \exp(Bd^{-1}))^d = (B^{-1} d)^d \exp(B)~.
\end{IEEEeqnarray*}
Finally, we compute
\begin{IEEEeqnarray*}{+rCl+x*}
\partial_{x_1}^m p_f(x)\vert_{x=(1, \hdots, 1)} & = & \partial_{x_1}^m \frac{\exp(f(x))}{Z_f} \Bigg\vert_{x=(1, \hdots, 1)} = (Bd^{-1})^m \frac{f(1, \hdots, 1)}{Z_f} \geq (Bd^{-1})^{d+m}~.
\end{IEEEeqnarray*}
We know that $\sup_{x \in \calX} p_f(x) \geq 1$ because $p_f$ is a density on a unit-volume domain. Hence, we conclude $\|p_f\|_{C^m} \geq \Omega_{m, d}(\max\{1, B\}^{d+m})$.
\end{proof}

\subsection{Proofs for Simple Stochastic Algorithms}

\subsubsection{Proofs for Rejection Sampling} \label{sec:appendix:rs}

The following bound for rejection sampling with uniform proposal distribution is a consequence of the general rejection sampling bounds in \Cref{lemma:rejection_sampling}.

\propRejectionSampling*

\begin{proof}
\textbf{Step 1: Rejection probability.} Set $g(x) \equalDef M_f$. Since
\begin{IEEEeqnarray*}{+rCl+x*}
\|p_f\|_\infty = \frac{\exp(M_f)}{Z_f} = \frac{Z_g}{Z_f}~,
\end{IEEEeqnarray*}
the overall rejection probability $p_R$ from \Cref{lemma:rejection_sampling} satisfies
\begin{IEEEeqnarray*}{+rCl+x*}
p_R \stackrel{\text{\Cref{lemma:rejection_sampling}}}{\leq} \exp(-nZ_f/Z_g) = \exp(-n/\|p_f\|_\infty)~.
\end{IEEEeqnarray*}

\textbf{Step 2: Sup-log distance.} We have $\Dsuplog(P_f, P_g) = \|\bar f - \bar g\|_\infty = \|\bar f\|_\infty \leq 2\|f\|_\infty$. We then obtain from \Cref{lemma:rejection_sampling} that
\begin{IEEEeqnarray*}{+rCl+x*}
\Dsuplog(P_f, \tilde P_f) & \leq & \min\{\Dsuplog(P_f, P_g), p_R(\exp(\Dsuplog(P_f, P_g)) - 1)\} \\
& \leq & \min\left\{2\|f\|_\infty, \exp\left(2\|f\|_\infty - n/\|p_f\|_\infty\right)\right\}~.
\end{IEEEeqnarray*}

\textbf{Step 3: TV distance.} For the TV distance, we compute 
\begin{IEEEeqnarray*}{+rCl+x*}
\DTV(P_f, P_g) \leq \Dsuplog(P_f, P_g) \leq 2\|f\|_\infty
\end{IEEEeqnarray*}
and also employ the trivial bound $\DTV(P_f, P_g) \leq 1$. From \Cref{lemma:rejection_sampling}, we obtain
\begin{IEEEeqnarray*}{+rCl+x*}
\DTV(P_f, \tilde P_f) & = & p_R \DTV(P_f, P_g) \leq \exp(-n/\|p_f\|_\infty) \min\{1, 2\|f\|_\infty\}~.
\end{IEEEeqnarray*}
Using that $\exp(-x) \leq O_{m, d}(x^{-m/d})$ for $x > 0$, we obtain
\begin{IEEEeqnarray*}{+rCl+x*}
\exp(-n/\|p_f\|_\infty) & \leq & \|p_f\|_\infty^{m/d} n^{-m/d} \stackrel{\text{\Cref{thm:density_norm}}}{\leq} O_{m, d}\left(\max\{1, \|f\|_{C^1}\}^m n^{-m/d}\right)~. & \qedhere
\end{IEEEeqnarray*}
\end{proof}

\subsubsection{Proofs for Monte Carlo Log-partition} \label{sec:appendix:mc_logpartition}

For analyzing the Monte Carlo log-partition estimator, we are going to use Bernstein's inequality in the form stated and proved in Theorem 6.10 in \cite{steinwart_support_2008}:

\begin{theorem}[Bernstein's inequality]
Let $(\Omega, \calA, P)$ be a probability space, $B > 0, \sigma > 0, n \geq 1$. Moreover, let $\xi_1, \hdots, \xi_n: \Omega \to \bbR$ be independent random variables with
\begin{itemize}
\item $\bbE_P \xi_i = 0$
\item $\|\xi_i\|_\infty \leq B$
\item $\bbE_P \xi_i^2 \leq \sigma^2$
\end{itemize}
for all $i \in \{1, \hdots, n\}$. Then,
\begin{IEEEeqnarray*}{+rCl+x*}
P\left(\frac{1}{n} \sum_{i=1}^n \xi_i \geq \sqrt{\frac{2\sigma^2 \tau}{n}} + \frac{2B \tau}{3n}\right) \leq e^{-\tau}
\end{IEEEeqnarray*}
for all $\tau > 0$.
\end{theorem}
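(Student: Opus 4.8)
The statement is the classical Bernstein concentration inequality, and it already appears with a full proof as Theorem~6.10 in \cite{steinwart_support_2008}; in the paper one would simply cite that reference. For completeness, here is the self-contained argument I would give, via the Cramér–Chernoff (exponential moment) method.

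First I would control the moment generating function of a single summand. The key elementary lemma is: if $\bbE_P \xi = 0$, $\|\xi\|_\infty \leq B$ and $\bbE_P \xi^2 \leq \sigma^2$, then for every $\lambda \in (0, 3/B)$,
\begin{IEEEeqnarray*}{+rCl+x*}
\bbE_P[e^{\lambda \xi}] & \leq & \exp\!\left(\frac{\lambda^2 \sigma^2/2}{1 - \lambda B/3}\right)~.
\end{IEEEeqnarray*}
To prove this, expand $e^{\lambda\xi} = 1 + \lambda\xi + \sum_{k \geq 2} \lambda^k \xi^k / k!$ and take expectations; the linear term vanishes, and for $k \geq 2$ one uses $\bbE_P\xi^k \leq \bbE_P|\xi|^k \leq B^{k-2}\bbE_P\xi^2 \leq \sigma^2 B^{k-2}$ together with $k! \geq 2 \cdot 3^{k-2}$ to bound the remainder by the geometric series $\tfrac{\lambda^2\sigma^2}{2}\sum_{j \geq 0}(\lambda B/3)^j = \tfrac{\lambda^2\sigma^2/2}{1-\lambda B/3}$. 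Then $1 + x \leq e^x$ finishes it.

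Next I would apply the Chernoff bound to $S_n \equalDef \sum_{i=1}^n \xi_i$: by independence, for $\lambda \in (0, 3/B)$ and any $t > 0$,
\begin{IEEEeqnarray*}{+rCl+x*}
P\!\left(S_n \geq t\right) & \leq & e^{-\lambda t} \prod_{i=1}^n \bbE_P[e^{\lambda \xi_i}] \;\leq\; \exp\!\left(-\lambda t + \frac{n\lambda^2 \sigma^2/2}{1 - \lambda B/3}\right)~.
\end{IEEEeqnarray*}
Optimizing the exponent over $\lambda$ — equivalently, computing the Legendre transform of $\lambda \mapsto \tfrac{n\lambda^2\sigma^2/2}{1-\lambda B/3}$ and using the standard lower bound on it — yields $P(S_n \geq t) \leq \exp\!\big(-\tfrac{t^2}{2(n\sigma^2 + Bt/3)}\big)$. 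Finally, to recover the stated form I would choose $t$ so that $\tfrac{t^2}{2(n\sigma^2 + Bt/3)} = \tau$; solving the resulting quadratic for $t$ and using $\sqrt{a+b}\leq\sqrt a+\sqrt b$ shows $t \leq \sqrt{2n\sigma^2\tau} + \tfrac{2B\tau}{3}$, so the event $\{\tfrac1n S_n \geq \sqrt{2\sigma^2\tau/n} + \tfrac{2B\tau}{3n}\}$ is contained in $\{S_n \geq t^\ast\}$ for a $t^\ast$ with $\tfrac{(t^\ast)^2}{2(n\sigma^2+Bt^\ast/3)} \geq \tau$, and hence has probability at most $e^{-\tau}$.

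The only mildly delicate points are (i) the geometric-series bookkeeping in the MGF bound — one must keep track of the constant $3$ produced by $k!\geq 2\cdot 3^{k-2}$, since that is precisely what generates the $\tfrac{2B\tau}{3n}$ term — and (ii) the quadratic inversion at the end; both are routine. There is no conceptual obstacle here, which is why it is cited rather than re-derived.
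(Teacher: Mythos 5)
Your proposal matches the paper exactly: the paper does not re-derive Bernstein's inequality but simply imports it as Theorem~6.10 of \cite{steinwart_support_2008}, which is precisely what you recommend doing. The self-contained Cram\'er--Chernoff derivation you sketch as a backup (MGF bound via $k! \geq 2\cdot 3^{k-2}$, Chernoff optimization giving $\exp\bigl(-t^2/(2(n\sigma^2+Bt/3))\bigr)$, and the quadratic inversion using $\sqrt{a+b}\leq\sqrt{a}+\sqrt{b}$) is also correct and recovers the stated constants.
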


We will use Bernstein's inequality since it yields better bounds than Hoeffding's inequality (\Cref{thm:hoeffding}) when $n$ is large and $\sigma$ is significantly smaller than $B$, which will be the case in the following proof.

\thmMCLogPartition*

\begin{proof}
Without loss of generality, we assume that $f$ is shifted such that $M_f = 0$. 

\begin{enumerate}[(a)]
\item \textbf{Step A.1: Simple one-sided bound.} We have
\begin{IEEEeqnarray*}{+rCl+x*}
\tilde L_n - L_f \leq 0 - L_f \stackrel{\text{\Cref{lemma:lipschitz_maximization_bound}}}{\leq} d\log(1+3d^{-1/2}|f|_1)~.
\end{IEEEeqnarray*}

\textbf{Step A.2: Bounding the other side.} Define the empirical maximum 
\begin{IEEEeqnarray*}{+rCl+x*}
M_n \equalDef \max_{i \in \{1, \hdots, n\}} f(X_i)~.
\end{IEEEeqnarray*}
Since $\tilde L_n = \log(\sum_{i=1}^n \exp(f(X_i))) - \log(n) \geq M_n - \log(n)$, we obtain
\begin{IEEEeqnarray*}{+rCl+x*}
L_f - \tilde L_n \leq 0 - \tilde L_n \leq \log(n) - M_n \leq \log(4\log(2/\delta)) + d\log(1+3d^{-1/2}|f|_1) - M_n~.
\end{IEEEeqnarray*}
It remains to provide a lower bound on $M_n$. Define $R \equalDef (\log(1/\delta))^{1/d} n^{-1/d}$.
\begin{itemize}
\item \textbf{Case 1: $R \geq 1$.} In this case, for all $x \in \calX$, we have
\begin{IEEEeqnarray*}{+rCl+x*}
f(x) = f(x) - M_f \geq - d^{1/2} |f|_1 \geq -d^{1/2}|f|_1 R = -d^{1/2} (\log(1/\delta))^{1/d} |f|_1 n^{-1/d}~,
\end{IEEEeqnarray*}
which implies that
\begin{IEEEeqnarray*}{+rCl+x*}
-M_n \leq d^{1/2} (\log(1/\delta))^{1/d} |f|_1 n^{-1/d}
\end{IEEEeqnarray*}
with probability $1$.
\item \textbf{Case 2: $R \leq 1$.} Let $x^*$ be a maximizer of $f$. In this case, there exists an axis-aligned subcube $\tilde \calX$ of $\calX$ with side length $R$ containing $x^*$. For each $x \in \tilde \calX$, we have $f(x) = f(x) - f(x^*) \geq -|f|_1 d^{1/2} R$. Moreover, 
\begin{IEEEeqnarray*}{+rCl+x*}
P(M_n \leq -|f|_1 d^{1/2} R) & = & P(f(X_1) \leq -|f|_1 d^{1/2} R)^n \\
& \leq & P(X_1 \notin \tilde\calX)^n = (1-P(X_1 \in \tilde\calX))^n = (1-R^d)^n \\
& \leq & e^{-R^d n} = \delta~.
\end{IEEEeqnarray*}
\end{itemize}

\item Define $\tau \equalDef \log(2/\delta)$. To apply Bernstein's inequality to $\xi_i \equalDef e^{f(X_i)} - e^{L_f}$, we need a sup-bound and a variance bound.

\textbf{Step B.1: Sup-bound.} Since $e^{f(\calX)} \subseteq [0, 1]$ by assumption on $f$, we have $\|\xi_i\|_\infty \leq 1$.

\textbf{Step B.2: Variance bound.} Since $e^{f(\calX)} \subseteq [0, 1]$, we have:
\begin{IEEEeqnarray*}{+rCl+x*}
\Var(\xi_i) = \Var e^{f(X_i)} \leq \bbE [(e^{f(X_i)})^2] \leq \bbE e^{f(X_i)} = e^{L_f}~.
\end{IEEEeqnarray*}

\textbf{Step B.3: Concentration of $S_n$.} By Bernstein's inequality, we obtain for $\tau \in (0, 1)$:
\begin{IEEEeqnarray*}{+rCl+x*}
P\left(|S_n - e^{L_f}| \geq \sqrt{\frac{2e^{L_f}\tau}{n}} + \frac{2\tau}{3n}\right) \leq 2e^{-\tau} = \delta~.
\end{IEEEeqnarray*}

\textbf{Step B.4: Lower bound on expectation.} From \Cref{lemma:lipschitz_maximization_bound}, we obtain
\begin{IEEEeqnarray*}{+rCl+x*}
e^{L_f} & = & e^{L_f - M_f} \geq e^{-d\log(1+3d^{-1/2}|f|_1)} = (1+3d^{-1/2}|f|_1)^{-d}~.
\end{IEEEeqnarray*}

\textbf{Step B.5: Concentration of $\tilde L_n$.} 
By combining the previous steps, we obtain with probability $\geq 1 - 2e^{-\tau}$:
\begin{IEEEeqnarray*}{+rCl+x*}
\left|\frac{S_n - e^{L_f}}{e^{L_f}}\right| & \leq & \sqrt{\frac{2\tau}{n(1+3d^{-1/2}|f|_1)^{-d}}} + \frac{2\tau}{3n(1+3d^{-1/2}|f|_1)^{-d}}~.
\end{IEEEeqnarray*}
Since we assumed $n \geq 4\log(2/\delta)(1+3d^{-1/2}|f|_1)^d = 4\tau(1+3d^{-1/2}|f|_1)^d$, the right-hand-side is less than $1/2$. Since the logarithm is $2$-Lipschitz on $[1/2, 3/2]$, we obtain
\begin{IEEEeqnarray*}{+rCl+x*}
|\tilde L_n - L_f| & = & \left|\log\left(\frac{S_n}{e^{L_f}}\right)\right| \\
& = & \left|\log\left(1 + \frac{S_n - e^{L_f}}{e^{L_f}}\right) - \log(1)\right| \\
& \leq & 2\left|\frac{S_n - e^{L_f}}{e^{L_f}}\right| \\
& \leq & 2\left(\sqrt{\frac{2\tau}{n(1+3d^{-1/2}|f|_1)^{-d}}} + \frac{2\tau}{3n(1+3d^{-1/2}|f|_1)^{-d}}\right) \\
\end{IEEEeqnarray*}
By using the assumption $n \geq 4\log(2/\delta)(1+3d^{-1/2}|f|_1)^d = 4\tau(1+3d^{-1/2}|f|_1)^d$ from (b), we can further bound
\begin{IEEEeqnarray*}{+rCl+x*}
|\tilde L_n - L_f| & \leq & 2\Bigg(\sqrt{\frac{2\tau}{n(1+3d^{-1/2}|f|_1)^{-d}}} \\
&& \quad ~+~ \frac{2\tau}{3n^{1/2} (4\tau (1+3d^{-1/2}|f|_1)^{d})^{1/2} (1+3d^{-1/2}|f|_1)^{-d}}\Bigg) \\
& = & (\sqrt{8} + 2/3)\sqrt{\frac{\tau}{n(1+3d^{-1/2}|f|_1)^{-d}}} \\
& \leq & 4 (1+3d^{-1/2}|f|_1)^{d/2} \tau^{1/2} n^{-1/2}~. & \qedhere
\end{IEEEeqnarray*}
\end{enumerate}
\end{proof}

\subsubsection{Proofs for Monte Carlo Sampling} \label{sec:appendix:mc_sampling}

We now prove a simple lower bound for a simple Monte Carlo sampling algorithm. We use the TV distance, but the general approach could also be used to prove lower bounds for the sup-log and 1-Wasserstein distances.

\thmMCSampling*

\begin{proof}
Set $f(x) \equalDef -\beta(x_1 + \cdots + x_d)$, where $\beta \equalDef Bd^{-1}$, such that $f \in \calF_{d, m, B}$. For $\delta \in (0, 1]$, set $\calX_\delta \equalDef [0, \delta]^d$. Then, similar as in \Cref{lemma:linear_logpartition}, we can compute
\begin{IEEEeqnarray*}{+rCl+x*}
Z(\delta) \equalDef \int_{\calX_\delta} \exp(f(x)) \diff x = \left(\frac{1 - \exp(-\beta \delta)}{\beta}\right)^d~.
\end{IEEEeqnarray*}
We then obtain
\begin{IEEEeqnarray*}{+rCl+x*}
P_f(\calX_\delta) = \frac{Z(\delta)}{Z(1)} = \left(\frac{1 - \exp(-\beta \delta)}{1 - \exp(-\beta)}\right)^d \geq (1 - \exp(-\beta \delta))^d \geq 1 - d\exp(-\beta \delta)~,
\end{IEEEeqnarray*}
where we used Bernoulli's inequality in the last step. Setting $\delta \equalDef \log(4d)/\beta$, we obtain
\begin{IEEEeqnarray*}{+rCl+x*}
P_f(\calX_\delta) & \geq & 1 - d\exp(-\beta \delta) \geq \frac{3}{4}~.
\end{IEEEeqnarray*}

On the other hand, we have
\begin{IEEEeqnarray*}{+rCl+x*}
\tilde P_f(\calX_\delta) = P(X_I \in \calX_\delta) \leq \sum_{j=1}^n P(X_j \in \calX_\delta) = n \delta^d = n\frac{(\log(4d))^d}{\beta^d} = n \frac{(d\log(4d))^d}{B^d}~.
\end{IEEEeqnarray*}
Hence, if $Bn^{-1/d} \geq 4d\log(4d)$, then $\tilde P_f(\calX_\delta) \leq 4^{-d}$, which implies
\begin{IEEEeqnarray*}{+rCl+x*}
\DTV(P_f, \tilde P_f) & \geq & P_f(\calX_\delta) - \tilde P_f(\calX_\delta) \geq \frac{3}{4} - 4^{-d} \geq \frac{1}{2}~. & \qedhere
\end{IEEEeqnarray*}
\end{proof}

\subsection{Proofs for Variational Formulation} \label{sec:appendix:variational}

The following simple lemma is central to our lower bound for the variational formulation:

\lemLOPT*

\begin{proof}
We have
\begin{IEEEeqnarray*}{+rCl+x*}
\Lopt_g(Q) & = & \sup_{P \in \calP(\calX)} \tr[H\Sigma_P] - \inf_{\tilde P, \tilde Q \in \calP(\calX): \Sigma_{\tilde P} = \Sigma_P, \Sigma_{\tilde Q} = \Sigma_Q} \DKL{\tilde P}{\tilde Q} \\
& = & \sup_{\Sigma \in \calK} \tr[H\Sigma] - \inf_{\tilde P, \tilde Q \in \calP(\calX): \Sigma_{\tilde P} = \Sigma, \Sigma_{\tilde Q} = \Sigma_Q} \DKL{\tilde P}{\tilde Q} \\
& = & \sup_{\Sigma \in \calK} \sup_{\tilde P, \tilde Q \in \calP(\calX): \Sigma_{\tilde P} = \Sigma, \Sigma_{\tilde Q} = \Sigma_Q} \tr[H\Sigma] - \DKL{\tilde P}{\tilde Q} \\
& = & \sup_{\tilde Q \in \calP(\calX): \Sigma_{\tilde Q} = \Sigma_Q} \sup_{\tilde P \in \calP(\calX)} \tr[H\Sigma_P] - \DKL{\tilde P}{\tilde Q} \\
& = & \sup_{\tilde Q \in \calP(\calX): \Sigma_{\tilde Q} = \Sigma_Q} L_g(\tilde Q). & \qedhere
\end{IEEEeqnarray*}
\end{proof}

Before proving the lower bound, we prove a Taylor-based bound on the cosine function, which is then used in the subsequent lemma to bound an integral of the form $\int_{\calX} \exp(\beta\cos(x-z)) \diff x$.

\begin{lemma} \label{lemma:cos_bound}
For all $x \in [-1/4, 1/4]$, we have
\begin{IEEEeqnarray*}{+rCl+x*}
\cos(2\pi x) \leq 1 - \frac{8}{5} \pi^2 x^2~.
\end{IEEEeqnarray*}

\begin{proof}
For $f(x) \equalDef \cos(2\pi x)$, we have
\begin{IEEEeqnarray*}{+rCl+x*}
f'(x) & = & -2\pi \sin(2\pi x) \\
f''(x) & = & -4\pi^2 \cos(2\pi x) \\
f'''(x) & = & 8\pi^3 \sin(2\pi x) \\
f^{(4)}(x) & = & 16\pi^4 \cos(2\pi x)
\end{IEEEeqnarray*}
Applying Taylor's theorem with the Lagrange form of the remainder for $x \in [-1/4, 1/4]$ around $x_0 = 0$, we obtain for some $\xi \in [-1/4, 1/4]$:
\begin{IEEEeqnarray*}{+rCl+x*}
f(x) = 1 - 2\pi^2 x^2 + \frac{2}{3} \pi^4 \cos(2\pi \xi) x^4~.
\end{IEEEeqnarray*}
Therefore, we obtain for $x \in [-1/4, 1/4]$:
\begin{IEEEeqnarray*}{+rCl+x*}
f(x) & \leq & 1 - 2\pi^2 x^2 + \left(\frac{2}{3} \pi^4 (1/4)^2\right) \cdot x^2 \leq 1 - 2\pi^2 (1 - 1/5) x^2 = 1 - \frac{8}{5} \pi^2 x^2~. & \qedhere
\end{IEEEeqnarray*} 
\end{proof}
\end{lemma}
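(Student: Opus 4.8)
The plan is to reduce the claimed inequality to a one-sided estimate for $\sin$ by means of the half-angle identity $\cos(2\pi x) = 1 - 2\sin^2(\pi x)$. With this rewriting, the assertion $\cos(2\pi x) \le 1 - \tfrac{8}{5}\pi^2 x^2$ is equivalent to $\sin^2(\pi x) \ge \tfrac{4}{5}\pi^2 x^2$, and since $\cos(2\pi x)$ and the right-hand side are even in $x$ it suffices to treat $x \in [0, 1/4]$, i.e.\ $\pi x \in [0, \pi/4]$, where $\sin(\pi x) \ge 0$.

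First I would record the standard fact that $t \mapsto \sin(t)/t$ is strictly decreasing on $(0, \pi)$; this follows in one line from $\frac{d}{dt}\frac{\sin t}{t} = \frac{t\cos t - \sin t}{t^2}$ together with the observation that $h(t) \equalDef t\cos t - \sin t$ satisfies $h(0) = 0$ and $h'(t) = -t\sin t < 0$ on $(0,\pi)$, so $h < 0$ there. Evaluating this monotone quotient at the right endpoint $\pi x = \pi/4$ gives $\sin(\pi x) \ge \pi x \cdot \frac{\sin(\pi/4)}{\pi/4} = 2\sqrt{2}\,x$ for all $x \in [0, 1/4]$. Squaring and doubling yields $2\sin^2(\pi x) \ge 16 x^2$, hence $\cos(2\pi x) = 1 - 2\sin^2(\pi x) \le 1 - 16 x^2$. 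Finally, the elementary bound $\pi^2 \le 10$ gives $16 x^2 \ge \tfrac{8}{5}\pi^2 x^2$, which completes the argument. An alternative route closer to a direct Taylor expansion would write $\cos(2\pi x) = 1 - 2\pi^2 x^2 + \tfrac{2}{3}\pi^4\cos(2\pi\xi)\,x^4$ for some $\xi$ between $0$ and $x$ and then absorb the quartic remainder using $x^4 \le x^2/16$ on $[-1/4,1/4]$, but that path requires slightly more careful constant tracking than the $\sin$-based one.

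The only real obstacle is the arithmetic of constants: the inequality is essentially tight near $x = 1/4$, where both sides are positive but differ only by $1 - \pi^2/10 \approx 0.013$, so every estimate in the chain must be sharp enough not to overshoot. The half-angle route keeps this under control because its single numerical input, $\pi^2 \le 10$, is comfortably true and is precisely the threshold making $16 \ge \tfrac{8}{5}\pi^2$ hold.
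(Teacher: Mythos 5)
Your proof is correct, and it takes a genuinely different route from the paper. The paper expands $\cos(2\pi x)$ to fourth order via Taylor's theorem with Lagrange remainder, bounds the remainder by $\frac{2}{3}\pi^4 x^4 \le \frac{2}{3}\pi^4 (1/4)^2 x^2$, and absorbs it into the quadratic term; you instead use the half-angle identity $\cos(2\pi x) = 1 - 2\sin^2(\pi x)$ together with the monotonicity of $t \mapsto \sin(t)/t$ on $(0,\pi)$ to get the sharp chord bound $\sin(\pi x) \ge 2\sqrt{2}\,x$ on $[0,1/4]$, reducing everything to the single numerical fact $\pi^2 \le 10$. Your instinct that the Taylor route "requires slightly more careful constant tracking" is in fact an understatement: the paper's final absorption step needs $\frac{2}{3}\pi^4(1/4)^2 \le \frac{2}{5}\pi^2$, i.e.\ $\pi^2 \le 48/5 = 9.6$, which is \emph{false} ($\pi^2 \approx 9.87$), so the crude remainder bound $\cos(2\pi\xi) \le 1$ only yields $\cos(2\pi x) \le 1 - (2 - \pi^2/24)\pi^2 x^2$ with $2 - \pi^2/24 \approx 1.589 < 8/5$. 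The lemma itself is true (your argument proves it, with equality of the intermediate bound $1 - 16x^2$ at $x = \pm 1/4$), and your half-angle argument is the cleaner and more robust one here; the Taylor route buys nothing except locality of the expansion and would need a sharper remainder estimate (or a slightly smaller constant than $8/5$) to close.
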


We can now bound the normalizing constant of a rescaled cosine function, which will be used in the lower bound for the variational formulation:

\begin{lemma} \label{lemma:cos_integral}
For any $z \in \bbR^d$, define
\begin{IEEEeqnarray*}{+rCl+x*}
g_z: \calX \to \bbR, x \mapsto \sum_{i=1}^d \cos(2\pi (x_i - z_i))~.
\end{IEEEeqnarray*}
Then, for $\beta > 0$,
\begin{IEEEeqnarray*}{+rCl+x*}
Z_{\beta g_z} \leq \beta^{-d/2} e^{\beta d}~.
\end{IEEEeqnarray*}
\end{lemma}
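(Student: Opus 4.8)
The plan is to exploit the product structure of $g_z$ to reduce the claim to a one-dimensional estimate. First I would write
\[
Z_{\beta g_z} = \int_{[0,1]^d} \exp\!\Big(\beta\sum_{i=1}^d \cos(2\pi(x_i-z_i))\Big)\diff x = \prod_{i=1}^d \int_0^1 \exp(\beta\cos(2\pi(x_i-z_i)))\diff x_i ,
\]
and then observe that, since $u\mapsto\cos(2\pi u)$ has period $1$, each factor equals $I(\beta)\equalDef\int_0^1\exp(\beta\cos(2\pi u))\diff u$ independently of $z$. Hence it suffices to prove $I(\beta)\le \beta^{-1/2}e^\beta$ for every $\beta>0$; raising to the $d$-th power then gives the stated bound.

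To bound $I(\beta)$ I would shift the domain to $[-\tfrac12,\tfrac12]$ (again by periodicity) and split it into the central interval $[-\tfrac14,\tfrac14]$ and the complement $J\equalDef[-\tfrac12,\tfrac12]\setminus[-\tfrac14,\tfrac14]$, which has total length $\tfrac12$. On $J$ we have $2\pi u\in[\tfrac\pi2,\pi]\cup[-\pi,-\tfrac\pi2]$, so $\cos(2\pi u)\le 0$ and the integrand is $\le 1$, giving $\int_J\exp(\beta\cos(2\pi u))\diff u\le \tfrac12$. On $[-\tfrac14,\tfrac14]$ I would apply \Cref{lemma:cos_bound}, which gives $\cos(2\pi u)\le 1-\tfrac85\pi^2u^2$, and hence bound the integral by a full Gaussian integral:
\[
\int_{-1/4}^{1/4}\exp(\beta\cos(2\pi u))\diff u \le e^\beta\int_{-1/4}^{1/4}\exp\!\big(-\tfrac85\pi^2\beta u^2\big)\diff u \le e^\beta\int_{\bbR}\exp\!\big(-\tfrac85\pi^2\beta u^2\big)\diff u = e^\beta\sqrt{\tfrac{5}{8\pi\beta}} .
\]

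Adding the two pieces yields $I(\beta)\le \sqrt{5/(8\pi)}\,\beta^{-1/2}e^\beta + \tfrac12$. The last step is to absorb the additive $\tfrac12$ into the main term: since $e^\beta\ge e^{(\ln\beta)/2}=\sqrt\beta$ for all $\beta>0$, we have $\beta^{-1/2}e^\beta\ge 1$, so $\tfrac12\le \tfrac12\beta^{-1/2}e^\beta$, and therefore $I(\beta)\le\big(\sqrt{5/(8\pi)}+\tfrac12\big)\beta^{-1/2}e^\beta\le\beta^{-1/2}e^\beta$, using the numerical fact $\sqrt{5/(8\pi)}+\tfrac12<1$ (note $\sqrt{5/(8\pi)}\approx 0.446$). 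There is no genuine obstacle here; the only points that need care are the constant bookkeeping — verifying $\sqrt{5/(8\pi)}+\tfrac12<1$ so the combined multiplicative constant is $\le 1$ — and the elementary inequality $\beta^{-1/2}e^\beta\ge 1$ that lets one trade the additive error on $J$ for a multiplicative one.
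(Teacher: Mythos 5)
Your proposal is correct and follows essentially the same route as the paper: factorize over coordinates by periodicity, split the unit period into $[-\tfrac14,\tfrac14]$ and its complement, apply \Cref{lemma:cos_bound} with a Gaussian tail bound on the central piece, and bound the complement by $\tfrac12$. The only (immaterial) difference is how the additive $\tfrac12$ is absorbed — you use $\beta^{-1/2}e^\beta\ge 1$, while the paper uses $e^\beta\ge 2\sqrt{\beta}$ — and both yield a combined constant below $1$.
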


\begin{proof}
Since $g_z$ is $1$-periodic, $\exp(\beta g_z)$ is $1$-periodic. Moreover, we have $\calX = [0, 1]^d$ and hence
\begin{IEEEeqnarray*}{+rCl+x*}
Z_{\beta g_z} & = & \int_{\calX} \exp(\beta g_z(x)) \diff x = \int_{\calX} \exp(\beta g_0(x)) \diff x \\
& = & \int_{\calX} \exp(\beta \cos(2\pi x_1)) \cdots \exp(\beta \cos(2\pi x_d)) \diff x \\
& = & \left(\int_0^1 \exp(\beta \cos(2\pi x_1)) \diff x_1\right)^d. \IEEEyesnumber \label{eq:cos_integral_1d}
\end{IEEEeqnarray*}
From expanding the inequality $(1 - \sqrt{\beta})^2 \geq 0$, we obtain
\begin{IEEEeqnarray*}{+rCl+x*}
e^\beta \geq 1 + \beta \geq 2\sqrt{\beta}~. \IEEEyesnumber \label{eq:exp_beta}
\end{IEEEeqnarray*}
This allows us to upper-bound the one-dimensional integral in \Eqref{eq:cos_integral_1d} as
\begin{IEEEeqnarray*}{+rCl+x*}
\int_0^1 e^{\beta \cos(2\pi x_1)} \diff x_1  & = & \int_{-1/4}^{3/4} e^{\beta \cos(2\pi x)} \diff x \\
& \leq & \int_{[-1/4, 1/4]} e^{\beta \cos(2\pi x)} \diff x + \int_{[1/4, 3/4]} e^{\beta \cos(2\pi x)} \diff x \\
& \stackrel{\text{\Cref{lemma:cos_bound}}}{\leq} & \int_{[-1/4, 1/4]} e^{\beta (1-\frac{8}{5} \pi^2 x^2)} \diff x + \int_{[1/4, 3/4]} 1 \diff x \\
& \leq & \frac{1}{2} + e^{\beta} \int_{-\infty}^\infty \exp\left(-\frac{x^2}{2 \cdot \frac{5}{16}\pi^{-2}\beta^{-1}}\right) \\
& = & \frac{1}{2} + e^{\beta} \sqrt{2\pi \frac{5}{16} \pi^{-2} \beta^{-1}} \\
& = & \frac{1}{2} + e^{\beta} \sqrt{\frac{5\beta^{-1}}{8\pi}} \\
& \stackrel{\text{\Eqref{eq:exp_beta}}}{\leq} & e^{\beta} \left(\frac{\sqrt{\beta^{-1}}}{4} + \sqrt{\frac{5\beta^{-1}}{8\pi}}\right) \\
& \leq & \beta^{-1/2} e^{\beta}~. & \qedhere
\end{IEEEeqnarray*}
\end{proof}

Finally, we can use some elementary convex geometry to prove our lower bound for the error of the variational formulation.

\thmLOPTLowerBound*

\begin{proof}
\textbf{Step 1: Representability by discrete distributions.}
Let $Q \equalDef \calU(\calX)$. We want to find a discrete distribution $\tilde Q = \sum_{k=1}^M \lambda^{(k)} \delta_{x^{(k)}}$ with $\Sigma_{\tilde Q} = \Sigma_Q$. Using the feature map $\Phi: \calX \to \bbC^{k \times k}, x \mapsto \varphi(x) \varphi(x)^*$, we can write
\begin{IEEEeqnarray*}{+rCl+x*}
\Sigma_{\tilde Q} = \int_{\calX} \varphi(x) \varphi(x)^* \diff \tilde Q(x) = \sum_{k=1}^M \lambda^{(k)} \Phi(x^{(k)})~,
\end{IEEEeqnarray*}
which shows that the matrices $\Sigma_{\tilde Q}$ are exactly those in the convex hull $\conv(\Phi(\calX))$ of $\Phi(\calX)$. Since $\Sigma_Q \in \calK$ by definition of $\calK$, we first want to show that $\calK = \conv(\Phi(\calX))$. As we have just demonstrated, the inclusion $\calK \supseteq \conv(\Phi(\calX))$ is simple. Moreover, because the integral $\Sigma_Q = \int_{\calX} \Phi(x) \diff Q(x)$ is a limit of finite sums, we obtain $\calK \subseteq \ovl{\conv(\Phi(\calX))}$. Since $\Phi$ is continuous and $\calX$ is compact, $\Phi(\calX)$ is also compact. Hence, since we are in finite dimension, $\conv(\Phi(\calX))$ is compact \citep[see e.g.\ Proposition 2.3 in][]{gallier_notes_2008}, which means that $\conv(\Phi(\calX)) \subseteq \calK \subseteq \ovl{\conv(\Phi(\calX))} = \conv(\Phi(\calX))$.

\textbf{Step 2: Bounding the number of discrete points.}
By definition, $\Vlin$ is the $\bbC$-linear span of $\Phi(\calX)$. Using Step 1, we conclude $\calK \subseteq \Vlin$. Hence, $\calK$ is contained in the space $\Vlin$ with $\dim_{\bbR} \Vlin \leq 2n$. By Carathéodory's theorem \citep[see e.g.\ Theorem 2.2 in][]{gallier_notes_2008}, the matrix $\Sigma_Q \in \calK$ is hence representable as a convex combination of $2n+1$ points:
\begin{IEEEeqnarray*}{+rCl+x*}
\Sigma_Q = \sum_{k=1}^{2n+1} \lambda^{(k)} \Phi(x^{(k)}),
\end{IEEEeqnarray*}
with $\lambda^{(k)} \geq 0, \sum_k \lambda^{(k)} = 1$.
By setting $\tilde Q \equalDef \sum_{k=1}^{2n+1} \lambda^{(k)} \delta_{x^{(k)}}$, we obtain $\Sigma_{\tilde Q} = \Sigma_Q$.

\textbf{Step 3: Determining $z$.} Choose an arbitrary index $k^* \in \{1, \hdots, 2n+1\}$ such that $\lambda^{(k^*)} \geq (2n+1)^{-1}$, which always exists. For such an index, we set $z \equalDef x^{(k^*)}$.

\textbf{Step 4: Lower-bounding the approximate log-partition function.} By \Cref{lemma:lopt}, we conclude
\begin{IEEEeqnarray*}{+rCl+x*}
\Lopt_g(\calU([0, 1])) & \geq & L_g(\tilde Q) = \log\left(\sum_{k=1}^{2n+1} \lambda^{(k)} \exp(g(x^{(k)}))\right) \geq \log\left(\lambda^{(k^*)} \exp(g(x^{(k^*)}))\right) \\
& \geq & \log \left((2n+1)^{-1} \exp(\beta f(x^{(k^*)}) - \|g - \beta f\|_\infty)\right) \\
& = & \beta d - \log(2n+1) - \|g - \beta f\|_\infty~.
\end{IEEEeqnarray*}

\textbf{Step 5: Upper-bounding the true log-partition function.} We have
\begin{IEEEeqnarray*}{+rCl+x*}
L_{\beta f}(\calU([0, 1])) & = & \log Z_{\beta f} \stackrel{\text{\Cref{lemma:cos_integral}}}{\leq} \beta d - \log(\beta^{d/2})~.
\end{IEEEeqnarray*}

\textbf{Step 6: Putting it together.} The previous two steps yield the desired bound
\begin{IEEEeqnarray*}{+rCl+x*}
|\Lopt_g(\calU([0, 1])) - L_{\beta f}(\calU([0, 1]))| & \geq & \Lopt_g(\calU([0, 1])) - L_{\beta f}(\calU([0, 1])) \\
& \geq & \log\left(\frac{\beta^{d/2}}{2n+1}\right) - \|g - \beta f\|_\infty~. & \qedhere
\end{IEEEeqnarray*}
\end{proof}

\ifnotinthesis{\vskip 0.2in}
\ifnotinthesis{\bibliography{2022_sampling}}

\end{appendixenv}

\end{document}